\renewcommand*{\backrefalt}[4]{%
    \ifcase #1 \footnotesize{(Not cited.)}%
    \or        \footnotesize{(Cited on page~#2.)}%
    \else      \footnotesize{(Cited on pages~#2.)}%
    \fi}
\definecolor{mydarkblue}{rgb}{0,0.08,0.45}
\Crefname{algocf}{Alg.}{Algs.}
\crefname{appendix}{App.}{Apps.}
\crefname{subsubsubappendix}{App.}{Apps.}
\crefname{equation}{}{}
\crefname{lemma}{Lem.}{Lems.}
\crefname{theorem}{Thm.}{Thms.}
\crefname{corollary}{Cor.}{Cors.}
\crefname{algorithm}{Alg.}{Algs.}
\crefname{example}{Ex.}{Exs.}
\crefname{section}{Sec.}{Secs.}
\crefname{table}{Tab.}{Tabs.}
\crefname{remark}{Rem.}{Rems.}
\crefname{definition}{Def.}{Defs.}
\crefname{proposition}{Prop.}{Props.}
\crefname{myremark}{Rem.}{Rems.}
\crefname{mylemma}{Lem.}{Lems.}
\crefname{mydefinition}{Def.}{Defs.}
\crefname{myproposition}{Prop.}{Props.}
\crefname{mycorollary}{Cor.}{Cors.}
\crefname{myassumption}{Assum.}{Assums.}
\crefname{assumption}{Assum.}{Assums.}
\crefname{figure}{Fig.}{Figs.}
\crefname{myexample}{Ex.}{Exs.}
\crefname{enumi}{}{}
\crefname{name}{}{} %
\newcommand{\defeq}{\triangleq}
\DeclareMathOperator{\supp}{supp}
\providecommand{\argmin}{\mathop\mathrm{arg min}}
\newcommand{\hess}{{\nabla^2}}
\newtheorem{theorem}{Theorem}
\newtheorem{corollary}{Corollary}
\newtheorem{proposition}{Proposition}
\newtheorem{lemma}{Lemma}
\newtheorem{assumption}{Assumption}
\newtheorem{definition}{Definition}
\theoremstyle{definition}
\newtheorem{remark}{Remark}
\newtheorem{example}{Example}
\newcommand{\eps}{\epsilon}
\newcommand{\vareps}{\varepsilon}
\newcommand{\cB}{\mathcal{B}}
\newcommand{\cC}{\mathcal{C}}
\newcommand{\cD}{\mathcal{D}}
\newcommand{\cE}{\mathcal{E}}
\newcommand{\cH}{\mathcal{H}}
\newcommand{\cL}{\mathcal{L}}
\newcommand{\cN}{\mathcal{N}}
\newcommand{\cS}{\mathcal{S}}
\newcommand{\cX}{\mathcal{X}}
\newcommand{\cY}{\mathcal{Y}}
\newcommand{\bR}{\mathbb{R}}
\newcommand{\bN}{\mathbb{N}}
\newcommand{\bE}{\mathbb{E}}
\newcommand{\bP}{\mathbb{P}}
\newcommand{\bS}{\mathbb{S}}
\newcommand{\fg}{\mathfrak{g}}
\newcommand{\fp}{\mathfrak{p}}
\newcommand{\fq}{\mathfrak{q}}
\newcommand{\ft}{\mathfrak{t}}
\newcommand{\uC}{\textup{C}}
\newcommand{\uH}{\textup{H}}
\newcommand{\uT}{\textup{T}}
\newcommand{\uCpp}{\textup{C++}}
\newcommand{\tC}{\mathtt{C}}
\newcommand{\tI}{\mathtt{I}}
\newcommand{\tJ}{\mathtt{J}}
\newcommand{\tS}{\mathtt{S}}
\renewcommand{\P}{\mathbb{P}}
\newcommand{\simplex}[1][n-1]{\Delta_{#1}}
\newcommand{\poly}{\textup{poly}}
\newcommand{\out}{\textup{out}}
\newcommand{\bQ}{\mathbb{Q}}
\newcommand{\wis}{\xi}
\newcommand{\Mis}{M}
\newcommand{\iid}{\textrm{i.i.d.\@}\xspace}
\newcommand{\iidup}{\textup{i.i.d.\@}\xspace}
\newcommand{\trunc}{{\textup{resid}}}
\newcommand{\truncstrat}{{\textup{sr}}}
\newcommand{\Var}{\operatorname{Var}}
\newcommand{\perm}{\operatorname{perm}}
\newcommand{\snis}{\textup{SNIS}}
\newcommand{\funcspace}{\mathfrak{F}}
\newcommand{\ellinf}{\ell_\infty}
\newcommand{\boundedop}{\mathfrak{B}}
\newcommand{\ball}[1]{\cB_{#1}}
\newcommand{\balleuc}{\cB_{2}}
\newcommand{\idmap}[2]{{#1}\hookrightarrow{#2}}
\newcommand{\scorebound}{\mathfrak{S}_{p}}
\newcommand{\cvrnum}[1]{\cN_{#1}}
\newcommand{\cvrC}{\mathfrak{C}_d}
\newcommand{\cvrd}{{\beta}}
\newcommand{\cvrw}{{\alpha}}
\newcommand{\ind}[1]{\mathbbm{1}_{#1}}
\newcommand{\Eequi}{\cE_{\textup{equi}}}
\newcommand{\unif}{\textup{Uniform}\xspace}
\newcommand{\inv}{\textup{inv}}
\newcommand{\inverse}{^{-1}}
\newcommand{\diag}{\operatorname{diag}}
\newcommand{\loggrowth}{\hyperref[assum:kernel_growth]{\color{black}{\textsc{LogGrowth}}}\xspace}
\newcommand{\polygrowth}{\hyperref[assum:kernel_growth]{\color{black}{\textsc{PolyGrowth}}}\xspace}
\newcommand{\db}{{\diamond}} %
\newcommand{\bestapprox}[1]{{\llbracket {#1} \rrbracket}}
\newcommand{\kev}{d_{\kp}} %
\newcommand{\krcd}{L} %
\newcommand{\rpcq}{\mathfrak{U}}
\newcommand{\slg}{\gamma}
\DeclareMathOperator{\mmd}{MMD}
\newcommand{\kernel}{{\bm k}}
\renewcommand{\k}{\kernel}
\newcommand{\K}{{\bm K}}
\newcommand{\kergold}[1][\bP]{{\kernel_{#1}}}
\newcommand{\kp}{\kergold}
\newcommand{\kpsq}[1][\bP]{{\kernel^2_{#1}}}
\newcommand{\kq}{\kergold[\bQ]}
\newcommand{\impw}{\frac{\dd\P}{\dd\bQ}} %
\newcommand{\ksplit}{{\kernel_{\textup{split}}}}
\newcommand{\rkhs}[1][\kernel]{{\cH_{#1}}}
\newcommand{\ks}[1][p]{{\kernel_{#1}}}
\newcommand{\ksm}{{\kernel_{p}}}
\newcommand{\steinop}[1][p]{\cS_{#1}}
\newcommand{\knormSn}{\staticnorm{\k}_{n}}
\newcommand{\kpnormSn}{\staticnorm{\kp}_{n}}
\newcommand{\kpnorm}[2][\kp]{\staticnorm{#2}_{#1}}
\renewcommand{\grad}{\nabla}
\renewcommand{\div}{\nabla\cdot}
\newcommand{\divmk}{\nabla^{\otimes 2}\cdot(M\k)}
\newcommand{\kc}{{\k_{c}}}
\newcommand{\wopt}{{w_{\textsc{OPT}}}}
\newcommand{\mmdopt}{\mmd_{\textsc{OPT}}}
\newcommand{\halve}{\textsc{Halve}\xspace}
\newcommand{\thin}{\textsc{Thin}\xspace}
\newcommand{\compress}{\textsc{Compress}\xspace}
\newcommand{\compresspp}{\textup{Compress++}\xspace}
\newcommand{\compresspptarget}{\hyperref[alg:compresspp_target]{\textup{KT-Compress++}}\xspace}
\newcommand{\compresspptargetnotag}{{\textup{KT-Compress++}}\xspace}
\newcommand{\rpc}{\hyperref[alg:rpc]{
\textup{WeightedRPCholesky}}\xspace}
\newcommand{\rpcnotag}{\textup{WeightedRPCholesky}\xspace}
\newcommand{\symmetrize}{\textup{symmetrized}\xspace}
\newcommand{\kt}{\textsc{KT}\xspace}
\newcommand{\ktsplit}{\textsc{kt-split}\xspace}
\newcommand{\ktswap}{\textsc{kt-swap}\xspace}
\newcommand{\kttarget}{\hyperref[alg:kt_target]{\textup{KT}}\xspace}
\newcommand{\kttargetnotag}{{\textup{KT}}\xspace}
\newcommand{\kttargetfull}{\hyperref[alg:kt_target]{\textup{KernelThinning}}\xspace}
\newcommand{\kttargetfullnotag}{{\textup{KernelThinning}}\xspace}
\newcommand{\ktswaptarget}{\hyperref[alg:kt_swap_target]{\textup{KT-Swap}}\xspace}
\newcommand{\ktswaptargetnotag}{{\textup{KT-Swap}}\xspace}
\newcommand{\ktswapls}{\hyperref[alg:kt_swap_ls]{\textup{KT-Swap-LS}}\xspace}
\newcommand{\agm}{\hyperref[alg:agm]{\textup{AMD}}\xspace}
\newcommand{\agmnotag}{{\textup{AMD}}\xspace}
\newcommand{\alrbc}{\hyperref[alg:alrbc]{\textup{LD}}\xspace}
\newcommand{\alrbcfull}{\hyperref[alg:alrbc]{\textup{Low-rankDebiasing}}\xspace}
\newcommand{\alrbcnotag}{{\textup{LD}}\xspace}
\newcommand{\recombbfs}{\hyperref[alg:recombbfs]{\textup{Recombination}}\xspace}
\newcommand{\recombbfsnotag}{{\textup{Recombination}}\xspace}
\newcommand{\stratresamp}{\hyperref[alg:stratified_resample]{\textup{Resample}}\xspace}
\newcommand{\stratresampnotag}{{\textup{Resample}}\xspace}
\newcommand{\recombfull}{\hyperref[alg:recomb]{\textup{RecombinationThinning}}\xspace}
\newcommand{\recomb}{\hyperref[alg:recomb]{\textup{RT}}\xspace}
\newcommand{\recombnotag}{{\textup{RT}}\xspace}
\newcommand{\cholthinfull}{\hyperref[alg:cholthin]{\textup{CholeskyThinning}}\xspace}
\newcommand{\cholthin}{\hyperref[alg:cholthin]{\textup{CT}}\xspace}
\newcommand{\cholthinnotag}{{\textup{CT}}\xspace}
\newcommand{\GBCnotag}{{\textup{ST}}\xspace}
\newcommand{\GBCfull}{\hyperref[alg:GBC]{\textup{SteinThinning}}\xspace}
\newcommand{\GBCfullnotag}{{\textup{SteinThinning}}\xspace}
\newcommand{\GSKT}{\hyperref[alg:GSKT]{\textup{SKT}}\xspace}
\newcommand{\GSKTnotag}{{\textup{SKT}}\xspace}
\newcommand{\LSKT}{\hyperref[alg:LSKT]{\textup{LSKT}}\xspace}
\newcommand{\LSKTnotag}{{\textup{LSKT}}\xspace}
\newcommand{\GSR}{\hyperref[alg:SR]{\textup{SR}}\xspace}
\newcommand{\LSR}{\hyperref[alg:SR]{\textup{LSR}}\xspace}
\newcommand{\GSRnotag}{\textup{SR}\xspace}
\newcommand{\LSRnotag}{\textup{LSR}\xspace}
\newcommand{\GSC}{\hyperref[alg:SC]{\textup{SC}}\xspace}
\newcommand{\LSC}{\hyperref[alg:SC]{\textup{LSC}}\xspace}
\newcommand{\GSCnotag}{\textup{SC}\xspace}
\newcommand{\LSCnotag}{\textup{LSC}\xspace}
\newcommand{\mrm}[1]{\mathrm{#1}}
\newcommand{\dirac}{\mbf \delta}
\renewcommand{\real}{\mbb R}
\newcommand{\half}{\frac12}
\newcommand{\E}{\mbb E}
\newcommand{\sless}[1]{\stackrel{#1}{\leq}}
\newcommand{\normalparenth}[1]{( #1 )}
\newcommand{\sparenth}[1]{\normalparenth{#1}}
\newcommand{\bigparenth}[1]{\big( #1 \big)}
\newcommand{\sbraces}[1]{\{ #1 \}}
\newcommand{\angles}[1]{\left\langle #1 \right \rangle}
\newcommand{\ceil}[1]{\left\lceil #1 \right \rceil}
\newcommand{\sceil}[1]{\lceil #1 \rceil}
\newcommand{\floor}[1]{\left\lfloor #1 \right \rfloor}
\newcommand{\tp}{^\top}
\newcommand{\qtext}[1]{\quad\text{#1}\quad} 
\newcommand{\stext}[1]{\ \text{#1}\ }
\def\defeq{\triangleq} %
\def\norm#1{\big\|{#1}\big\|} %
 \def\snorm#1{\|{#1}\|} %
\newcommand{\twonorm}[1]{\norm{#1}_2} %
\newcommand{\stwonorm}[1]{\snorm{#1}_2} %
\newcommand{\sinfnorm}[1]{\snorm{#1}_{\infty}} %
\def\mbi#1{\boldsymbol{#1}} %
\def\mbf#1{\mathbf{#1}}
\def\mbb#1{\mathbb{#1}}
\def\mrm#1{\mathrm{#1}}
\def\tbf#1{\textbf{#1}}
\def\til#1{\widetilde{#1}}
\def\balign#1\ealign{\begin{align}#1\end{align}}
\def\baligns#1\ealigns{\begin{align*}#1\end{align*}}
\def\balignat#1\ealign{\begin{alignat}#1\end{alignat}}
\def\balignats#1\ealigns{\begin{alignat*}#1\end{alignat*}}
\def\bitemize#1\eitemize{\begin{itemize}#1\end{itemize}}
\def\benumerate#1\eenumerate{\begin{enumerate}#1\end{enumerate}}
\newenvironment{talign*}
 {\csname align*\endcsname}
 {\endalign}
\newenvironment{talign}
 {\csname align\endcsname}
 {\endalign}
\def\balignst#1\ealignst{\begin{talign*}#1\end{talign*}}
\def\balignt#1\ealignt{\begin{talign}#1\end{talign}}
\def\blue#1{\textcolor{blue}{{#1}}}
\def\red#1{\textcolor{red}{{#1}}}
\def\reals{\mathbb{R}} %
\def\R{\mathbb{R}}
\def\Q{\mathbb{Q}}
\def\naturals{\mathbb{N}} %
\def\N{\mathbb{N}}
\newcommand{\Otilde}{\til{O}}
\def\norm#1{\left\|{#1}\right\|} %
\def\staticnorm#1{\|{#1}\|} %
\newcommand{\statictwonorm}[1]{\staticnorm{#1}_2} %
\def\Matern{Mat\'ern\xspace}
\def\polylog{\operatorname{poly-log}}
\def\mbi#1{\boldsymbol{#1}} %
\newcommand{\para}[1]{\tbf{#1}\ \ }
\newcommand{\Hess}{\nabla^2} %
\newcommand{\seq}[1]{\stackrel{#1}{=}}
\newcommand{\ncref}[1]{\cref{#1}: \nameref*{#1}} %
\newcommand{\pcref}[1]{Proof of \ncref{#1}} %
\newcommand{\mytitle}{Debiased Distribution Compression}
\title{\mytite}
\icmltitlerunning{\mytitle}
\begin{document}

\twocolumn[
\icmltitle{
\mytitle}

\begin{icmlauthorlist}
\icmlauthor{Lingxiao Li}{mit}
\icmlauthor{Raaz Dwivedi}{cornell}
\icmlauthor{Lester Mackey}{msr}
\end{icmlauthorlist}

\icmlaffiliation{mit}{MIT CSAIL}
\icmlaffiliation{cornell}{Cornell Tech}
\icmlaffiliation{msr}{Microsoft Research New England}

\icmlcorrespondingauthor{Lingxiao Li}{lingxiao@mit.edu}
\icmlcorrespondingauthor{Raaz Dwivedi}{dwivedi@cornell.edu}
\icmlcorrespondingauthor{Lester Mackey}{lmackey@microsoft.com}

\icmlkeywords{Machine Learning, ICML}

\vskip 0.3in
]
\printAffiliationsAndNotice{} %

\etoctocstyle{1}{Table of contents}
\etocdepthtag.toc{mtchapter}
\etocsettagdepth{mtchapter}{section}

\begin{abstract}
Modern compression methods can summarize a target distribution $\mathbb{P}$ more succinctly than i.i.d.\ sampling but require access to a low-bias input sequence like a Markov chain converging quickly to $\mathbb{P}$.
We introduce a new suite of compression methods suitable for compression with biased input sequences.  
Given $n$ points targeting the wrong distribution and quadratic time, 
Stein Kernel Thinning (SKT)  
returns $\sqrt{n}$ equal-weighted points with $\widetilde{O}(n^{-1/2})$ maximum mean discrepancy (MMD) to $\mathbb {P}$.
For larger-scale compression tasks, Low-rank SKT achieves the same feat in 
sub-quadratic time 
using an adaptive low-rank debiasing procedure that may be of independent interest.
For downstream tasks that support simplex or constant-preserving weights, Stein Recombination and Stein Cholesky achieve even greater parsimony, matching the guarantees of SKT with as few as $\operatorname{poly-log}(n)$ weighted points. 
Underlying these advances are new guarantees for the quality of simplex-weighted coresets, 
the spectral decay of kernel matrices, and the covering numbers of Stein kernel Hilbert spaces. In our experiments, our techniques provide succinct and accurate posterior summaries while overcoming biases due to burn-in, approximate Markov chain Monte Carlo, and tempering.
\end{abstract}

\section{Introduction}
\begin{table*}[ht]
  \caption{\label{tab:summary}\textbf{Methods for debiased distribution compression.}
  For each method, we report the smallest coreset size $m$ and running time, up to logarithmic factors, sufficient to guarantee  $\Otilde(n^{-1/2})$ $\mmd_{\kp}$ to $\bP$ 
  given a \loggrowth kernel $\kp$ and $n$ slow-growing input points $\cS_n = (x_i)_{i=1}^n$ from a fast-mixing Markov chain targeting  $\bQ$  with tails no lighter than $\bP$  (see \cref{thm:mcmc_convex,exam:well_controlled}).
  For generic slow-growing $\cS_n$, identical guarantees hold for excess $\mmd_{\kp}$ \eqref{eq:excess_mmd} relative to the best simplex reweighting of $\cS_n$.}
  \centering
  \resizebox{\textwidth}{!}{
  \begin{tabular}{ccccc}
    \toprule
    \tbf{Method} 
    & \tbf{Compression Type} 
    & \def\arraystretch{1.2}\begin{tabular}{c}\tbf{Coreset Size $\mbi{m}$} 
    \end{tabular} 
    & \tbf{Runtime}
    & \tbf{Source}
    \\ \midrule
    \tbf{Stein Thinning} \small \citep{riabiz2022optimal}
    & equal-weighted 
    & $\red{n}$ 
    & $\kev n^2$ 
    & \cref{sec:st}
    \\
    \midrule 
    \textbf{Stein Kernel Thinning} \ $\begin{cases}\textup{Greedy} \\ \textup{Low-rank}\end{cases}$
    \hspace*{-0.2in}\def\arraystretch{1.2}\begin{tabular}{c}(\cref{alg:GSKT}) \\
    (\cref{alg:LSKT})
    \end{tabular} 
     & equal-weighted 
     & $\blue{\sqrt{n}}$ 
     & \def\arraystretch{1.2}\begin{tabular}{c}$\kev n^2$\\
  $\ \ \,\kev n^{1.5}$
 \end{tabular}  
    & \def\arraystretch{1.2}\begin{tabular}{c}
    \cref{thm:GSKT_guarantee}\\
    \cref{thm:LSKT_guarantee} 
    \end{tabular}
    \\ \midrule
    \textbf{Stein Recombination} \  
     $\begin{cases}\textup{Greedy} \\ \textup{Low-rank}\end{cases}$
      \hspace*{-0.12in}\textup{(\cref{alg:SR})}
     & simplex-weighted 
     & $\blue{\polylog(n)}$ 
     & \def\arraystretch{1.2}\begin{tabular}{c}$\kev n^2$\\
 $\ \ \,\kev n+n^{1.5}$
 \end{tabular}
    & \cref{thm:SR_guarantee}
    \\ \midrule
    \textbf{Stein Cholesky} \ $\begin{cases}\textup{Greedy} \\ \textup{Low-rank}\end{cases}$ 
    \hspace*{-0.15in} \textup{(\cref{alg:SC})}
     & constant-preserving 
     & $\blue{\polylog(n)}$
     & \def\arraystretch{1.2}\begin{tabular}{c}$\kev n^2$\\
  $\ \ \,\kev n + n^{1.5}$
 \end{tabular} 
    & \cref{thm:SC_guarantee}
    \\ \bottomrule
  \end{tabular}}
\end{table*}

Distribution compression is the problem of summarizing a target probability distribution $\bP$ with a small set of representative points. 
Such compact summaries are particularly valuable for tasks that incur substantial downstream computation costs per summary point, like organ and tissue modeling in which each simulation consumes thousands of CPU hours \citep{niederer2011simulating}. 

Remarkably, modern compression methods can summarize a distribution more succinctly than \iid sampling. 
For example, kernel thinning (KT) \citep{dwivedi2021kernel, dwivedi2022generalized}, Compress++ \citep{shetty2022distribution}, recombination \citep{hayakawa2023sampling}, 
and randomly pivoted Cholesky \citep{epperly2024kernel} all provide $\Otilde(1/m)$ approximation error using $m$ points, a significant improvement over the $\Omega(1/\sqrt{m})$ approximation provided by \iid sampling from $\bP$.
However, each of these constructions relies on access to an accurate input sequence, like an \iid sample from $\bP$ or a Markov chain converging quickly to $\bP$. 

Much more commonly, one only has access to $n$  \emph{biased} sample points  approximating a 
distribution $\bQ\neq\bP$. 
Such biases are a common occurrence in Markov chain Monte Carlo (MCMC)-based inference due to tempering \citep[where one targets a less peaked and more dispersed distribution to achieve faster convergence,][]{gramacy2010importance}, burn-in \citep[where the initial state of a Markov chain biases the distribution of chain iterates,][]{cowles1996markov}, or approximate MCMC \citep[where one runs a cheaper approximate Markov chain to avoid the prohibitive costs of an exact MCMC algorithm, e.g.,][]{ahn2012bayesian}. 
The Stein thinning (ST) method of \citet{riabiz2022optimal} was developed to provide accurate compression even when the input sample sequence provides a poor approximation to the target. ST operates by greedily thinning the input sample to minimize the maximum mean discrepancy \citep[MMD,][]{gretton2012kernel} to $\bP$.
However, ST is only known to provide an $O(1/\sqrt{m})$ approximation to $\bP$; this guarantee is no better than that of \iid sampling and a far cry from the $\Otilde(1/m)$ error achieved with unbiased coreset constructions.

\newcommand{\greedy}{SKT\xspace}
\newcommand{\lrtag}{Low-rank\xspace}
\newcommand{\lrgreedy}{\lrtag \greedy\xspace}
\newcommand{\recom}{Stein Recombination\xspace}
\newcommand{\lrrecom}{\lrtag\recom}
\newcommand{\chol}{Stein Cholesky\xspace}
\newcommand{\lrchol}{\lrtag\chol}
In this work, we address this deficit by developing new, efficient coreset constructions that provably yield better-than-\iid error even when the input sample is biased. 
For $\P$ on $\reals^d$, our primary contributions are fourfold and summarized in \cref{tab:summary}. %
First, for the task of equal-weighted compression, we introduce \emph{Stein Kernel Thinning} (\greedy, \cref{alg:GSKT}), a strategy 
    that combines the greedy bias correction properties of ST with the unbiased compression of KT to produce $\sqrt{n}$ summary points with error $\Otilde(n^{-1/2})$ in $O(n^2)$ time. In contrast, ST would require $\Omega(n)$ points to guarantee this error.
Second, for larger-scale compression problems, we propose \emph{Low-rank SKT} 
(\cref{alg:LSKT}), a strategy that combines the scalable summarization of Compress++ with a new low-rank debiasing procedure (\cref{alg:alrbc}) to match the SKT guarantees in sub-quadratic $o(n^2)$ time.

Third, for the task of simplex-weighted compression, in which summary points are accompanied by weights in the simplex, we propose greedy and low-rank \emph{\recom} (\cref{alg:SR}) constructions that match the guarantees of SKT with as few as $\polylog(n)$ points. 
Finally, for the task of constant-preserving compression, in which summary points are accompanied by real-valued weights summing to $1$, 
we introduce greedy and low-rank \emph{\chol} (\cref{alg:SC}) constructions that again match the guarantees of SKT using as few as $\polylog(n)$ points. 

Underlying these advances are new guarantees for the quality of simplex-weighted coresets (\cref{thm:mcmc_convex,thm:iid_convex}), the spectral decay of kernel matrices (\cref{cor:Kp_eigval_bound}), 
and
the covering numbers of Stein kernel Hilbert spaces (\cref{prop:ks_growth}) 
that may be of independent interest. 
In \cref{sec:experiments}, we employ our new procedures to produce compact summaries of complex target distributions given input points biased by burn-in, approximate MCMC, or tempering.

\par
\para{Notation}
We assume Borel-measurable sets and functions and define $[n] \defeq \{1,\ldots, n\}$, $\simplex \defeq \{w \in \bR^n: w \ge 0, \bm{1}^\top w = 1\}$, $\staticnorm{x}_0\defeq\abs{\{i:x_i\neq 0\}}$, and $\staticnorm{x}_p^p\defeq\sum_i |x_i|^p$ for $x \in \bR^d$ and $p\geq 1$. 
For $x \in \bR^d$, $\delta_x$ denotes the delta measure at $x$.
We let $\rkhs[\k]$ denote the reproducing kernel Hilbert space (RKHS) of a kernel $\k: \bR^d\times\bR^d \to \bR$ \citep{aronszajn1950theory} and $\norm{f}_{\k}$ denote the RKHS norm of $f \in \rkhs[\k]$.
For a measure $\mu$ and separately $\mu$-integrable $\k$ and $f$, we write $\mu f \!\defeq\! \int f(x) \dd\mu(x)$ and $\mu \k(x) \!\defeq\! \int \k(x,y)\dd\mu(y)$. 
The divergence of a differentiable matrix-valued function $A$ is $(\grad_x\cdot A(x))_j = \sum_i \partial_{x_i} A_{ij}(x)$. 
For random variables $(X_n)_{n\in\N}$, we say $X_n \!=\! O(f(n,\delta))$ holds with probability $\ge 1\!-\!\delta$ if $\Pr(X_n \!\le\! C f(n,\delta)) \ge 1\!-\!\delta$ for a constant $C$ independent of $(n,\delta)$ and all $n$ sufficiently large. When using this notation, we view all algorithm parameters except $\delta$ as functions of $n$.
For $A \!\in\! \real^{n\times n}$ and $v \in \bR^n$, $\diag(A)$ and $\diag(v)$ are $n\times n$ diagonal matrices with $A_{ii}$ and $v_i$ respectively as the $i$-th diagonal entry.

\section{Debiased Distribution Compression}\label{sec:prelim}
Throughout, we aim to summarize  
a fixed target distribution $\P$ on $\reals^d$ using a sequence $\cS_n \defeq (x_i)_{i=1}^n$ of potentially biased candidate points in $\reals^d$.\footnote{Our coreset constructions will in fact apply to any sample space, but our analysis will focus on $\reals^d$.}
Correcting for unknown biases in $\cS_n$ requires some auxiliary knowledge of $\P$.
For us, this knowledge comes in the form of a kernel function $\kp$ with known expectation under $\P$. 
Without loss of generality, we can take this kernel mean to be identically zero.\footnote{For $\P\kp\not\equiv0$, the kernel $\kp'(x,y) = \kp(x,y) -\P\kp(x)-\P\kp(y) + \P\P\kp$ satisfies $\P\kp'\equiv 0$ and $\mmd_{\kp'}=\mmd_{\kp}$.}
\begin{assumption}[Mean-zero kernel]
\label{assum:mean_zero_p}
For some $\fp\geq 1/2$, %
$\bE_{x\sim\bP}[\kp(x,x)^\fp]<\infty$ and 
    $\P\kp\equiv 0$. 
\end{assumption}
Given %
a target compression size $m$, our goal is to output an weight vector $w\in\R^n$ with $\norm{w}_0 \le m$, $\bm{1}_n^\top w=1$, and $o(m^{-1/2})$ (better-than-\iid) maximum mean discrepancy (MMD) to $\P$:
\begin{talign}
  \mmd_{\kp}(\sum_{i=1}^n\!w_i\dirac_{x_i}, \P) \defeq\sqrt{\sum_{i,j=1}^n\!w_iw_j\kp(x_i, x_j)}.
  \label{eqn:mmd_sup}
\end{talign}
We consider three standard compression tasks with $\norm{w}_0\leq m$.
In \emph{equal-weighted compression} one selects $m$ possibly repeated points from $\cS_n$ and assigns each a weight of $\frac{1}{m}$; because of repeats, the induced weight vector over $\cS_n$ satisfies $w \in  \simplex \cap (\frac{\bN_0}{m})^n$.
In \emph{simplex-weighted compression} we allow any $w\in\simplex$, and in \emph{constant-preserving compression} we simply enforce $\bm{1}_n^\top w=1$. %
All three coreset types exactly preserve constants and hence satisfy standard coherence constraints.

When making big-O statements, we will 
treat $\cS_n$ as the prefix of an infinite sequence $\cS_\infty \defeq (x_i)_{i\in\bN}$.
We also write $\kp(\cS_n[\tJ],\cS_n[\tJ]) \defeq[\kp(x_i, x_j)]_{i,j\in\tJ}$ for the principal kernel submatrix with indices $\tJ \subseteq [n]$.

\subsection{Kernel assumptions}
Many practical \emph{Stein kernel} constructions are available for generating mean-zero kernels for a target $\P$~\citep{chwialkowski16kernel,liu2016kernelized,gorham2017measuring, gorham2019measuring, barp2019minimum, yang2018goodness, afzali2023gradient}. 
We will use the most prominent of these Stein kernels as a running example:
\begin{definition}[Stein kernel]
\label{def:stein_kernel}
Given a differentiable base kernel $\k$ and a symmetric positive semidefinite matrix $M$,  the \emph{Stein kernel} $\ksm: \bR^d \times \bR^d \to \bR$ for $\P$ with positive differentiable Lebesgue density $p$ 
is defined as
\begin{talign}
    &\ksm(x,y) \defeq 
    \frac{1}{p(x) p(y)}{\grad_x \cdot \grad_y \cdot (p(x)M\k(x,y) p(y))}.
\end{talign}
\end{definition}

While our algorithms apply to any mean zero kernel, including the aforementioned Stein kernels, the Sobolev kernels used in quasi-Monte Carlo \citep{kuo2003component}, and the popular centered Gaussian kernel \citep{chen2010super,lacoste2015sequential}, our guarantees adapt to the underlying smoothness of the kernel. Our next definition and assumption make this precise.
\begin{definition}[Covering number]
For %
    a kernel $\k: \bR^d \times \bR^d \to \bR$ with $\ball{\k} \defeq \sbraces{f\in\rkhs[\k]:\norm{f}_{\k} \leq 1}$, a set $A \subset \bR^d$, and $\vareps > 0$, the covering number $\cvrnum{\k}(A, \vareps)$ is the minimum cardinality of all sets $\cC \subset \ball{\k}$ satisfying
  \begin{talign}
      \ball{\k} \subset \bigcup_{h \in \cC}\{g \in \ball{\k}: \sup_{x\in A}\abs{h(x)-g(x)}\le \vareps\}.
  \end{talign}
\end{definition}
\renewcommand\theassumption{($\mbi{\cvrw,\cvrd}$)-kernel}
\begin{assumption}
\label{assum:kernel_growth}
For some $\cvrC > 0$, all $r>0$ and $\vareps\in(0,1)$, and $\balleuc(r)\defeq\sbraces{x\in\real^d:\twonorm{x} \leq r}$, a kernel $\k$ is either $\polygrowth(\cvrw,\cvrd)$, i.e., 
  \begin{talign}
      \log \cvrnum{\k}(\balleuc(r),\vareps)\le\cvrC(1/\vareps)^{\cvrw}(r+1)^{\cvrd}\label{eqn:poly_growth_k},
  \end{talign}
  with $\alpha < 2$ 
  or $\loggrowth(\cvrw,\cvrd)$, i.e., 
  \begin{talign}
      \log \cvrnum{\k}(\balleuc(r),\vareps)\le\cvrC\log(e/\vareps)^{\cvrw}(r+1)^{\cvrd}.\label{eqn:log_growth_k}
  \end{talign}
\end{assumption}
\renewcommand\theassumption{\arabic{assumption}}
In \cref{cor:Kp_eigval_bound} we show that the eigenvalues of kernel matrices with \polygrowth and \loggrowth kernels have polynomial and exponential decay respectively.
\citet[Prop.~2]{dwivedi2022generalized} showed that all sufficiently differentiable kernels satisfy the $\polygrowth$ condition 
and that bounded radially analytic kernels are $\loggrowth$. 
Our next result, proved in \cref{subsec:stein_spectral}, shows that a Stein kernel $\ksm$ can inherit the growth properties of its base kernel even if $\ksm$ is itself unbounded and non-smooth.
\begin{proposition}[Stein kernel growth rates]\label{prop:ks_growth}
  A Stein kernel $\ksm$ with 
   $\sup_{\statictwonorm{x} \le r}\statictwonorm{\nabla \log p(x)} = O(r^{d_{\ell}})$ for $d_{\ell}\geq 0$ is %
  \begin{enumerate}[label=(\alph*), leftmargin=*]
  \itemsep0em
  \item $\loggrowth(d+1, 2d+\delta)$ for any $\delta > 0$ if the base kernel $\k$ is radially analytic (\cref{def:analytic_k}) and
  \item $\polygrowth(\frac{d}{s-1}, (1+\frac{d_{\ell}}{s})d)$ if the base kernel $\k$ is $s$-times continuously differentiable (\cref{def:diff_k}) for $s\!>\!1$.
\end{enumerate}
\end{proposition}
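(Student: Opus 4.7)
The plan is to characterize $\rkhs[\ksm]$ via the Stein operator and then transfer covering bounds from the base kernel $\k$. Since
\begin{talign*}
    \ksm(x,y) = \steinop[p]^x \steinop[p]^y \k(x,y)
\end{talign*}
with the Stein operator $\steinop[p] h(x) = \grad \cdot (Mh(x)) + (\grad \log p(x))^\top M h(x)$, every $f \in \ball{\ksm}$ admits a representation $f = \steinop[p] h$ for some $h$ in the unit ball of the vector-valued RKHS generated by $M\k$, with $\staticnorm{h}_{M\k} \le \staticnorm{f}_{\ksm}$. Covering $\ball{\ksm}$ in sup-norm on $\balleuc(r)$ therefore reduces to covering the image set $\{\steinop[p] h\}$ under this operator.

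I would then control the operator action pointwise via
\begin{talign*}
    |\steinop[p] h_1(x) - \steinop[p] h_2(x)| \leq |\grad \cdot M(h_1-h_2)(x)| + \staticnorm{\grad \log p(x)} \cdot \staticnorm{M(h_1-h_2)(x)}
\end{talign*}
and, using the hypothesis $\staticnorm{\grad \log p(x)} = O(r^{d_{\ell}})$ on $\balleuc(r)$, build a product cover of $\ball{\ksm}$ from two pieces: a sup-norm cover of the divergences $\{\grad \cdot Mh\}$ at resolution $\vareps/2$, and a sup-norm cover of the values $\{Mh\}$ at the more stringent resolution $\vareps/(C r^{d_{\ell}})$. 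The divergence cover is read off from the derivative kernel $\sum_i \partial_{x_i}\partial_{y_i}\k$, which inherits $(s-1)$-fold continuous differentiability when $\k$ is $s$-times differentiable and remains radially analytic when $\k$ is (since differentiation preserves analyticity). The value cover is read off from $\k$ itself.

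For case (b), applying \citet[Prop.~2]{dwivedi2022generalized} yields a $\polygrowth(d/(s-1), d)$ bound for the divergence-kernel cover and a $\polygrowth(d/s, d)$ bound for the base-kernel cover whose $\vareps$-argument is inflated by $r^{d_{\ell}}$. Expanding $\log(r^{d_{\ell}}/\vareps) = \log(1/\vareps) + d_{\ell}\log r$ shows the inflation raises the value cover's $r$-exponent to $d(1+d_{\ell}/s)$; retaining the dominant $1/\vareps$-rate $d/(s-1)$ and the dominant $r$-exponent $d(1+d_{\ell}/s)$ delivers the claim. For case (a), radial analyticity passes through differentiation, so the same scheme with $\loggrowth$ covers applies: the $r^{d_{\ell}}$ inflation contributes a $\log(r)^{\alpha}$ factor that absorbs into $(r+1)^{\delta}$ for any $\delta>0$, and the additional derivative needed by the divergence cover increments the $\log(e/\vareps)$ exponent by one, producing the stated $(d+1, 2d+\delta)$.

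The main technical obstacle will be establishing the Stein reproducing identity cleanly enough to transfer unit-ball covers from the vector-valued RKHS of $M\k$ to $\rkhs[\ksm]$, and then arranging the cover decomposition so that the $r^{d_{\ell}}$ score-inflation multiplies only the $d/s$-rate value component rather than the coarser $d/(s-1)$-rate divergence component — this separation is precisely what sharpens the $r$-exponent from the naive $d(1+d_{\ell}/(s-1))$ down to the claimed $d(1+d_{\ell}/s)$.
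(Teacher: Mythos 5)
Your proposal is correct and follows essentially the same route as the paper's proof: the Stein-operator isometry onto the (vector-valued) RKHS generated by $M\k$, a product cover that splits each unit-ball element of $\rkhs[\ksm]$ into a score-weighted value part (covered componentwise via $\k$ at resolution deflated by the score bound $O(r^{d_{\ell}})$) and a divergence part (covered via the divergence kernel $\divmk$, which inherits $(s-1)$-fold differentiability or radial analyticity), so that the score inflation hits only the finer $d/s$-rate component — precisely the paper's covering-number decomposition and its treatment of the two cases. One expository quibble: in the analytic case the $\log(e/\eps)$-exponent $d+1$ is not produced by ``the additional derivative'' in the divergence cover — both $\k$ and $\divmk$, being radially analytic, already admit $(\log(1/\eps))^{d+1}$-type covering bounds on $\balleuc(r)$ with degree-$2d$ polynomial $r$-dependence, and the score term only contributes the $\delta$ slack — but this does not affect your final parameters $(d+1,\,2d+\delta)$.
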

Notably, the popular Gaussian (\cref{exam:cvrnum_gauss}) and inverse multiquadric (\cref{exam:cvrnum_imq}) base kernels satisfy the \loggrowth preconditions, while \Matern, B-spline, sinc, sech, and Wendland's compactly supported kernels satisfy the \polygrowth precondition \citep[Prop.~3]{dwivedi2022generalized}.
To our knowledge, \cref{prop:ks_growth} and \cref{cor:Kp_eigval_bound} provide the first covering number bounds and eigenvalue decay rates for the typically unbounded Stein kernels $\ksm$.

\subsection{Input point desiderata}
\label{subsec:benchmark}
Our primary desideratum for the input points is that they can be debiased into an accurate estimate of $\bP$.  Indeed, our high-level strategy for debiased compression is to first use $\kp$ to debias the input points into a more accurate approximation of $\P$ and then compress that approximation into a more succinct representation.
Fortunately, even when the input $\cS_n$ targets a distribution $\Q\neq\P$, effective debiasing is often achievable via simplex reweighting, i.e., by solving the convex optimization problem 
\begin{talign}
    &\wopt \in \arg\min_{w\in\Delta_{n-1}} \sum_{i,j=1}^n w_iw_j\kp(x_i, x_j) \label{eqn:w_opt}\\
    &\text{with }\, \mmdopt \defeq \mmd_{\kp}(\sum_{i=1}^n\!\wopt_i\dirac_{x_i}, \P).
    \label{eqn:mmd_opt}
\end{talign}
For example, \citet[Thm.~1b]{hodgkinson2020reproducing} showed that simplex reweighting can correct for biases due to off-target \iid or MCMC sampling.
Our next result (proved in \cref{sec:thm:mcmc_convex_proof}) significantly relaxes their conditions.
\begin{theorem}[Debiasing to \iid quality via simplex reweighting]\label{thm:mcmc_convex}
  Consider a kernel $\kp$ satisfying \cref{assum:mean_zero_p} with $\rkhs[\kp]$ separable, and %
  suppose $(x_i)_{i=1}^\infty$ are the iterates of a homogeneous $\phi$-irreducible geometrically ergodic Markov chain \citep[Thm.~1]{gallegosherrada2023equivalences} with stationary distribution $\bQ$ and initial distribution absolutely continuous with respect to $\bP$.
  If $\E_{x\sim\bP}[\frac{\dd\P}{\dd\bQ}(x)^{2\fq-1}\kp(x,x)^\fq] < \infty$ for some $\fq > 1$ then %
  $\mmdopt = O(n^{-1/2})$
  in probability.
\end{theorem}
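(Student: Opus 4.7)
The plan is to upper-bound $\mmdopt$ by evaluating the MMD at a convenient feasible weight vector, namely the self-normalized importance sampling weights. Let $r\defeq\dd\P/\dd\bQ$ (which exists since $\E_\P[r^{2\fq-1}\kp(x,x)^\fq]<\infty$ forces $\P\ll\bQ$), define $S_n\defeq\sum_{j=1}^n r(x_j)$ and $\tilde w_i\defeq r(x_i)/S_n$, and note $\tilde w\in\simplex$. By optimality of $\wopt$,
\[
\mmdopt^2 \;\le\; \mmd_{\kp}^2\bigg(\sum_i\tilde w_i\dirac_{x_i},\P\bigg) \;=\; \frac{1}{S_n^2}\bigg\|\sum_{i=1}^n Z_i\bigg\|_{\kp}^2, \qquad Z_i\defeq r(x_i)\kp(\cdot,x_i)\in\rkhs[\kp].
\]
So it suffices to show the numerator is $O_p(n)$ and the denominator is $\Theta_p(n^2)$.

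For the denominator, $\E_\bQ[r]=1$ and the initial distribution, being $\P$-absolutely continuous, is also $\bQ$-absolutely continuous. Hence the Markov chain law of large numbers for the geometrically ergodic chain gives $S_n/n\to 1$ almost surely, so $S_n^2 = n^2(1+o_p(1))$.

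For the numerator, the key observation is that under $\bQ$ the $\rkhs[\kp]$-valued element $Z_1$ is mean zero, by \cref{assum:mean_zero_p}:
\[
\E_\bQ[Z_1] \;=\; \int r(x)\kp(\cdot,x)\,\dd\bQ(x) \;=\; \int \kp(\cdot,x)\,\dd\P(x) \;=\; \P\kp \;\equiv\; 0.
\]
Moreover, the theorem's hypothesis is exactly the finite higher-moment condition
\[
\E_\bQ\|Z_1\|_\kp^{2\fq} \;=\; \E_\bQ[r(X)^{2\fq}\kp(X,X)^\fq] \;=\; \E_\P[r(X)^{2\fq-1}\kp(X,X)^\fq] \;<\;\infty.
\]
With $\rkhs[\kp]$ separable, $Z_i$ is Bochner-measurable, and I would expand a countable orthonormal basis to reduce $\|\sum_i Z_i\|_\kp^2$ to a sum of squared scalar partial sums. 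Applying a Markov-chain $L^2$ variance bound (e.g., via Poisson equation / resolvent solution of $I-P$, as in Meyn--Tweedie style results for geometrically ergodic chains with $L^{2\fq}$ moments, $\fq>1$) to each coordinate and summing yields geometric decay of the covariances $\E\langle Z_1, Z_{1+k}\rangle_\kp$ along with $\E\|\sum_i Z_i\|_\kp^2 = O(n)$. The non-stationary start contributes only an $o(n)$ transient because the initial density with respect to $\bQ$ is integrable and the chain mixes geometrically. Markov's inequality then gives $\|\sum_i Z_i\|_\kp^2 = O_p(n)$.

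Combining, $\mmdopt^2 \le O_p(n)/[n^2(1+o_p(1))] = O_p(n^{-1})$, so $\mmdopt = O_p(n^{-1/2})$. The main obstacle is the Hilbert-valued Markov-chain variance bound: one must port scalar geometric-ergodicity tools (CLTs, Poisson-equation bounds) to a separable RKHS and absorb the transient from a merely $\bQ$-absolutely-continuous initialization. Separability is what enables a basis-wise reduction, and the slack $\fq>1$ above $L^2$ in the moment hypothesis is precisely what lets standard geometric-ergodicity machinery apply to the coordinate functionals.
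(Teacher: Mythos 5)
Your overall skeleton matches the paper's: you bound $\mmdopt$ at the self-normalized importance-sampling weights, handle the denominator $\frac1n\sum_i \impw(x_i)\to 1$ by the Markov-chain LLN (the paper additionally notes that, because the initial law is absolutely continuous with respect to $\bP\ll\bQ$, the chain agrees almost surely with a positive Harris chain, which is what licenses the LLN and the later transfer arguments), and reduce the problem to showing $\snorm{\sum_{i=1}^n \impw(x_i)\kp(x_i,\cdot)}_{\kp}=O_p(\sqrt n)$. Where you diverge is the numerator, and that is where the proposal has genuine gaps.

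First, the coordinate-wise reduction is not justified. Writing $\norm{\sum_i Z_i}_{\kp}^2=\sum_\ell(\sum_i\langle Z_i,e_\ell\rangle)^2$ and applying a scalar Poisson-equation/resolvent variance bound to each coordinate requires you to sum those bounds over infinitely many $\ell$. The geometric-ergodicity machinery gives bounds in a weighted $V$-norm, $\sup_x f_\ell(x)^2/V(x)$, and $\sum_\ell \sup_x f_\ell(x)^2/V(x)$ can be infinite even though $\sup_x \norm{Z(x)}_\kp^2/V(x)$ is controlled: the sum of suprema dominates the supremum of the sum, and your moment hypothesis $\E_\bQ\norm{Z_1}_\kp^{2\fq}<\infty$ gives no per-coordinate control that is summable. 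To make a variance bound work you would need a genuinely Hilbert-valued covariance (mixing) inequality, e.g.\ a Davydov/Dehling-type bound $\abs{\E\langle Z_1,Z_{1+k}\rangle_\kp}\lesssim \alpha_k^{1-1/\fq}(\E\norm{Z_1}_\kp^{2\fq})^{1/\fq}$, which is exactly the role the slack $\fq>1$ plays; the paper sidesteps this by invoking a Hilbert-space CLT for strongly mixing stationary sequences (Merlev\`ede et al.) and then deducing $O_p(\sqrt n)$ from tightness via Prokhorov. Second, and more fundamentally, the second-moment-plus-Markov strategy breaks for the non-stationary start: the moment assumption is under $\bQ$ (equivalently $\bP$), not under the initial distribution $\mu_1$ or the finite-time marginals, so $\E_{\mu_1}\norm{Z_1}_\kp^2$ -- and hence $\E\norm{\sum_i Z_i}_\kp^2$ -- may be infinite; absolute continuity of $\mu_1$ alone does not make the transient an ``$o(n)$'' term in any moment sense. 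The paper handles this with a solidarity result for positive Harris chains (Meyn--Tweedie, Prop.~17.1.6): a CLT that holds under the stationary initialization holds under every initial distribution, which is why it first reduces to Harris recurrence using the absolute-continuity hypothesis. Without that transfer step (or an explicit coupling/regeneration argument), your treatment of the non-stationary initialization does not go through.
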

\begin{remark}
$\rkhs[\kp]$ is separable whenever $\kp$ is continuous \citep[Lem.~4.33]{steinwart2008support}.
\end{remark}
Since $n$ points sampled \iid from $\bP$ have $\Theta(n^{-1/2})$ root mean squared MMD (see \cref{prop:iid_guarantee}),
\cref{thm:mcmc_convex} shows that a debiased off-target sample can be as accurate as a direct sample from $\bP$.   %
Moreover, \cref{thm:mcmc_convex}  applies to many practical examples. 
The simplest example of a geometrically ergodic chain is \iid sampling from $\bQ$, but geometric ergodicity has also been established for a variety of popular  Markov chains including random walk Metropolis \citep[Thm.~3.2]{roberts1996geometric}, independent Metropolis-Hastings  \citep[Thm.~2.2]{atchade2007geometric},
the unadjusted Langevin algorithm \citep[Prop.~8]{durmus2017nonasymptotic}, the Metropolis-adjusted Langevin algorithm \citep[Thm.~1]{durmus2022geometric},  Hamiltonian Monte Carlo \citep[Thm.~10 and Thm.~11]{durmus2020irreducibility},  stochastic gradient Langevin dynamics \citep[Thm.~2.1]{li2023geometric}, and the Gibbs sampler \citep{johnson2009geometric}.
Moreover, for $\bQ$ absolutely continuous with respect to $\bP$, the importance weight $\impw$ is typically bounded or slowly growing when the tails of $\bQ$ are not much lighter than those of $\bP$.

Remarkably, under more stringent conditions, \cref{thm:iid_convex} (proved in \cref{sec:thm:iid_convex_proof}) shows that simplex reweighting can decrease MMD to $\P$ at an even-faster-than-\iid rate.
\begin{theorem}[Better-than-\iid debiasing via simplex reweighting]\label{thm:iid_convex}
  Consider a kernel $\kp$ satisfying \cref{assum:mean_zero_p} with $\fp=2$ and points  $(x_i)_{i=1}^\infty$ drawn \iid from a distribution $\bQ$ with $\frac{\dd\P}{\dd\bQ}$ bounded. 
  If $\E[\kp(x_1,x_1)^\fq] < \infty$ for some $\fq >3$, then %
  $\bE[\mmdopt^2] = o(n^{-1}).$
\end{theorem}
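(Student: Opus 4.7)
The plan is to upper-bound $\E[\mmdopt^2]$ by constructing an explicit adaptive simplex weight whose expected squared MMD is $o(n^{-1})$. Since $\mmdopt^2\leq\mmd^2_{\kp}(\sum_i w_i\dirac_{x_i},\P)$ for any $w\in\simplex$, this is enough.

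First I would reduce to iid sampling directly from $\P$. Because $\impw$ is bounded by some $M<\infty$, classical acceptance-rejection produces a random subset $\cA\subseteq[n]$ of size concentrating near $n/M$ such that $(X_i)_{i\in\cA}$ is iid from $\P$. Restricting the candidate weights to coordinates in $\cA$ keeps the weight in the simplex, so it suffices to prove $\E[\mmdopt^2]=o(m^{-1})$ under iid sampling from $\P$ with $m=|\cA|=\Theta(n)$.

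Under iid sampling from $\P$, Mercer's theorem applied to the trace-class integral operator $T_{\kp}f\defeq\int\kp(\cdot,y)f(y)\dd\P(y)$ (trace-class because $\P\kp(x,x)<\infty$ under $\fp=2$) yields eigenpairs $(\lambda_k,\psi_k)_{k\geq 1}$ with $\psi_k$ orthonormal in $L^2(\P)$ and $\P\psi_k=0$ (since $\P\kp\equiv 0$). Any $w\in\simplex$ then satisfies $\mmd^2_{\kp}(\sum_iw_i\dirac_{X_i},\P)=\sum_{k\geq 1}\lambda_k(\sum_i w_i\psi_k(X_i))^2$, and my strategy is to exactly kill the top $K_m$ eigen-coordinates using a small adaptive perturbation of the uniform weight. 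Pick $K_m\to\infty$ slowly enough that (i) $\sum_{k>K_m}\lambda_k=o(m^{-1})$ (possible since $\sum_k\lambda_k<\infty$) and (ii) the empirical Gram matrix $\widehat\Sigma_{k\ell}\defeq m^{-1}\sum_i\psi_k(X_i)\psi_\ell(X_i)$ is close to the identity in operator norm with high probability. Setting $c$ as the $\ell_2$-minimum-norm solution to $\bm 1^\top c=0$ and $\sum_i c_i\psi_k(X_i)=-\bar\psi_k$ for $k\leq K_m$, the candidate $w^\star=\bm 1/m+c$ satisfies $w^\star\in\simplex$ with probability tending to one (because $\|c\|_\infty=o(1/m)$ under the Gram-matrix concentration) and kills all $k\leq K_m$ terms by construction. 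On this success event, Cauchy-Schwarz through $w^\star\in\simplex$ bounds the remaining tail by $\sum_iw^\star_iR_{K_m}(X_i)$ where $R_{K_m}(x)\defeq\sum_{k>K_m}\lambda_k\psi_k(x)^2\leq\kp(x,x)$. Expanding $w^\star_i=1/m+c_i$ splits the expectation into $\E R_{K_m}(X)=\sum_{k>K_m}\lambda_k=o(m^{-1})$ plus a cross-term $\E[\sum_i c_iR_{K_m}(X_i)]$, and one exploits near-orthogonality between $c\in\mathrm{span}\{\bm 1,\psi_k(X_\cdot):k\leq K_m\}$ and the spectral tail $R_{K_m}(X_\cdot)$ to show the cross-term is also $o(m^{-1})$.

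The main obstacle is closing that adaptive cross-term without loss: a naive Cauchy-Schwarz through $\|c\|_2\cdot(\sum_iR_{K_m}(X_i)^2)^{1/2}$ is too crude. One must instead propagate the explicit form of $c$ through the mixed empirical moments $m^{-1}\sum_i\psi_k(X_i)\psi_\ell(X_i)$ and $m^{-1}\sum_i\psi_k(X_i)R_{K_m}(X_i)$ and concentrate each faster than $m^{-1/2}$. Under only the $L^{\fq}$-bound on $\kp(X,X)$ for $\fq>3$ (rather than boundedness or sub-Gaussianity), this concentration requires a Rosenthal-type moment inequality combined with a truncation at level $m^{1/\fq}$; it is precisely this step that forces $\fq>3$.
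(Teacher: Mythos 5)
There is a genuine gap, and it sits in the step you treat as routine: bounding the spectral tail. After you zero out the top $K_m$ empirical eigen-coordinates, your Jensen bound $(\sum_i w_i^\star\psi_k(X_i))^2\le\sum_i w_i^\star\psi_k(X_i)^2$ turns the tail contribution into $\E[\sum_i w_i^\star R_{K_m}(X_i)]\approx\sum_{k>K_m}\lambda_k$, so you are forced to demand $\sum_{k>K_m}\lambda_k=o(m^{-1})$. The theorem assumes nothing about the eigenvalue decay beyond summability, and summable spectra can have arbitrarily slow tails: for $\lambda_k\asymp 1/(k\log^2k)$ the tail is $\asymp 1/\log K$, so $\sum_{k>K_m}\lambda_k=o(m^{-1})$ forces $K_m\ge e^{cm}$. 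You then cannot annihilate $K_m\gg m$ empirical moments with an $m$-point perturbation (the linear system has more constraints than free coordinates), the $K_m\times K_m$ Gram matrix $\widehat\Sigma$ has rank at most $m$ and cannot concentrate around the identity, and the feasibility bound $\|c\|_\infty=o(1/m)$ needs $\max_{k\le K_m}|\psi_k(X_i)|\le\sqrt{\kp(X_i,X_i)/\lambda_{K_m}}$ to be controlled, i.e.\ $\lambda_{K_m}$ not too small — directly contradicting the large $K_m$ that the tail condition requires (since $\lambda_{K_m}\le\mathrm{const}/K_m$). So the construction cannot be closed under the stated hypotheses; the difficulty is not only the adaptive cross-term you flag, but the main tail term itself.

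The paper's proof (\cref{thm:iid_convex_spectral}) avoids exactly this loss. It does not pass to exact draws from $\P$; instead it builds explicit control-variate weights from a cross-fitted (two-fold) truncated eigen-expansion with importance weights $\wis=\frac{\dd\P}{\dd\bQ}$, and analyzes the quadrature error \emph{direction by direction}: for every eigenfunction $\phi_{\ell'}$, including tail directions $\ell'>L$, the estimator variance is $O(\Mis/n)$, so the tail enters the final bound as $\frac{\Mis}{n}\sum_{\ell>L}\lambda_\ell$ rather than $\sum_{\ell>L}\lambda_\ell$. Consequently only $L_n\to\infty$ (arbitrarily slowly, subject to $\lambda_{L_n}\gtrsim n^{(\tau-1)/2}$ with $\tau=3/\fq$) is needed, which is always achievable. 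The simplex constraint is then handled not by an $\ell_\infty$ bound on a minimum-norm correction but by showing the constructed weights are nonnegative and sum close to one with probability $1-o(n^{-2})$, via a truncated Bernstein inequality and a U-statistic Bernstein inequality with truncation level $B_n=n^{3/\fq}$ — this is where $\fq>3$ enters — and the failure event is absorbed by Cauchy–Schwarz using $\E[\kp(x_1,x_1)^2]<\infty$. If you want to salvage your route, you would need to replace the within-simplex Jensen bound by a variance analysis of your corrected weights in each tail direction (recovering the $1/n$ factor), at which point you have essentially rebuilt the paper's argument.
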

The work of  \citet[Thm.~3.3]{liu2017black} also established $o(n^{-1/2})$ MMD error for simplex reweighting but only under a uniformly bounded eigenfunctions assumption that is often violated (\citealp[Thm.~1]{minh2010some}, \citealp[Ex.~1]{zhou2002covering}) and difficult to verify \citep{steinwart2012mercer}.

We highlight that our remaining results make no particular assumption about the input points but rather upper bound the excess MMD
\begin{talign}    \label{eq:excess_mmd}
    \Delta\!\mmd_{\kp}(w) &\defeq \mmd_{\kp}(\sum_{i\in[n]} \! w_i\dirac_{x_i}, \P) \\ 
    &-\mmdopt
\end{talign}
of a candidate weighting $w$ %
in terms of the input point radius $R_n \defeq \max_{i\in[n]}\twonorm{x_i}\vee 1$ and kernel radius $\norm{\kp}_n \defeq \max_{i\in[n]}\kp(x_i, x_i)$.
While these results apply to \emph{any} input points, we consider the following running example of \emph{slow-growing} input points throughout the paper.
\begin{definition}[Slow-growing input points]\label{exam:well_controlled}
We say $\cS_n$ is \emph{$\slg$-slow-growing} if $R_n = O((\log n)^\slg)$ for some $\slg \ge 0$ and $\kpnormSn = \Otilde(1)$.
\end{definition}
Notably, $\cS_n$ is $1$-slow-growing with probability $1$ when $\kp(x,x)$ is polynomially bounded by $\twonorm{x}$ and the input points are drawn from a homogeneous $\phi$-irreducible geometrically ergodic Markov chain  with a sub-exponential target $\bQ$, i.e., $\bE[e^{c\twonorm{x}}]<\infty$ for some $c > 0$ \citep[Prop.~2]{dwivedi2021kernel}.
For a Stein kernel $\ksm$ (\cref{def:stein_kernel}), by \cref{prop:ks_alt_form}, $\ksm(x,x)$ is polynomially bounded by $\twonorm{x}$ if $\k(x,x)$, $\twonorm{\nabla_x\nabla_y \k(x,x)}$, and $\twonorm{\nabla\log p(x)}$ are all polynomially bounded by $\twonorm{x}$.
Moreover,  $\twonorm{\nabla\log p(x)}$ is automatically polynomially bounded by $\twonorm{x}$ when $\nabla \log p$ is Lipschitz or, more generally, pseudo-Lipschitz \citep[Eq.~(2.5)]{erdogdu2018global}.

\subsection{Debiased compression via Stein Kernel Thinning}\label{sec:quad_time}
Off-the-shelf solvers based on mirror descent and Frank Wolfe can solve the convex debiasing program~\cref{eqn:w_opt} in $O(n^3)$ time by generating weights with $O(n^{-1/2}\kpnormSn)$ excess MMD~\citep{liu2017black}. 
We instead employ a more efficient, greedy debiasing strategy based on Stein thinning (ST). After $n$ rounds, ST outputs an equal-weighted coreset of size $n$ with $O(n^{-1/2}\kpnormSn)$ excess MMD \citep[Thm.~1]{riabiz2022optimal}.  
Moreover, while the original implementation of \citet{riabiz2022optimal} has cubic runtime,  our implementation (\cref{alg:GBC}) based on sufficient statistics improves the runtime to $O(n^2\kev)$ where $\kev$ denotes the runtime of a single kernel evaluation.\footnote{Often, $\kev=\Theta(d)$ as in the case of Stein kernels (\cref{subsec:O_d_ks_eval}).}

The equal-weighted output of ST serves as the perfect input for the kernel thinning (KT) algorithm which compresses an equal-weighted sample of size $n$ into a coreset of any target size $m\leq n$ in $O(n^2\kev)$ time. 
We adapt the target KT algorithm slightly to target MMD error to $\bP$ and to include a baseline ST coreset of size $m$ in the \ktswap step (see \cref{alg:kt_swap_target}).
Combining the two routines 
we obtain  Stein Kernel Thinning (\GSKT), our first solution for equal-weighted debiased distribution compression:

\begin{algorithm}[htb]
  \caption{Stein Kernel Thinning (\GSKTnotag)\label{alg:GSKT}}
  \begin{algorithmic}\itemindent=-.7pc
    \STATE {\bfseries Input:} mean-zero kernel $\kp$, points $\cS_n$,  output size $m$, KT failure probability $\delta$
    \STATE $n' \gets m \, 2^{\sceil{\log_2 \frac{n}{m}}}$
    \STATE $w \gets \GBCfull(\kp, \cS_n, n')$ 
    \STATE $w_{\GSKTnotag} \gets \kttargetfull(\kp, \cS_n, n', w, m, \delta)$
    \STATE {\bf Return:} $w_{\GSKTnotag}\in\!\Delta_{n-1}\cap(\frac{\N_0}{m})^{n}$ \hfill\COMMENT{hence $\norm{w_{\GSKTnotag}}_{0}\leq m$}
  \end{algorithmic}
\end{algorithm}
Our next result, proved in \cref{sec:thm:GSKT_guarantee_proof}, shows that \GSKT yields better-than-\iid excess MMD whenever the radii ($R_n$ and $\norm{\kp}_n$) and kernel covering number exhibit slow growth.
\begin{theorem}[MMD guarantee for \GSKT]\label{thm:GSKT_guarantee}
  Given a kernel $\kp$ satisfying \cref{assum:mean_zero_p,assum:kernel_growth}, 
    Stein Kernel Thinning (\cref{alg:GSKT}) outputs $w_{\GSKTnotag}$ in $O(n^2\kev)$ time  %
    satisfying 
    \begin{talign}
       &\Delta\!\mmd_{\kp}(w_{\GSKTnotag}) \!=\!O\bigparenth{\frac{\sqrt{\norm{\kp}_n \ell_{\delta}  \cdot \log n \cdot  R_n^{\cvrd} G_{m}^{\cvrw}} }{\min(m, \sqrt{n})}}
    \end{talign}
    with probability at least $1-\delta$, where $\ell_{\delta} \defeq \log^2(\frac{e}{\delta})$  and
  \begin{talign}
    G_m \!\defeq\! 
    \begin{cases}
    1\!+\!\log m &  \loggrowth(\cvrw, \cvrd),  \\
    m & \polygrowth(\cvrw, \cvrd). \\
    \end{cases}
  \label{eqn:quad_time_dtt_g}
  \end{talign}
\end{theorem}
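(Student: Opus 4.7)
The strategy is to decompose the excess MMD via a triangle inequality, isolating the Stein Thinning debiasing error (produced by Line~3 of \cref{alg:GSKT}) from the Kernel Thinning compression error (produced by Line~4). Writing $\mu_v \defeq \sum_{i=1}^n v_i\delta_{x_i}$ for a weight vector $v\in\R^n$ and denoting the intermediate ST output by $w$, the triangle inequality for $\mmd_{\kp}$ gives
\[
\mmd_{\kp}(\mu_{w_{\GSKTnotag}},\bP)\le \mmd_{\kp}(\mu_{w_{\GSKTnotag}},\mu_w)+\mmd_{\kp}(\mu_w,\bP).
\]
Subtracting $\mmdopt$ from both sides yields
\[
\Delta\!\mmd_{\kp}(w_{\GSKTnotag}) \le \mmd_{\kp}(\mu_{w_{\GSKTnotag}},\mu_w)+\Delta\!\mmd_{\kp}(w),
\]
which converts the goal into separately bounding a KT compression term and an ST debiasing term.

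Next, I bound each piece. For the ST piece, the choice $n' = m\,2^{\lceil\log_2(n/m)\rceil}$ guarantees $n'\in[n,2n)$, so the Stein Thinning excess-MMD guarantee (summarized in \cref{sec:st} and originating in \citet[Thm.~1]{riabiz2022optimal}) furnishes
\[
\Delta\!\mmd_{\kp}(w)=O\!\left(\sqrt{\|\kp\|_n/n'}\right)=O\!\left(\sqrt{\|\kp\|_n/n}\right).
\]
For the KT piece, the \kttargetfullnotag step performs a dyadic compression of an equal-weighted coreset of $n'$ points down to $m$ points (since $n'/m$ is a power of two by construction). Applying the $(\alpha,\beta)$-kernel form of the \kttargetnotag MMD guarantee---which controls the thinning error purely through $\|\kp\|_n$, the input radius $R_n$, and the covering-number growth factor $G_m$ permitted by \cref{assum:kernel_growth}---gives, with probability at least $1-\delta$,
\[
\mmd_{\kp}(\mu_{w_{\GSKTnotag}},\mu_w) = O\!\left(\frac{\sqrt{\|\kp\|_n\,\ell_{\delta}\,\log n\, R_n^{\beta}\,G_m^{\alpha}}}{m}\right).
\]

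Finally, I combine. Since $\ell_{\delta}\log n\, R_n^\beta G_m^\alpha\ge 1$, the ST term is at most the same root expression divided by $\sqrt{n}$, so the total excess MMD is at most a constant times $\sqrt{\|\kp\|_n\,\ell_\delta\,\log n\, R_n^\beta G_m^\alpha}\,(1/m+1/\sqrt{n})$. Using $1/m+1/\sqrt{n}\le 2/\min(m,\sqrt{n})$ delivers the stated bound. The $O(n^2\kev)$ runtime follows from combining the $O(n^2\kev)$ cost of \GBCfullnotag (per \cref{sec:st}) with the $O(n^2\kev)$ cost of \kttargetfullnotag (per the KT runtime analysis of \citet{dwivedi2021kernel}).

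\textbf{Main obstacle.} The principal technical hurdle is producing the $(\alpha,\beta)$-kernel form of the \kttargetnotag MMD guarantee with the precise $\ell_\delta\,\log n\, R_n^\beta G_m^\alpha$ dependence in both the \loggrowth and \polygrowth regimes. The original \kttargetnotag analyses specialize to smooth or analytic kernels, but their sub-Gaussian/martingale concentration arguments depend on $\rkhs[\kp]$ only through covering numbers of $\ball{\kp}$ restricted to $\balleuc(R_n)$---precisely what \cref{assum:kernel_growth} bounds via $G_m$. Verifying that the KT analysis can ingest a deterministic equal-weighted input (namely, the ST coreset) and carefully propagating the logarithmic factors through the dyadic thinning tree is the main bookkeeping effort; the triangle-inequality decomposition and the ST bound are then essentially immediate.
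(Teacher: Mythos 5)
Your plan follows essentially the same route as the paper's proof: a triangle-inequality decomposition separating the Stein-Thinning debiasing error from the kernel-thinning compression error, with the ST piece controlled by \citet[Thm.~1]{riabiz2022optimal}, the KT piece controlled by a covering-number form of the \ktsplit guarantee (which, as you anticipate, the paper obtains by restating \citet[Thm.~2]{dwivedi2022generalized} rather than re-deriving it, since that result is already phrased in terms of $\log\cvrnum{\kp}(\balleuc(R_n),m^{-1})$), and the two terms combined via $1/m+1/\sqrt{n}\le 2/\min(m,\sqrt{n})$; the runtime accounting also matches. Two details need repair. First, as written you bound $\mmd_{\kp}(\mu_{w_{\GSKTnotag}},\mu_w)$ directly by the KT guarantee, but in this paper the swap stage (\ktswaptarget) greedily minimizes MMD to $\bP$, not to the ST measure $\mu_w$, so the post-swap output need not remain close to $\mu_w$ and the \ktsplit guarantee does not control that quantity. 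The paper instead observes that \ktswaptarget can only decrease MMD to $\bP$, giving $\mmd_{\kp}(\mu_{w_{\GSKTnotag}},\bP)\le\mmd_{\kp}(\bS^{(1)}_{\out},\bP)\le\mmd_{\kp}(\bS^{(1)}_{\out},\mu_w)+\mmd_{\kp}(\mu_w,\bP)$ for a \ktsplit coreset $\bS^{(1)}_{\out}$ --- a one-line rerouting of your triangle inequality that you should adopt. Second, \citet[Thm.~1]{riabiz2022optimal} bounds the squared excess by $\frac{1+\log n'}{n'}\norm{\kp}_n$, so the ST term is $O(\sqrt{\norm{\kp}_n\log n/n})$ rather than your stated $O(\sqrt{\norm{\kp}_n/n'})$; this slip is harmless because your absorption step (using $\ell_\delta R_n^{\cvrd}G_m^{\cvrw}\ge 1$) dominates the corrected term just as well. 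With these adjustments your argument coincides with the paper's.
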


\begin{example}\label{exam:SKT_ex}
Under the assumptions of \cref{thm:GSKT_guarantee} with $\slg$-slow-growing input points (\cref{exam:well_controlled}), \loggrowth $\kp$, 
and a coreset size $m\leq \sqrt{n}$, 
\GSKTnotag with high probability delivers  $\Otilde(m\inverse)$ excess MMD, a nearly minimax optimal rate for equal-weighted coresets \citep[Thm.~3.1]{phillips2020near} and a significant improvement over the $\Omega(m^{-1/2})$ error rate of \iid sampling.
\end{example}

\begin{remark}\label{rmk:standard_thin_n_n0}
When $m \!\!<\!\!\sqrt{n}$, we can uniformly subsample or, in the case of MCMC inputs, \emph{standard thin} (i.e., keep only every $\frac{n}{m^2}$-th point of) the input sequence down to size $m^2$ before running \GSKT to reduce runtime from $O(n^2)$ to $O(m^4)$ while incurring only $O(m^{-1})$ excess error. A similar remark applies to \LSKT algorithm introduced in \cref{sec:sub_quad_time}. %
\end{remark}

\newcommand{\rg}{\gamma}
\section{Accelerated Debiased Compression}\label{sec:sub_quad_time}
To enable larger-scale debiased compression, we next introduce a sub-quadratic-time version of \GSKTnotag built via a new low-rank debiasing scheme and the near-linear-time compression algorithm of \citet{shetty2022distribution}.

\subsection{Fast bias correction via low-rank approximation}
At a high level, our approach to accelerated debiasing involves four components. 
First, we form a rank-$r$ approximation $FF^\top$ of  the kernel matrix $K = \kp(\cS_n, \cS_n)$ in $O(nr\kev+nr^2)$ time using a weighted extension (\rpc, \cref{alg:rpc}) of the randomly pivoted Cholesky algorithm of  \citet[Alg.~2.1]{chen2022randomly}.
Second, we correct the diagonal to form   
$K'=FF^\top + \diag(K-FF^\top)$.
Third, we solve the reweighting problem~\cref{eqn:w_opt} with  $K'$ substituted for $K$ using $T$ iterations of accelerated entropic mirror descent \citep[\agm,][Alg.~14 with $\phi(w)=\sum_i w_i\log w_i$]{wang2023no}. 
The acceleration ensures $O(1/T^2)$ suboptimality after $T$ iterations, and 
each iteration takes only $O(nr)$ time thanks to the low-rank plus diagonal approximation.
Finally, we repeat this three-step procedure $Q$ times, each time using the weights outputted by the prior round to update the low-rank approximation $\widehat K$. 
On these subsequent adaptive rounds, \rpc approximates the leading subspace of a \emph{weighted} kernel matrix $\diag(\sqrt{\tilde{w}}) K \diag(\sqrt{\tilde{w}})$ for $\tilde{w} \in \simplex$ before undoing the row and column reweighting.  
Since each round's weights are closer to optimal, this adaptive updating has the effect of upweighting more relevant subspaces for subsequent debiasing.  
For added sparsity, we prune the weights outputted by the prior round using stratified residual resampling  \citep[\stratresamp, \cref{alg:stratified_resample},][]{douc2005comparison}.
Our complete Low-rank Debiasing (\alrbc) scheme, summarized in \cref{alg:alrbc}, enjoys $o(n^2)$ runtime whenever $r = o(n^{1/2})$, $T = O(n^{1/2})$, and $Q=O(1)$.

\begin{algorithm}[htb]
  \caption{Low-rank Debiasing (\alrbcnotag)\label{alg:alrbc}}
  \begin{algorithmic}\itemindent=-.7pc
    \STATE {\bfseries Input:} mean-zero kernel $\kp$, points $\cS_n=(x_i)_{i=1}^n$, rank $r$, \agmnotag steps $T$, adaptive rounds $Q$
    \STATE $w^{(0)} \gets (\frac{1}{n},\ldots,\frac{1}{n}) \in \real^n$
    \FOR{$q=1$ {\bfseries to} $Q$}\itemindent=-.7pc
    \STATE $\tilde w \gets \stratresamp(w^{(q-1)}, n)$
    \STATE $\tI, F \gets \rpc(\kp, \cS_n, \tilde w, r)$
    \STATE $K' \gets FF^\top + \diag(\kp(\cS_n,\cS_n)) - \diag(FF^\top)$
    \STATE $w^{(q)} \!\gets\! \agm(K', T, \tilde w, \texttt{AGG}=\ind{q>1})$ 
    \STATE \textbf{if} $(w^{(q)})^\top K' w^{(q)} > \tilde w^\top K' \tilde w$ \textbf{then} $w^{(q)} \gets \tilde w$ 
    \ENDFOR
    \STATE {\bfseries Return:} $w_{\alrbcnotag}\gets w^{(Q)}\in\!\Delta_{n-1}$
  \end{algorithmic}
\end{algorithm}

Moreover, our next result, proved in \cref{subsec:LD_analysis}, shows that \alrbc provides \iid-level precision whenever $T\geq\sqrt{n}$, $Q=O(1)$, and $r$ grows appropriately with the input radius and kernel covering number.

\renewcommand\theassumption{$(\mbi{\cvrw,\!\cvrd})$-params}
\begin{assumption}\label{assum:lr}
  The kernel $\kp$ satisfies \cref{assum:kernel_growth,assum:mean_zero_p}, 
  the output size and rank $m,r \!\geq\! (\frac{\cvrC R_n^{\cvrd}+1}{\sqrt{\log 2}}+2\sqrt{\log 2})^2$, 
 the AMD step count $T \!\geq\! \sqrt{n}$, and the adaptive round count $Q\!=\!O(1)$.\footnote{To unify the presentation of our results, \cref{assum:lr} constrains all common algorithm input parameters with the understanding that the conditions are enforced only when the input is relevant to a given algorithm.}
\end{assumption}

\begin{theorem}[Debiasing guarantee for \alrbc]\label{thm:LD_guarantee}
Under \cref{assum:lr}, 
Low-rank Debiasing (\cref{alg:alrbc}) takes $O((\kev\!+\!r\!+\!T)nr)$ time to output $w_{\alrbcnotag}$ satisfying %
  \begin{talign}
    &\Delta\!\mmd_{\kp}(w_{\alrbcnotag}) \!=\!O\bigg(\!\sqrt{\frac{\snorm{\kp}_n \max(\log n, 1/\delta)}{n}} + \sqrt{\frac{n H_{n,r}}{\delta}} \bigg) 
    \label{eqn:lrbc_guarantee}
  \end{talign}
   with probability at least $1 - \delta$, for any $\delta \in (0, 1)$ and $H_{n,r}$ defined in \eqref{eqn:lrbc_h} that satisfies
    \begin{talign}
    \!H_{n,r}\!=\!
    \begin{cases}
    \!O\big(\!\sqrt{r} (\frac{R_n^{2\beta}}{r})^{\!\frac{1}{\alpha}}\!\big)\!& \!\polygrowth(\alpha,\beta), \\
    \!O\big(\!\sqrt{r} \exp\sparenth{\!-\!\big(\!\frac{0.83\!\sqrt{r}-2.39}{\cvrC R_n^\beta}\!\big)^{\!\frac{1}{\alpha}\!}}\!\big)\!&\!\loggrowth(\alpha,\beta).
    \end{cases}
    \label{eqn:H_nr_big_O}
    \end{talign}
\end{theorem}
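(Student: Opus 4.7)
The plan is to decompose $\Delta\!\mmd_{\kp}(w_{\alrbcnotag})^2$ into the suboptimality of $w_{\alrbcnotag}$ for minimizing $v\mapsto v^\top K' v$ on $\simplex$ and a perturbation term measuring $K - K'$, then control each with a separate tool. The runtime claim follows by summing the $Q=O(1)$ rounds: \stratresamp is $O(n)$, \rpc costs $O(nr\kev + nr^2)$ to produce a rank-$r$ factorization, and AMD with $T$ steps and rank-$r$-plus-diagonal matvecs costs $O(nrT)$, yielding $O((\kev + r + T)nr)$ overall.

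Writing $K \defeq \kp(\cS_n,\cS_n)$, \cref{assum:mean_zero_p} gives $\mmd_{\kp}^2(\sum_i w_i\dirac_{x_i},\P) = w^\top K w$, and the elementary inequality $\sqrt{a}-\sqrt{b} \leq \sqrt{a-b}$ for $a \geq b \geq 0$ yields $\Delta\!\mmd_{\kp}(w_{\alrbcnotag})^2 \leq w_{\alrbcnotag}^\top K w_{\alrbcnotag} - \wopt^\top K \wopt$. Inserting and subtracting the corresponding $K'$-terms splits this as $[w_{\alrbcnotag}^\top K' w_{\alrbcnotag} - \wopt^\top K' \wopt] + [w_{\alrbcnotag}^\top (K-K') w_{\alrbcnotag} - \wopt^\top (K-K') \wopt]$; since $\wopt \in \simplex$, the first bracket is at most the AMD suboptimality on $K'$, and the second is at most $2\opnorm{K-K'}$ because simplex weights have $\ell_2$ norm at most one.

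For the AMD term, the quadratic $v\mapsto v^\top K' v$ has $\ell_1$-smoothness at most $2\max_i K'_{ii} = 2\snorm{\kp}_n$, so the accelerated entropic mirror descent rate of $O(\snorm{\kp}_n\log n/T^2)$ of Wang and Abernethy (2023) is $O(\snorm{\kp}_n\log n/n)$ under $T \geq \sqrt{n}$; the ``keep the better'' safeguard in \cref{alg:alrbc} only improves matters, and a Markov inequality converts expectation to the high-probability first summand $\sqrt{\snorm{\kp}_n\max(\log n,1/\delta)/n}$. For the perturbation term, the diagonal correction gives $K - K' = (K-FF^\top) - \diag(K-FF^\top)$ with both summands PSD by the partial-Cholesky interpretation of \rpc, so $\opnorm{K-K'} \leq 2\,\mathrm{tr}(K-FF^\top)$. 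The expected-residual bound of Chen, Epperly and Tropp (2022), extended to the weighted \rpc used here, gives $\E[\mathrm{tr}(K-FF^\top)] = O(\sum_{i>r/2}\lambda_i(K))$; plugging in the spectral decay of \cref{cor:Kp_eigval_bound} on $\cS_n \subset \balleuc(R_n)$ bounds this tail by $O(nH_{n,r})$ with $H_{n,r}$ as in \eqref{eqn:H_nr_big_O}, and Markov's inequality promotes this to $\opnorm{K-K'} = O(nH_{n,r}/\delta)$ with probability $1-\delta$. Taking square roots and summing the two pieces gives \eqref{eqn:lrbc_guarantee}.

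The hard part will be the analysis of the weighted, adaptive \rpc: in rounds $q \geq 2$ the pivots are sampled from a distribution determined by $\tilde w$, and the recovered $FF^\top$ approximates $K$ only where $\tilde w$ is supported. I would carefully propagate the Chen--Epperly--Tropp trace-residual guarantee through the conjugation by $\diag(\sqrt{\tilde w})$ and the subsequent undoing of this rescaling, and show that the leading eigenvalues of $\diag(\sqrt{\tilde w})K\diag(\sqrt{\tilde w})$ inherit the covering-number-based decay that \cref{cor:Kp_eigval_bound} provides for $K$ on $\balleuc(R_n)$. Adaptivity over $Q=O(1)$ rounds is handled gratis by the monotone ``keep the better'' step, so it suffices to establish the bound for any single round whose AMD phase runs to completion.
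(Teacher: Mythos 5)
Your single-round decomposition mirrors the paper's: split the squared-MMD gap into an optimization error on the surrogate $K'$ (controlled by the accelerated entropic mirror descent rate $O(\kpnormSn\log n/T^2)$) plus a low-rank approximation error for $K-FF^\top$ (controlled by the randomly pivoted Cholesky residual bound and the covering-number eigenvalue decay of \cref{cor:Kp_eigval_bound}). But there are two genuine gaps. First, the claim that adaptivity over the $Q$ rounds is ``handled gratis by the monotone keep-the-better step'' is false: that safeguard compares $w^{(q)}$ to $\tilde w$ \emph{in the surrogate metric} $K'$ of round $q$, not in the true quadratic form $K$, and $\tilde w$ is the \stratresamp output of $w^{(q-1)}$, not $w^{(q-1)}$ itself. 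Hence the true MMD is not monotone across rounds, and bounding the first round does not bound the returned $w^{(Q)}$. Each later round incurs (i) a resampling error, bounded via \cref{prop:mmd_resample} and Markov by $O(\sqrt{\kpnormSn/(n\delta)})$ — this, not the AMD term, is where the $1/\delta$ inside $\max(\log n,1/\delta)$ actually comes from — and (ii) a fresh surrogate-discrepancy term of order $n\tr((K-\widehat K)^{\tilde w})$, where the factor $n$ enters because \stratresamp guarantees nonzero weights $\tilde w_i\ge 1/n$; the paper's Step~2 (culminating in \eqref{eqn:alrbc_one_iter_mmd}) carries out exactly this per-round accounting, which your proposal omits.

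Second, your application of the Chen--Epperly--Tropp residual bound is too optimistic: with $r$ pivots one cannot in general compare to the rank-$r/2$ tail, because the required pivot count scales like $q/\eps+q\log(1/\eta)$ and the relative tail $\eta$ can be exponentially small for \loggrowth kernels. The paper instead takes $q=\Theta(\sqrt{r})$ (the largest $q$ with $r\ge 2q+q^2\log 2$) and compares to $\sum_{\ell>q}\lambda_\ell(K^{\tilde w})$; this is precisely why $\sqrt{r}$, rather than $r$, appears inside $H_{n,r}$ in \eqref{eqn:H_nr_big_O}. Your final bound only matches because you then quote the paper's $H_{n,r}$, which already encodes the $\sqrt{r}$ tail. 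On the part you flag as hard — propagating the guarantee through the conjugation by $\diag(\sqrt{\tilde w})$ — your instinct is right and the needed ingredient exists: \cref{prop:eigval_K_bounded_by_ent_num,cor:Kp_eigval_bound} give the eigenvalue decay for $K^w$ uniformly over all $w\in\simplex$, so the weighted tail is bounded by $H_{n,r}$ for every round; combining that with $\tilde w_i\ge 1/n$ recovers the $\sqrt{nH_{n,r}/\delta}$ term after Markov's inequality. With those two repairs (explicit multi-round error accounting and the correct comparison rank $q=\Theta(\sqrt r)$), your route coincides with the paper's proof.
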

\begin{example}\label{exam:LD_ex}
Under the assumptions of \cref{thm:LD_guarantee} with $\gamma$-slow-growing input points (\cref{exam:well_controlled}), \loggrowth $\kp$, $T=\Theta(\sqrt{n})$, and $r = (\log n)^{2(\cvrw+\cvrd\slg)+\eps}$ for any $\eps> 0$, 
\alrbc delivers $\Otilde(n^{-1/2})$ excess MMD with high probability  in $\Otilde(n^{1.5})$ time. 
\end{example}

\subsection{Fast debiased compression via Low-rank Stein KT}

To achieve debiased compression in sub-quadratic time, we next propose Low-rank SKT (\cref{alg:LSKT}).
\LSKT debiases the input using \alrbc, converts the \alrbc output into an equal-weighted coreset using \stratresamp, and 
finally combines \kttarget with the divide-and-conquer \compresspp framework \citep{shetty2022distribution} to compress $n$ equal-weighted points into $\sqrt{n}$ in near-linear time. 
\begin{algorithm}[htb]
  \caption{Low-rank Stein Kernel Thinning (\LSKTnotag)}
  \label{alg:LSKT}
  \begin{algorithmic}\itemindent=-.7pc
    \STATE {\bfseries Input:} mean-zero kernel $\kp$, points $\cS_n=(x_i)_{i=1}^n$, rank $r$, AGM steps $T$, adaptive rounds $Q$, oversampling parameter $\fg$, failure prob. $\delta$
    \STATE $w \gets \alrbcfull(\kp, \cS_n, r, T, Q)$
    \STATE $n' \gets 4^{\ceil{\log_4 n}}, m \gets \sqrt{n'}$ \COMMENT{output size $\sqrt{n} \leq m < 2\sqrt{n}$}
    \STATE $w \gets \stratresamp(w, n')$
    \STATE $w_{\LSKTnotag} \gets \compresspptarget(\kp, \cS_n, n', w, \fg, \frac{\delta}{3})$
    \STATE {\bf Return:} $w_{\LSKTnotag}\in\!\Delta_{n-1}\cap(\frac{\N_0}{m})^{n}$ \COMMENT{hence $\norm{w_{\LSKTnotag}}_{0}\leq m$}
  \end{algorithmic}
\end{algorithm}

Our next result (proved in \cref{sec:app_sub_quad_time}) shows that \LSKT can provide better-than-\iid excess MMD in $o(n^2)$ time.
\begin{theorem}[MMD guarantee for \LSKT]\label{thm:LSKT_guarantee}
Under \cref{assum:lr}, Low-rank SKT (\cref{alg:LSKT}) with $\fg\! \in [\log_2\log(n+1)+3.1, \log_4(\sqrt{n}/\log n)]$ and $\delta \in (0,1)$ 
outputs $w_{\LSKTnotag}$ in $O((\kev\!+\!r\!+\!T)nr\!+\!\kev n^{1.5})$ time satisfying, with probability at least $1-\delta$, 
  \begin{talign}
    &\Delta\!\mmd_{\kp}(w_{\LSKTnotag}) \\
    &\quad= O\left(\!\sqrt{\frac{\kpnormSn \max(1/\delta,\,\ell_\delta (\log n) n^{\rg\cvrd} G_{\sqrt{n}}^\cvrw)}{n}}+\sqrt{\frac{nH_{n,r}}{\delta}}   
    \right),
  \end{talign}
  for $G_m$, $H_{n,r}$ as in \cref{thm:GSKT_guarantee,thm:LSKT_guarantee}.
\end{theorem}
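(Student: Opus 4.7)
The plan is to decompose the excess MMD of \LSKTnotag by the triangle inequality across the three sequential transformations in \cref{alg:LSKT}: (i) debiasing of the uniform weights to $w^{(1)}\in\simplex$ via \alrbcfull, (ii) conversion of $w^{(1)}$ into a weight vector $w^{(2)}\in\simplex\cap(\tfrac{\N_0}{n'})^n$ supported on at most $n'\in[n,4n]$ atoms via \stratresamp, and (iii) compression from $n'$ atoms down to $m=\sqrt{n'}$ atoms via \compresspptarget. Writing $\mu^{(k)}\defeq\sum_i w^{(k)}_i\delta_{x_i}$ for the measure after step $k$, the triangle inequality gives
\begin{talign*}
  \Delta\!\mmd_{\kp}(w_{\LSKTnotag})&\le \underbrace{\bigparenth{\mmd_{\kp}(\mu^{(1)},\P)-\mmdopt}}_{\textup{(A)}} \\
  &\quad+ \underbrace{\mmd_{\kp}(\mu^{(2)},\mu^{(1)})}_{\textup{(B)}}+\underbrace{\mmd_{\kp}(\mu^{(3)},\mu^{(2)})}_{\textup{(C)}},
\end{talign*}
and I would bound each term separately with failure probability $\delta/3$ before combining via a union bound.

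Term (A) is controlled by \cref{thm:LD_guarantee} invoked with failure probability $\delta/3$, contributing both the $\sqrt{\snorm{\kp}_n\max(\log n,1/\delta)/n}$ summand (absorbed into the first summand of the theorem statement, since $\ell_\delta(\log n)n^{\rg\cvrd}G_{\sqrt{n}}^{\cvrw}$ dominates $\log n$) and the $\sqrt{nH_{n,r}/\delta}$ summand verbatim. For term (B), I would appeal to a standard MMD concentration bound for stratified residual resampling to show that $\mmd_{\kp}(\mu^{(2)},\mu^{(1)})^2 = O(\snorm{\kp}_n\log(1/\delta)/n')$ with probability $\ge 1-\delta/3$; since $n'\ge n$, this is absorbed into the first summand as well. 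Term (C) is the main compression contribution: the input $\mu^{(2)}$ to \compresspptarget is equal-weighted on $n'\le 4n$ atoms with input radius $R_{n'}=R_n$ and kernel radius $\snorm{\kp}_{n'}=\snorm{\kp}_n$, so the existing MMD guarantee for \kttargetfull combined with the \compresspp analysis of \citet{shetty2022distribution} yields
\begin{talign*}
  \mmd_{\kp}(\mu^{(3)},\mu^{(2)})^2 = O\bigparenth{\snorm{\kp}_n\,\ell_\delta\,(\log n)\,R_n^{\cvrd}\,G_{\sqrt{n}}^{\cvrw}/n}
\end{talign*}
with probability $\ge 1-\delta/3$ under \cref{assum:kernel_growth}. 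Upper bounding $R_n^{\cvrd}\le n^{\rg\cvrd}$ yields the stated form, and the assumed range $\fg\in[\log_2\log(n+1)+3.1,\,\log_4(\sqrt{n}/\log n)]$ is precisely the range required by the \compresspp analysis to attain near-linear-time compression to $\sqrt{n'}$ output points with the above error.

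For the runtime, \alrbcfull contributes $O((\kev+r+T)nr)$ by \cref{thm:LD_guarantee}, \stratresamp contributes $O(n')=O(n)$, and \compresspptarget with output size $m=\sqrt{n'}$ and the stated $\fg$ takes $O(\kev n^{1.5})$; summing gives the claimed total. The main obstacle will be the careful bookkeeping in term (C): verifying that the \kttargetfull guarantee passes through the \compresspp recursion with the $R_n^{\cvrd}$ and $G_{\sqrt{n}}^{\cvrw}$ factors in their correct form (via \cref{assum:kernel_growth}), and that coupling \kttargetfull with the equal-weighted output of \stratresamp introduces no spurious $n$-dependence beyond what is already tracked in (B). A secondary subtlety is coordinating the three $\delta/3$-level failure events so that the final union bound recovers exactly the stated failure probability $\delta$.
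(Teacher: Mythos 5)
Your proposal is correct and follows essentially the same route as the paper: the same three-term triangle-inequality decomposition (debiasing via \cref{thm:LD_guarantee}, resampling, \compresspp-based compression), each controlled at level $\delta/3$ and combined by a union bound, with the covering-number/kernel-growth bookkeeping supplying the $R_n^{\cvrd}G_{\sqrt n}^{\cvrw}$ factor. Two minor points where the paper is more careful but your conclusion is unaffected: the resampling term is controlled only via the in-expectation bound of \cref{prop:mmd_resample} plus Markov (so a $1/\delta$ rather than $\log(1/\delta)$ dependence, which the $\max(1/\delta,\cdot)$ in the statement absorbs), and the \compresspp guarantee applies to the pre-swap coreset $\bS_{\out}^{(1)}$, after which one uses that \ktswaptargetnotag can only decrease $\mmd_{\kp}(\cdot,\bP)$ rather than bounding $\mmd_{\kp}(\mu^{(3)},\mu^{(2)})$ directly.
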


\begin{example}
Under the assumptions of \cref{thm:LSKT_guarantee} with $\slg$-slow-growing input points (\cref{exam:well_controlled}), \loggrowth $\kp$, $T=\Theta(\sqrt{n})$, and $r = (\log n)^{2(\cvrw+\cvrd\slg)+\eps}$ for any $\eps> 0$, 
\LSKT delivers, with high probability,
$\Otilde(n^{-1/2})$ excess MMD %
in $\Otilde(n^{1.5})$ time 
using a coreset of size $m \in [\sqrt{n}, 2\sqrt{n})$.
\end{example}

\section{Weighted Debiased Compression}\label{sec:beyond_unweighted}
The prior sections developed debiased equal-weighted coresets with better-than-\iid compression guarantees.
Equal-weighted coresets are typically chosen for their compatibility with unweighted downstream tasks and easy visualization.
For downstream tasks that support weights, we next match the equal-weighted  guarantees with significantly smaller simplex-weighted or constant-preserving coresets.

\subsection{Simplex-weighted coresets via Stein Recombination\!}
\begin{algorithm}[htb]
  \caption{Recombination Thinning (\recombnotag)}
  \label{alg:recomb}
  \begin{algorithmic}\itemindent=-.7pc
    \STATE {\bfseries Input:} mean-zero kernel $\kp$, points $\cS_n=(x_i)_{i=1}^n$, weights $w \in \simplex$, output size $m$ 
    \STATE $\tilde w \gets \stratresamp(w, n)$
    \STATE $\tI, F \gets \rpc(\kp, \cS_n, \tilde w, m-1)$
    \STATE $w'\gets \recombbfs([F, \bm 1_n]^\top, \tilde w)$ \COMMENT{$[F, \bm 1_n]^\top\!\in \bR^{m\times n}$} %
    \STATE \COMMENT{$F^\top \tilde w = F^\top w'$, $w' \in \simplex$ , and $\norm{w'}_0\le m$}
    \STATE $w'' \!\gets\! \ktswapls(\kp, \cS_n, w', \textup{SPLX})$;
    \ $\tJ \gets\! \{i\!:w''_i > 0\}$
    \STATE $w''[\tJ] \gets \argmin_{w'\in\simplex[\abs{\tJ}-1]}w'^\top \kp(\cS_n[\tJ], \cS_n[\tJ]) w'$ \COMMENT{use any $O(\abs{\tJ}^3)$ quadratic programming solver} 
    \STATE {\bfseries Return:}  $w_{\recombnotag} \gets w'' \in \Delta_{n-1}$ with $\snorm{w_{\recombnotag}}_0 \leq m$
  \end{algorithmic}
\end{algorithm}
Simplex-weighted coresets automatically preserve the constraints of convex integrands, support straightforward direct sampling, and, when compared with schemes involving negative weights, offer improved numerical stability in the presence of integral evaluations errors \citep{karvonen2019positivity}.
Inspired by the coreset constructions of \citet{hayakawa2022positively,hayakawa2023sampling}, we first introduce a simplex-weighted compression algorithm, \recombfull(\recomb, \cref{alg:recomb}), suitable for summarizing a debiased input sequence. %
To produce a coreset given input weights $w \in \simplex$, 
\recomb first prunes small weights using \stratresamp and then 
uses \rpc to identify $m\!-\!1$ test vectors that capture most of the variability in the weighted kernel matrix.
Next, \recombbfs (\cref{alg:recombbfs}) \citep[Alg.~1]{tchernychova2016caratheodory} identifies a sparse simplex vector $w'$ with $\norm{w'}_0 \le m$ that exactly matches the inner product of its input with each of the test vectors.
Then, we run \ktswapls (\cref{alg:kt_swap_ls}), a new, line-search version of \ktswap \citep[Alg. 1b]{dwivedi2021kernel} that greedily improves MMD to $\bP$ while maintaining both the sparsity and simplex constraint of its input.
Finally, we optimize the weights of the remaining support points using any cubic-time quadratic programming solver.

In \cref{prop:recomb_guarantee} we show that \recomb runs in time $O((\kev+m)nm+m^3\log n)$ and nearly preserves the MMD of its input whenever $m$ grows appropriately with the kernel covering number.
Combining \recomb with \GBCfull or \alrbcfull in \cref{alg:SR}, we obtain Stein Recombination (\GSR) and Low-rank SR (\LSR), our approaches to debiased simplex-weighted compression.
Remarkably, \GSR and \LSR can match the MMD error rates established for \GSKT and \LSKT using substantially fewer coreset points, as our next result (proved in \cref{sec:thm:SR_guarantee_proof}) shows.
\begin{algorithm}[htb]
  \caption{(Low-rank) Stein Recombination (\GSRnotag\ / \LSRnotag)}
  \label{alg:SR}
  \begin{algorithmic}\itemindent=-.7pc
    \STATE {\bfseries Input:}  mean-zero kernel $\kp$, points $\cS_n$, output size $m$, 
    rank $r$, AGM steps $T$, adaptive rounds $Q$ %
    \STATE $w \gets 
    \begin{cases}
    \alrbcfull(\kp, \cS_n, r, T, Q) & \textup{if low-rank}\\
     \GBCfull(\kp, \cS_n) & \textup{otherwise}
   \end{cases}$ \\ 
    \STATE $w_{\GSRnotag} \gets \recombfull(\kp, \cS_n, w, m)$
    \STATE {\bf Return:} $w_{\GSRnotag} \in \Delta_{n-1}$ with $\snorm{w_{\GSRnotag}}_0 \leq m$
  \end{algorithmic}
\end{algorithm}

\begin{theorem}[MMD guarantee for \GSR /\LSR]\label{thm:SR_guarantee}
Under \cref{assum:lr}, Stein Recombination (\cref{alg:SR}) takes  $O(\kev n^2\!+\!(\kev\!+\!m)nm\!+\! m^3\log n)$ to output $w_{\GSRnotag}$, and Low-rank SR takes $O((\kev\!+\!r\!+\! T)nr\!+ \!(\kev\!+\!m)nm\!+\!m^3\log n)$ 
time to output $w_{\LSRnotag}$. Moreover, for any $\delta\in(0, 1)$ and $H_{n,r}$ as in \cref{thm:LD_guarantee}, each of the following bounds holds (separately) with probability at least $1-\delta$:
    \begin{talign}
     &\Delta\!\mmd_{\kp}\!(w_{\GSRnotag}) \!=\!O\Big(\sqrt{\frac{\norm{\kp}_n(\log n \vee \frac{1}{\delta})}{n} + \frac{nH_{n,m}}{\delta}}\Big) \stext{and}\\ 
     &\Delta\!\mmd_{\kp}\!(w_{\LSRnotag}) \!=\!O\Big(\!\sqrt{\frac{\norm{\kp}_n(\log n \vee \frac{1}{\delta})}{n} 
     + \frac{n(H_{n,m}+H_{n,r})}{\delta}}\Big)\!.
  \end{talign}
\end{theorem}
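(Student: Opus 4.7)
The plan is to decompose the analysis of \GSR and \LSR into a debiasing phase followed by a \recomb phase, and to bound each separately.

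\textbf{Runtime.} The debiasing phase costs $O(\kev n^2)$ for \GBCfull (via the sufficient-statistics implementation noted in the text) or $O((\kev\!+\!r\!+\!T)nr)$ for \alrbcfull by \cref{thm:LD_guarantee}. Inside \recomb, \stratresamp is $O(n)$, \rpc with rank $m\!-\!1$ is $O((\kev\!+\!m)nm)$, \recombbfs is $O(m^3\log(n/m))$, \ktswapls is $O(\kev m^2)$, and the final QP on a support of size $\leq m$ is $O(m^3)$. Summing yields the stated complexities.

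\textbf{MMD decomposition.} By the triangle inequality in $\rkhs[\kp]$, writing $w_{\textrm{db}}$ for the debiasing output,
\begin{talign*}
    \Delta\!\mmd_{\kp}(w_{\GSRnotag}) \leq \Delta\!\mmd_{\kp}(w_{\textrm{db}}) + \mmd_{\kp}\bigparenth{\textstyle\sum_i w_{\GSRnotag,i}\dirac_{x_i},\,\textstyle\sum_i w_{\textrm{db},i}\dirac_{x_i}}.
\end{talign*}
For \GSR I would show the first term is $O\bigparenth{\sqrt{\kpnormSn(\log n\vee 1/\delta)/n}}$ with probability $\geq 1-\delta/2$ by a tail analysis of Stein Thinning in the spirit of \citet[Thm.~1]{riabiz2022optimal}; for \LSR the same term plus a $\sqrt{nH_{n,r}/\delta}$ addend follows directly from \cref{thm:LD_guarantee}.

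\textbf{Bounding the recomb-induced shift.} Let $\tilde w=\stratresamp(w_{\textrm{db}},n)$ and let $w'$ be the \recombbfs output. A multinomial concentration bound for \stratresamp shows that $\mmd_{\kp}(\tilde w,w_{\textrm{db}})$ is absorbed into the $\sqrt{\kpnormSn/n}$ term. Since \recombbfs enforces $F^\top w'=F^\top \tilde w$ and $\bm 1^\top(w'-\tilde w)=0$ with $w',\tilde w\in\simplex$,
\begin{talign*}
    (w'-\tilde w)^\top K(w'-\tilde w) = (w'-\tilde w)^\top (K-FF^\top)(w'-\tilde w) \leq 4\,\opnorm{K-FF^\top}.
\end{talign*}
I would bound $\opnorm{K-FF^\top}$ via the expected-trace guarantee of \rpc \citep{chen2022randomly} combined with the eigenvalue-tail estimate of \cref{cor:Kp_eigval_bound}, which in both growth regimes yields $\sum_{j>m-1}\lambda_j(K)=O(nH_{n,m})$; Markov's inequality then converts this into an $O(nH_{n,m}/\delta)$ tail bound. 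Because \ktswapls only accepts MMD-improving swaps and the final QP minimizes MMD on the surviving support, the same shift bound applies to $w_{\GSRnotag}$. Combining the debias and recomb contributions via $a+b\leq\sqrt{2(a^2+b^2)}$ and union-bounding $\delta/2$ failure events across the two phases produces the stated $\sqrt{\cdots+nH_{n,m}/\delta}$ form.

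\textbf{Main obstacle.} The principal hurdle is the high-probability residual bound for \rpc: its pivots are drawn from a distribution shaped by the reweighted matrix $\diag(\sqrt{\tilde w})K\diag(\sqrt{\tilde w})$, yet the shift bound requires control of the unweighted operator norm $\opnorm{K-FF^\top}$. I would close this gap by a spectral perturbation argument together with a careful tracking of how \stratresamp concentrates $\tilde w$ on the effective support of $w_{\textrm{db}}$, paralleling the techniques used to prove \cref{thm:LD_guarantee}.
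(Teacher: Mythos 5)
Your overall architecture (debias, then control the recombination stage, with \ktswapls and the final quadratic program handled by monotonicity) matches the paper's, but there are two problems. First, the decomposition you write, $\Delta\mmd_{\kp}(w_{\GSRnotag}) \le \Delta\mmd_{\kp}(w_{\mrm{db}}) + \mmd_{\kp}(\bS_n^{w_{\GSRnotag}}, \bS_n^{w_{\mrm{db}}})$, cannot be completed as stated: \ktswapls and the final QP only guarantee that $\mmd_{\kp}(\cdot,\bP)$ does not increase; they give no control on the distance of their output to $w_{\mrm{db}}$, so your claim that ``the same shift bound applies to $w_{\GSRnotag}$'' is unjustified. The repair is to argue as the paper does (\cref{prop:recomb_guarantee}): $\mmd_{\kp}(\bS_n^{w_{\GSRnotag}},\bP)\le\mmd_{\kp}(\bS_n^{w'},\bP)\le\mmd_{\kp}(\bS_n^{\tilde w},\bP)+\mmd_{\kp}(\bS_n^{w'},\bS_n^{\tilde w})$, then chain with $\bE[\mmd^2_{\kp}(\bS_n^{\tilde w},\bS_n^{w_{\mrm{db}}})]\le\kpnormSn/n$ (\cref{prop:mmd_resample}) and Markov; also note Stein Thinning's excess-MMD bound is deterministic \citep[Thm.~1]{riabiz2022optimal}, so no tail analysis is needed for the \GSRnotag debiasing term, and \ktswapls costs $O(\kev nm)$, not $O(\kev m^2)$ (the totals are unaffected).

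Second, and this is the genuine gap you yourself flag: your shift bound requires the \emph{unweighted} residual $\opnorm{K-FF^\top}$, but the \rpcnotag guarantee of Chen et al.\ applies to the matrix the algorithm is actually run on, namely $K^{\tilde w}=\diag(\sqrt{\tilde w})K\diag(\sqrt{\tilde w})$, and your appeal to $\sum_{j>m-1}\lambda_j(K)=O(nH_{n,m})$ concerns the wrong matrix; the ``spectral perturbation argument'' you defer to is not needed and, as sketched, does not close the gap. The resolution is elementary and is exactly what the paper uses: \stratresampnotag outputs $\tilde w\in\simplex\cap(\frac{\N_0}{n})^n$, so every nonzero $\tilde w_i$ is at least $1/n$, and \recombbfsnotag only zeroes coordinates, so $\supp(w'-\tilde w)\subseteq\supp(\tilde w)$. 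On that support $K-FF^\top=\diag(\tilde w)^{-1/2}(K-FF^\top)^{\tilde w}\diag(\tilde w)^{-1/2}$, hence $(w'-\tilde w)^\top(K-FF^\top)(w'-\tilde w)\le 4n\,\tr((K-FF^\top)^{\tilde w})$, and $\bE[\tr((K-FF^\top)^{\tilde w})]\le H_{n,m-1}$ by \cref{lem:rpc_approx_error} (via \cref{cor:Kp_eigval_bound} applied to the weighted matrix); Markov then produces the $nH_{n,m}/\delta$ term. With that conversion in place your exact-matching trick $(w'-\tilde w)^\top FF^\top(w'-\tilde w)=0$ does work, though the paper's route is slightly more direct: it uses $w'^\top FF^\top w'=\tilde w^\top FF^\top\tilde w\le\tilde w^\top K\tilde w$ together with \eqref{eqn:alrbc_rpc_bound_w}, avoiding the operator-norm detour altogether. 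Without the $\tilde w_i\ge 1/n$ and support-containment observations, your proposal is incomplete at its central step.
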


\begin{example}
Instantiate the assumptions of \cref{thm:SR_guarantee} with $\slg$-slow-growing input points (\cref{exam:well_controlled}), \loggrowth $\kp$, 
and a heavily compressed coreset size $m = (\log n)^{2(\cvrw+\cvrd\slg)+\eps}$ for any $\eps> 0$.
Then \GSR delivers $\Otilde(n^{-1/2})$
excess MMD with high probability in $O(n^2)$ time, and \LSR with $r\!=\!m$ and $T=\Theta(\sqrt{n})$  achieves the same in $\Otilde(n^{1.5})$ time.
\end{example}

\subsection{Constant-preserving coresets via Stein Cholesky}\label{sec:stein_chol}
For applications supporting negative weights, 
constant-preserving coresets 
offer the possibility of higher accuracy at the cost of potential numerical instability and poorer robustness to errors in integral evaluation \citep{karvonen2019positivity}.
We introduce a constant-preserving compression algorithm, \cholthinfull (\cholthin, \cref{alg:cholthin}), suitable for summarizing a debiased input sequence. 
\cholthin first applies \rpc to a \emph{constant-regularized kernel} $\kp(x,y) + c$ to select an initial coreset and then uses a combination of \ktswapls and closed-form optimal constant-preserving reweighting to greedily refine the support and weights. 
The regularized kernel ensures that the coreset output by \rpc is of high quality when paired with the best constant-preserving weights, and our \cholthin standalone analysis  (\cref{prop:cholthin_guarantee})  improves upon the runtime and error guarantees of \recomb.
In \cref{alg:SC}, we combine \cholthin with \GBCfull or \alrbcfull to obtain Stein Cholesky (\GSC) and Low-rank SC (\LSC), our approaches to debiased constant-preserving compression.
Our MMD guarantees for \GSC and \LSC  (proved in \cref{sec:thm:SC_guarantee_proof}) improve upon the rates of \cref{thm:SR_guarantee}. %
\begin{algorithm}[htb]
  \caption{Cholesky Thinning (\cholthinnotag)}
  \label{alg:cholthin}
  \begin{algorithmic}\itemindent=-.7pc
    \STATE {\bfseries Input:} mean-zero kernel $\kp$, points $\cS_n=(x_i)_{i=1}^n$, weights $w \in \simplex$, output size $m$ 
    \STATE $c \gets \textsc{Average}(\text{Largest $m$ entries of }(\kp(x_i, x_i))_{i=1}^n)$%
    \STATE $\tI, F \gets \rpc(\kp + c, \cS_n, w, m)$;  %
     $w \gets \bm{0}_n$
    \STATE{$w[\tI] \gets \argmin_{w'\in\bR^{\abs{\tI}}:\sum_i w_i'=1}w'^\top \kp(\cS_n[\tI], \cS_n[\tI]) w'$}
    \STATE $w \gets \ktswapls(\kp, \cS_n, w, \textup{CP})$;
    \ \  $\tI \gets \{i:w_{i} \neq 0\}$
    \STATE $w[\tI] \gets \argmin_{w'\in\bR^{\abs{\tI}}:\sum_i w_i'=1}w'^\top \kp(\cS_n[\tI], \cS_n[\tI]) w'$ 
    \STATE {\bfseries Return:} $w_{\cholthinnotag} \gets w\in \real^{n}$ with $\snorm{w_{\cholthinnotag}}_0\leq m$, $\bm 1_n\tp w_{\cholthinnotag}=1$
  \end{algorithmic}
\end{algorithm}

\begin{algorithm}[htb]
  \caption{(Low-rank) Stein Cholesky (\GSCnotag\ / \LSCnotag)}
  \label{alg:SC}
  \begin{algorithmic}\itemindent=-.7pc
    \STATE {\bfseries Input:}  mean-zero kernel $\kp$, points $\cS_n$, output size $m$,
    rank $r$, AGM steps $T$, adaptive rounds $Q$\\
    \STATE $w \gets 
    \begin{cases}
    \alrbcfull(\kp, \cS_n, r, T, Q) & \textup{if low-rank}\\
     \GBCfull(\kp, \cS_n) & \textup{otherwise}
   \end{cases}$ \\ 
    \STATE $w_{\GSCnotag} \gets \cholthinfull(\kp, \cS_n, w, m)$
    \STATE {\bf Return:} $w_{\GSCnotag} \in \real^{n}$ with $\snorm{w_{\GSCnotag}}_0\leq m$ and $\mbf{1}_n\tp w_{\GSCnotag}=1$
  \end{algorithmic}
\end{algorithm}

\begin{figure*}[tbh]
  \centering
  \raisebox{-0.5\height}{
  \begin{subfigure}[b]{0.33\textwidth}
  \includegraphics[width=\textwidth]{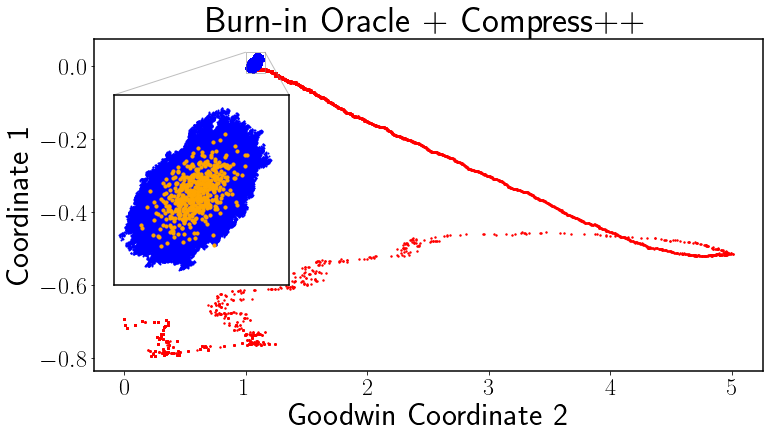}
  \includegraphics[width=\textwidth]{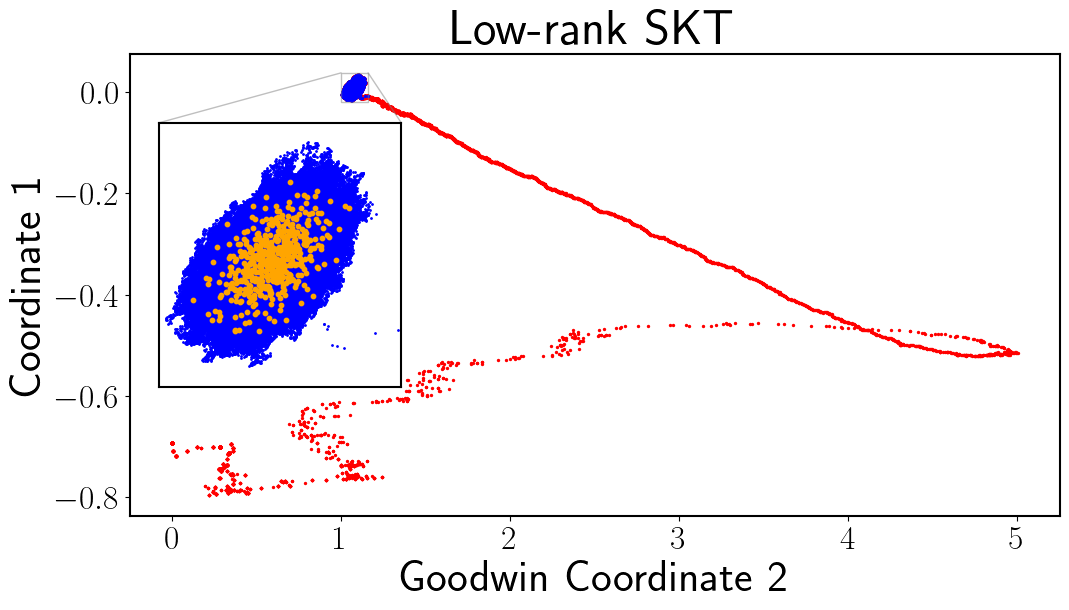}
\end{subfigure}}
  \raisebox{-0.5\height}{\includegraphics[height=3.0in]{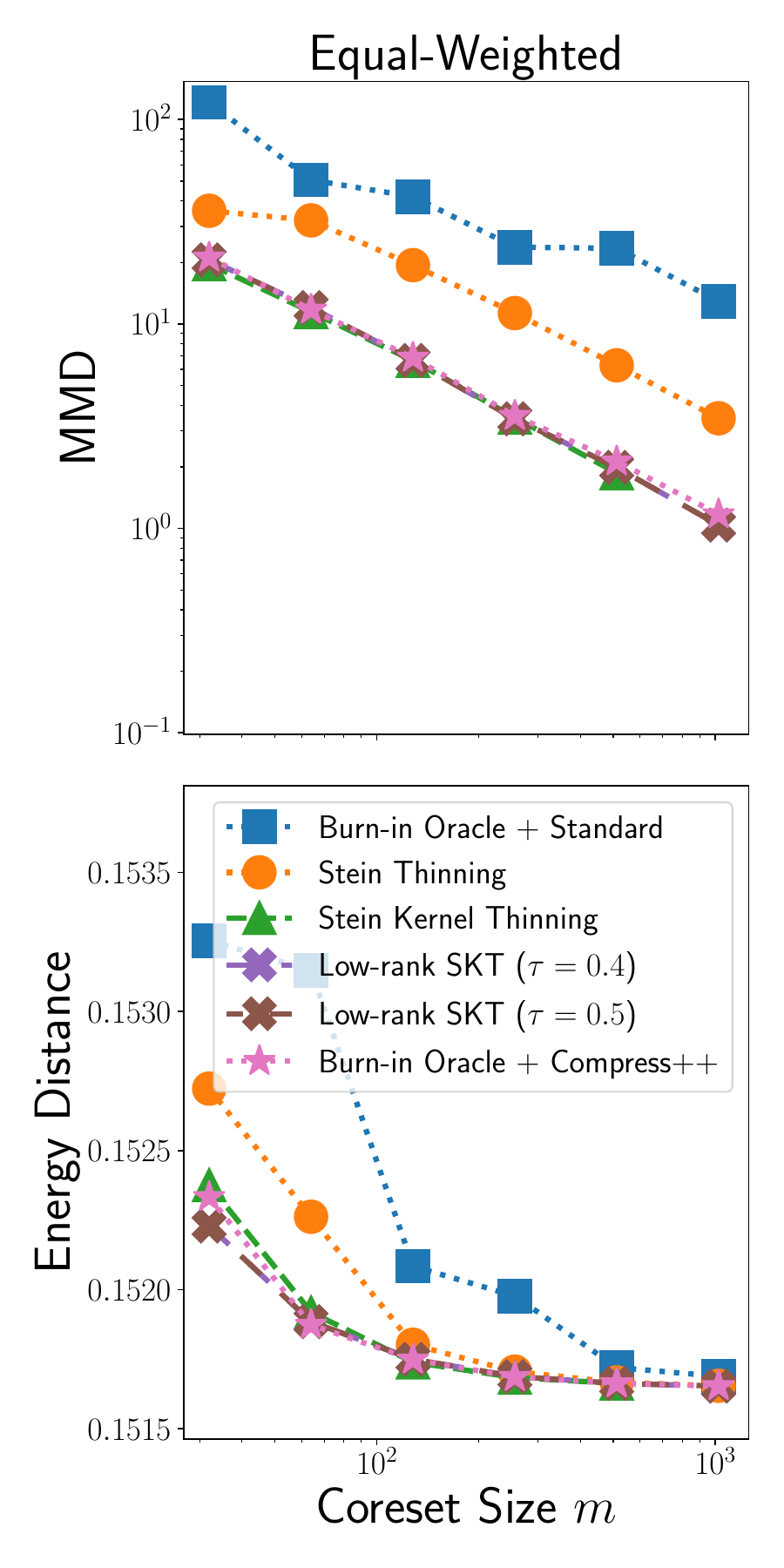}}
  \hspace*{-0.1in}
  \raisebox{-0.5\height}{\includegraphics[height=3.0in]{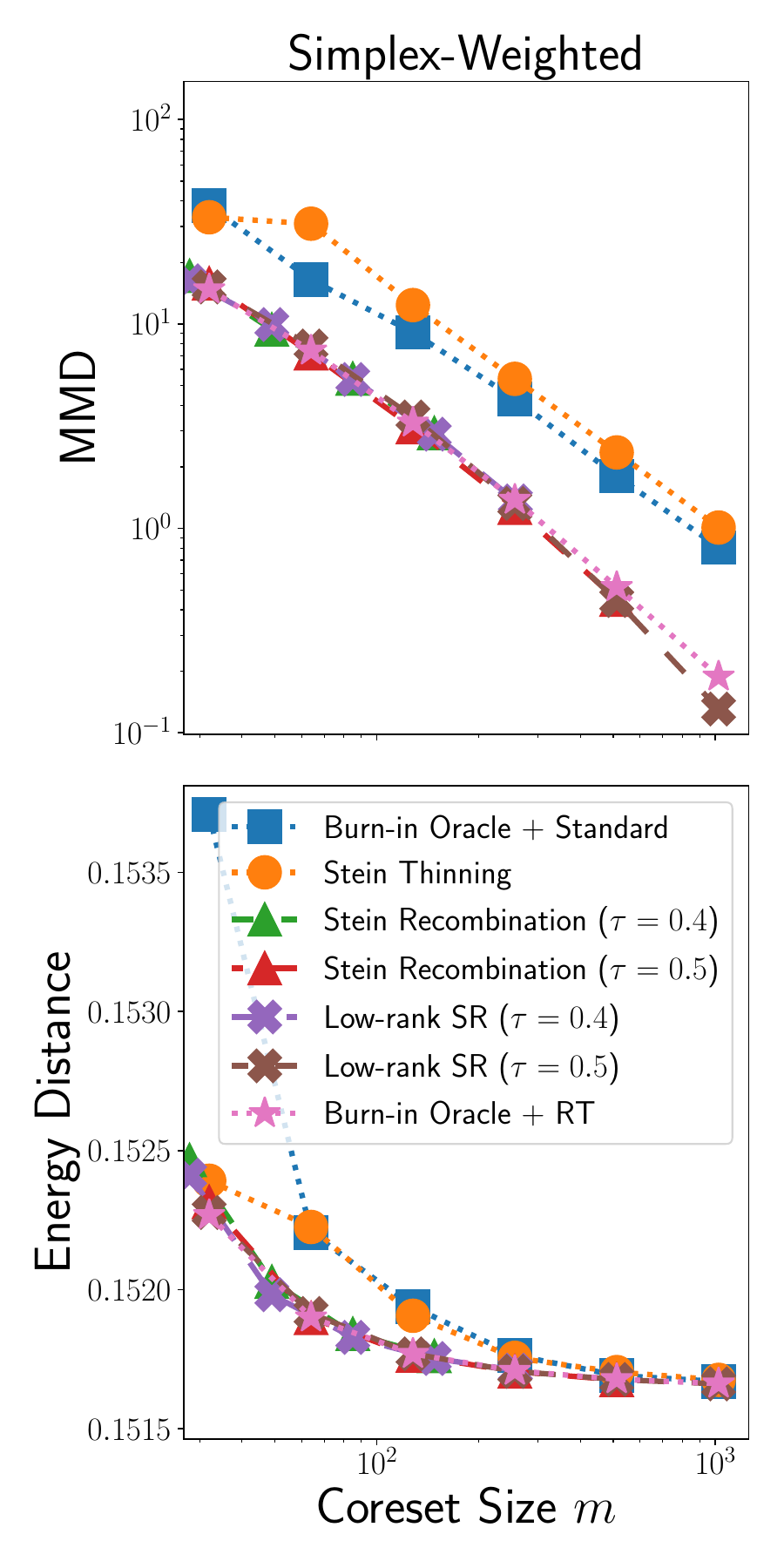}}
  \hspace*{-0.1in}
  \raisebox{-0.5\height}{\includegraphics[height=3.0in]{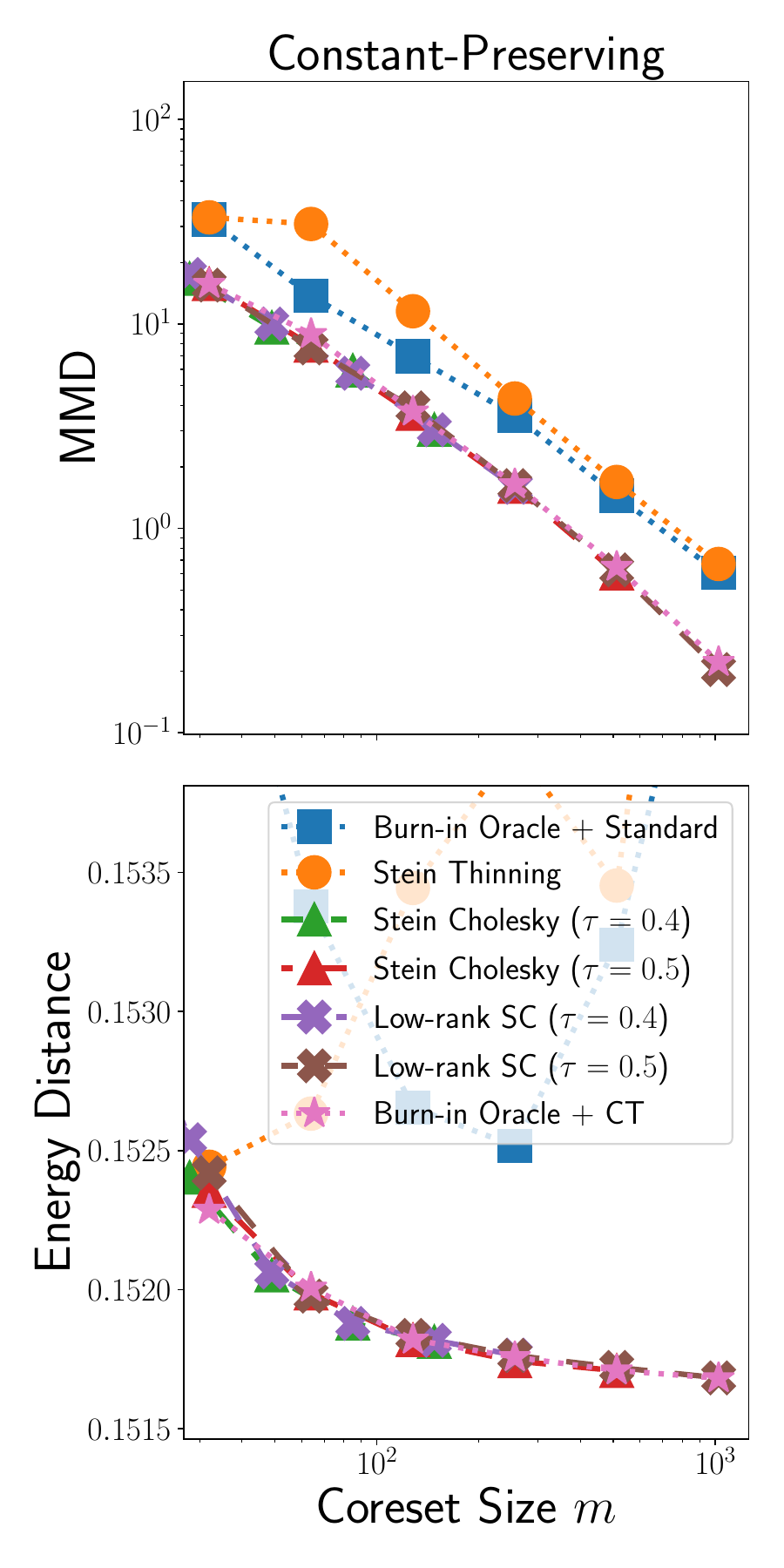}}
  \vspace{-2mm}
  \caption{
    \textbf{Correcting for burn-in.} 
    \emph{Left:} Before selecting coresets (orange), the burn-in oracle uses 6 independent Markov chains to discard burn-in (red) while \LSKT identifies the same high-density region (blue) with 1 chain.
    \emph{Right:} 
    Using only one chain, our methods consistently outperform the Stein and standard thinning baselines and match the 6-chain oracle.}
  \label{fig:goodwin_result_main}
\end{figure*}

\begin{theorem}[MMD guarantee for \GSC\ / \LSC]\label{thm:SC_guarantee}
Under \cref{assum:lr}, Stein Cholesky (\cref{alg:SC}) takes 
$O(\kev n^2 \!+\! (\kev\!+\!m)nm \!+\! m^3)$ time to output $w_{\GSCnotag}$, and Low-rank SC takes $O((\kev \!+\!r\!+\! T)nr\!+\!(\kev\!+\!m)nm\!+\!m^3)$ time to output $w_{\LSCnotag}$.
Moreover, for any $\delta\in(0, 1)$, with probability at least $1-\delta$, each of the following bounds hold:
\vspace{-2mm}
\begin{talign}
     &\Delta\!\mmd_{\kp}(w_{\GSCnotag}) = 2 \mmd_{\mrm{OPT}}\\ & +  O\big(\!\sqrt{\frac{\norm{\kp}_n\log n}{\delta n} + \frac{H_{n,m'}}{\delta}}\big) \stext{and}\\ 
     &\Delta\!\mmd_{\kp}(w_{\LSCnotag}) = 2 \mmd_{\mrm{OPT}} \\ &+ O\big(\!\sqrt{\frac{\norm{\kp}_n(\log n\vee 1/\delta)}{\delta n} + \frac{H_{n,m'}}{\delta}+ \frac{nH_{n,r}}{\delta^2}}\big)
  \end{talign}
  for $H_{n,r}$ as in \cref{thm:LD_guarantee} and $m' \defeq m + \log 2 - 2\sqrt{m\log 2 + 1}$.
\end{theorem}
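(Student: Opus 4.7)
The plan is to split the analysis into a runtime accounting and a two-stage MMD decomposition: a debiasing error from \GBCfull or \alrbcfull and a compression error from \cholthinfull. The two $2\mmd_{\mrm{OPT}}$ terms in the statement already telegraph that \cholthinfull will be analyzed via a standalone lemma that carries over a factor-of-two input MMD rather than a clean triangle inequality, so the main work outside of that standalone lemma is bookkeeping.

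\textbf{Runtime.} I would sum the costs of the two stages. The debiasing stage contributes $O(\kev n^2)$ for \GBCfull (as noted in \cref{sec:quad_time}) and $O((\kev+r+T)nr)$ for \alrbcfull (from \cref{thm:LD_guarantee}). Inside \cholthinfull, the dominant costs are the $O((\kev+m)nm)$ call to \rpc to form $F\in\real^{n\times m}$, the two $O(m^3)$ closed-form constant-preserving quadratic solves restricted to the $m$ pivots, and the \ktswapls pass whose total cost is at most $O((\kev+m)nm)$ because each of the $O(n)$ swap candidates can be scored in $O(\kev m)$ time using cached $F\tp$-images. Summing gives the claimed runtimes for \GSC and \LSC.

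\textbf{MMD decomposition.} The heart of the proof is a standalone guarantee for \cholthinfull --- call it \cref{prop:cholthin_guarantee} --- which I would invoke in the form
\[
\mmd_{\kp}\bigl(\textstyle\sum_i w_{\cholthinnotag,i}\dirac_{x_i},\P\bigr)
\le 2\,\mmd_{\kp}\bigl(\textstyle\sum_i w_i\dirac_{x_i},\P\bigr)
+ O\Bigl(\sqrt{\tfrac{\snorm{\kp}_n\log n}{\delta n}+\tfrac{H_{n,m'}}{\delta}}\Bigr),
\]
where $w$ is the debiasing output fed into \cholthinfull. The factor of $2$ reflects that \rpc is driven by the weighted kernel matrix and so carries over the input MMD, while the effective rank $m' = m+\log 2 - 2\sqrt{m\log 2+1}$ accounts for the shift of the kernel spectrum caused by the constant regularizer $c$ used to adapt \rpc to constant-preserving weights (the rank-one perturbation $c\,\mathbf{1}_n\mathbf{1}_n\tp$ eats into the usable rank). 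The \ktswapls pass only decreases MMD relative to the two reweighting anchors, and the closed-form solves deliver the optimal constant-preserving weights on the final support, so neither weakens the bound.

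\textbf{Debiasing substitution and conclusion.} Writing $\mmd_{\kp}(\sum_i w_i\dirac_{x_i},\P)\le\mmd_{\mrm{OPT}}+\Delta\!\mmd_{\kp}(w)$, I would plug in \citet[Thm.~1]{riabiz2022optimal} for $w=w_{\GBCfullnotag}$ in the \GSC case to get debiasing error $O(\sqrt{\snorm{\kp}_n(\log n\vee 1/\delta)/n})$, and \cref{thm:LD_guarantee} for $w=w_{\alrbcnotag}$ in the \LSC case to get an extra $O(\sqrt{nH_{n,r}/\delta})$. Multiplying the debiasing error by the factor $2$ above and re-expressing inside the square root turns $nH_{n,r}/\delta$ into $nH_{n,r}/\delta^2$, matching the \LSC statement. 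Subtracting $\mmd_{\mrm{OPT}}$ from both sides leaves $2\mmd_{\mrm{OPT}}$ plus the stated error, and a union bound over the debiasing randomness and the \rpc/\ktswapls randomness inside \cholthinfull delivers the probability-$1-\delta$ claim.

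\textbf{Main obstacle.} The primary technical hurdle is not in this theorem but in establishing the standalone \cref{prop:cholthin_guarantee}: showing that regularizing the kernel by $c$ allows the unweighted-simplex \rpc analysis to yield a constant-preserving coreset with residual governed by $H_{n,m'}$. This requires translating the eigenvalue decay of $\kp$ on $\cS_n$ (via \cref{cor:Kp_eigval_bound}) through the rank-one perturbation and coupling the resulting bound to the subsequent \ktswapls refinement, which is why $m'$ is slightly smaller than $m$. Once that lemma is in hand, the remainder of the argument is the triangle-inequality-plus-substitution routine sketched above.
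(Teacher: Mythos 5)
Your proposal takes essentially the same route as the paper: the theorem is proved by composing the standalone \cholthinfull guarantee (\cref{prop:cholthin_guarantee}, with its factor-$2$ carry-over of the input MMD and the $H_{n,m'}$ residual coming from the constant-regularized \rpc, where the rank-one regularizer costs one unit of usable rank) with the \GBCfull or \alrbcfull debiasing guarantee and a union bound, together with the same runtime accounting. One small correction: the $\delta^2$ in the \LSC bound is not produced by ``multiplying the debiasing error by $2$'' (that only changes constants, giving $4nH_{n,r}/\delta$, not $nH_{n,r}/\delta^2$); it is simply slack in the stated bound relative to what the composition yields, and since $1/\delta \le 1/\delta^2$ for $\delta \in (0,1)$ your derived bound still implies the theorem as written.
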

\begin{example}
Instantiate the assumptions of \cref{thm:SC_guarantee} with $\slg$-slow-growing input points (\cref{exam:well_controlled}), \loggrowth $\kp$, 
and a heavily compressed coreset size $m = (\log n)^{2(\cvrw+\cvrd\slg)+\eps}$ for any $\eps> 0$.
Then \GSC delivers $\Otilde(n^{-1/2})$
excess MMD with high probability in $O(n^2)$ time, and \LSC with $r\!=\!m$ and $T=\Theta(\sqrt{n})$  achieves the same in $\Otilde(n^{1.5})$ time.
\end{example}
\begin{remark}
\label{rem:standard_thin_gsr}
    While we present our results for a target precision of $1/\sqrt{n}$, a coarser target precision of $1/\sqrt{n_0}$ for $n_0 < n$ can be achieved more quickly by random sub-sampling/standard thinning the input sequence down to size $n_0$ before running \GSR, \LSR, \GSC, or \LSC. %
\end{remark}

\begin{figure*}[tbh]
  \centering
  \includegraphics[width=0.3\textwidth]{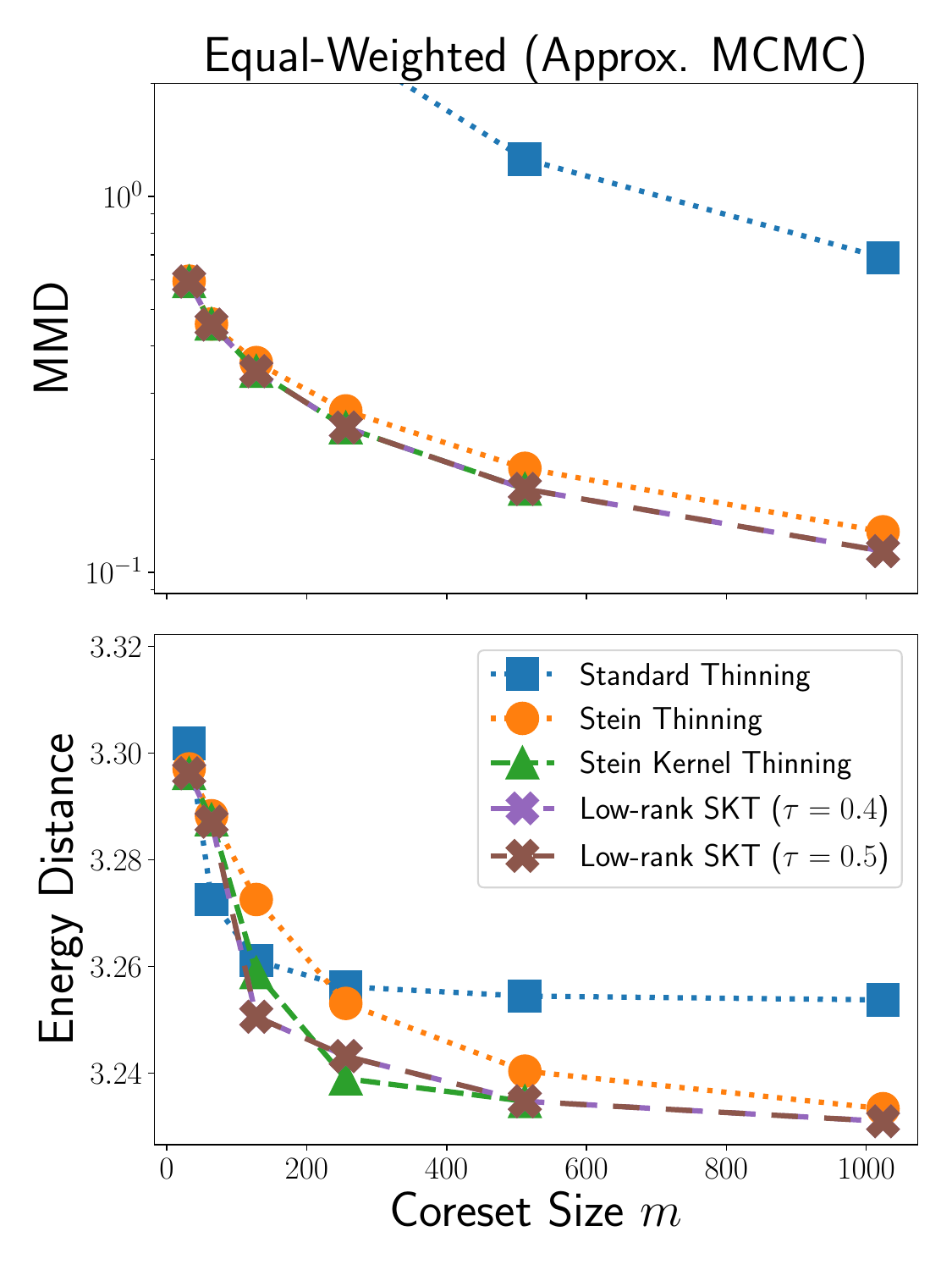}
  \includegraphics[width=0.3\textwidth]{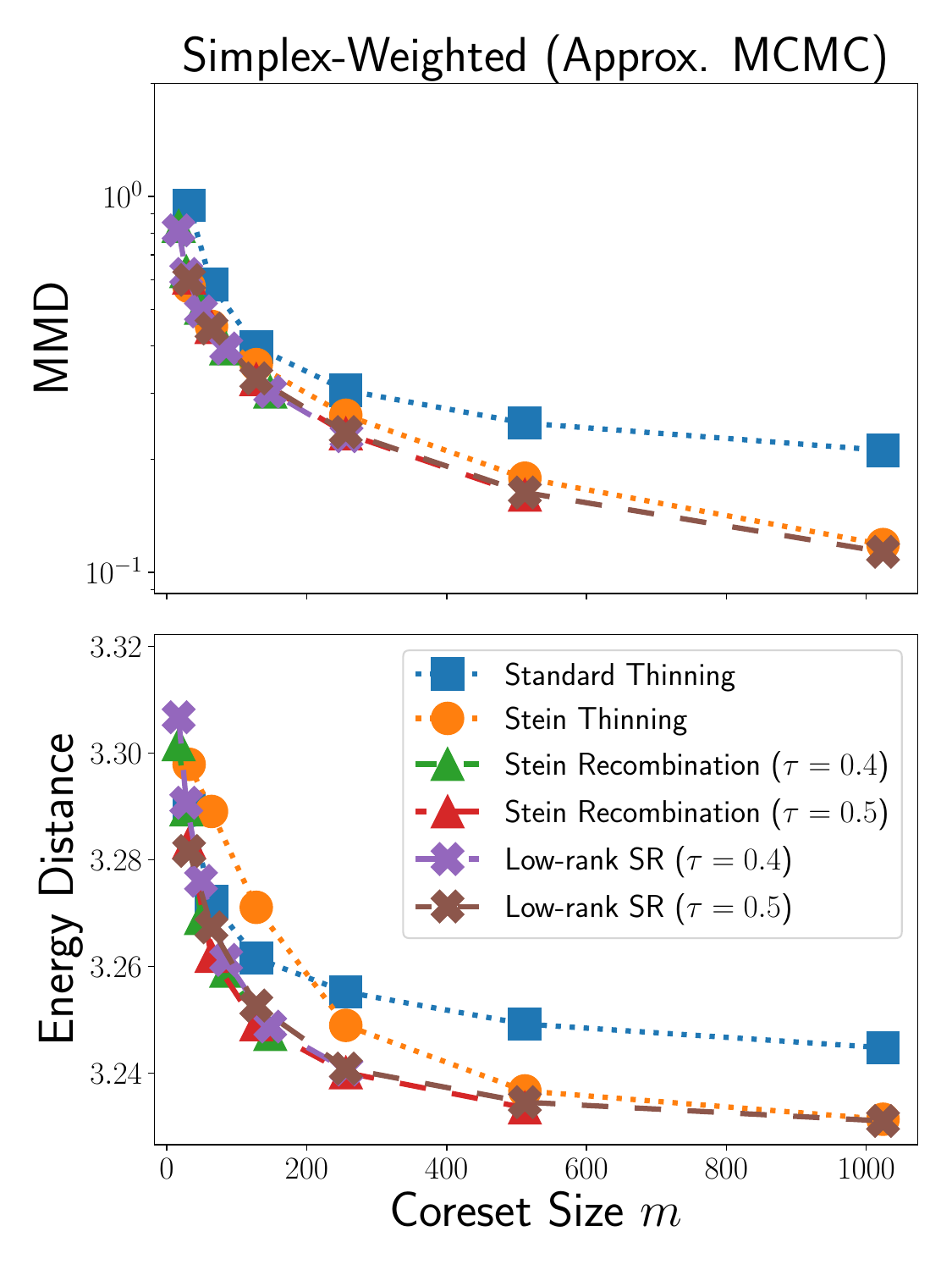}
  \includegraphics[width=0.3\textwidth]{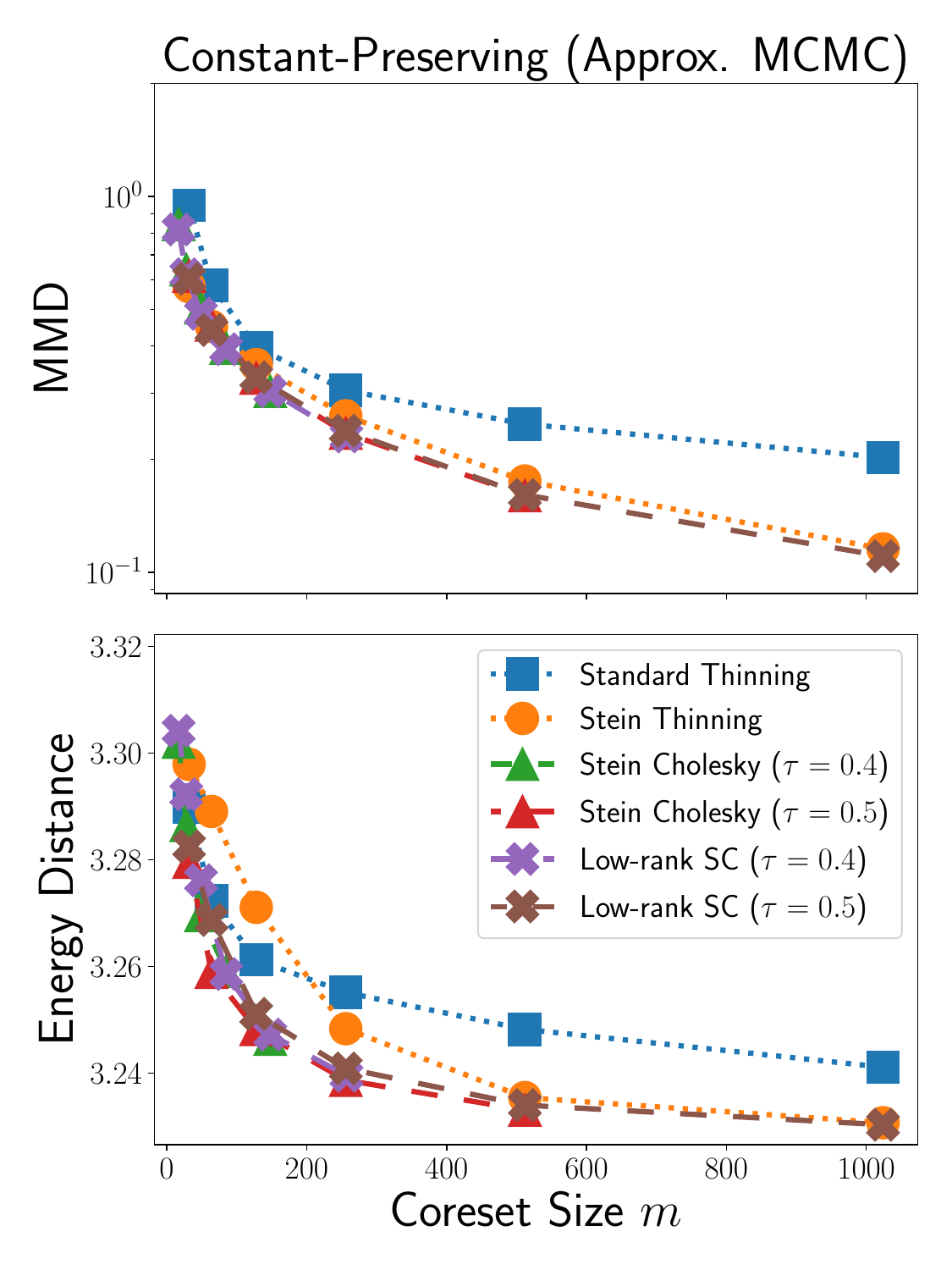}
  \\[-\baselineskip]
    \hrulefill\\
  \includegraphics[width=0.3\textwidth]{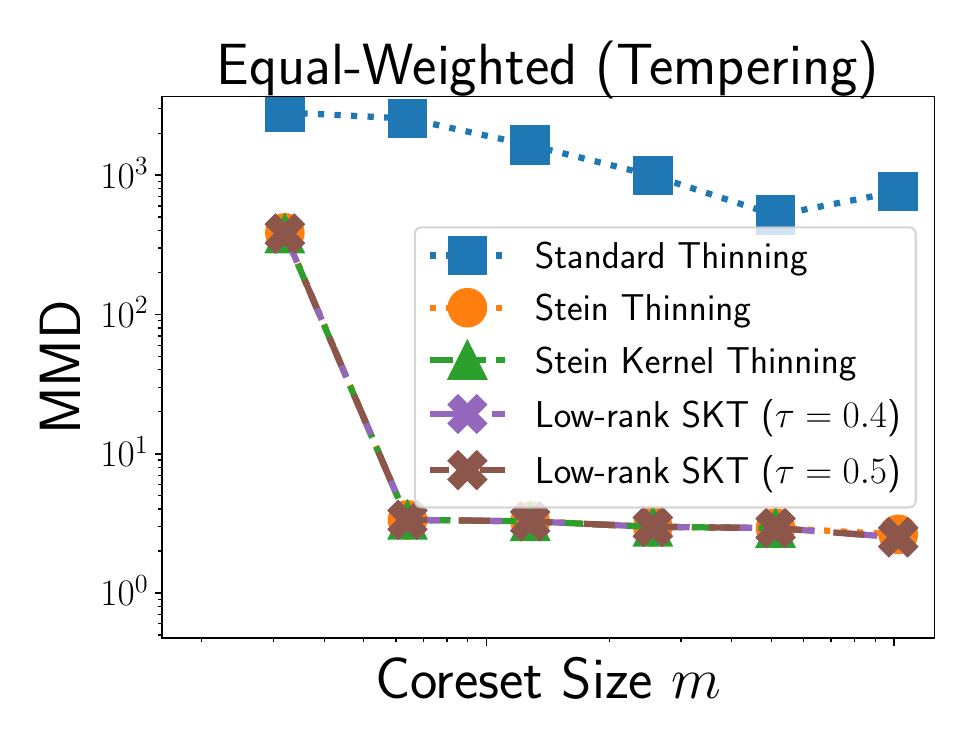}
  \includegraphics[width=0.3\textwidth]{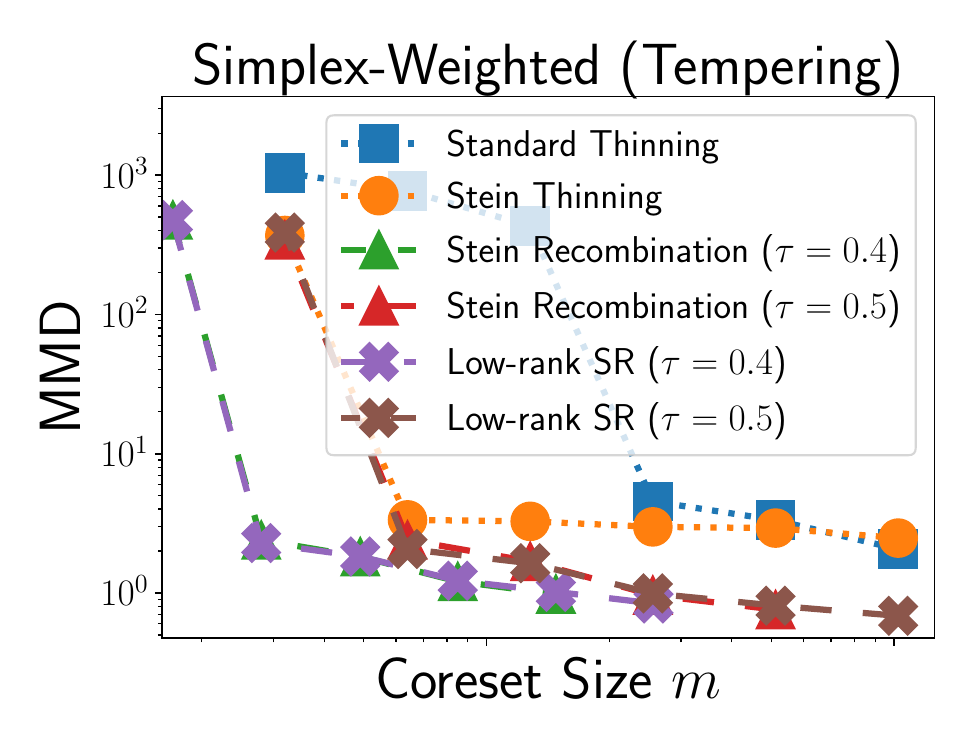}
  \includegraphics[width=0.3\textwidth]{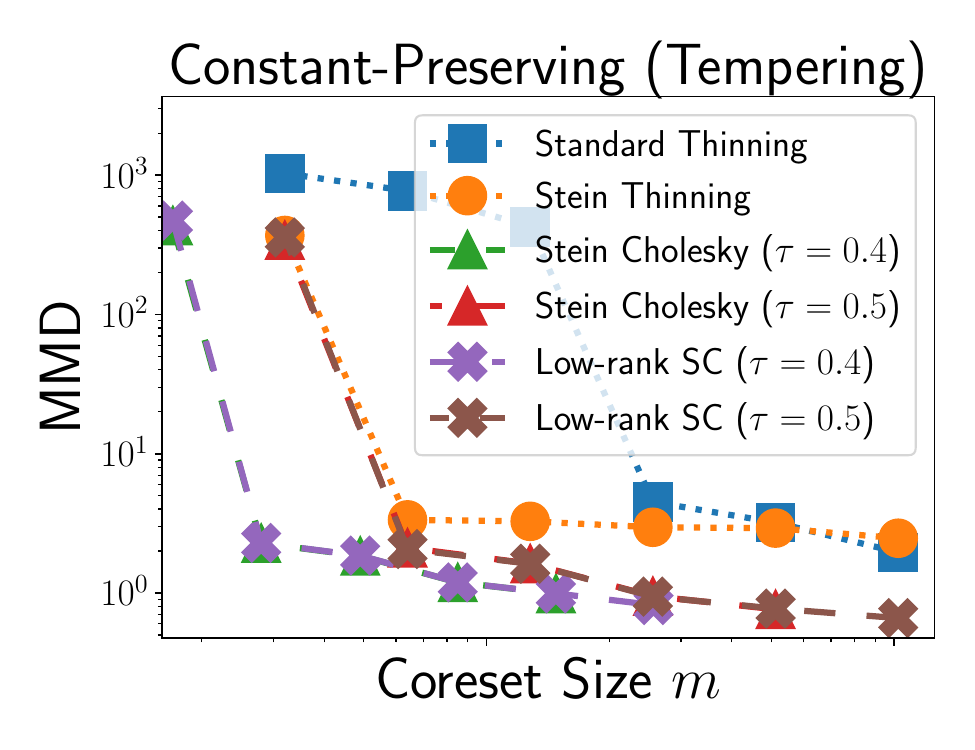}
\caption{\label{fig:covtype_and_cardiac_result_main}\textbf{Correcting for approximate MCMC (top) and tempering (bottom).} For posterior inference over the parameters of Bayesian logistic regression ($d\!=\!54$, top) and a cardiac calcium signaling model ($d\!=\!38$, bottom), our concise coreset constructions correct for approximate MCMC and tempering biases without need for explicit importance sampling.} %
\end{figure*}

\section{Experiments}\label{sec:experiments}
We next evaluate the practical utility of our procedures when faced with three common sources of bias: (1) burn-in, (2) approximate MCMC, and (3) tempering.
In all experiments, we use a Stein kernel $\ksm$ with an inverse multiquadric (IMQ) base kernel $\k(x,y) \!=\! (1\!+\!\staticnorm{x\!-\!y}^2_M/\sigma^2)^{-1/2}$ for $\sigma$ equal to the median pairwise $\staticnorm{\cdot}_M$ distance amongst $1000$ points standard thinned from the input. To vary output MMD precision, we first standard thin the input to size $n_0 \in \{2^{10}, 2^{12}, 2^{14}, 2^{16}, 2^{18}, 2^{20}\}$ before applying any method, as discussed in \cref{rem:standard_thin_gsr,rmk:standard_thin_n_n0}. 
For low-rank or weighted coreset methods, we show results for $m\!=\!r\!=\!n^{\tau}$. 
When comparing weighted coresets, we optimally reweight every coreset. We report the median over 5 independent runs for all error metrics. We implement our algorithms in JAX \citep{jax2018github} 
and refer the reader to \cref{sec:app_experiment} for additional experiment details (including runtime comparison in \cref{tab:goodwin_runtime}). 
Our open-source code is available as part of the GoodPoints Python library at \url{https://github.com/microsoft/goodpoints}.

\para{Correcting for burn-in}
\label{subsec:exp_goodwin}
The initial iterates of a Markov chain are biased by its starting point and need not accurately reflect the target distribution $\bP$. %
Classical burn-in corrections use convergence diagnostics to detect and discard these iterates but typically require running multiple independent Markov chains \citep{cowles1999possible}. %
Alternatively, our proposed debiased compression methods can be used to correct for burn-in given just a single chain. 

We test this claim using an experimental setup from \citet[Sec.~4.1]{riabiz2022optimal} and the  $6$-chain ``burn-in oracle'' diagnostic of~\citet{vats2021revisiting}.\footnote{The \citeauthor{vats2021revisiting} diagnostic is an improvement on the popular \citet{gelman1992inference} diagnostic (GRD). The GRD was designed to diagnose MCMC convergence rather than burn-in but is commonly used to identify and discard burn-in points automatically through packages like \texttt{coda} \citep{plummer2006coda}.} We aim to compress a posterior $\P$ over the parameters in the Goodwin model of oscillatory enzymatic control ($d\!=\!4$)
using $n\!=\!2\!\times\! 10^6$ points from a preconditioned Metropolis-adjusted Langevin algorithm (P-MALA) chain. 
We repeat this experiment with three alternative MCMC algorithms in \cref{subsec:exp_goodwin_app}. 
Our primary metric is $\mmd_{\kp}$ to $\P$ with  $M=I$, but, for external validation, we also measure the energy distance~\citep[Eq.~11]{riabiz2022optimal} to an auxiliary MCMC chain of length $n$.  
Trajectory plots of the first two coordinates (\cref{fig:goodwin_result_main}, left) highlight the substantial burn-in period for the Goodwin chain and the ability of \LSKT to mimic the 6-chain burn-in oracle using only a single chain. In \cref{fig:goodwin_result_main}  (right), for both the MMD metric and the auxiliary energy distance, our proposed methods consistently outperform Stein thinning and match the quality of 6-chain burn-in removal paired with unbiased compression. %
The spike in baseline energy distance for the constant-preserving task can be attributed to the selection of overly large weight values due to poor matrix conditioning; the simplex-weighted task does not suffer from this issue due to its regularizing nonnegativity constraint. 

\para{Correcting for approximate MCMC}
\label{subsec:exp_covtype}
In posterior inference, MCMC algorithms typically require iterating over every datapoint to draw each new sample point.
When datasets are large, approximating MCMC using datapoint mini-batches can reduce sampling time at the cost of persistent bias and an unknown stationary distribution that prohibits debiasing via importance sampling. 
Our proposed methods can correct for these biases during compression by computing full-dataset scores on a small subset of $n_0$ standard thinned points.
To evaluate this protocol, we compress a Bayesian logistic regression posterior conditioned on the Forest Covtype dataset ($d\!=\!54$) using $n\!=\!2^{24}$ approximate MCMC points from the stochastic gradient Fisher scoring sampler \citep{ahn2012bayesian} with batch size $32$. 
Following \citet{wang2024stein}, we set $M\!=\!-\Hess\!\log p(x_{\textup{mode}})$ at the sample mode $x_{\textup{mode}}$ and use $2^{20}$ surrogate ground truth points from the No U-turn Sampler~\citep{hoffman2014no} to evaluate energy distance. We find that our proposals improve upon standard thinning and Stein thinning for each compression task, not just in the optimized MMD metric (\cref{fig:covtype_and_cardiac_result_main}, top) but also in the auxiliary energy distance (\cref{fig:covtype_and_cardiac_result_main}, middle) and when measuring integration error for the mean (\cref{fig:covtype_mse}).

\para{Correcting for tempering}
\label{subsec:exp_cardiac}
Tempering, targeting a less-peaked and more dispersed distribution $\Q$, is a popular technique to improve the speed of MCMC convergence. One can correct for the sample bias using importance sampling, but this requires knowledge of the tempered density and can introduce substantial variance \citep{gramacy2010importance}. 
Alternatively, one can use constructions of this work to correct for tempering during compression; this requires no importance weighting and no knowledge of $\Q$.
To test this proposal, we compress the cardiac calcium signaling model posterior ($d=38$) of \citet[Sec. 4.3]{riabiz2022optimal} with $M=I$ and $n=3\times10^6$ tempered points from a Gaussian random walk Metropolis-Hastings chain. As discussed by \citeauthor{riabiz2022optimal}, compression is essential in this setting as the ultimate aim is to propagate posterior uncertainty through a human heart simulator, a feat which requires over $1000$ CPU hours for each summary point retained.  
Our methods perform on par with Stein thinning for equal-weighted compression and yield substantial gains over Stein (and standard) thinning for the two weighted compression tasks.

\section{Conclusions and Future Work}

We have introduced and analyzed a suite of new procedures for compressing a biased input sequence into an accurate summary of a target distribution.  For equal-weighted compression, Stein kernel thinning delivers $\sqrt{n}$ points with $\Otilde(n^{-1/2})$ MMD in $O(n^2)$ time, and low-rank SKT can improve this running time to $\Otilde(n^{3/2})$.  For simplex-weighted and constant-preserving compression, Stein recombination and Stein Cholesky provide enhanced parsimony, matching these guarantees with as few as $\polylog(n)$ points.
Recent work has identified some limitations of score-based discrepancies, like Stein kernel MMDs, and developed modified objectives that are more sensitive to the relative density of isolated modes \citep{liu2023using,benard2024kernel}.  A valuable next step would be to extend our constructions to provide compression guarantees for these modified discrepancy measures.
Other opportunities for future work include marrying the better-than-\iid guarantees of this work with the non-myopic compression of \citet{teymur2021optimal}, the control-variate compression of \citet{chopin2021fast}, and the online compression of \citet{hawkins2022online}.

\section*{Acknowledgments}
We thank Marina Riabiz for making the Markov chain data used in  \cref{sec:experiments} available and Jeffrey Rosenthal for helpful discussions concerning the geometric ergodicity of Markov chains.
Lingxiao Li acknowledges the generous support of Army Research Office grants W911NF2010168 and W911NF2110293, of Air Force Office of Scientific Research award FA9550-19-1-031, of National Science Foundation grant CHS-1955697, from the CSAIL Systems that Learn program, from the MIT–IBM Watson AI Laboratory, from the Toyota–CSAIL Joint Research Center, from a gift from Adobe Systems, and from a Google Research Scholar award.

\section*{Impact Statement}
This paper presents work with the aim of advancing the field of Machine Learning. There are many potential societal consequences of our work, none which we feel must be specifically highlighted here.

\bibliography{main}
\bibliographystyle{abbrvnat}
\newpage
\appendix
\onecolumn
     \etoctocstyle{1}{Appendix Contents}
    \etocdepthtag.toc{mtappendix}
    \etocsettagdepth{mtchapter}{none}
    \etocsettagdepth{mtappendix}{section}
    \etocsettagdepth{mtappendix}{subsection}
    \etocsettagdepth{mtappendix}{subsubsection}
    {\small\tableofcontents}
\numberwithin{lemma}{section} 
\numberwithin{proposition}{section} 
\numberwithin{definition}{section} 
\numberwithin{theorem}{section} 
\numberwithin{algorithm}{section} 
\numberwithin{figure}{section} 
\numberwithin{table}{section} 
\numberwithin{corollary}{section} 
\numberwithin{example}{section} 

\section{Appendix Notation}
For the point sequence $\cS_n = (x_i)_{i\in[n]}$, we use $\bS_n \defeq \frac{1}{n}\sum_{i\in[n]}\delta_{x_i}$ to denote the empirical distribution.
For a weight vector $w \in \bR^n$, we define the support $\supp(w) \defeq \{i \in [n]: w_i \neq 0\}$ and the signed measure $\bS_n^w  \defeq \sum_{i\in [n]}w_i \delta_{x_i}$.
For a matrix $K \in \bR^{n\times n}$ and $w \in \simplex$, we define the weighted matrix $K^w \defeq \diag(\sqrt{w})K\diag(\sqrt{w})$.
For positive semidefinite (PSD) matrices $(A, B)$, we use $A \succeq B$ (resp. $A \preceq B$) to mean $A - B$ (resp. $B - A$) is PSD.
For a symmetric PSD (SPSD) matrix $M$, we let $M^{1/2}$ denote a symmetric matrix square root satisfying $M=M^{1/2}M^{1/2}$.
For $A \in \bR^{n\times m}$, we denote $\norm{A}_p \defeq \sup_{x\neq 0} \frac{\norm{Ax}_p}{\norm{x}_p}$. 
We will use $\ind{E}$ to denote the indicator function for an event $E$.

Notation used only in a specific section will be introduced therein.

\section{Spectral Analysis of Kernel Matrices}
The goal of this section is to develop spectral bounds for kernel matrices.

\para{From covering numbers of kernels to eigenvalues of kernel \emph{matrices}} In \cref{subsec:T_k_spectral}, we transfer the bounds on covering numbers from the definition of \polygrowth or \loggrowth kernels to bounds on the eigenvalues of the kernel matrices.
This sets the theoretical foundation for the algorithms in later sections as their error guarantees rely on the fast decay of eigenvalues of kernel matrices.

\para{Covering numbers and eigenvalues of Stein kernels}
In \cref{subsec:stein_spectral}, we show that Stein kernels are \polygrowth (resp. \loggrowth) provided that their base kernels are differentiable (resp. radially analytic).
Putting the results together from \cref{subsec:T_k_spectral}, we obtain spectral bounds for a wide range of Stein kernels.

\para{Notation}
For a normed space $E$, we use $\norm{\cdot}_E$ to denote its norm, $\ball{E}(p, r) \defeq \{x \in E: \norm{x- p}_E \le r\}$ to denote the closed ball of radius $r$ centered at $p$ in $E$ with the shorthand  $\ball{E}(r) \defeq \ball{E}(0, r)$ and $\ball{E} \defeq \ball{E}(1)$.
When $E$ is an RKHS with kernel $\k$, for brevity we use $\k$ in place of $E$ in the subscript.
Let $\funcspace(\cX, \cY)$ denote the space of functions from $\cX$ to $\cY$, and $\boundedop(E, F)$ denote the space of bounded linear functions between normed spaces $E, F$.
For a set $A$, we use $\ellinf(A)$ to denote the space of bounded $\bR$-valued functions on $A$ equipped with the sup-norm $\norm{f}_{\infty,A} \defeq \sup_{x\in A} |f(x)|$.
We use $\idmap{E}{F}$ to denote the inclusion map.
e use $\lambda_\ell(T)$ to denote the $\ell$-th largest eigenvalue of an operator $T$.

\subsection{From covering numbers of kernels to eigenvalues of kernel \emph{matrices}}
\label{subsec:T_k_spectral}
We first introduce the general Mercer representation theorem from \citet{steinwart2012mercer}, which shows the existence of a discrete spectrum of the integral operator associated with a continuous square-integrable kernel.
The theorem also provides a series expansion of the kernel, i.e., the Mercer representation, in terms of the eigenvalues and eigenfunctions.
\begin{lemma}[General Mercer representation \citep{steinwart2012mercer}]\label{lem:general_mercer}
  Consider a kernel $\k: \bR^d \times \bR^d \to \bR$ that is jointly continuous in both inputs and a probability measure $\mu$ such that $\int \k(x,x)\dd\mu(x) < \infty$.
  Then the following holds.
  \begin{enumerate}[label=(\alph*)]
    \item\label{itm:general_mercer:cpt_embed}
      The inclusion $\idmap{\rkhs[\k]}{\cL^2(\mu)}$ is a compact operator, i.e., $\ball{\k}$ is a compact subset of $\cL^2(\mu)$. In particular, this inclusion is continuous.
    \item\label{itm:general_mercer:int_op}
      The Hilbert-space adjoint of the inclusion $\idmap{\rkhs[\k]}{\cL^2(\mu)}$ is the compact operator $S_{\k,\mu}: \cL^2(\mu) \to \rkhs[\k]$ defined as
      \begin{talign}
        S_{\k,\mu} f \defeq \int \k(\cdot, x) f(x)\dd\mu(x).\label{eqn:int_op_S_k}
      \end{talign}
      We also have $S_{\k,\mu}^* \defeq \idmap{\rkhs[\k]}{\cL^2(\mu)}$.
      Hence the operator 
      \begin{talign}
        T_{\k,\mu} \defeq S^*_{\k,\mu} S_{\k,\mu}: \cL^2(\mu) \to \cL^2(\mu)\label{eqn:int_op_T_k}
      \end{talign}
      is also compact.
    \item\label{itm:general_mercer:ons}
      There exist $\{\lambda_\ell\}_{\ell=1}^\infty$ with $\lambda_1 \ge \lambda_2 \ge \cdots \ge 0$ and $\{\phi_\ell\}_{\ell=1}^\infty \subset \rkhs[\k]$ such that $\{{\phi_\ell}\}_{\ell=1}^\infty$ is an orthonormal system in $\cL^2(\mu)$
      and $\{\lambda_\ell\}_{\ell=1}^\infty$ (resp. $\{{\phi_\ell}\}_{\ell=1}^\infty$) consists of the eigenvalues (resp. eigenfunctions) of $T_{\k,\mu}$ with eigendecomposition, for ${f} \in \cL^2(\mu)$,
      \begin{talign}
        T_{\k,\mu} {f} = \sum_{\ell = 1}^\infty \lambda_\ell \langle {f}, {\phi_\ell} \rangle_{\cL^2(\mu)}{\phi_\ell}
      \end{talign}
      with convergence in $\cL^2(\mu)$.
    \item\label{itm:general_mercer:series_rep}
      We have the following series expansion
      \begin{talign}
        \k(x, x') &= \sum_{\ell=1}^\infty \lambda_i \phi_\ell(x) \phi_\ell(x'), \label{eqn:mercer_series_rep}
      \end{talign}
      where the series convergence is absolute and uniform in $x,x'$ on all $A\times A \subset \supp\mu\times\supp\mu$.
  \end{enumerate}
\end{lemma}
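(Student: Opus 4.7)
My plan is to address the four parts in sequence, since each depends on what precedes it.

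For part (a), I would establish the compactness of the inclusion $\iota \defeq (\rkhs[\k] \hookrightarrow \cL^2(\mu))$ via the stronger fact that $\iota$ is Hilbert--Schmidt. The reproducing property and Cauchy--Schwarz give $|f(x)|^2 \le \k(x,x)\|f\|_\k^2$, so $\|\iota f\|_{\cL^2(\mu)}^2 \le \|f\|_\k^2 \int \k(x,x)\,d\mu(x) < \infty$ and $\iota$ is bounded. Picking any orthonormal basis $\{e_i\}$ of $\rkhs[\k]$ (which exists since continuous kernels on $\bR^d$ yield separable RKHSs), Parseval and monotone convergence yield $\sum_i \|\iota e_i\|_{\cL^2(\mu)}^2 = \int \sum_i |e_i(x)|^2\,d\mu(x) = \int \k(x,x)\,d\mu(x) < \infty$, which is exactly the Hilbert--Schmidt norm squared; hence $\iota$ is compact.

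For part (b), the adjoint follows from a direct computation using the reproducing property and Fubini: for $f \in \cL^2(\mu)$ and $g \in \rkhs[\k]$, $\langle S_{\k,\mu}f, g\rangle_\k = \int f(x)\langle \k(\cdot,x), g\rangle_\k\,d\mu(x) = \int f(x) g(x)\,d\mu(x) = \langle f, \iota g\rangle_{\cL^2(\mu)}$, so $S_{\k,\mu} = \iota^*$. Then $T_{\k,\mu} = \iota S_{\k,\mu}$ is compact since composition with a compact operator is compact. For part (c), I would invoke the spectral theorem for compact, self-adjoint, positive operators on the separable Hilbert space $\cL^2(\mu)$, applied to $T_{\k,\mu}$; this yields the eigenvalues $\{\lambda_\ell\}$ and an $\cL^2(\mu)$-orthonormal system $\{\phi_\ell\}$ of eigenfunctions. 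Whenever $\lambda_\ell > 0$, the identity $\phi_\ell = \lambda_\ell^{-1} T_{\k,\mu}\phi_\ell = \iota(\lambda_\ell^{-1} S_{\k,\mu}\phi_\ell)$ together with the fact that $S_{\k,\mu}$ maps into $\rkhs[\k]$ furnishes a canonical $\rkhs[\k]$-representative of each eigenfunction.

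Part (d) is the most delicate step and the main obstacle. Pointwise convergence of $\sum_\ell \lambda_\ell \phi_\ell(x)\phi_\ell(y)$ to $\k(x,y)$ on $\supp\mu \times \supp\mu$ follows by expanding $\k(\cdot,x)$ in the eigenbasis of $T_{\k,\mu}$ and using the reproducing property to identify the coefficients. To upgrade to absolute, uniform convergence, I would apply Dini's theorem to the diagonal partial sums $S_N(x) \defeq \sum_{\ell \le N}\lambda_\ell \phi_\ell(x)^2$: these form a monotone increasing sequence of continuous functions converging pointwise on $\supp\mu$ to the continuous function $\k(x,x)$, so convergence is uniform on every compact $A \subset \supp\mu$. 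The off-diagonal case then follows from Cauchy--Schwarz applied tail-wise: $|\sum_{\ell > N}\lambda_\ell \phi_\ell(x)\phi_\ell(y)| \le (S(x) - S_N(x))^{1/2}(S(y) - S_N(y))^{1/2}$, where $S(x) \defeq \sum_\ell \lambda_\ell \phi_\ell(x)^2$. The core subtlety in Steinwart--Scovel's extension beyond compactly supported $\mu$ is that the eigenfunctions are a priori only $\cL^2(\mu)$-classes; selecting the canonical $\rkhs[\k]$-representatives from part (c) is what guarantees genuinely continuous, pointwise-defined $\phi_\ell$ on $\supp\mu$, which in turn enables Dini's theorem and the tailwise Cauchy--Schwarz bound to apply everywhere rather than merely $\mu$-almost everywhere.
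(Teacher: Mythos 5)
The paper itself does not prove this lemma from scratch; it simply cites \citet{steinwart2012mercer} (their Lems.~2.2, 2.3, 2.12 and Cor.~3.5), so your attempt is necessarily a different, self-contained route. Your parts (a)--(c) are fine: the Hilbert--Schmidt bound $\sum_i\|\iota e_i\|_{\cL^2(\mu)}^2=\int\k(x,x)\dd\mu<\infty$ (using separability of $\rkhs[\k]$ for continuous $\k$ on $\bR^d$) gives compactness, the adjoint computation is standard once $S_{\k,\mu}$ is checked to be a well-defined Bochner integral, and choosing the representatives $\hat\phi_\ell\defeq\lambda_\ell^{-1}S_{\k,\mu}\phi_\ell\in\rkhs[\k]$ for $\lambda_\ell>0$ is exactly the right move to get continuous, pointwise-defined eigenfunctions.

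The genuine gap is in part (d), at the very first step. Expanding $\iota\,\k(\cdot,x)$ in the eigenbasis of $T_{\k,\mu}$ and identifying coefficients via the reproducing property only yields $\k(x,\cdot)=\sum_\ell\lambda_\ell\hat\phi_\ell(x)\phi_\ell$ \emph{as an identity in} $\cL^2(\mu)$, i.e., equality for $\mu$-almost every $y$; it does not give pointwise convergence on $\supp\mu\times\supp\mu$, and in particular it does not show that the diagonal partial sums $S_N(x)=\sum_{\ell\le N}\lambda_\ell\hat\phi_\ell(x)^2$ converge pointwise to $\k(x,x)$. But that is precisely the hypothesis your Dini step needs (Dini requires the pointwise limit of the increasing continuous $S_N$ to be the continuous function $\k(x,x)$ at \emph{every} point of the compact set), so as written the argument is circular. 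To close it you need the intermediate chain that constitutes the actual content of the Steinwart--Scovel result: (i) show the remainder kernels $\k-\sum_{\ell\le N}\lambda_\ell\hat\phi_\ell\otimes\hat\phi_\ell$ have nonnegative diagonal on $\supp\mu$ (e.g., their integral operators are positive, the remainders are continuous, and every neighborhood of a point of $\supp\mu$ has positive $\mu$-mass), giving $S_N(x)\le\k(x,x)$ there; (ii) for fixed $x$, use this domination and Cauchy--Schwarz to get locally uniform convergence in $y$, so the sum $\tilde\k(x,\cdot)$ is continuous on $\supp\mu$; (iii) combine the $\mu$-a.e.\ identity from the eigen-expansion with continuity of both $\k(x,\cdot)$ and $\tilde\k(x,\cdot)$ and the support property to upgrade to equality for all $y\in\supp\mu$, in particular $y=x$. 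Only after (iii) does Dini apply, and then your tail-wise Cauchy--Schwarz correctly delivers absolute and uniform convergence on compact $A\times A$. Also note the conclusion holds for compact $A\subset\supp\mu$ (as in Steinwart--Scovel), which is what your Dini argument gives; the lemma's phrasing ``all $A\times A$'' should be read that way.
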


\begin{proof}[Proof of \cref{lem:general_mercer}]
  Part~\cref{itm:general_mercer:cpt_embed,itm:general_mercer:int_op} follow respectively from \citet[Lem.~2.3 and 2.2]{steinwart2012mercer}.  Part~\cref{itm:general_mercer:ons} follows from part~\cref{itm:general_mercer:cpt_embed} and \citet[Lem.~2.12]{steinwart2012mercer}. Finally, part~\cref{itm:general_mercer:series_rep} follows from \citet[Cor.~3.5]{steinwart2012mercer}.
\end{proof}

We will use the following lemma regarding the restriction of covering numbers.
\begin{lemma}[Covering number is preserved in restriction]\label{lem:cvrnum_preserve_restrict}
  For a kernel $\k: \bR^d \times \bR^d \to \bR$ and a set $A \subset \bR^d$, we have $\cvrnum{\k}(A, \eps) = \cvrnum{\k|_A}(A, \eps)$, for $\k|_A$, the restricted kernel of $\k$ to $A$ \citep[Sec.~5.4]{paulsen2016introduction}.
\end{lemma}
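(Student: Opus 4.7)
The plan is to exploit the standard structural fact that the restriction map $R: \rkhs[\k] \to \rkhs[\k|_A]$ defined by $Rf = f|_A$ is a coisometry; equivalently, $R$ is a surjective quotient map whose restriction to the unit ball satisfies $R(\ball{\k}) = \ball{\k|_A}$. This is the content of the restriction theorem for RKHS (e.g., Paulsen and Raghupathi, Section 5.4). Together with the obvious identity $\sup_{x\in A}|f(x) - g(x)| = \sup_{x\in A}|(Rf)(x) - (Rg)(x)|$, this is all I will need, since sup-norm covers on $A$ only see values on $A$.

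I would then prove the two inequalities separately. For $\cvrnum{\k|_A}(A,\vareps) \leq \cvrnum{\k}(A,\vareps)$, take any $\vareps$-cover $\cC \subset \ball{\k}$ of $\ball{\k}$ for the $\sup_{x\in A}$ semi-norm and form $R(\cC) \subset \ball{\k|_A}$. Given any $\tilde g \in \ball{\k|_A}$, use surjectivity of $R$ on the unit ball to pick $g \in \ball{\k}$ with $Rg = \tilde g$; the covering property of $\cC$ then transfers verbatim via the sup-norm identity, so $R(\cC)$ is a valid cover of $\ball{\k|_A}$ with $|R(\cC)|\le|\cC|$.

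For the reverse inequality, start with an $\vareps$-cover $\tilde{\cC} \subset \ball{\k|_A}$ of $\ball{\k|_A}$. Using $R(\ball{\k}) = \ball{\k|_A}$ once more, lift each $\tilde h \in \tilde\cC$ to some $h \in \ball{\k}$ with $Rh = \tilde h$, yielding $\cC \subset \ball{\k}$ of the same cardinality. For any $g \in \ball{\k}$, apply the covering property to $Rg \in \ball{\k|_A}$ and translate back via the same sup-norm identity to conclude that $\cC$ covers $\ball{\k}$.

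There is no real obstacle beyond correctly invoking the restriction theorem; in particular, the surjectivity of $R$ onto the unit ball (as opposed to mere contractivity) is exactly what lets the argument go through in both directions. Should a self-contained justification be preferred, the coisometry property can be derived from the fact that $\k|_A$ is the reproducing kernel of the quotient of $\rkhs[\k]$ by its subspace of functions vanishing on $A$, so the quotient map sends closed unit ball to closed unit ball.
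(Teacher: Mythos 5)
Your proposal is correct and follows essentially the same route as the paper: both arguments hinge on Paulsen's restriction theorem (the restriction map is contractive and admits norm-preserving extensions, i.e., it maps $\ball{\k}$ onto $\ball{\k|_A}$) and then transfer covers in both directions using that the $\sup$-norm over $A$ depends only on values on $A$. The paper invokes this via the attained infimum in \citet[Cor.~5.8]{paulsen2016introduction}, which is exactly your coisometry/quotient-map formulation, so there is nothing further to add.
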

\begin{proof}[Proof of \cref{lem:cvrnum_preserve_restrict}]
  It suffices to show that a $(\k, A, \eps)$ cover can be converted to a cover of $(\k|_A, A, \eps)$ of the same cardinality and vice versa.

  Let $\cC \subset \ball{\k|_A}$ be a $(\k|_A, A, \eps)$ cover.
  For any $f \in \cC$, we have $\snorm{f}_{\k|_A} = \inf\left\{\snorm{\tilde f}_{\k}: \tilde f \in \rkhs[\k], \tilde f|_A = f\right\} \le 1$ \citep[Corollary 5.8]{paulsen2016introduction}.
  Moreover, the infimum is attained by some $\tilde f \in \rkhs[\k]$ such that $\snorm{\tilde f}_{\k} = \snorm{f}_{\k|_A} \le 1$ and $\tilde f|_A = f$.
  Now form $\tilde \cC = \{\tilde f: f \in \cC\}$.
  For any $\tilde h \in \ball{\k}$, there exists $f \in \cC$ such that
  \begin{talign}
    \norm{\tilde{h}|_A - f}_{\infty,A}  \le \eps \implies = \norm{\tilde h- \tilde f}_{\infty,A} \le \eps,
  \end{talign}
  so $\tilde\cC$ is a $(\k|_A, A, \eps)$ cover.

  For the other direction, let $\tilde\cC \subset \ball{\k}$ be a $(\k, A, \eps)$ cover.
  Define $\cC = \{\tilde f|_A : \tilde f \in \tilde\cC\} \subset \rkhs[\k|_A]$.
  Since $\snorm{\tilde f|_A}_{\k|_A} \le \snorm{\tilde f}_{\k}$, we have $\cC \subset \ball{\k|_A}$.
  For any $h \in \ball{\k_A]}$, again by \citet[Corollary 5.8]{paulsen2016introduction}, there exists $\tilde h \in \rkhs[\k]$ such that $\snorm{\tilde h}_{\k} = \snorm{h}_{\k|_A} \le 1$, so there exists $\tilde f \in \tilde\cC$ such that 
  \begin{talign}
    \norm{\tilde{h} - \tilde f}_{\infty,A} \le \eps \implies \norm{h - \tilde f|_A}_{\infty,A} \le \eps,
  \end{talign}
  Hence $\cC$ is a $(\k, A, \eps)$ cover.
\end{proof}

The goal for the rest of this section is to transfer the bounds of the covering number in the definition of a \polygrowth or \loggrowth kernel from \cref{assum:kernel_growth} to bounds on entropy numbers \citep[Def.~6.20]{steinwart2008support} that are closely related to eigenvalues of the integral operator \eqref{eqn:int_op_T_k}.
\begin{definition}[Entropy number of a bounded linear map]\label{def:ent_num}
  For a bounded linear operator $S: E \to F$ between normed spaces $E, F$, for $\ell \in \bN$, the \emph{$\ell$-th entropy number} of $S$ is defined as
  \begin{talign}
      e_\ell(S) \defeq \inf\left\{\eps > 0: \exists s_1,\ldots,s_{2^{\ell-1}} \in S(\ball{E}) \text{ such that } 
      S(\ball{E}) \subset \bigcup_{i=1}^{2^{\ell-1}} \ball{F}(s_i, \eps)
      \right\}.
  \end{talign}
\end{definition}

The following lemma shows the relation between covering numbers and entropy numbers.
\begin{lemma}[Relation between covering number and entropy number]\label{lem:ent_num_cvr_num_relation}
  Suppose a kernel $\k$ is jointly continuous and $A \subset \bR^d$ is bounded. Then for any $\eps > 0$,
  \begin{talign}
    e_{\lceil \log_2 \cvrnum{\k}(A, \eps) \rceil+1}(\idmap{\rkhs[\k|_A]}{\ell_\infty(A)}) \le \eps.
  \end{talign}
\end{lemma}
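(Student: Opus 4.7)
The plan is to unwind both definitions and show that the $(\kappa,A,\varepsilon)$-cover guaranteed by $\cvrnum{\k}(A,\varepsilon)$ is, up to reindexing, exactly an $\varepsilon$-cover of the image $S(\ball{\rkhs[\k|_A]})$ inside $\ell_\infty(A)$, with a number of centers that fits within the $2^{\ell-1}$ budget for $\ell = \lceil \log_2 \cvrnum{\k}(A,\varepsilon)\rceil + 1$.

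First, I would observe that, under joint continuity of $\k$ and boundedness of $A$, the quantity $\sup_{x\in A}\k(x,x)$ is finite, so the reproducing property yields $\sup_{x\in A}|f(x)|\le (\sup_{x\in A}\k(x,x))^{1/2}\norm{f}_{\k|_A}$ for all $f\in\rkhs[\k|_A]$. Hence the inclusion $S\defeq(\idmap{\rkhs[\k|_A]}{\ell_\infty(A)})$ is a well-defined bounded linear operator and the image $S(\ball{\rkhs[\k|_A]})$ is literally $\ball{\k|_A}$ regarded as a subset of $\ell_\infty(A)$, with the sup-norm over $A$ agreeing with the $\ell_\infty(A)$ norm.

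Next, I would apply \cref{lem:cvrnum_preserve_restrict} to replace $\cvrnum{\k}(A,\varepsilon)$ by $N\defeq\cvrnum{\k|_A}(A,\varepsilon)$, and then invoke the definition of the covering number to obtain a set $\cC = \{h_1,\dots,h_N\}\subset\ball{\k|_A}$ with the property that for every $g\in\ball{\k|_A}$ there exists $i$ with $\sup_{x\in A}|h_i(x)-g(x)|\le\varepsilon$. Translated to $\ell_\infty(A)$, this says
\begin{equation*}
S(\ball{\rkhs[\k|_A]}) \subset \bigcup_{i=1}^{N} \ball{\ell_\infty(A)}(h_i,\varepsilon),
\qquad h_i \in S(\ball{\rkhs[\k|_A]}).
\end{equation*}
Crucially, the centers $h_i$ already lie in $S(\ball{\rkhs[\k|_A]})$, which is what \cref{def:ent_num} requires.

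Finally, I would choose $\ell \defeq \lceil\log_2 N\rceil+1$, so that $2^{\ell-1} = 2^{\lceil\log_2 N\rceil}\ge N$. Padding the collection $\{h_1,\dots,h_N\}$ by repeating $h_1$ up to $2^{\ell-1}$ centers (which stays inside $S(\ball{\rkhs[\k|_A]})$), the display above witnesses the inequality $e_\ell(S)\le\varepsilon$ via the infimum in \cref{def:ent_num}. The only delicate point is the identification that the centers allowed in the covering number (elements of $\ball{\k|_A}$) are automatically valid entropy centers (elements of $S(\ball{\rkhs[\k|_A]})$), which is handled by the opening paragraph and \cref{lem:cvrnum_preserve_restrict}; no nontrivial estimate beyond this bookkeeping is required.
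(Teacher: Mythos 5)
Your proposal is correct and follows essentially the same route as the paper: both arguments note that boundedness of $\k|_A$ on $A$ makes the inclusion into $\ell_\infty(A)$ well defined, identify the $(\k|_A,A,\eps)$ cover (via \cref{lem:cvrnum_preserve_restrict}) as a set of valid entropy centers in $S(\ball{\rkhs[\k|_A]})$, and pad it to cardinality $2^{\ell-1}$ with $\ell=\lceil\log_2\cvrnum{\k}(A,\eps)\rceil+1$ to conclude $e_\ell\le\eps$ from the definition. The only cosmetic difference is that you verify boundedness directly from continuity of $\k$ on the (compact closure of the) bounded set $A$ rather than citing the textbook lemma the paper uses.
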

\begin{proof}[Proof of \cref{lem:ent_num_cvr_num_relation}]
  First, the assumption implies $\k|_A$ is a bounded kernel, so by \citet[Lemma 4.23]{steinwart2008support}, the inclusion $\idmap{\rkhs[\k|_A]}{\ell_\infty(A)}$ is continuous.
  By the definition of $\cvrnum{\k|_A}(A, \eps)$, by adding arbitrary elements into the cover if necessary, there exists a $(\k|_A, A, \eps)$ cover of $\ball{\k|_A}$ of cardinality $2^{\lceil \log_2(\cvrnum{\k|_A}(A,\eps))\rceil} \ge \cvrnum{\k|_A}(A, \eps)$.
  Hence
  \begin{talign}
    e_{\lceil \log_2 \cvrnum{\k|_A}(A, \eps) \rceil+1}(\idmap{\rkhs[\k|_A]}{\ell_\infty(A)}) \le \eps.
  \end{talign}
  The claim follows since $\cvrnum{\k|_A}(A, \eps)= \cvrnum{\k}(A, \eps)$  by \cref{lem:cvrnum_preserve_restrict}.
\end{proof}

\begin{proposition}[$\ellinf$-entropy number bound for \polygrowth or \loggrowth $\k$]\label{prop:ent_num_bound_poly_or_log_growth}
  Suppose a kernel $\k$ satisfies \cref{assum:kernel_growth}.
  Let $\cvrC > 0$ denote the constant that appears in the \cref{assum:kernel_growth}.
  Define 
  \begin{talign}
    \krcd_\k(r) \defeq \frac{\cvrC}{\log 2}r^\cvrd.\label{eqn:krcd}
  \end{talign}
 Then for any $r > 0$ and $\ell \in \bN$ that satisfies $\ell > \krcd_\k(r+1) + 1$, we have
      \begin{talign}
        e_\ell(\idmap{\rkhs[\k|_{\balleuc(r)}]}{\ellinf(\balleuc(r))}) \le
        \begin{cases}
            \left(\frac{\krcd_\k(r+1)}{\ell-1}\right)^{\frac{1}{\cvrw}}
            &\text{if } \k \stext{is} \polygrowth(\alpha,\beta), \text{ and} \\ 
            \exp(1-\left(\frac{\ell-1}{\krcd_\k(r+1)}\right)^{\frac{1}{\cvrw}})&\text{if } \k \stext{is} \loggrowth(\alpha,\beta). 
        \end{cases}
        \label{eqn:ent_num_bound_poly_growth}
      \end{talign}
\end{proposition}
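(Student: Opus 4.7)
The plan is to reduce the entropy-number bound directly to the covering-number hypothesis of \cref{assum:kernel_growth} via the prior \cref{lem:ent_num_cvr_num_relation}. Specifically, \cref{lem:ent_num_cvr_num_relation} tells me that for any $\eps>0$,
\begin{talign*}
    e_{\lceil \log_2 \cvrnum{\k}(\balleuc(r),\eps)\rceil+1}\bigparenth{\idmap{\rkhs[\k|_{\balleuc(r)}]}{\ellinf(\balleuc(r))}} \le \eps,
\end{talign*}
so to produce a bound of the form $e_\ell \le \eps(\ell)$ it suffices to exhibit $\eps(\ell)$ satisfying $\log_2\cvrnum{\k}(\balleuc(r),\eps(\ell)) \le \ell - 1$ (then the ceiling is at most $\ell-1$ because $\ell$ is an integer).

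Plugging in the \cref{assum:kernel_growth} hypotheses and converting to $\log_2$ via $1/\log 2$ gives
\begin{talign*}
    \log_2 \cvrnum{\k}(\balleuc(r),\eps) \le
    \begin{cases}
        \krcd_\k(r+1)\,(1/\eps)^{\cvrw} & \polygrowth(\cvrw,\cvrd),\\
        \krcd_\k(r+1)\,\log(e/\eps)^{\cvrw} & \loggrowth(\cvrw,\cvrd),
    \end{cases}
\end{talign*}
using the definition $\krcd_\k(r)=\tfrac{\cvrC}{\log 2}r^{\cvrd}$. In the \polygrowth case, I would set this upper bound equal to $\ell-1$ and solve for $\eps$, obtaining $\eps=(\krcd_\k(r+1)/(\ell-1))^{1/\cvrw}$; in the \loggrowth case, setting the upper bound equal to $\ell-1$ and solving yields $\eps=\exp(1-((\ell-1)/\krcd_\k(r+1))^{1/\cvrw})$. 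Each of these matches the target in \eqref{eqn:ent_num_bound_poly_growth} exactly.

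The only subtlety is that \cref{assum:kernel_growth} restricts the covering-number bound to $\eps\in(0,1)$, so I must verify that each candidate $\eps$ lies in $(0,1)$. For \polygrowth this requires $\krcd_\k(r+1)/(\ell-1)<1$, and for \loggrowth it requires $((\ell-1)/\krcd_\k(r+1))^{1/\cvrw}>1$; both reduce to $\ell-1>\krcd_\k(r+1)$, which is precisely the hypothesis $\ell>\krcd_\k(r+1)+1$. Once this is checked, \cref{lem:ent_num_cvr_num_relation} delivers $e_\ell \le \eps$ in each case and the proof is complete. I do not anticipate a significant obstacle: beyond bookkeeping on the ceiling and the admissibility range of $\eps$, the argument is a one-line inversion of the covering-number growth rate.
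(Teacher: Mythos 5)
Your proposal is correct and follows essentially the same route as the paper: invoke \cref{lem:ent_num_cvr_num_relation} (together with the fact that $e_\ell$ is non-increasing in $\ell$), invert the covering-number bound of \cref{assum:kernel_growth} to pick $\eps(\ell)$ in each case, and use the hypothesis $\ell>\krcd_\k(r+1)+1$ exactly to guarantee the chosen $\eps$ lies in $(0,1)$. The only implicit step is the monotonicity of entropy numbers when passing from the index $\lceil\log_2\cvrnum{\k}\rceil+1$ to $\ell$, which is immediate from \cref{def:ent_num}.
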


\begin{proof}[Proof of \cref{prop:ent_num_bound_poly_or_log_growth}]
  By \cref{lem:ent_num_cvr_num_relation} and the fact that $e_\ell$ is monotonically decreasing in $\ell$ by definition, if $\ell \ge  \log_2 \cvrnum{\k}(\balleuc(r), \eps)  + 1$ for some $\eps > 0$, then
  \begin{talign}
    e_\ell(\idmap{\rkhs[\k|_{\balleuc(r)}]}{\ellinf(\balleuc(r))}) \le e_{\lceil \log_2 \cvrnum{\k}(\balleuc(r), \eps) \rceil+1}(\idmap{\rkhs[\k|_{\balleuc(r)}]}{\ellinf(\balleuc(r))}) \le \eps. \label{eqn:loc:ent_num_self_bound}
  \end{talign}
  For the \polygrowth case, by its definition, the condition $\ell \ge  \log_2 \cvrnum{\k}(\balleuc(r), \eps)  + 1$ is met if $\eps \in (0, 1)$ and
  \begin{talign}
    \ell \ge \frac{\cvrC}{\log 2}(1/\eps)^{\cvrw}(r+1)^\cvrd  + 1
    \Longleftrightarrow \eps \le \left(\frac{\krcd_\k(r+1)}{\ell-1}\right)^{\frac{1}{\cvrw}}.
  \end{talign}
  Hence \eqref{eqn:loc:ent_num_self_bound} holds with $\eps = \left(\frac{\krcd_\k(r+1)}{\ell-1}\right)^{\frac{1}{\cvrw}}$, as long as $\eps \in (0, 1)$, so $\ell$ needs to satisfy
  \begin{talign}
    1  &> \left(\frac{\krcd_\k(r+1)}{\ell-1}\right)^{\frac{1}{\cvrw}} 
    \Longleftrightarrow \ell > \krcd_\k(r+1) + 1.
  \end{talign}
  Similarly, for the \loggrowth case, the condition $\ell \ge  \log_2 \cvrnum{\k}(\balleuc(r), \eps)  + 1$ is met if $\eps \in (0, 1)$ and
  \begin{talign}
    \ell \ge \frac{\cvrC}{\log 2}(\log(1/\eps)+1)^{\cvrw}(r+1)^\cvrd  + 1
    \Longleftrightarrow \eps \le \exp(1-\left(\frac{\ell-1}{\krcd_\k(r+1)}\right)^{\frac{1}{\cvrw}}).
  \end{talign}
  Hence \eqref{eqn:loc:ent_num_self_bound} holds with $\eps = \exp(1-\left(\frac{\ell-1}{\krcd_\k(r+1)}\right)^{\frac{1}{\cvrw}})$, as long as $\eps \in (0, 1)$, so $\ell$ needs to satisfy
  \begin{talign}
    1  &> \exp(1-\left(\frac{\ell-1}{\krcd_\k(r+1)}\right)^{\frac{1}{\cvrw}}) 
    \Longleftrightarrow \ell > \krcd_\k(r+1) + 1.
  \end{talign}
\end{proof}

Next, we show that we can transfer bounds on entropy numbers to obtain bounds for the eigenvalues of kernel matrices, which will become handy when we develop sub-quadratic-time algorithms in \cref{sec:sub_quad_time}.
We rely on the following lemma, which summarizes the relevant facts from \citet[Appendix A]{steinwart2008support}.
\begin{lemma}[Eigenvalue is bounded by entropy number]\label{lem:eigval_bounded_by_ent_num}
  Let $\k$ be a jointly continuous kernel and $\bP$ be a distribution such that $\bE_{x\sim\bP}[\k(x,x)] < \infty$, and recall that $\lambda_\ell(\cdot)$ denotes the $\ell$-th largest eigenvalue of a linear operator.
  Then, for all $\ell \in \bN$, %
  \begin{talign}
    \lambda_\ell(T_{\k,\bP}) \le 4e_\ell^2(\idmap{\rkhs[\k]}{\cL^2(\bP)}).
  \end{talign}
\end{lemma}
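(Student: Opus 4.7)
The proof reduces the claim to a standard comparison between singular values and entropy numbers. Let $I \defeq \idmap{\rkhs[\k]}{\cL^2(\bP)}$ denote the inclusion operator. By \cref{lem:general_mercer}\ref{itm:general_mercer:int_op}, the assumption $\bE_{x\sim\bP}[\k(x,x)]<\infty$ together with continuity of $\k$ ensures that $I$ is a well-defined compact operator whose Hilbert-space adjoint is $S_{\k,\bP}$, and that $T_{\k,\bP} = S_{\k,\bP}^* S_{\k,\bP} = I^* I$. Since $T_{\k,\bP}$ is positive, compact, and self-adjoint on $\cL^2(\bP)$, the spectral identification for adjoint compositions gives $\lambda_\ell(T_{\k,\bP}) = \sigma_\ell(I)^2$ for every $\ell \in \bN$, where $\sigma_\ell(I)$ denotes the $\ell$-th largest singular value (equivalently, the $\ell$-th approximation number) of $I$.

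\textbf{Step 2: Carl-type entropy/singular-value comparison.} It remains to prove the Hilbert-space inequality $\sigma_\ell(I) \le 2\, e_\ell(I)$. This bound is precisely the approximation-vs-entropy-number comparison catalogued in \citet[App.~A]{steinwart2008support}. Sketching the idea: if $I(\ball{\rkhs[\k]})$ admits a $2^{\ell-1}$-point $\eps$-net in $\cL^2(\bP)$, then orthogonal projection onto the span of the centers produces an operator of rank at most $\ell-1$ that approximates $I$ to within $2\eps$ in operator norm; the min-max (Courant--Fischer) characterization of $\sigma_\ell$ then yields $\sigma_\ell(I) \le 2\eps$, and taking the infimum over admissible $\eps$ gives the desired inequality.

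\textbf{Conclusion.} Combining the two steps,
\begin{talign*}
\lambda_\ell(T_{\k,\bP}) \;=\; \sigma_\ell(I)^2 \;\le\; \bigl(2\, e_\ell(I)\bigr)^2 \;=\; 4\, e_\ell^2(\idmap{\rkhs[\k]}{\cL^2(\bP)}),
\end{talign*}
as stated. Since the identity $T_{\k,\bP} = I^* I$, compactness of $I$, and the adjoint relationship $S_{\k,\bP} = I^*$ are all immediate from \cref{lem:general_mercer}, the only nontrivial ingredient is the Carl-type bound of Step 2. The principal ``obstacle'' is therefore bookkeeping: namely, pinning down the precise statement in \citet[App.~A]{steinwart2008support} that delivers the constant $2$ in Step 2 (so that squaring reproduces the constant $4$ in the lemma), as opposed to a larger universal Carl--Stephani constant that would weaken the final bound.
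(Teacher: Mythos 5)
Your proof takes essentially the same route as the paper: identify $\lambda_\ell(T_{\k,\bP})$ with the squared $\ell$-th singular/approximation number of the inclusion $\idmap{\rkhs[\k]}{\cL^2(\bP)}$ (compactness and the adjoint relations coming from the Mercer lemma), then invoke the Hilbert-space inequality $a_\ell \le 2e_\ell$ from \citet[App.~A]{steinwart2008support} and square; the paper's proof does exactly this and, like you, cites the book for the entropy-number inequality rather than reproving it. Two caveats. First, your ``sketch'' of Step 2 is not a correct proof of that inequality: a $2^{\ell-1}$-point $\eps$-net spans a subspace of dimension up to $2^{\ell-1}$, not $\ell-1$, so projecting onto the span of the centers only controls $a_{2^{\ell-1}+1}$, not $a_\ell$. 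The standard argument is instead a volume comparison: take an $\ell$-dimensional subspace on which $\norm{Ix}\ge \sigma_\ell(I)\norm{x}$, note its image contains an $\ell$-dimensional ball of radius $\sigma_\ell(I)$, and compare volumes with the $2^{\ell-1}$ covering balls to get $\sigma_\ell(I)\le 2^{(\ell-1)/\ell}e_\ell(I)\le 2e_\ell(I)$. Since you (and the paper) ultimately rest on the citation, this does not break your proof, but the sketch as written would fail if made rigorous. Second, a notational slip: with $I=S_{\k,\bP}^*$ the inclusion, $T_{\k,\bP}=S_{\k,\bP}^*S_{\k,\bP}=II^*$ acts on $\cL^2(\bP)$, not $I^*I$ (which acts on $\rkhs[\k]$); this is harmless because the nonzero spectra coincide and $\sigma_\ell(I)=\sigma_\ell(I^*)$, so $\lambda_\ell(T_{\k,\bP})=\sigma_\ell(I)^2$ still holds.
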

\begin{proof}[Proof of \cref{lem:eigval_bounded_by_ent_num}]
  For any bounded linear operator $S: \cH_1 \to \cH_2$ between Hilbert spaces $\cH_1$ and $\cH_2$, we have
  $a_\ell(S) \le 2e_\ell(S)$, where $a_\ell$ is the $\ell$-th approximation number defined in \citet[(A.29)]{steinwart2008support}.
  Recall the operator $S^*_{\k,\bP} = \idmap{\rkhs[\k]}{\cL^2(\bP)}$ from \eqref{eqn:int_op_S_k}, which is compact (in particular bounded) by \cref{lem:general_mercer}\ref{itm:general_mercer:cpt_embed}.
  Thus
  \begin{talign}
    s_\ell(S^*_{\k,\bP}) = a_\ell(S^*_{\k,\bP}) \le 2e_\ell(S^*_{\k,\bP}),
  \end{talign}
  where the first equality follows from the paragraph below \citet[(A.29)]{steinwart2008support}) and $s_\ell$ is $\ell$-th singular number of an operator \citep[(A.25)]{steinwart2008support}.
  Then using the identities mentioned under \citet[(A.25)]{steinwart2008support} and \citet[(A.27)]{steinwart2008support} and that all operators involved are compact by \cref{lem:general_mercer}\ref{itm:general_mercer:int_op}, we have
  \begin{talign}
    \lambda_\ell(T_{\k,\bP}) = \lambda_\ell(S^*_{\k,\bP}S_{\k,\bP}) = s_\ell(S^*_{\k,\bP}S_{\k,\bP}) = s^2_\ell(S^*_{\k,\bP}) \le 4e^2_\ell(S^*_{\k,\bP}).
  \end{talign}
\end{proof}

The previous lemma allows us to bound eigenvalues of kernel matrices by $\ellinf$-entropy numbers.
\begin{proposition}[Eigenvalue of kernel matrix is bounded by $\ellinf$-entropy number]
  \label{prop:eigval_K_bounded_by_ent_num}
  Let $\k$ be a jointly continuous kernel.
  Define $K \defeq \k(\cS_n,\cS_n)$ for the sequence of points $\cS_n = (x_1,\ldots,x_n)\subset \bR^d$.
  For any $w \in \simplex$, recall the notation $\bS_n^w = \sum_{i\in[n]} w_i \delta_{x_i}$, $K^w = \diag(\sqrt{w}) K \diag(\sqrt{w})$, and $R_n = 1 + \sup_{i\in[n]}\twonorm{x_i}$.
  Then for all $\ell\in\bN$,
  \begin{talign}
    \lambda_\ell(K^w) \seq{(i)} \lambda_\ell(T_{\k, \bS_n^w}) \sless{(ii)} 4e^2_\ell(\idmap{\rkhs[\k]_{|_{\balleuc(R_{n}-1)}}}{\ellinf(\balleuc(R_{n}-1))}).
    \label{eq:eigenvalue_bound}
  \end{talign}
\end{proposition}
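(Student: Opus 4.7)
The plan is to prove the equality (i) and the inequality (ii) separately, combining a short finite-dimensional change of variables with the integral-operator machinery established in \cref{lem:general_mercer,lem:eigval_bounded_by_ent_num}.

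For step (i), the key observation is that $\bS_n^w$ is finitely supported, so $\cL^2(\bS_n^w)$ is finite-dimensional. Parameterizing $f \in \cL^2(\bS_n^w)$ by $u \in \bR^n$ with $u_i = \sqrt{w_i}\, f(x_i)$ gives an isometry onto a subspace of $\bR^n$ carrying the standard inner product. Under this parameterization,
\begin{talign}
    \sqrt{w_j}\, (T_{\k,\bS_n^w} f)(x_j)
    = \sqrt{w_j} \sum_{i=1}^n w_i \k(x_j, x_i) f(x_i)
    = (K^w u)_j,
\end{talign}
so $T_{\k,\bS_n^w}$ is represented by the symmetric matrix $K^w$ in this basis, and the two spectra coincide (padding with zero eigenvalues when $|\supp(w)| < n$).

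For step (ii), applying \cref{lem:eigval_bounded_by_ent_num} with $\bP = \bS_n^w$ (so that $\int \k(x,x)\,\mathrm{d}\bS_n^w = \sum_i w_i \k(x_i,x_i) < \infty$ trivially) yields $\lambda_\ell(T_{\k,\bS_n^w}) \le 4\, e_\ell^2(\idmap{\rkhs[\k]}{\cL^2(\bS_n^w)})$. It then suffices to dominate this $\cL^2(\bS_n^w)$-entropy number by the $\ellinf(\balleuc(R_n-1))$-entropy number of the restricted inclusion. Since $w \in \simplex$ and every $x_i$ lies in $\balleuc(R_n-1)$, any function $g$ on $\balleuc(R_n-1)$ satisfies $\|g\|_{\cL^2(\bS_n^w)} \le \|g\|_{\infty,\balleuc(R_n-1)}$. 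Given cover elements $s_1,\ldots,s_{2^{\ell-1}} \in \ball{\k|_{\balleuc(R_n-1)}}$ that cover $\ball{\k|_{\balleuc(R_n-1)}}$ in $\ellinf(\balleuc(R_n-1))$ to radius $\eps$, I invoke the extension identity $\|f|_A\|_{\k|_A} = \inf\{\|\tilde f\|_{\k} : \tilde f|_A = f\}$ used in the proof of \cref{lem:cvrnum_preserve_restrict} to lift each $s_i$ to some $\tilde s_i \in \ball{\k}$ with $\tilde s_i|_{\balleuc(R_n-1)} = s_i$. For any $f \in \ball{\k}$, the same identity places $f|_{\balleuc(R_n-1)}$ in $\ball{\k|_{\balleuc(R_n-1)}}$, so it lies within $\eps$ of some $s_i$ in $\ellinf(\balleuc(R_n-1))$; the norm comparison then upgrades this to $\|f - \tilde s_i\|_{\cL^2(\bS_n^w)} \le \eps$, completing the entropy-number bound.

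The only subtle bookkeeping is the requirement embedded in \cref{def:ent_num} that cover elements lie in the image of the unit ball rather than merely in the codomain; this is precisely why Paulsen's extension lemma must be applied in both directions (once to promote $\ellinf$-cover elements into $\ball{\k}$, and once to certify that restrictions of $\ball{\k}$ functions remain in $\ball{\k|_{\balleuc(R_n-1)}}$). Otherwise the argument is purely mechanical once the change of variables in step (i) is in hand.
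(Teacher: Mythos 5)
Your proof is correct and takes essentially the same route as the paper's: for (i) the same change of variables/conjugation by $\diag(\sqrt{w})$ identifying $T_{\k,\bS_n^w}$ with $K^w$ (the paper instead assumes WLOG $w_i>0$; your zero-padding remark covers the general case), and for (ii) the eigenvalue--entropy bound of \cref{lem:eigval_bounded_by_ent_num} combined with $\snorm{\cdot}_{\cL^2(\bS_n^w)}\le\snorm{\cdot}_{\infty,\balleuc(R_n-1)}$. The only difference is bookkeeping: the paper applies \cref{lem:eigval_bounded_by_ent_num} directly with the restricted kernel $\k|_{\balleuc(R_n-1)}$, so the two inclusions share the same domain and no lifting is needed, whereas you work with the full RKHS and transfer via the norm-preserving extension from \citet[Cor.~5.8]{paulsen2016introduction} --- a harmless variation.
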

\begin{proof}[Proof of \cref{prop:eigval_K_bounded_by_ent_num}]
  Without loss of generality, we assume $w_i > 0$ for all $i \in [n]$, since otherwise, we can consider a smaller set of points by removing the ones with zero weights.

  \para{Proof of equality~(i) from display~\cref{eq:eigenvalue_bound}}
  Note that $\cL^2(\bS_n^w)$ is isometric to $\bR^n$.
  Let $K \defeq \k(\cS_n, \cS_n)$ denote the kernel matrix.
  The action of $T_{\k, \bS_n^w}$ is given by, for $i \in [n]$,
  \begin{talign}
    T_{\k, \bS_n^w} f(x_i) = \sum_{j\in[n]} w_j \k(x_i, x_j)  f(x_j),
  \end{talign}
  so in matrix form, $T_{\k, \bS_n^w} f = K \diag(w) f$, and hence $T_{\k, \bS^w_n} = K \diag(w)$.
  If $\lambda_\ell$ is an eigenvalue of $K \diag(w)$ with eigenvector $v_\ell$, then
  \begin{talign}
    K\diag(w) v_\ell = \lambda_\ell v_\ell &\Longleftrightarrow \diag(\sqrt{w}) K \diag(w) v_\ell = \lambda_\ell \diag(\sqrt{w})v_\ell \\
                                           &\Longleftrightarrow \diag(\sqrt{w}) K \diag(\sqrt{w}) (\diag(\sqrt{w}) v_\ell) = \lambda_\ell \diag(\sqrt{w})v_\ell,
  \end{talign}
  where we used $w_i > 0$ for all $i\in[n]$.
  Hence the eigenspectrum of $T_{\k,\bS_n^w}$ agrees with that of $\diag(\sqrt{w}) K \diag(\sqrt{w})$.

\para{Proof of bound~(ii) from display~\cref{eq:eigenvalue_bound}}
  By \cref{lem:eigval_bounded_by_ent_num}, we have $\lambda_\ell(T_{\k,\bS_n^w}) \le 4e^2_\ell(\idmap{\rkhs[\k|_{\balleuc(R_n-1)}]}{\cL^2(\bS_n^w)})$.
  Finally, using \cref{def:ent_num}, we have $e_\ell(\idmap{\rkhs[\k|_{\balleuc(R_n-1)}]}{\cL^2(\bS_n^w)}) \le e_\ell(\idmap{\rkhs[\k|_{\balleuc(R_n-1)}]}{\ellinf(\balleuc(R_{n}-1))})$ because $\bS_n^w$ is supported in $\balleuc(R_{n}-1)$ and the fact that $\snorm{\cdot}_{\cL^2(\bP)} \le \snorm{\cdot}_\infty$ for any $\bP$.
\end{proof}

Combining the tools developed so far, we have the following corollary for bounding the eigenvalues of \polygrowth and \loggrowth kernel matrices.
\begin{corollary}[Eigenvalue bound for \polygrowth or \loggrowth kernel matrix]
  \label{cor:Kp_eigval_bound}
  Suppose a kernel $\k$ satisfies \cref{assum:kernel_growth}.
  Let $\cS_n=(x_1,\ldots,x_n)\subset\bR^d$ be a sequence of points.
  For any $w\in\simplex$, using the notation $\krcd_\k$ from \eqref{eqn:krcd}, for any $\ell > \krcd_\k(R_{n}) + 1$, we have
  \begin{talign}
       \lambda_\ell(K^w) \le 
           \begin{cases}
            4\left(\frac{\krcd_\k(R_{n})}{\ell-1}\right)^{\frac{2}{\cvrw}}
            &\polygrowth(\alpha,\beta) \qtext{and} \\ 
            4\exp(2-2\left(\frac{\ell-1}{\krcd_\k(R_{n})}\right)^{\frac{1}{\cvrw}})
            &\loggrowth(\alpha,\beta). 
       \end{cases}
  \end{talign}
\end{corollary}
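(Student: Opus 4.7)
The corollary follows by chaining two results established earlier in this subsection, so the plan is essentially a composition. First I would invoke \cref{prop:eigval_K_bounded_by_ent_num}, which already gives the bound
\begin{talign}
\lambda_\ell(K^w) \le 4\, e_\ell^2\bigparenth{\idmap{\rkhs[\k|_{\balleuc(R_n-1)}]}{\ellinf(\balleuc(R_n-1))}}
\end{talign}
for any jointly continuous kernel $\k$ and any weight vector $w\in\simplex$. This is exactly the reduction from a spectral quantity on the discrete weighted kernel matrix to an $\ellinf$-entropy number on a Euclidean ball containing the support of $\cS_n$.

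Next I would apply \cref{prop:ent_num_bound_poly_or_log_growth} with the choice $r = R_n - 1$, so that $r+1 = R_n$ and $\krcd_\k(r+1) = \krcd_\k(R_n)$. The hypothesis of that proposition requires $\ell > \krcd_\k(r+1) + 1$, which matches exactly the hypothesis $\ell > \krcd_\k(R_n) + 1$ of the corollary. The proposition then yields
\begin{talign}
e_\ell \le \left(\tfrac{\krcd_\k(R_n)}{\ell-1}\right)^{1/\cvrw}
\quad \text{or} \quad
e_\ell \le \exp\bigparenth{1 - (\tfrac{\ell-1}{\krcd_\k(R_n)})^{1/\cvrw}}
\end{talign}
in the \polygrowth and \loggrowth cases, respectively.

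Finally I would square these entropy-number bounds, multiply by $4$, and substitute into the bound from step one; the factor of $2$ in the exponent for the \loggrowth case and the exponent $2/\cvrw$ in the \polygrowth case both come from this squaring. Since both invoked results are already proved above, the only check needed is that the parameter translation $r\mapsto R_n-1$ is consistent with the notation $R_n = 1 + \sup_i \twonorm{x_i}$ introduced in \cref{prop:eigval_K_bounded_by_ent_num}, which ensures the support of $\cS_n$ lies in $\balleuc(R_n-1)$. There is no genuine obstacle here; the entire content of the corollary has already been absorbed into the two supporting propositions, and this last step is purely bookkeeping.
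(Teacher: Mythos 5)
Your proposal is correct and matches the paper's own proof, which likewise just composes \cref{prop:eigval_K_bounded_by_ent_num} with \cref{prop:ent_num_bound_poly_or_log_growth}; the only difference is that you spell out the bookkeeping (the choice $r=R_n-1$, the hypothesis check $\ell>\krcd_\k(R_n)+1$, and the squaring that produces the factor $4$ and the exponents $2/\cvrw$ and $2$), which the paper leaves implicit.
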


\begin{proof}[Proof of \cref{cor:Kp_eigval_bound}]
  The claim follows by applying \cref{prop:eigval_K_bounded_by_ent_num} and \cref{prop:ent_num_bound_poly_or_log_growth}.
\end{proof}

\subsection{Covering numbers and eigenvalues of Stein kernels}
\label{subsec:stein_spectral}
The goal of this section is to show that a Stein kernel $\ks$ satisfies \cref{assum:kernel_growth} provided that the base kernel is sufficiently smooth and to derive the parameters $\cvrw$, $\cvrd$ for \polygrowth and \loggrowth cases. Such results when put together with results from \cref{subsec:T_k_spectral} imply spectral decay rates for Stein kernel matrices.

For a Stein kernel $\ksm$ with preconditioning matrix $M$, we define
\begin{talign}
    \scorebound(r)\defeq \max\left(1, \sup_{\twonorm{x} \le r}\twonorm{M^{1/2} \nabla \log p(x)}\right).\label{eqn:scorebound}
\end{talign}
We start by noting a useful alternative expression for a Stein kernel where we only need access to the density via the score $\nabla\log p$.
\begin{proposition}[Alternative expression for Stein kernel]
\label{prop:ks_alt_form}
The Stein kernel $\ksm$ has the following alternative form:
\begin{talign}
  \begin{split}
    \ksm(x,y) = \langle \nabla \log p(x), M\nabla\log p(y)\rangle \k(x,y) + 
    \langle\nabla\log p(x), M\nabla_y \k(x,y)\rangle +\\
    \langle\nabla\log p(y), M\nabla_x \k(x,y)\rangle + \tr(M \nabla_x \nabla_y\k(x,y)),\label{eqn:ks_alt_form}
  \end{split}
\end{talign}
where $\nabla_x \nabla_y\k(x,y)$ denotes the $d\times d$ matrix $(\partial_{x_i}\partial_{y_j}\k(x,y))_{i,j\in[d]}$.
\end{proposition}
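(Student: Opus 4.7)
The proof is essentially a direct calculation: unwind the two divergences in the definition, apply the product rule, and express derivatives of $p$ through the score $\nabla \log p = \nabla p / p$. There is no analytic subtlety since $p$ is positive and differentiable by assumption and $\k$ is differentiable; the only care needed is bookkeeping the index conventions for the divergence of a matrix-valued function given in the notation section.

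The plan is as follows. First, I would compute the inner divergence $v(x,y) \defeq \nabla_y \cdot (p(x)M\k(x,y)p(y))$. Writing the matrix entry as $(p(x)p(y)\k(x,y))\,M_{ij}$ and using the convention $(\nabla_y\cdot A)_j = \sum_i \partial_{y_i} A_{ij}$, the product rule in $y$ and symmetry of $M$ give
\begin{talign}
v(x,y) \;=\; p(x)p(y)\, M\bigl[\k(x,y)\,\nabla\log p(y) + \nabla_y \k(x,y)\bigr].
\end{talign}

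Second, I would take $\nabla_x\cdot v(x,y)$ and again apply the product rule. The prefactor $p(x)$ contributes a score $\nabla\log p(x)$ when differentiated, while the bracketed vector $M[\k\,\nabla\log p(y)+\nabla_y\k]$ contributes via $\partial_{x_j}$ acting on $\k$ only (since $\nabla\log p(y)$ does not depend on $x$). Dividing by $p(x)p(y)$ produces four terms, one for each way of distributing the $x$- and $y$-derivatives between $p$ and $\k$. These give, respectively, $\langle\nabla\log p(x), M\nabla\log p(y)\rangle\,\k(x,y)$, $\langle\nabla\log p(x), M\nabla_y\k(x,y)\rangle$, $\langle\nabla\log p(y), M\nabla_x\k(x,y)\rangle$ (using $M^\top = M$ to swap the inner product order), and finally $\sum_{j,k} M_{jk}\partial_{x_j}\partial_{y_k}\k(x,y) = \tr(M\nabla_x\nabla_y\k(x,y))$. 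Summing yields the claimed identity \eqref{eqn:ks_alt_form}.

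The main (and only) obstacle is purely notational: making sure that the divergence-of-matrix convention $(\nabla\cdot A)_j = \sum_i \partial_i A_{ij}$ is applied consistently and that the symmetry of $M$ is invoked at the right moment so that the third term is written as $\langle\nabla\log p(y), M\nabla_x\k(x,y)\rangle$ rather than with $M^\top$. Once the indices are written out, everything reduces to the product rule, and no assumption beyond differentiability of $p$ and $\k$ and positivity of $p$ is used.
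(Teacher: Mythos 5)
Your proposal is correct and matches the paper's proof: both are a direct product-rule expansion of the two divergences, with the four resulting terms identified with the four terms of \eqref{eqn:ks_alt_form} after dividing by $p(x)p(y)$ and writing $\nabla p/p = \nabla\log p$ (the only cosmetic difference is that you expand the $y$-divergence first while the paper expands the $x$-divergence first, which is immaterial since the mixed partials commute).
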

\begin{proof}[Proof of \cref{prop:ks_alt_form}]
  We compute
  \begin{talign}
    (\nabla_x \cdot(p(x)M\k(x,y)p(y)))_j &= \sum_{i\in[d]} M_{ij}\left( \partial_{x_i}p(x) \k(x,y)p(y) + p(x)\partial_{x_i}\k(x,y)p(y) \right). \\
    \begin{split}
    \nabla_y \cdot \nabla_x \cdot(p(x)M\k(x,y)p(y)) &= \sum_{i,j\in[d]} M_{ij}\left( \partial_{x_i}p(x)\partial_{y_j}p(y)\k(x,y) + \partial_{x_i}p(x)\partial_{y_j}\k(x,y)p(y) \right) \\
                                                    &+\sum_{i,j\in[d]} M_{ij}\left(p(x)\partial_{y_j}p(y)\partial_{x_i}\k(x,y)  + p(x)\partial_{x_i}\partial_{y_j}\k(x,y)p(y) \right).
  \end{split}
  \end{talign}
  The four terms in the last equation correspond to the four terms in \eqref{eqn:ks_alt_form}.
\end{proof}

In what follows, we will make use of a matrix-valued kernel $\K:\bR^d\times\bR^d\to\bR^{d\times d}$ which generates an RKHS $\rkhs[\K]$ of vector-valued functions.
See \citet{carmeli2006vector} for an introduction to  vector-valued RKHS theory.

Our next goal is to build a Hilbert-space isometry between the direct sum Hilbert space $\rkhs[\k]^{\oplus d}$ and $\rkhs[\ksm]$ to represent functions in $\rkhs[\ksm]$ using functions from $\rkhs[\k]$.
\begin{lemma}[Preconditioned matrix-valued RKHS from a scalar kernel]\label{lem:rkhs_k_K_isometry}
  Let $\k:\bR^d \times \bR^d \to \bR$ be kernel and $\rkhs[\k]$ be the corresponding RKHS.
  Let $M\in\bR^{d\times d}$ be an SPSD matrix.
  Consider the map $\iota: \rkhs[\k]^{\oplus d} \to \funcspace(\bR^d, \bR^d)$ defined by $(f_1,\ldots,f_d) \mapsto [x \mapsto M^{1/2} (f_1(x),\ldots,f_d(x))]$, where $\rkhs[\k]^{\oplus d}$ is the direct-sum Hilbert space of $d$ copies of $\rkhs[\k]$
  Then $\iota$ is a Hilbert-space isometry onto a vector-valued RKHS $\rkhs[\K]$ with matrix-valued reproducing kernel  given by $\K(x,y) = \k(x,y) M$.
\end{lemma}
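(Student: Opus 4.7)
The plan is to verify the isometry directly on a spanning set of $\rkhs[\k]^{\oplus d}$ and then extend by density, after first confirming that $\K(x,y)\defeq\k(x,y)M$ is a valid matrix-valued reproducing kernel.

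First I would verify that $\K$ is matrix-valued positive semi-definite: for any $\{(x_i,v_i)\}_{i=1}^N\subset\bR^d\times\bR^d$, factoring $M=M^{1/2}M^{1/2}$ gives
\begin{talign}
    \sum_{i,j=1}^N v_i\tp\K(x_i,x_j)v_j=\sum_{k=1}^d\sum_{i,j=1}^N\k(x_i,x_j)(M^{1/2}v_i)_k(M^{1/2}v_j)_k\geq 0
\end{talign}
by the PSD property of $\k$ applied coordinate-wise. Thus $\rkhs[\K]$ is well-defined, with a dense spanning set $\{\K(\cdot,y)v:y,v\in\bR^d\}$ and reproducing inner product $\langle\K(\cdot,y)v,\K(\cdot,y')v'\rangle_\K=\k(y,y')\,v\tp M v'$.

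Next, for each $(y,v)\in\bR^d\times\bR^d$, I would introduce the representer $G_y^v\defeq(\k(\cdot,y)(M^{1/2}v)_i)_{i=1}^d\in\rkhs[\k]^{\oplus d}$. A direct computation gives $\iota(G_y^v)(x)=\k(x,y)M^{1/2}M^{1/2}v=\K(x,y)v$, so $\iota$ sends this generating family onto the generating family of $\rkhs[\K]$. The norms match on these generators:
\begin{talign}
    \snorm{G_y^v}_{\oplus}^2=\snorm{M^{1/2}v}_2^2\,\snorm{\k(\cdot,y)}_\k^2=\k(y,y)\,v\tp M v=\snorm{\K(\cdot,y)v}_\K^2,
\end{talign}
and the analogous polarization identity shows $\iota$ preserves inner products on $\mathrm{span}\{G_y^v\}$. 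By linearity and continuity, $\iota$ extends to a Hilbert-space isometry from the closure of $\mathrm{span}\{G_y^v\}$ onto $\rkhs[\K]$.

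The final step is to identify this closure with all of $\rkhs[\k]^{\oplus d}$. When $M\succ 0$, $M^{1/2}$ is invertible, so as $v$ ranges over $\bR^d$ so does $M^{1/2}v$, and the $G_y^v$ contain the standard generators $\k(\cdot,y)\otimes e_i$ of $\rkhs[\k]^{\oplus d}$, yielding density. This is the regime of all Stein-kernel preconditioners employed in the paper. The only subtlety, and the anticipated obstacle, is the degenerate case $M\succeq 0$ with $\ker M^{1/2}\neq\{0\}$: then $\iota$ has nontrivial kernel $\{F:F(x)\in\ker M^{1/2}\text{ for all }x\}$, and the statement must be read as an isometric isomorphism from the quotient $\rkhs[\k]^{\oplus d}/\ker\iota$ onto $\rkhs[\K]$. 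Either the assumption $M\succ 0$ or the quotient interpretation closes the argument; everything else reduces to the reproducing property of $\k$ and the identity $M^{1/2}M^{1/2}=M$.
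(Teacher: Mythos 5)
Your proof is correct, but it takes a different and more self-contained route than the paper. The paper's proof works through the feature-operator characterization of vector-valued RKHSs: it defines $\gamma(x)\in\boundedop(\bR^d,\rkhs[\k]^{\oplus d})$ by $\gamma(x)(\alpha)=\k(x,\cdot)M^{1/2}\alpha$, computes the adjoint $\gamma(x)^*(f_1,\ldots,f_d)=M^{1/2}(f_1(x),\ldots,f_d(x))$ so that $\iota(f)(x)=\gamma(x)^*f$, and then invokes \citet[Prop.~2.4]{carmeli2006vector} to conclude in one stroke that $\iota$ is a \emph{partial} isometry onto the RKHS with kernel $\gamma(x)^*\gamma(y)=\k(x,y)M$. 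You instead build $\rkhs[\K]$ by hand (verifying matrix-valued positive semidefiniteness via $M=M^{1/2}M^{1/2}$), match $\iota$ against the generating families $G_y^v\mapsto\K(\cdot,y)v$, check norms on generators, and extend by density; this is more elementary and makes the mechanism transparent, at the cost of re-deriving what the cited proposition packages (one small step you gloss over: the continuous isometric extension from $\operatorname{span}\{G_y^v\}$ agrees with the pointwise-defined $\iota$ on the closure, which follows since norm convergence implies pointwise convergence in both RKHSs). Your handling of the degenerate case is also exactly right and in fact matches the paper: since $\langle F,G_y^v\rangle=\langle M^{1/2}F(y),v\rangle$, the closed span of the $G_y^v$ is $(\ker\iota)^\perp$, so for merely SPSD $M$ the map is a partial isometry (the paper's proof says precisely this even though the lemma statement says "isometry"), and this is all that downstream uses such as the covering-number bound for $\ksm$ require, since every unit-ball element of $\rkhs[\K]$ still has a unit-ball preimage.
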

\begin{proof}[Proof of \cref{lem:rkhs_k_K_isometry}]
  Define $\gamma: \bR^d \to \funcspace(\bR^d, \rkhs[\k]^{\oplus d})$ via 
  \begin{talign}
    \gamma(x)(\alpha) \defeq \k(x,\cdot) M^{1/2} \alpha.
  \end{talign}
  We have
  \begin{talign}
    \norm{\gamma(x)(\alpha)}_{\rkhs[\k]^{\oplus d}} &\le \norm{\k(x,\cdot)}_{\k} \twonorm{M^{1/2}}\twonorm{\alpha},
  \end{talign}
  so $\gamma(x)$ is bounded.
  Since $\gamma(x)$ is also linear, we have $\gamma(x) \in \boundedop(\bR^d, \rkhs[\k]^{\oplus d})$. 
  Let $\gamma(x)^*: \rkhs[\k]^{\oplus d} \to \bR^d$ denote the Hilbert-space adjoint of $\gamma(x)$.
  Then for any $(f_1,\ldots,f_d) \in \rkhs[\k]^{\oplus d}$, $\alpha \in \bR^d$, we have
  \begin{talign}
    \langle \gamma(x)^*(f_1,\ldots,f_d), \alpha \rangle &= \langle (f_1,\ldots, f_d), \gamma(x)(\alpha)\rangle_{\rkhs[\k]^{\oplus d}} \\
                                                        &= \langle (f_1,\ldots, f_d), \k(x,\cdot)M^{1/2} \alpha\rangle_{\rkhs[\k]^{\oplus d}}  \\
                                                        &= \langle (f_1(x),\ldots, f_d(x)), M^{1/2} \alpha\rangle  \\
                                                        &=  \langle M^{1/2}(f_1(x),\ldots, f_d(x)), \alpha \rangle.
  \end{talign}
  Hence $\gamma(x)^*(f_1,\ldots,f_d) = M^{1/2}(f_1(x),\ldots,f_d(x))$, so $\iota(f_1,\ldots,f_d)(x) = \gamma(x)^*(f_1,\ldots,f_d)$.
  By \citet[Proposition 2.4]{carmeli2006vector}, we see that $\iota$ is a partial isometry from $\rkhs[\k]^{\oplus d}$ onto a vector-valued RKHS space withv reproducing kernel  $\K(x,y) = \gamma(x)^*\gamma(y): \bR^d \to \bR^d$.
  For $\alpha \in \bR^d$, previous calculation implies
  \begin{talign}
    \gamma(x)^*\gamma(y) (\alpha) &= \gamma(x)^*(\k(y,\cdot)M^{1/2}\alpha) = M^{1/2}\k(y,x)M^{1/2}\alpha = \k(x,y)M.
  \end{talign}
\end{proof}

\begin{lemma}[Stein operator is an isometry]
  Consider a Stein kernel $\ksm$ with base kernel $\k$ and preconditioning matrix $M$.
  Then, the Stein operator $\steinop$ defined by $\steinop(v) \defeq \frac{1}{p}\div(pv)$
  is an isometry from $\rkhs[\K]$ with $\K \defeq \k M$ to $\rkhs[\ksm]$.
\end{lemma}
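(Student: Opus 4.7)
The plan is to exploit the classical identity $\ksm(x,y) = \steinop_x \steinop_y \K(x,y)$ together with the general push-forward principle for RKHSs: whenever the pointwise evaluations of a linear map $L \colon \cH \to \funcspace(\bR^d,\bR)$ are bounded linear functionals on a Hilbert space $\cH$, the image $L\cH$ equipped with the quotient norm is an RKHS whose reproducing kernel equals the Gram matrix of the Riesz representers $\{\xi_x\}_{x\in\bR^d}$ of those evaluations, and $L$ is a partial isometry from $\cH$ onto it.

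First, I would verify $\steinop_x\steinop_y \K(x,y) = \ksm(x,y)$ by direct calculation.  Treating $\K(x,\cdot) = \k(x,\cdot)M$ column-by-column as a vector-valued function of $y$ and applying $\steinop_y = \langle \nabla\log p(y),\cdot\rangle + \div_y$ yields, after using $M = M^\top$, the $\bR^d$-valued function $x \mapsto M\nabla\log p(y)\,\k(x,y) + M\nabla_y \k(x,y)$.  Applying $\steinop_x$ to this vector-valued function and expanding reproduces the four terms of \cref{prop:ks_alt_form}: the density-score product $\k(x,y)\langle \nabla\log p(x), M\nabla\log p(y)\rangle$, the two mixed cross-derivative terms, and the trace term $\tr(M\nabla_x\nabla_y \k(x,y))$.

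Second, I would show that for each $x$ the functional $v \mapsto \steinop(v)(x)$ is bounded on $\rkhs[\K]$.  By \cref{lem:rkhs_k_K_isometry} every $v \in \rkhs[\K]$ has the form $M^{1/2}(f_1,\ldots,f_d)$ with $\sum_i \norm{f_i}_{\k}^2 = \norm{v}_{\K}^2$, so $\steinop(v)(x)$ is a linear combination of pointwise evaluations $f_j(x)$ and first partial derivatives $\partial_{x_i} f_j(x)$.  Pointwise evaluation is bounded on $\rkhs[\k]$ by the reproducing property, and the partial derivative functionals are bounded provided $\k$ is continuously differentiable in each argument (e.g., by \citet[Lem.~4.34]{steinwart2008support}).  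Riesz representation then yields $\xi_x \in \rkhs[\K]$ with $\steinop(v)(x) = \langle \xi_x, v\rangle_{\K}$ for all $v \in \rkhs[\K]$.

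Finally, I would combine the two steps: the reproducing property and the identity from Step~1 give
\begin{talign}
\langle \xi_x, \xi_y\rangle_{\K} = \steinop(\xi_y)(x) = \steinop_x \steinop_y \K(x,y) = \ksm(x,y),
\end{talign}
so $\{\xi_x\}_{x\in\bR^d}$ has Gram kernel exactly $\ksm$.  The push-forward principle then identifies $\steinop(\rkhs[\K])$ with $\rkhs[\ksm]$ and shows that $\steinop$ is a (partial) isometry from $\rkhs[\K]$ onto $\rkhs[\ksm]$, as claimed.  The main obstacle is the boundedness of the divergence functional in Step~2: unlike pointwise evaluation, it requires the first partial derivatives of $\k$ to act as bounded linear functionals on $\rkhs[\k]$, which has to be invoked carefully and dictates the minimal regularity assumption on the base kernel.
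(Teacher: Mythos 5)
Your proposal is correct in substance but follows a genuinely different route from the paper: the paper proves this lemma in one line by citing \citet[Thm.~2.6]{barp2022targeted} applied to $\K$, whereas you re-derive that result from scratch using the same feature-operator/partial-isometry principle \citep[Prop.~2.4]{carmeli2006vector} that the paper deploys elsewhere (\cref{lem:rkhs_k_K_isometry,lem:divergence_rkhs}): verify $\steinop_x\steinop_y\K(x,y)=\ksm(x,y)$ (your Step 1 indeed reproduces the four terms of \cref{prop:ks_alt_form}), show the evaluation functionals $v\mapsto\steinop(v)(x)$ are bounded on $\rkhs[\K]$, and conclude via the Gram identity of the Riesz representers. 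What your route buys is self-containedness and explicit bookkeeping of the regularity actually needed (bounded derivative functionals, i.e.\ a continuously differentiable base kernel); what the citation buys is brevity and the delegation of exactly these technical conditions. Two points deserve more care if you flesh this out. First, the key chain $\langle\xi_x,\xi_y\rangle_{\K}=\steinop(\xi_y)(x)=\steinop_x\steinop_y\K(x,y)$ silently identifies the Riesz representer $\xi_y$ with $\steinop_y$ applied column-wise to $\K(\cdot,y)$; justifying this requires the derivative-reproducing property of $\K$ (that $\partial_{y_i}\K(\cdot,y)e_j\in\rkhs[\K]$ and represents the corresponding derivative functional, cf.\ the \citet[Lem.~C.8]{barp2022targeted} fact used in \cref{lem:divergence_rkhs}), not merely boundedness of the evaluation functionals from Step 2. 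Second, your construction yields in general a \emph{partial} isometry: $\steinop$ is isometric only on the orthogonal complement of its null space, and $\norm{h}_{\ksm}$ is the minimal preimage norm; this is exactly what the paper's downstream argument (picking $f\in\ball{\rkhs[\k]^{\oplus d}}$ with $h=\steinop\circ\iota(f)$ for $h\in\ball{\ksm}$) needs, so your hedged ``(partial) isometry'' conclusion is the accurate one, while upgrading to a genuine isometry would require trivial null space of $\steinop$ on $\rkhs[\K]$, which neither you nor the one-line citation needs to establish for the paper's purposes.
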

\begin{proof}
  This follows from \citet[Theorem 2.6]{barp2022targeted} applied to $\K$.
\end{proof}
The previous two lemmas show that $\steinop \circ \iota$ is a Hilbert space isometry from $\rkhs[\k]^{\oplus d}$ to $\rkhs[\ksm]$.
Note that $\steinop(v) = \langle\nabla\log p, h\rangle + \div h$. Hence, we immediately have
\begin{talign}
  \rkhs[\ksm] = \left\{ \langle\nabla\log p, M^{1/2}f\rangle + \div (M^{1/2}f): f=(f_1,\ldots,f_d) \in \rkhs[\k]^{\oplus d}\right\}.\label{eqn:Hksm_rep}
\end{talign}

We next build a divergence RKHS which is one of the summands used to form $\rkhs[\ksm]$.
\begin{lemma}[Divergence RKHS]\label{lem:divergence_rkhs}
  Let $\k: \bR^d\times\bR^d \to \bR$ be a continuously differentiable kernel.
  Let $M$ be an SPSD matrix.
  Define $\divmk:\bR^d \times \bR^d \to \bR$ via
  \begin{talign}
    \divmk(x,y) \defeq \nabla_y\cdot \nabla_x\cdot(M\k(x,y)) = \tr(M \nabla_x\nabla_y \k(x,y)). \label{eqn:divmk}
  \end{talign}
  Then $\divmk$ is a kernel, and its  RKHS $\rkhs[\divmk]$ has the following explicit form
  \begin{talign}
    \rkhs[\divmk] = \div \rkhs[\K] = \left\{\div (M^{1/2}f): f=(f_1,\ldots,f_d) \in \rkhs[\k]^{\oplus d}\right\},\label{eqn:rkhs_div_k_form}
  \end{talign}
  where $\K = M\k$.
  Moreover, $\div: \rkhs[\K] \to \rkhs[\divmk]$ is an isometry.
\end{lemma}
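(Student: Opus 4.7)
The plan is to exhibit an explicit Hilbert-space feature map for $\divmk$ valued in the matrix-valued RKHS $\rkhs[\K]$ constructed in \cref{lem:rkhs_k_K_isometry} and then invoke the canonical feature-map construction of an RKHS.

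First, I would identify the Riesz representer in $\rkhs[\K]$ of the divergence-evaluation functional $L_x:\rkhs[\K]\to\bR$ defined by $L_x(h)\defeq(\div h)(x)$. Continuous differentiability of $\k$ implies that $\partial_{x_i}\k(x,\cdot)\in\rkhs[\k]$ and $\partial_{x_i}f(x)=\langle f,\partial_{x_i}\k(x,\cdot)\rangle_{\k}$ for every $f\in\rkhs[\k]$ (see e.g., \citet[Cor.~4.36]{steinwart2008support}). Transporting these facts through the isometry $\iota$ of \cref{lem:rkhs_k_K_isometry} gives $\partial_{x_i}\K(x,\cdot)\,e_j\in\rkhs[\K]$ for all $i,j$. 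Setting
\begin{talign}
    \xi_x \defeq \sum_{i=1}^d \partial_{x_i}\K(x,\cdot)\,e_i \in \rkhs[\K],
\end{talign}
I would sum the reproducing identity $\partial_{x_i}h_i(x)=\langle h,\partial_{x_i}\K(x,\cdot)e_i\rangle_{\K}$ over $i$ to conclude that $(\div h)(x)=\langle h,\xi_x\rangle_{\K}$ for every $h\in\rkhs[\K]$. I would then verify that $\divmk$ is a positive-semidefinite kernel with feature map $x\mapsto\xi_x$ by a direct Gram computation: using the reproducing identity $\langle\K(x,\cdot)\alpha,\K(y,\cdot)\beta\rangle_{\K}=\langle\alpha,\K(x,y)\beta\rangle$, exchanging differentiation with the inner product, $\K=\k M$, and symmetry of $M$,
\begin{talign}
    \langle\xi_x,\xi_y\rangle_{\K}
    = \sum_{i,j=1}^d \partial_{x_i}\partial_{y_j}(\K(x,y))_{ij}
    = \sum_{i,j=1}^d M_{ij}\,\partial_{x_i}\partial_{y_j}\k(x,y)
    = \tr\bigl(M\,\nabla_x\nabla_y\k(x,y)\bigr)
    = \divmk(x,y).
\end{talign}

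Finally, I would apply the canonical construction of an RKHS from a feature map \citep[Thm.~4.21]{steinwart2008support}: since $x\mapsto\xi_x$ is a feature map into $\rkhs[\K]$ with associated kernel $\divmk$, the RKHS $\rkhs[\divmk]$ is precisely the image $\{x\mapsto\langle h,\xi_x\rangle_{\K}:h\in\rkhs[\K]\}=\div\rkhs[\K]$ endowed with the quotient norm $\snorm{\div h}_{\divmk}=\inf\{\snorm{h'}_{\K}:\div h'=\div h\}$. Consequently $\div:\rkhs[\K]\to\rkhs[\divmk]$ is surjective and isometric on $(\ker\div)^{\perp}$, matching the ``isometry'' terminology used for the Stein operator earlier in the section. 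Composing with $\iota$ from \cref{lem:rkhs_k_K_isometry} identifies $\rkhs[\K]$ with $\rkhs[\k]^{\oplus d}$ and yields the stated representation $\rkhs[\divmk]=\{\div(M^{1/2}f):f\in\rkhs[\k]^{\oplus d}\}$.

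The main technical obstacle is the rigorous justification for exchanging differentiation with the RKHS inner product, namely that $\partial_{x_i}\langle h,\K(x,\cdot)\alpha\rangle_{\K}=\langle h,\partial_{x_i}\K(x,\cdot)\alpha\rangle_{\K}$. I would handle this at the scalar level inside $\rkhs[\k]$ by a difference-quotient argument driven by the continuity of $x\mapsto\partial_{x_i}\k(x,\cdot)$ as an $\rkhs[\k]$-valued map (a direct consequence of the continuous differentiability of $\k$), and then lift the identity to $\rkhs[\K]$ through $\iota$.
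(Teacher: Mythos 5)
Your proposal is correct and takes essentially the same route as the paper: you build the same representer $\xi_x=\sum_{i=1}^d \partial_{x_i}\K(x,\cdot)e_i$ (the paper's $\gamma(x)(1)$), establish $(\div h)(x)=\langle h,\xi_x\rangle_{\K}$ via derivative reproducing, and conclude with the canonical feature-map-to-RKHS construction, which is the same partial-isometry statement the paper obtains from the vector-valued RKHS result of Carmeli et al. The only differences are cosmetic: a direct Gram computation of $\langle \xi_x,\xi_y\rangle_{\K}$ in place of the paper's adjoint computation $\gamma(x)^*\gamma(y)$, and a difference-quotient justification of the derivative-reproducing identity instead of citing it.
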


\begin{proof}[Proof of \cref{lem:divergence_rkhs}]
  First of all, by \citet[Corollary 4.36]{steinwart2008support}, every $f \in \rkhs[\k]$ is continuously differentiable, so $\partial_{x_i} f$ exists.
  By \cref{lem:rkhs_k_K_isometry}, $\div \rkhs[\K]$ is well-defined and the right equality in \eqref{eqn:rkhs_div_k_form} holds.

  Define $\gamma: \bR^d \to \funcspace(\bR, \rkhs[\K])$ via
  \begin{talign}
      \gamma(x)(t) &\defeq t\sum_{i=1}^d \partial_{x_i}\K(x, \cdot) e_i,
  \end{talign} 
  where $e_i \in \bR^d$ is the $i$th standard basis vector in $\bR^d$; by \citet[Lemma C.8]{barp2022targeted} we have $\partial_{x_i}\K(x,\cdot)e_i \in \rkhs[\K]$.
  Note that
  \begin{talign}
      \norm{\gamma(x)(t)}_{\K} &= \abs{t}\norm{\sum_{i=1}^d \partial_{x_i}\K(x, \cdot) e_i}_{\K},
  \end{talign}
  so $\gamma(x) \in \boundedop(\bR, \rkhs[\K])$.
  The adjoint $\gamma(x)^* \in \boundedop(\rkhs[\K], \bR)$ must satisfy, for any $h \in \rkhs[\K]$,
  \begin{talign}
      t\gamma(x)^* h &= \langle h, \gamma(x)(t) \rangle_{\K} 
                                   = \left\langle h, t\sum_{i=1}^d \partial_{x_i}\K(x, \cdot) e_i \right\rangle_{\K} 
                                   = t\div h,
  \end{talign}
  where we used the fact \citep[Lemma C.8]{barp2022targeted} that, for $c \in \bR^d$, $h \in \rkhs[\K]$, $\langle \partial_{x_i}\K(x,\cdot)c, h\rangle =  c^\top \partial_{x_i}h(x)$.
  So we find $\gamma(x)^*(h) = \div h(x)$.
  By \citet[Proposition 2.4]{carmeli2006vector}, the map $A: \rkhs[\K] \to \funcspace(\bR^d, \bR)$ defined by $A(h)(x) = \gamma^*(x)(h) = \div h(x)$, i.e., $A = \div$, is a partial isometry from $\rkhs[\K]$ to an RKHS $\rkhs[\div\K]$ with
  reproducing kernel 
  \begin{talign}
      \gamma(x)^*\gamma(y) = \div \left( \sum_{i=1}^d \partial_{x_i}\K(x,\cdot) e_i\right)(y) =\nabla_y\cdot\nabla_x\cdot \K(x,y) = \divmk(x,y).
  \end{talign}
\end{proof}

The following lemma shows that we can project a covering of $\ball{\k}$ consisting of arbitrary functions to a covering using functions only in $\ball{\k}$ while inflating the covering radius by at most 2.
\begin{lemma}[Projection of coverings into RKHS balls]\label{lem:proj_covering}
  Let $\k$ be a kernel, $A\subset\bR^d$ be a set, and $\eps > 0$. 
  Let $\cC$ be a set of functions such that for any $f \in \ball{\k}$, there exists $g \in \cC$ such that $\norm{f - g}_{\infty,A} \le \eps$.
  Then
  \begin{talign}
    \cvrnum{\k}(A,2\eps) \le \abs{\cC}.
  \end{talign}
\end{lemma}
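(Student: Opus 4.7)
The plan is the standard ``center replacement'' trick: given a cover whose centers may lie outside $\ball{\k}$, replace each center by a nearby element of $\ball{\k}$ at the cost of doubling the covering radius.

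Concretely, for each $g \in \cC$, consider the set $U_g \defeq \{f \in \ball{\k} : \norm{f - g}_{\infty,A} \le \eps\}$. Discard every $g$ for which $U_g = \emptyset$, and for each remaining $g$ pick (arbitrarily, e.g.\ by the axiom of choice) a representative $f_g \in U_g \subset \ball{\k}$. Set $\tilde\cC \defeq \{f_g : g \in \cC,\ U_g \neq \emptyset\} \subset \ball{\k}$; this evidently satisfies $\abs{\tilde\cC} \le \abs{\cC}$.

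It remains to verify that $\tilde\cC$ is a $(\k, A, 2\eps)$-cover of $\ball{\k}$. For any $f \in \ball{\k}$, the hypothesis on $\cC$ yields some $g \in \cC$ with $\norm{f - g}_{\infty,A} \le \eps$, whence $f \in U_g$ so $U_g \neq \emptyset$ and $f_g$ is defined. The triangle inequality then gives
\begin{talign}
\norm{f - f_g}_{\infty,A} \le \norm{f - g}_{\infty,A} + \norm{g - f_g}_{\infty,A} \le \eps + \eps = 2\eps.
\end{talign}
Thus $\tilde\cC \subset \ball{\k}$ is a $(\k, A, 2\eps)$-cover of $\ball{\k}$, so by definition of $\cvrnum{\k}(A, 2\eps)$ we conclude $\cvrnum{\k}(A, 2\eps) \le \abs{\tilde\cC} \le \abs{\cC}$.

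There is no real obstacle here; the only subtlety worth flagging is that the centers in $\cC$ are not required to lie in $\rkhs[\k]$ (or even to be measurable), which is why the replacement step is needed in the first place. Picking the representative $f_g$ inside the \emph{closed} RKHS ball is harmless since $U_g$ is nonempty by construction.
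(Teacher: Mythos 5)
Your proof is correct and follows essentially the same argument as the paper: replace each center of $\cC$ by a nearby element of $\ball{\k}$ when one exists, then apply the triangle inequality to get a $2\eps$-cover of no greater cardinality. No gaps.
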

\begin{proof}
  We will build a $(\k,A,2\eps)$ covering $\cC'$ as follows.
  For any $h \in \cC$, if there exists $h' \in \ball{\k}$ with $\norm{h'-h}_{\infty,A}\le \eps$, then we include $h'$ in $\cC'$.
  By construction, $\abs{\cC'} \le \abs{\cC}$.
  Then, for any $f \in \ball{\k}$, by assumption, there exists $g \in \cC$ such that $\norm{f-g}_{\infty,A} \le \eps$.
  By construction, there exists $g' \in \cC'$ such that $\norm{g'-g}_{\infty,A} \le \eps$.
  Thus
  \begin{talign}
    \norm{f-g'}_{\infty, A} \le \norm{f-g}_{\infty,A} + \norm{g-g'}_{\infty,A} \le 2\eps.
  \end{talign}
  Hence $C'$ is a $(\k,A,2\eps)$ covering.
\end{proof}

We are now ready to bound the covering numbers of $\ks$ by those of $\k$ and $\divmk$.
Our key insight towards this end is that any element in $\rkhs[\ks]$ can be decomposed as a sum of functions originated from $\rkhs[\k]$ and a function from the divergence RKHS $\rkhs[\divmk]$.
\begin{lemma}[Upper bounding covering number of Stein kernel with that of its base kernel]\label{lem:covering_num_ks}
  Let $\ksm$ be a Stein kernel with density $p$ and preconditioning matrix $M$.
  For any $A \subset \bR^d$, $\eps_1,\eps_2 > 0$, 
  \begin{talign}
    \cvrnum{\ksm}(A, \eps) \le \cvrnum{\k}(A, \eps_1)^d \cvrnum{\divmk}(A, \eps_2),
  \end{talign}
  for $\eps = 2(\sqrt{d}\eps_1\sup_{x\in A}\norm{M^{1/2} \nabla\log p(x)} + \eps_2)$.
\end{lemma}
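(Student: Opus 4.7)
The starting point is the explicit representation of $\rkhs[\ksm]$ derived just above the lemma: the composition $\steinop\circ\iota$ is a Hilbert-space isometry from $\rkhs[\k]^{\oplus d}$ onto $\rkhs[\ksm]$, so every $h\in\ball{\ksm}$ can be written as
\begin{talign}
  h(x)=\sum_{i=1}^d (M^{1/2}\nabla\log p(x))_i\, f_i(x) + g(x),\quad\text{with}\quad g=\div(M^{1/2}f),
\end{talign}
for some $f=(f_1,\dots,f_d)\in\rkhs[\k]^{\oplus d}$ with $\sum_{i=1}^d\|f_i\|_{\k}^2\le 1$. The isometry property (together with \cref{lem:divergence_rkhs}) additionally gives $\|g\|_{\divmk}=\|\iota(f)\|_{\K}=\|f\|_{\rkhs[\k]^{\oplus d}}\le 1$, and the bound $\sum_i\|f_i\|_{\k}^2\le 1$ forces each $\|f_i\|_{\k}\le 1$ individually. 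Thus every unit-ball element of $\rkhs[\ksm]$ is described by $d$ functions in $\ball{\k}$ and one function in $\ball{\divmk}$.

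Given this decomposition, I would fix minimum cardinality covers $\cC_{\k}\subset\ball{\k}$ and $\cC_{\divmk}\subset\ball{\divmk}$ of radii $\eps_1$ and $\eps_2$ respectively (with respect to $\|\cdot\|_{\infty,A}$). For each $h\in\ball{\ksm}$, pick $\tilde f_i\in\cC_{\k}$ with $\|f_i-\tilde f_i\|_{\infty,A}\le\eps_1$ and $\tilde g\in\cC_{\divmk}$ with $\|g-\tilde g\|_{\infty,A}\le\eps_2$, and form the candidate
\begin{talign}
  \tilde h(x) \defeq \sum_{i=1}^d (M^{1/2}\nabla\log p(x))_i\, \tilde f_i(x)+\tilde g(x).
\end{talign}
The number of possible $\tilde h$ is at most $|\cC_{\k}|^d\cdot|\cC_{\divmk}|=\cvrnum{\k}(A,\eps_1)^d\,\cvrnum{\divmk}(A,\eps_2)$.

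To control $\|h-\tilde h\|_{\infty,A}$, Cauchy--Schwarz gives, for any $x\in A$,
\begin{talign}
  |h(x)-\tilde h(x)| &\le \|M^{1/2}\nabla\log p(x)\|\cdot\Bigl(\sum_{i=1}^d(f_i(x)-\tilde f_i(x))^2\Bigr)^{1/2}+|g(x)-\tilde g(x)| \\
                    &\le \sqrt{d}\,\eps_1\sup_{x\in A}\|M^{1/2}\nabla\log p(x)\| + \eps_2.
\end{talign}

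The only subtlety is that these candidate $\tilde h$ need not lie inside $\ball{\ksm}$, so the family $\{\tilde h\}$ is merely an external approximating set. This is precisely the situation that \cref{lem:proj_covering} was set up to handle: projecting back into $\ball{\ksm}$ at the cost of doubling the covering radius produces an honest $(\ksm,A,\eps)$-cover of $\ball{\ksm}$ with $\eps=2(\sqrt{d}\,\eps_1\sup_{x\in A}\|M^{1/2}\nabla\log p(x)\|+\eps_2)$ and cardinality no larger than $\cvrnum{\k}(A,\eps_1)^d\,\cvrnum{\divmk}(A,\eps_2)$, which is the claimed bound. The main conceptual hurdle is keeping track of the decomposition's norm bookkeeping, especially the step that converts the joint $\rkhs[\k]^{\oplus d}$-norm constraint into componentwise unit-ball membership; everything else is a direct assembly of the preceding lemmas.
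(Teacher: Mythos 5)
Your proposal is correct and follows essentially the same route as the paper: decompose each $h\in\ball{\ksm}$ via the isometry $\steinop\circ\iota$ into $\langle M^{1/2}\nabla\log p, f\rangle + \div(M^{1/2}f)$ with $f_i\in\ball{\k}$ and the divergence part in $\ball{\divmk}$, cover each piece separately, apply Cauchy--Schwarz, and invoke \cref{lem:proj_covering} to project the external approximating set back into $\ball{\ksm}$ at the cost of the factor $2$. No substantive differences from the paper's argument.
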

\begin{proof}[Proof of \cref{lem:covering_num_ks}]
    Let $\cC_\k$ be a $(\k, A, \eps_1)$ covering and $\cC_{\divmk}$ be a $(\divmk, A, \eps_2)$ covering.
  Define $b \defeq M^{1/2} \nabla \log p$.
  Form 
  \begin{talign}
    \cC \defeq \left\{
    \langle b, \tilde f \rangle +  \tilde g : \tilde f = (\tilde f_1,\ldots,\tilde f_d) \in (\cC_\k)^d, \tilde g \in \cC_{\divmk}\right\} \subset \funcspace(\bR^d, \bR).
  \end{talign}
  Then $\abs{\cC} \le \abs{\cC_{\k}}^d \abs{\cC_{\divmk}}$.
  Let $\K \defeq \k M$.
  For any $h \in \ball{\ksm}$, by \eqref{eqn:Hksm_rep}, we can find $f = (f_1,\ldots,f_d)\in\rkhs[\k]^{\oplus d}$ with $f_i \in \rkhs[\k]$ such that 
  \begin{talign}
  h = \steinop \circ \iota(f) = \langle\nabla\log p, M^{1/2}f\rangle + \div(M^{1/2}f) = \langle b, f \rangle + \div(M^{1/2}f).
  \end{talign}
  Since $\iota$ and $\steinop$ are isometries, we have $f \in \ball{\rkhs[\k]^{\oplus d}}$.
  Since, for each $i$,
  \begin{talign}
      \norm{f_i}_{\k} \le \sqrt{\sum_{j=1}^d \norm{f_j}_{\k}^2} = \norm{f}_{\rkhs[\k]^{\oplus d}} \le 1,
  \end{talign}
  we have $f_i \in \ball{\k}$.
  By \cref{lem:divergence_rkhs}, $\div: \rkhs[\K] \to \rkhs[\divmk]$ is also an isometry, so $\div (M^{1/2}f) \in \ball{\divmk}$.
  Thus there exist $\tilde{f}_i \in \cC_\k$ for each $i$ and $\tilde{g} \in \cC_{\divmk}$ such that
  \begin{talign}
    \norm{f_i - \tilde f_i}_{\infty,A} \le \eps_1,\quad \norm{\div (M^{1/2}f) - \tilde g}_{\infty,A} \le \eps_2.
  \end{talign}
  Let
  \begin{talign}
      \tilde h(x) \defeq \langle b, \tilde f\rangle + \tilde g \in \cC.
  \end{talign}
  Then for $x \in A$,
  \begin{talign}
      \abs{h(x) - \tilde h(x)} &= \abs{\langle b(x), f(x)-\tilde f(x)\rangle + \div(M^{1/2}f(x)) - \tilde g(x)} \\
                               &\le \norm{b(x)}\sqrt{\sum_{i=1}^d (f_i(x) - \tilde f_i(x))^2} + \abs{\div (M^{1/2}f(x)) - \tilde g(x)} \\
                               &\le \sqrt{d}\eps_1 \norm{b(x)} + \eps_2.
  \end{talign}
  Hence
  \begin{talign}
    \norm{h - \tilde h}_{\infty,A} &\le \sqrt{d}\eps_1 \sup_{x \in A} \norm{b(x)} + \eps_2 \defeq \eps_3.
  \end{talign}
  Note that $\cC$ that we constructed is not necessarily contained in $\ball{\ksm}$.
  By \cref{lem:proj_covering}, we can get a $(\ksm, A, 2\eps_3)$ covering and we are done.
\end{proof}

\begin{corollary}[Log-covering number bound for Stein kernel]\label{cor:covering_num_ks}
  Let $\ksm$ be a Stein kernel and $A \subset \bR^d$.
  For any $r > 0$, $\eps > 0$, 
  \begin{talign}
    \log \cvrnum{\ksm}(A, \eps) \le d\log \cvrnum{\k}\left(A, \frac{\eps}{4\sqrt{d}\scorebound(r)}\right) +\log \cvrnum{\divmk}\left(A, \frac{\eps}{4}\right),
  \end{talign}
  where $\scorebound$ is defined in \eqref{eqn:scorebound}.
\end{corollary}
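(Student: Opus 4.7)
The corollary is a straightforward specialization of Lemma~\ref{lem:covering_num_ks}, which provides a two-parameter family of bounds indexed by free scales $\eps_1$ (for the base kernel cover) and $\eps_2$ (for the divergence kernel cover). My plan is to select these two scales so that the lemma's ``effective $\eps$'' simplifies to exactly the $\eps$ on the left-hand side of the corollary, and then to take logarithms.

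Concretely, I would set $\eps_1 = \frac{\eps}{4\sqrt{d}\,\scorebound(r)}$ and $\eps_2 = \frac{\eps}{4}$. The effective covering radius produced by Lemma~\ref{lem:covering_num_ks} is then
\begin{talign}
2\Bigparenth{\sqrt{d}\,\eps_1 \sup_{x\in A}\twonorm{M^{1/2}\nabla\log p(x)} + \eps_2}.
\end{talign}
Under the working assumption $A\subset\balleuc(r)$ (which the statement implicitly requires for the bound to be non-trivial, so that $\scorebound(r)$ is a valid upper bound on the supremum), the definition \eqref{eqn:scorebound} yields $\sup_{x\in A}\twonorm{M^{1/2}\nabla\log p(x)} \le \scorebound(r)$. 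Plugging in the chosen $\eps_1,\eps_2$ then collapses the effective radius to $2(\frac{\eps}{4}+\frac{\eps}{4}) = \eps$, so Lemma~\ref{lem:covering_num_ks} directly gives
\begin{talign}
\cvrnum{\ksm}(A, \eps) \le \cvrnum{\k}(A, \eps_1)^d\,\cvrnum{\divmk}(A, \eps_2).
\end{talign}
Taking $\log$ of both sides and substituting the values of $\eps_1$ and $\eps_2$ produces the claimed bound.

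There is essentially no obstacle here, since the hard work has already been done in Lemma~\ref{lem:covering_num_ks}: the divergence RKHS identification (Lemma~\ref{lem:divergence_rkhs}), the Stein-operator isometry, and the covering projection step (Lemma~\ref{lem:proj_covering}) all sit inside that lemma. The only genuine decision is how to split the total slack $\eps$ between the ``score $\times$ base kernel'' term and the ``divergence kernel'' term; the symmetric $\eps/2{+}\eps/2$ split is natural and gives the clean constants $4\sqrt{d}\,\scorebound(r)$ and $4$ appearing in the statement. Asymmetric splits would change constants but not the qualitative form of the bound.
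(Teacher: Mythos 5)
Your proposal is correct and matches the paper's proof, which likewise instantiates Lemma~\ref{lem:covering_num_ks} with $\eps_1=\frac{\eps}{4\sqrt{d}\scorebound(r)}$ and $\eps_2=\frac{\eps}{4}$ and takes logarithms. Your explicit remark that one needs $A\subset\balleuc(r)$ (plus monotonicity of covering numbers) so that $\scorebound(r)$ dominates $\sup_{x\in A}\twonorm{M^{1/2}\nabla\log p(x)}$ is a slightly more careful reading of the same argument, not a different one.
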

\begin{proof}
  This is direct from \cref{lem:covering_num_ks} with $\eps_1=\frac{\eps}{4\sqrt{d}\scorebound(r)}$, $\eps_2=\frac{\eps}{4}$.
\end{proof}

\subsubsection{Case of differentiable base kernel}
\begin{definition}[$s$-times continuously differentiable kernel]\label{def:diff_k}
  A kernel $\k$ is \emph{$s$-times continuously differentiable} for $s \in \bN$ if all partial derivatives $\partial^{\alpha,\alpha}\k$ exist and are continuous for all multi-indices $\alpha \in \bN_0^d$ with $\abs{\alpha} \le s$.
\end{definition}
\begin{proposition}[Covering number bound for $\ksm$ with differentiable base kernel]\label{prop:covering_num_ks_diff}
  Suppose $\ksm$ is a Stein kernel with an $s$-times continuously differentiable base kernel $\k$ for $s \ge 2$. 
  Then there exist a constant $\cvrC > 0$ depending only on $(s,d,\k,M)$ such that for any $r > 0,\eps \in (0,1)$,
  \begin{talign}
    \log \cvrnum{\ksm}(\balleuc(r), \eps) \le \cvrC r^d \scorebound^{d/s}(r) (1/\eps)^{d/(s-1)}.
  \end{talign}
\end{proposition}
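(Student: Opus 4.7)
The plan is to combine the Stein-kernel covering-number reduction of \cref{cor:covering_num_ks} with standard covering-number bounds for RKHSs of differentiable kernels. Applying \cref{cor:covering_num_ks} with $A = \balleuc(r)$ immediately yields
\begin{talign}
  \log \cvrnum{\ksm}(\balleuc(r),\eps) \le d\log \cvrnum{\k}\bigparenth{\balleuc(r), \tfrac{\eps}{4\sqrt{d}\scorebound(r)}} + \log \cvrnum{\divmk}\bigparenth{\balleuc(r), \tfrac{\eps}{4}},
\end{talign}
so the task reduces to bounding the $\ellinf(\balleuc(r))$-covering numbers of $\rkhs[\k]$ and $\rkhs[\divmk]$ separately.

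For the first term, since $\k$ is $s$-times continuously differentiable, \citet[Prop.~2]{dwivedi2022generalized}---the result cited in the body for establishing \polygrowth for sufficiently differentiable kernels---supplies a bound of the form $\log \cvrnum{\k}(\balleuc(r),\eps) \le C_\k\, r^d \eps^{-d/s}$ for a constant $C_\k$ depending only on $(s,d,\k)$. For the second term, I first observe that $\divmk(x,y) = \tr(M\nabla_x\nabla_y\k(x,y))$ is $(s-1)$-times continuously differentiable as a kernel in the sense of \cref{def:diff_k}: the diagonal mixed partial $\partial^{\alpha,\alpha}\divmk$ is a linear combination of the partials $\partial^{\alpha+e_i,\alpha+e_j}\k$, so for $|\alpha|\le s-1$ it requires only partials of $\k$ of combined order at most $s$ on each side, which are supplied by the hypothesis on $\k$. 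Applying \citet[Prop.~2]{dwivedi2022generalized} to $\divmk$ then gives $\log\cvrnum{\divmk}(\balleuc(r),\eps)\le C_{\divmk}\,r^d\eps^{-d/(s-1)}$, with the constant absorbing $d$-dimensional combinatorial factors and $\twonorm{M}$.

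Substituting these two estimates back produces
\begin{talign}
  \log \cvrnum{\ksm}(\balleuc(r),\eps) \le C_1 r^d \scorebound(r)^{d/s} \eps^{-d/s} + C_2 r^d \eps^{-d/(s-1)}.
\end{talign}
Using $\eps\in(0,1)$ and $s\ge 2$ so that $\eps^{-d/s}\le \eps^{-d/(s-1)}$, combined with $\scorebound(r)\ge 1$ so that $\scorebound(r)^{d/s}\ge 1$, both summands can be absorbed into a single $\cvrC\,r^d\scorebound(r)^{d/s}\eps^{-d/(s-1)}$ term for a constant $\cvrC=\cvrC(s,d,\k,M)$, which is the desired bound.

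The main subtlety I expect is the careful transfer of differentiability from $\k$ to $\divmk$: \cref{def:diff_k} stipulates existence and continuity only of the diagonal mixed partials $\partial^{\alpha,\alpha}\k$, so a short argument---e.g., the standard polarization identity for positive semidefinite kernels, or a direct expansion of $\divmk$'s diagonal partial derivatives---is needed to ensure that the off-diagonal partials $\partial^{\alpha+e_i,\alpha+e_j}\k$ invoked above are actually available and continuous. Tracking the final constant's dependence on $(s,d,\k,M)$ through the cited covering-number bound is otherwise a routine exercise.
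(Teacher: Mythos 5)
Your proposal matches the paper's proof essentially step for step: it invokes \cref{cor:covering_num_ks} with $A=\balleuc(r)$, bounds the two covering numbers via \citet[Prop.~2(b)]{dwivedi2022generalized} using that $\divmk$ is $(s-1)$-times continuously differentiable, and absorbs both terms into a single $\cvrC\,r^d\scorebound(r)^{d/s}(1/\eps)^{d/(s-1)}$ bound using $\eps\in(0,1)$, $s\ge 2$, and $\scorebound\ge 1$. The differentiability-transfer subtlety you flag (off-diagonal partials of $\k$ versus the diagonal partials in \cref{def:diff_k}) is asserted without further argument in the paper as well, so your treatment is, if anything, slightly more careful on that point.
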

\begin{proof}[Proof of \cref{prop:covering_num_ks_diff}]
  Since $\k$ is $s$-times continuously differentiable, the divergence kernel $\divmk$ is $(s-1)$-times continuously differentiable.
  By \citet[Proposition 2(b)]{dwivedi2022generalized}, there exists constants $c_1, c_2$ depending only on $(s, d, \k, M)$ such that, for any $r > 0$, $\eps_1,\eps_2 > 0$,
  \begin{talign}
    \log \cvrnum{\k}\left(\balleuc(r), \eps_1\right) &\le c_1 r^d (1/\eps)^{d/s}, \\
    \log \cvrnum{\divmk}\left(\balleuc(r), \eps_2\right) &\le c_2 r^d (1/\eps)^{d/(s-1)}.
  \end{talign}
  By \cref{cor:covering_num_ks} with $A = \balleuc(r)$, we have, for any $r > 0$ and $\eps \in (0,1)$,
  \begin{talign}
    \log \cvrnum{\ksm}(\balleuc(r), \eps) &\le c_1 d r^d (4\sqrt{d}\scorebound(r))^{d/s} (1/\eps)^{d/\eps}  + c_2 r^d (4/\eps)^{d/(s-1)} \\
                                         &\le \cvrC r^d \scorebound^{d/s}(r) (1/\eps)^{d/(s-1)}
  \end{talign}
  for some $\cvrC > 0$ depending only on $(s, d, \k, M)$.
\end{proof}

\subsubsection{Case of radially analytic base kernel}
For a symmetric positive definite $M\in\bR^{d\times d}$, we define, for $x\in\bR^d$,
\begin{talign}
\norm{x}_M \defeq \sqrt{x^\top M^{-1} x}.
\end{talign}
\begin{definition}[Radially analytic kernel]\label{def:analytic_k}
  A kernel $\k$ is \emph{radially analytic} if $\k$ satisfies $\k(x,y) = \kappa(\norm{x-y}_M^2)$ for a symmetric positive definite matrix $M\in\bR^{d\times d}$ and a function $\kappa: \bR_{\ge 0} \to \bR$ real-analytic everywhere with convergence radius $R_\kappa > 0$ such that there exists a constant $C_\kappa > 0$ for which
  \begin{talign}
    \abs{\frac{1}{j!}\kappa_+^{(j)}(0)} \le C_\kappa (2/R_\kappa)^j, \text{ for all } j \in \bN_0,
    \label{eqn:analytic_k}
  \end{talign}
  where $\kappa_+^{(j)}$ indicates the $j$-th right-sided derivative of $\kappa$.
\end{definition}
\begin{example}[Gaussian kernel]\label{exam:cvrnum_gauss}
  Consider the Gaussian kernel $\k(x,y) = \kappa(\norm{x-y}_M^2)$ with $\kappa(t) = e^{-\frac{t}{2\sigma^2}}$ where $\sigma > 0$.
  Note the exponential function is real-analytic everywhere, and so is $\kappa$.
  Since $\kappa(t) = \sum_{j=0}^\infty \frac{(-t/2\sigma^2)^j}{j}$, we find $\frac{1}{j!}\kappa^{(j)}(0) = \frac{(-1)^j}{j(2\sigma^2)^{j}}$. Hence \eqref{eqn:analytic_k} holds with $C_\kappa=1$ and $R_\kappa = 2\inf_{j \ge 0} (j(2\sigma^2)^{j})^{1/j} = 4\sigma^2$.
  
 \end{example}
\begin{example}[IMQ kernel]\label{exam:cvrnum_imq}
  Consider the inverse multiquadric kernel $\k(x,y) = \kappa(\norm{x-y}_M^2)$ with $\kappa(t) = (c^2+t)^{-\beta}$ where $c,\beta > 0$.
  By \citet[Example 3]{sun2008reproducing}, $\kappa$ is real-analytic everywhere with $C_\kappa = c^{-2\beta}(2\beta+1)^{\beta+1}$ and $R_\kappa=c^2$.
 \end{example}

A simple calculation yields the following lemma.
\begin{proposition}[Expression for $\ksm$ with a radially analytic base kernel]
  \label{lem:ks_expr}
  Suppose a Stein kernel $\ksm$ has a symmetric positive definite preconditioning matrix and a base kernel $\k(x,y) = \kappa(\norm{x-y}_M^2)$ where $\kappa$ is twice-differentiable.
  Then
  \begin{talign}
  \begin{split}
    \ksm(x,y) = \langle \nabla \log p(x), M\nabla\log p(y)\rangle \kappa(\norm{x-y}_M^2) - \\
    2\kappa'_+(\norm{x-y}_M^2)\langle x-y, \nabla\log p(x) - \nabla\log p(y)\rangle -\\
    4\kappa''_+(\norm{x-y}_M^2)\norm{x-y}_M^2 - 2d\kappa'_+(\norm{x-y}_M^2).\label{eqn:ks_alt_form_analytic}
    \end{split}
  \end{talign}
  In particular,
  \begin{talign}
    \ksm(x,x) &= \twonorm{M^{1/2} \nabla\log p(x)}^2\kappa(0) - 2d\kappa'_+(0).\label{eqn:ks_diag_analytic}
  \end{talign}
\end{proposition}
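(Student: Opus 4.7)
The plan is to start from the alternative expression for $\ksm$ provided by \cref{prop:ks_alt_form} and then specialize the four constituent terms to the radial form $\k(x,y)=\kappa(\norm{x-y}_M^2)$ by direct differentiation. Throughout, I will write $r(x,y)\defeq\norm{x-y}_M^2=(x-y)^\top M^{-1}(x-y)$ and use $\kappa'_+,\kappa''_+$ in place of $\kappa',\kappa''$, which agree since $\kappa$ is twice-differentiable and $r\ge 0$.

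First I would compute the single gradients. By the chain rule, $\nabla_x r = 2M^{-1}(x-y)$ and $\nabla_y r = -2M^{-1}(x-y)$, so
\begin{talign}
    \nabla_x \k(x,y) = 2\kappa'_+(r)M^{-1}(x-y),\quad \nabla_y \k(x,y) = -2\kappa'_+(r)M^{-1}(x-y).
\end{talign}
Multiplying by $M$ cancels the $M^{-1}$, hence
\begin{talign}
    \langle\nabla\log p(x),M\nabla_y\k(x,y)\rangle &= -2\kappa'_+(r)\langle\nabla\log p(x),x-y\rangle,\\
    \langle\nabla\log p(y),M\nabla_x\k(x,y)\rangle &= 2\kappa'_+(r)\langle\nabla\log p(y),x-y\rangle,
\end{talign}
and adding yields $-2\kappa'_+(r)\langle x-y,\nabla\log p(x)-\nabla\log p(y)\rangle$, which reproduces the second line of the claimed formula.

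Next I would handle the mixed Hessian. Differentiating $\partial_{y_j}\k(x,y)=-2\kappa'_+(r)[M^{-1}(x-y)]_j$ with respect to $x_i$ and using $\partial_{x_i}r=2[M^{-1}(x-y)]_i$ gives
\begin{talign}
    \nabla_x\nabla_y\k(x,y) = -4\kappa''_+(r)M^{-1}(x-y)(x-y)^\top M^{-1} - 2\kappa'_+(r)M^{-1}.
\end{talign}
Taking $\tr(M\cdot)$ collapses one $M^{-1}$ factor, producing $\tr(M\nabla_x\nabla_y\k(x,y))=-4\kappa''_+(r)\norm{x-y}_M^2-2d\kappa'_+(r)$, which is the final line of \eqref{eqn:ks_alt_form_analytic}. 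Combining with the first bilinear term $\langle\nabla\log p(x),M\nabla\log p(y)\rangle\kappa(r)$ completes the off-diagonal formula.

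For the diagonal identity \eqref{eqn:ks_diag_analytic}, I would simply evaluate \eqref{eqn:ks_alt_form_analytic} at $y=x$: the second and third terms vanish because both $\nabla\log p(x)-\nabla\log p(y)$ and $\norm{x-y}_M^2$ are zero, leaving $\langle\nabla\log p(x),M\nabla\log p(x)\rangle\kappa(0)-2d\kappa'_+(0)=\twonorm{M^{1/2}\nabla\log p(x)}^2\kappa(0)-2d\kappa'_+(0)$. The only delicate points are careful bookkeeping of the $M^{-1}$ versus $M$ factors in the chain rule and the trace cancellation; neither poses a genuine obstacle, so this proof is essentially a direct matrix-calculus computation starting from \cref{prop:ks_alt_form}.
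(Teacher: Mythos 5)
Your proposal is correct and follows essentially the same route as the paper: compute $\nabla_y\k$, the mixed Hessian $\nabla_x\nabla_y\k$, and $\tr(M\nabla_x\nabla_y\k)$ for the radial kernel by the chain rule, then substitute into the alternative Stein-kernel expression of \cref{prop:ks_alt_form} and evaluate at $y=x$ for the diagonal identity. The sign bookkeeping and the $M^{-1}$/$M$ cancellations in your computation all check out against the paper's displayed intermediate formulas.
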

\begin{proof}[Proof of \cref{lem:ks_expr}]
  From $\k(x,y) = \kappa(\norm{x-y}_M^2) = \kappa((x-y)^\top M^{-1} (x-y))$, we compute, using \eqref{eqn:divmk},
  \begin{talign}
    \nabla_y \k(x,y) &= -2\kappa'_+(\norm{x-y}_M^2)M^{-1}(x-y) \\
    \nabla_x \nabla_y \k(x,y) &= -2\kappa'_+(\norm{x-y}_M^2)M^{-1}  - 4\kappa''_+(\norm{x-y}_M^2)M^{-1}(x-y)((x-y) M^{-1})^\top\\
    \divmk (x, y) &=\tr(M \nabla_x \nabla_y \k(x,y)) = -4\kappa''_+(\norm{x-y}_M^2)\norm{x-y}_M^2 - 2d\kappa'_+(\norm{x-y}_M^2). \label{eqn:div_k_expr}
  \end{talign}
  The form \eqref{eqn:ks_alt_form_analytic} then follows from applying \cref{prop:ks_alt_form}.
\end{proof}

We next show that the divergence kernel $\divmk$ is radially analytic given that $\k$ is.
\begin{lemma}[Convergence radius of divergence kernel]\label{lem:div_k_conv_rad}
  Let $\k$ be a radially analytic kernel with the corresponding real-analytic function $\kappa$, convergence radius $R_\kappa$ with constant $C_\kappa$, and a symmetric positive definite matrix $M$.
  Then
  \begin{talign}
    \divmk(x,y) = \kappa_{\divmk}(\norm{x-y}_M^2),
  \end{talign}
  where $\kappa_{\divmk}:\bR_{\ge 0} \to \bR$ is the real-analytic function defined as
  \begin{talign}
    \kappa_{\divmk}(t) \defeq -4\kappa''_+(t)t - 2d\kappa'_+(t).
  \end{talign}
  Moreover, $\kappa_{\divmk}$ has convergence radius with constant
  \begin{talign}
    R_{\kappa_{\divmk}} = \frac{R_\kappa}{4d+8}, \quad C_{\kappa_{\divmk}} = 4dC_\kappa/R_\kappa.
  \end{talign}
\end{lemma}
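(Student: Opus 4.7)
The plan is to split the proof into two parts: (i) establishing the stated closed-form expression for $\kappa_{\divmk}$, and (ii) bounding its Taylor coefficients at $0$ to identify the constants $R_{\kappa_{\divmk}}$ and $C_{\kappa_{\divmk}}$.

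For part (i), I would appeal directly to the computation already performed in \cref{lem:ks_expr}, specifically display \eqref{eqn:div_k_expr}, namely
$\divmk(x,y) = -4\kappa''_+(\norm{x-y}_M^2)\norm{x-y}_M^2 - 2d\kappa'_+(\norm{x-y}_M^2)$.
Setting $t = \norm{x-y}_M^2$ identifies $\kappa_{\divmk}(t) = -4t\kappa''_+(t) - 2d\kappa'_+(t)$, and since $\kappa$ is real-analytic on $\bR_{\ge 0}$, so are $\kappa'_+$ and $\kappa''_+$, hence so is $\kappa_{\divmk}$.

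For part (ii), I would expand $\kappa(t) = \sum_{j \ge 0} a_j t^j$ with $a_j \defeq \kappa^{(j)}_+(0)/j!$ and hypothesis $|a_j| \le C_\kappa (2/R_\kappa)^j$. Differentiating termwise (valid inside the disk of convergence) and then multiplying the second derivative by $t$, one collects $\kappa_{\divmk}(t) = \sum_{j \ge 0} b_j t^j$ with $b_0 = -2d\, a_1$ and, for $j \ge 1$, $b_j = -2(j+1)(2j+d)\, a_{j+1}$. Substituting the bound on $|a_{j+1}|$ yields $|b_j| \le (4C_\kappa/R_\kappa)\,(j+1)(2j+d)\,(2/R_\kappa)^j$, and the case $j=0$ already matches the target $4dC_\kappa/R_\kappa$ exactly.

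The remaining target bound $|b_j| \le C_{\kappa_{\divmk}}(2/R_{\kappa_{\divmk}})^j = (4dC_\kappa/R_\kappa)(2/R_\kappa)^j\bigl(4(d+2)\bigr)^j$ therefore reduces, after cancelation, to the elementary inequality $(j+1)(2j+d) \le d\cdot\bigl(4(d+2)\bigr)^j$ for all $j \ge 1$ and $d \ge 1$. This I would verify by a short induction: the base $j=1$ case is $2(d+2)\le 4d(d+2)$, which holds for $d\ge 1$, and the inductive step follows from the ratio estimate $(j+2)(2j+2+d)/\bigl[(j+1)(2j+d)\bigr]\le 2(1+2/d) \le 4(d+2)$. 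The main (mild) obstacle is precisely choosing the coefficients $4d$ in $C_{\kappa_{\divmk}}$ and $4(d+2)$ in $R_{\kappa_{\divmk}}$ so that this polynomial-versus-exponential induction closes; no genuine analytic subtlety arises beyond this bookkeeping.
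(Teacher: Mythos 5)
Your proposal is correct and follows essentially the same route as the paper: both reuse the computation of $\divmk$ from \cref{lem:ks_expr}, express the Taylor coefficients of $\kappa_{\divmk}$ at $0$ as $-2(j+1)(2j+d)$ times those of $\kappa$, and then verify the bound with $C_{\kappa_{\divmk}} = 4dC_\kappa/R_\kappa$ and $R_{\kappa_{\divmk}} = R_\kappa/(4d+8)$. The only (immaterial) difference is how the elementary inequality is closed — you use induction on $(j+1)(2j+d)\le d\bigl(4(d+2)\bigr)^j$, while the paper uses monotonicity of $\bigl((2d+4j)(j+1)\bigr)^{1/j}$ in $j$.
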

\begin{proof}[Proof of \cref{lem:div_k_conv_rad}]
  The first statement regarding the form of $\kappa_{\divmk}$ directly follows from \eqref{eqn:div_k_expr}.
  Next, iterative differentiation yields, for $j \in \bN_0$,
  \begin{talign}
    \kappa_{\divmk}^{(j)}(t) = -(2d+4j)\kappa^{(j+1)}_+(t) - 4\kappa^{(j+2)}_+(t)t.
  \end{talign}
  Thus
  \begin{talign}
    \abs{\frac{1}{j!}\kappa_{\divmk}^{(j)}(0)} &= \frac{2d+4j}{j!}\kappa^{(j+1)}_+(0) \\
                                             &= \frac{(2d+4j)(j+1)}{(j+1)!}\kappa^{(j+1)}_+(0) \\
                                             &\le (2d+4j)(j+1) C_\kappa(2/R_\kappa)^{j+1}.\label{eqn:loc:kappa_div_bound_case}
  \end{talign}
  For $j \ge 1$,
  \begin{talign}
     \abs{\frac{1}{j!}\kappa_{\divmk}^{(j)}(0)} &\le (2C_\kappa/R_\kappa)\left(((2d+4j)(j+1))^{1/j}2/R_\kappa\right)^j \\
                                              &\le (2C_\kappa/R_\kappa)\left((2(2d+4)\cdot 2/R_\kappa\right)^j.
  \end{talign}
  where we used the fact that $((2d+4j)(j+1))^{1/j}$ is decreasing in $j$.
  For $j = 0$, \eqref{eqn:loc:kappa_div_bound_case} is just $2dC_\kappa \cdot 2/R_\kappa$.
  Hence $\kappa_{\divmk}$ is analytic with $C_{\kappa_{\divmk}} = 4dC_\kappa/R_\kappa$ and $R_{\kappa_{\divmk}} = \frac{R_\kappa}{4d+8}$.
\end{proof}

We will use the following lemma repeatedly.
\begin{lemma}[Covering number of radially analytic kernel with $M$-metric]\label{lem:cvrnum_M_metric}
  Let $\k_0$ be a radially analytic kernel with $\k_0(x,y)=\kappa(\twonorm{x-y}^2)$. 
  For any symmetric positive definite $M \in \bR^{d\times d}$, consider the radially analytic kernel $\k(x,y) \defeq \kappa(\norm{x-y}^2_M)$.
  Then for any $A \subset \bR^d$ and $\eps > 0$, we have
  \begin{talign}
    \cvrnum{\k}(M^{-1/2}(A),\eps) = \cvrnum{\k_0}(A,\eps).
  \end{talign}
  In particular, for any $r > 0$,
  \begin{talign}
    \cvrnum{\k}(\balleuc(r),\eps) \le \cvrnum{\k_0}(\balleuc(r\stwonorm{M^{1/2}}),\eps).
  \end{talign}
\end{lemma}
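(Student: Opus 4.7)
The plan is to realize $\k$ as a pullback of $\k_0$ under an invertible linear map, convert this to an isometric isomorphism of the associated RKHSs, and then transport covers back and forth while tracking the effect on the sup-norm.

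First I would observe that, because $M^{1/2}$ is a symmetric positive definite bijection on $\bR^d$, a direct computation shows $\k$ and $\k_0$ are related by a linear change of variables: plugging $\phi(x) = M^{1/2}x$ (or $M^{-1/2}x$, depending on which way one reads $\norm{\cdot}_M$) into the expression $\kappa(\twonorm{\phi(x)-\phi(y)}^2)$ collapses to $\kappa(\norm{x-y}_M^2)$. Hence $\k_0(\phi(x),\phi(y)) = \k(x,y)$ pointwise. This is the only ``analytic'' input to the proof; everything after is functional-analytic.

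Second, I would invoke the standard pullback construction for RKHSs (in the same spirit as \cref{lem:rkhs_k_K_isometry} and \cref{lem:divergence_rkhs}): since $\phi$ is a bijection, the map $T: \rkhs[\k_0] \to \funcspace(\bR^d,\bR)$ defined by $T(f) = f \circ \phi$ takes $\rkhs[\k_0]$ isometrically onto the RKHS with reproducing kernel $\k_0 \circ (\phi\times\phi) = \k$, i.e., onto $\rkhs[\k]$, and preserves norms so that $T(\ball{\k_0}) = \ball{\k}$. This can either be cited from \citet[Proposition 2.4]{carmeli2006vector} (already used in the appendix) or verified in two lines by checking the action on kernel sections.

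Third, I would translate sup-norms via the change of variables: for any $B \subset \bR^d$ and any $f, g \in \rkhs[\k_0]$,
\begin{talign}
   \sup_{x \in B}\abs{T(f)(x) - T(g)(x)} = \sup_{x \in B}\abs{(f-g)(\phi(x))} = \sup_{z \in \phi(B)}\abs{f(z) - g(z)}.
\end{talign}
Therefore the image under $T$ of any $(\k_0, \phi(B), \eps)$-cover of $\ball{\k_0}$ is a $(\k, B, \eps)$-cover of $\ball{\k}$, and the same argument applied to $T^{-1}$ gives the reverse inclusion. Reading off cardinalities yields $\cvrnum{\k}(B, \eps) = \cvrnum{\k_0}(\phi(B), \eps)$; substituting $B = \phi^{-1}(A) = M^{-1/2}(A)$ produces the claimed equality.

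For the ``in particular'' consequence, I would use monotonicity of covering numbers in the set argument, together with the operator-norm inclusion $M^{1/2}(\balleuc(r)) \subset \balleuc(r\stwonorm{M^{1/2}})$, which rearranges to $\balleuc(r) \subset M^{-1/2}(\balleuc(r\stwonorm{M^{1/2}}))$. Applying the equality with $A = \balleuc(r\stwonorm{M^{1/2}})$ and then monotonicity gives the desired upper bound. I do not anticipate any real obstacle: the content is entirely bookkeeping once the pullback isometry is set up, and the only care needed is to keep track of which of $M^{1/2}$ and $M^{-1/2}$ the change of variables uses so that the set transformation in the final equality matches the statement.
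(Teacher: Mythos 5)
Your proposal is correct and takes essentially the same route as the paper: the linear change of variables induces a norm-preserving bijection between the RKHS unit balls (the paper invokes \citet[Thm.~5.7]{paulsen2016introduction} where you invoke the pullback construction), sup-norms transfer under the change of variables so covers map to covers, and the second display follows from the inclusion $M^{1/2}\balleuc(r)\subset\balleuc(r\stwonorm{M^{1/2}})$ together with monotonicity of covering numbers in the set argument (you apply monotonicity on the $\k$ side, the paper on the $\k_0$ side). The one point you hedge on—whether $\phi=M^{1/2}$ or $M^{-1/2}$—is genuinely the delicate spot: with the paper's convention $\norm{x}_M^2=x^\top M^{-1}x$ one has $\k(x,y)=\k_0(M^{-1/2}x,M^{-1/2}y)$, so your identity $\cvrnum{\k}(B,\eps)=\cvrnum{\k_0}(\phi(B),\eps)$ yields the printed displays only after interchanging $M^{1/2}$ and $M^{-1/2}$ (reading $M^{-1/2}(A)$ as a preimage rather than an image); the paper's own proof contains the same bookkeeping slip, and it is immaterial downstream since $\stwonorm{M^{\pm 1/2}}$ is a fixed constant absorbed into the polynomial $P(r)$.
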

\begin{proof}
  Note that $\k(x,y) = \k_0(M^{-1/2}x, M^{-1/2}y)$.
  By \citet[Theorem 5.7]{paulsen2016introduction}, $\rkhs[\k] = \{f\circ M^{-1/2}: f \in \rkhs[\k_0]\}$, and moreover $\ball{\k} = \{f\circ M^{-1}: f\in\ball{\k_0}\}$.
  Let $\cC_0$ be a $(\k_0, A, \eps)$ covering.
  Form $\cC = \{h\circ M^{-1/2}: h \in \cC_0\} \subset \ball{\k}$.
  For any element $f\circ M^{-1/2} \in \ball{\k}$ where $f \in \ball{\k_0}$, there exists $h \in \cC_0$ such that $\norm{f-h}_{\infty, A} \le \eps$.
  Thus 
  \begin{talign}
    \norm{f\circ M^{-1/2}-h\circ M^{-1/2}}_{\infty, M^{-1/2}(A)} = \norm{f-h}_{\infty, A} \le \eps.
  \end{talign}
  Thus $\cvrnum{\k}(M^{-1/2}(A),\eps) \le \cvrnum{\k_0}(A,\eps)$. By considering $M^{-1}$ in place of $M$, we get our desired equality.

  For the second statement, by letting $A = M^{1/2}\balleuc(r)$, we have 
  \begin{talign}
    \cvrnum{\k}(\balleuc(r),\eps) = \cvrnum{\k_0}(M^{1/2}\balleuc(r),\eps) \le \cvrnum{\k_0}(\balleuc(r\stwonorm{M^{1/2}}),\eps),
  \end{talign}
  where we use the fact that $M^{1/2}\balleuc(r) \subset \balleuc(r\stwonorm{M^{1/2}})$.
\end{proof}

In the next lemma, we rephrase the result from \citet[Theorem 2]{sun2008reproducing} for bounding the covering number of a radially analytic kernel.
\begin{lemma}[Covering number bound for radially analytic kernel]\label{lem:covering_num_rad_analytic}
  Let $\k$ be a radially analytic kernel with $\k(x,y) = \kappa(\twonorm{x-y}^2)$. Then, there exist a polynomial $P(r)$ of degree $2d$ and a constant $C$ depending only on $(\kappa, d)$ such that for any $r > 0$, $\eps \in (0, 1/2)$,
  \begin{talign}
    \log\cvrnum{\k}(\balleuc(r),\eps) \le P(r)(\log(1/\eps)+C)^{d+1}.
  \end{talign}
\end{lemma}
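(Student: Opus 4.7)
The plan is to reduce this lemma to \citet[Thm.~2]{sun2008reproducing}, which establishes a covering number bound of exactly this form for kernels built from real-analytic radial generating functions. The first step is therefore to verify the compatibility of hypotheses: \cref{def:analytic_k} posits that $\kappa$ is real-analytic on $\bR_{\ge 0}$ with Taylor coefficients at the origin satisfying $|\kappa_+^{(j)}(0)/j!| \le C_\kappa(2/R_\kappa)^j$ for all $j$, which is the geometric-decay condition used by Sun and Zhou to control the tails of the power-series feature expansion of $\k(x,y)=\kappa(\twonorm{x-y}^2)$. Once the match is established, \citet[Thm.~2]{sun2008reproducing} directly yields the claimed inequality with a polynomial $P$ of degree $2d$ and a constant $C$ depending only on $(\kappa,d)$.

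Should one want a self-contained proof rather than a citation, the natural route follows the Sun--Zhou argument in three stages. First, expand $\kappa(t)=\sum_{j\ge 0} a_j t^j$ and use the resulting multinomial expansion of $\kappa(\twonorm{x-y}^2)$ to exhibit an explicit feature map into $\ell^2$ indexed by monomials; the growth condition \eqref{eqn:analytic_k} forces the coefficients to decay geometrically with degree. Second, show that every $f \in \ball{\k}$ admits a polynomial approximation $f_N$ of degree $\le N$ such that $\sup_{x\in\balleuc(r)}|f(x)-f_N(x)| \le C_1(r) (2r^2/R_\kappa)^N$; this is where the convergence-radius constant $R_\kappa$ controls how fast the tail series vanishes on $\balleuc(r)$, and where the polynomial-in-$r$ prefactor arises. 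Third, cover the finite-dimensional image of $\ball{\k}$ in the space of polynomials of degree $\le N$ restricted to $\balleuc(r)$ using a standard volumetric argument; since $\dim \le \binom{N+d}{d} = O(N^d)$ and the image lives in a Euclidean ball of polynomial-in-$(r,N)$ radius, the log-covering number at scale $\eps/2$ is bounded by $\binom{N+d}{d}\log(\poly(r,N)/\eps) = O(N^d \log(r N/\eps))$.

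Finally, choosing $N = \lceil c \log(1/\eps)/\log(R_\kappa/(2r^2)\vee 2)\rceil$ for a suitable constant makes the polynomial-approximation error at most $\eps/2$, leaving a covering-number contribution of order $N^d(\log(rN/\eps)) = O((\log(1/\eps)+\log r)^{d+1})$. Packaging the $r$-dependence into a polynomial $P(r)$ of degree $2d$ (the factor of $2$ comes from the $r^2$ in the tail bound $(2r^2/R_\kappa)^N$, which contributes $r^{2d}$ after being raised to the $d$-th power in the dimension count) gives the stated form $P(r)(\log(1/\eps)+C)^{d+1}$.

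The main obstacle, beyond verifying hypotheses against \citet{sun2008reproducing}, would be tracking the $r$-dependence carefully: the Taylor tail grows as $(2r^2/R_\kappa)^N$, so $N$ must be chosen large enough to dominate this, and the ensuing $N^d$ covering exponent combined with the prefactor is what produces a polynomial in $r$ of degree exactly $2d$. The restriction $\eps \in (0,1/2)$ is needed to absorb small multiplicative factors when splitting the $\eps$-budget between the polynomial approximation step and the volumetric covering step. No other step in this proof is delicate: the generating-series machinery is classical, and all constants depend only on $(\kappa,d)$ as required.
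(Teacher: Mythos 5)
Your primary route—verifying that \cref{def:analytic_k} matches the hypotheses of \citet[Thm.~2]{sun2008reproducing} and then invoking that theorem while tracking the $r$-dependence—is exactly what the paper does; the paper's only additional bookkeeping is the restriction lemma (\cref{lem:cvrnum_preserve_restrict}) and bounding the Euclidean covering number $N_2(\balleuc(r), r^\dagger/2) \le (1+4r/r^\dagger)^d$ with $r^\dagger \approx R_\kappa/(4r)$, which is where the degree-$2d$ polynomial arises, consistent with your accounting. Your self-contained sketch simply reproduces the Sun--Zhou argument, so the proposal is correct and essentially the same as the paper's proof.
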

\begin{proof}[Proof of \cref{lem:covering_num_rad_analytic}]
  Let $R_\kappa, C_\kappa$ denote the constants for $\kappa$ as in \eqref{eqn:analytic_k}.
  By and \citet[Theorem 2]{sun2008reproducing} with $R=1$, $D=2r$, and \cref{lem:cvrnum_preserve_restrict}, for $\eps \in (0, 1/2)$, we have
  \begin{talign}
    \log \cvrnum{\k}(\balleuc(r), \eps) \le N_2(\balleuc(r), r^\dagger/2)\left(4\log(1/\eps) + 2 + 4\log(16\sqrt{C_\kappa}+1)\right)^{d+1},
  \end{talign}
  where $r^\dagger = \min(\frac{\sqrt{R_\kappa}}{2d}, \sqrt{R_\kappa + (2r)^2} - 2r)$, and $N_2(\balleuc(r), r^\dagger/2)$ is the covering number of $\balleuc(r)$ as a subset of $\bR^d$, which can be further bounded by \citep[(5.8)]{wainwright2019high}
  \begin{talign}
    N_2(\balleuc(r), r^\dagger/2) &\le \left(1+\frac{4r}{r^\dagger}\right)^d.
  \end{talign}
If $r^\dagger = \sqrt{R_\kappa + (2r)^2} - 2r$, then $\frac{r}{r^\dagger} = \frac{r}{\sqrt{R_\kappa+(2r)^2}-2r} = \frac{r(\sqrt{R_\kappa + (2r)^2}+2r)}{R_\kappa} \le\frac{r(\sqrt{R_\kappa} + 4r)}{R_\kappa}$ which is a quadratic polynomial in $r$. 
Hence for a constant $C > 0$ and a polynomial $P(r)$ of degree $2d$ that depend only on $(\kappa, d)$, we have the claim.
\end{proof}

\begin{proposition}[Covering number bound for $\ksm$ with radially analytic base kernel]\label{prop:covering_num_ks_analytic}
  Suppose $\ksm$ is a Stein kernel with a preconditioning matrix $M$ and a radially analytic base kernel $\k$ based on a real-analytic function $\kappa$. 
  Then there exist a constant $C > 0$ and a polynomial $P(r)$ of degree $2d$ depending only on $(\kappa,d,M)$ such that for any $r > 0,\eps \in (0,1)$,
  \begin{talign}
    \log \cvrnum{\ksm}(\balleuc(r), \eps) \le \left(\log \frac{\scorebound(r)}{\eps} + C\right)^{d+1}P(r). \label{eqn:covering_num_ks_analytic}
  \end{talign}
\end{proposition}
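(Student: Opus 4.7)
The plan is to combine the decomposition in \cref{cor:covering_num_ks} with the radial-analytic covering-number bound in \cref{lem:covering_num_rad_analytic}, transferring from the $M$-metric to the Euclidean metric via \cref{lem:cvrnum_M_metric}, and then showing that each of the two resulting terms fits the form in \eqref{eqn:covering_num_ks_analytic}.

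First, apply \cref{cor:covering_num_ks} with $A=\balleuc(r)$ to obtain
\begin{talign}
\log \cvrnum{\ksm}(\balleuc(r),\eps)
\le d\,\log \cvrnum{\k}\!\left(\balleuc(r),\tfrac{\eps}{4\sqrt{d}\,\scorebound(r)}\right)
+\log \cvrnum{\divmk}\!\left(\balleuc(r),\tfrac{\eps}{4}\right).
\end{talign}
By \cref{lem:div_k_conv_rad}, the kernel $\divmk(x,y)=\kappa_{\divmk}(\snorm{x-y}_M^2)$ with $\kappa_{\divmk}$ real-analytic and convergence-radius constants depending only on $(\kappa,d)$; hence both $\k$ and $\divmk$ are radially analytic.

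Next, for each kernel $\k'\in\{\k,\divmk\}$ of the form $\kappa'(\snorm{x-y}_M^2)$, I use \cref{lem:cvrnum_M_metric} to reduce to the Euclidean-metric kernel $\k'_0(x,y)=\kappa'(\twonorm{x-y}^2)$ at the inflated radius $r\stwonorm{M^{1/2}}$, and then invoke \cref{lem:covering_num_rad_analytic}. This yields, for $\eps\in(0,1/2)$ (which the restriction $\eps/(4\sqrt{d}\,\scorebound(r))$ and $\eps/4$ safely satisfy for $\eps\in(0,1)$ once one absorbs the constant $4\sqrt{d}$ into the final $C$), bounds of the form
\begin{talign}
\log\cvrnum{\k}\!\left(\balleuc(r),\tfrac{\eps}{4\sqrt{d}\,\scorebound(r)}\right)
\le P_1(r)\bigl(\log(\tfrac{4\sqrt{d}\,\scorebound(r)}{\eps})+C_1\bigr)^{d+1},
\end{talign}
and an analogous bound for $\divmk$ with a polynomial $P_2$ and constant $C_2$, each of degree at most $2d$ in $r$ and depending only on $(\kappa,d,M)$ (since $\stwonorm{M^{1/2}}$ enters the Euclidean radius but $r\mapsto (r\stwonorm{M^{1/2}})^{2d}$ is still a polynomial of the same degree in $r$).

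Finally, I combine the two pieces. Since $\scorebound(r)\ge 1$, the term $\log(\scorebound(r)/\eps)$ dominates $\log(1/\eps)$ in both bounds, so absorbing all purely constant factors (including $\log(4\sqrt{d})$ and the $d$ in front) into a single constant $C$, and taking $P(r)\defeq d\,P_1(r)+P_2(r)$, I obtain the claimed inequality \eqref{eqn:covering_num_ks_analytic}. The main technical wrinkle I expect is bookkeeping the constants: ensuring that the $\scorebound(r)$ factor from the $\k$-term does not also appear in the $\divmk$-term (it should not, since $\scorebound$ enters only through \cref{cor:covering_num_ks}'s rescaling of the $\k$-covering radius), and that the polynomial $P$ truly has degree $2d$ rather than being dragged up by the exponentiation $(\cdot)^{d+1}$. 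Both concerns are resolved by keeping $\scorebound(r)$ inside the logarithm where it contributes only to the $(d+1)$-th power of a log, while $P(r)$ remains a degree-$2d$ prefactor.
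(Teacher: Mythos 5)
Your proposal is correct and follows essentially the same route as the paper's proof: decompose via \cref{cor:covering_num_ks}, use \cref{lem:div_k_conv_rad} to see that $\divmk$ is radially analytic, transfer from the $M$-metric to the Euclidean metric with \cref{lem:cvrnum_M_metric}, bound each piece with \cref{lem:covering_num_rad_analytic}, and absorb the constants using $\scorebound(r)\ge 1$. The only difference is cosmetic ordering of when \cref{cor:covering_num_ks} is invoked, and your bookkeeping of the $\scorebound(r)$ factor and the degree-$2d$ polynomial matches the paper's.
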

\begin{proof}[Proof of \cref{prop:covering_num_ks_analytic}]
  Recall $\k(x,y)= \kappa(\norm{x-y}^2_M)$. Consider $\k_0(x,y) \defeq \kappa(\twonorm{x-y}^2)$.
  For $\eps_1 \in (0, 1/2)$, by \cref{lem:cvrnum_M_metric}, we have
  \begin{talign}
\log \cvrnum{\k}(\balleuc(r/\stwonorm{M^{1/2}}), \eps_1) \le \log \cvrnum{\k_0}(\balleuc(r), \eps_1/2).
\end{talign}
Thus by \cref{lem:covering_num_rad_analytic}, there exists a polynomial $P_{\k}(r)$ of degree $2d$ and a constant $C_{\k}$ depending only on $(\kappa, d, M)$ such that
\begin{talign}
  \log \cvrnum{\k}(\balleuc(r), \eps_1) \le P_{\k}(r)(\log(1/\eps_1)+C_{\k})^{d+1}
\end{talign}
Similarly, for $\eps_2 \in (0,1/2)$, by \cref{lem:div_k_conv_rad} and \cref{lem:covering_num_rad_analytic}, we have, for a constant $C_{\divmk} > 0$ and a polynomial $P_{\divmk}(r)$ of degree $2d$ that depend only on $(\kappa, d, M)$, 
  \begin{talign}
    \log \cvrnum{\divmk}(\balleuc(r), \eps_2) \le P_{\divmk}(r)(\log(1/\eps_2)+C_{\divmk})^{d+1}.
  \end{talign}
  For a given $\eps \in (0, 1)$, let $\eps_1 = \frac{\eps}{4\sqrt{d}\scorebound(r)}$ and $\eps_2 = \frac{\eps}{4}$. Then since $\scorebound \ge 1$, we have $\eps_1,\eps_2 \in (0,1/2)$.
  By \cref{cor:covering_num_ks} with $A = \balleuc(r)$, we obtain, for a constants $C > 0$ and a polynomial $P(r)$ of degree $2d$ that depend only on $(\kappa, d, M)$,
  \begin{talign}
    \log \cvrnum{\ksm}(\balleuc(r), \eps) \le P(r)(\log(1/\eps) + \log \scorebound(r) + C)^{d+1}.
  \end{talign}
  Hence \eqref{eqn:covering_num_ks_analytic} is shown.
\end{proof}

When $\log \scorebound(r)$ grows polynomially in $r$, we apply \cref{prop:covering_num_ks_analytic} to immediately obtain the following.
\begin{corollary}\label{cor:cvrnum_ks_analytic_score}
  Under the assumption of \cref{prop:covering_num_ks_analytic}, suppose $\scorebound(r) = O(\poly(r))$.
  Then for any $\delta > 0$, there exists $\cvrC > 0$ such that
  \begin{talign}
      \log \cvrnum{\ksm}(\balleuc(r),\eps)\le\cvrC\log(e/\eps)^{d+1}(r+1)^{2d+\delta}.
  \end{talign}
\end{corollary}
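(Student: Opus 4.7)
The plan is to substitute the polynomial score bound into \cref{prop:covering_num_ks_analytic} and then rearrange the resulting expression to isolate a clean $\log(e/\eps)^{d+1}$ factor times a polynomial in $r+1$. By \cref{prop:covering_num_ks_analytic}, there exist a constant $C>0$ and a polynomial $P$ of degree $2d$ (depending only on $(\kappa,d,M)$) such that
\begin{talign}
    \log \cvrnum{\ksm}(\balleuc(r),\eps) \le (\log(1/\eps) + \log\scorebound(r) + C)^{d+1} P(r)
\end{talign}
for all $r>0$ and $\eps\in(0,1)$. The assumption $\scorebound(r) = O(\poly(r))$ gives constants $c_1,c_2>0$ with $\log \scorebound(r) \le c_1\log(r+1) + c_2$, and $P(r)\le c_3(r+1)^{2d}$ for some $c_3>0$.

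Next, I will apply the elementary inequality $(a+b)^{d+1} \le 2^{d}(a^{d+1}+b^{d+1})$ with $a = \log(e/\eps)$ and $b = c_1\log(r+1) + c_2 + C - 1$ (adjusted so $a$ absorbs the $\log(1/\eps)$ and the additive constant $C$ becomes part of $b$ up to a harmless unit). Since $\eps\in(0,1)$, we have $a=\log(e/\eps)\ge 1$. The resulting bound splits into two pieces: $2^d a^{d+1} P(r)$, which is already at most $2^d c_3 \log(e/\eps)^{d+1}(r+1)^{2d}$, and $2^d b^{d+1} P(r)$, which I will bound using the standard fact that for any $\delta>0$ there exists $C_\delta>0$ with $\log(r+1)^{d+1}\le C_\delta (r+1)^\delta$ for all $r>0$. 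This yields $b^{d+1}\le C_\delta'(r+1)^\delta$ for some $C_\delta'>0$, so $2^d b^{d+1} P(r) \le 2^d C_\delta' c_3 (r+1)^{2d+\delta}$.

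Finally, I multiply the second piece by the trivial factor $1\le \log(e/\eps)^{d+1}$ (which holds since $\eps<1$) to present both pieces in the same form, and collect constants into a single $\cvrC$ to obtain the claimed bound $\cvrC \log(e/\eps)^{d+1}(r+1)^{2d+\delta}$. No step is technically difficult; the only thing to watch is the careful separation of the $\eps$-dependent and $r$-dependent contributions in the $(d+1)$-th power so that the $\log(r+1)^{d+1}$ term can be absorbed into an arbitrarily small polynomial excess $(r+1)^\delta$ over the base degree $2d$ coming from $P$.
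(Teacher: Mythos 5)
Your proof is correct and follows essentially the same route as the paper, which simply absorbs the $\log\scorebound(r)=O(\log(r+1))$ contribution into the extra $(r+1)^{\delta}$ factor; your explicit split via $(a+b)^{d+1}\le 2^{d}(a^{d+1}+b^{d+1})$ just spells out that absorption. (Only ensure $b\ge 0$ when invoking that elementary inequality, e.g.\ by enlarging $c_2$, which is harmless.)
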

\begin{proof}[Proof of \cref{cor:cvrnum_ks_analytic_score}]
  This immediately follows from \cref{prop:covering_num_ks_analytic} by using $\delta > 0$ to absorb the $\log\scorebound(r)=O(r^\delta)$ term.
\end{proof}

\subsubsection{\pcref{prop:ks_growth}}
This follows from \cref{prop:covering_num_ks_diff,cor:cvrnum_ks_analytic_score}, and by noticing that if $\sup_{\twonorm{x}\le r}\twonorm{\nabla\log p(x)}$ is bounded by a degree $d_\ell$ polynomial, then so is \begin{talign}
\scorebound(r) = \sup_{\twonorm{x}\le r}\twonorm{M^{1/2}\nabla\log p(x)} \le \norm{M^{1/2}}_2\sup_{\twonorm{x}\le r}\twonorm{\nabla\log p(x)}.
\end{talign}
\qed

\section{Analysis of Debiasing Benchmarks}
We start with a result on the MMD quality of \iid sample points in \cref{subsec:mmd_iid}, followed by the proofs of our \iid-like and better-than-\iid guarantees for debiasing from \cref{thm:mcmc_convex,thm:iid_convex} in \cref{sec:thm:mcmc_convex_proof,sec:thm:iid_convex_proof} respectively.
\subsection{MMD of unbiased \iid sample points}
\label{subsec:mmd_iid}
We start by showing that sequence of $n$ points sampled \iid  from $\bP$ achieves $\Theta(n^{-1})$ squared $\mmd_{\kp}$ to $\bP$ in expectation.
\begin{proposition}[MMD of unbiased \iid sample points]\label{prop:iid_guarantee}
Let $\kp$ be a kernel satisfying \cref{assum:mean_zero_p} with $\fp \ge 1$.
Let $\cS_n = (x_i)_{i\in[n]}$ be $n$ \iid samples from $\bP$. Then
\begin{talign}
    \bE[\mmd_{\kp}(\bS_n,\bP)^2] = \frac{\bE_{x\sim\bP}[\kp(x,x)]}{n}.
\end{talign}
\end{proposition}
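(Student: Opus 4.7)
The plan is to expand the squared MMD explicitly using the formula from \cref{eqn:mmd_sup} with weights $w_i = 1/n$, which gives
\begin{talign}
\mmd_{\kp}(\bS_n,\bP)^2 = \frac{1}{n^2}\sum_{i,j=1}^n \kp(x_i, x_j).
\end{talign}
Here the cross terms $-2\int \kp(\cdot, y)\dd\bP(y)$ and $\int\int\kp(x,y)\dd\bP(x)\dd\bP(y)$ that would normally appear in a squared MMD vanish because \cref{assum:mean_zero_p} gives $\bP\kp\equiv 0$. (This is why the definition in \cref{eqn:mmd_sup} omits them.) Integrability of the sum under $\bP^{\otimes n}$ is ensured by $\fp\ge 1$ in \cref{assum:mean_zero_p} together with Cauchy--Schwarz in the RKHS: $|\kp(x_i,x_j)|\le \sqrt{\kp(x_i,x_i)\kp(x_j,x_j)}$, and $\bE[\kp(x,x)]<\infty$.

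Next I would take expectation term-by-term and split into the diagonal $i=j$ and off-diagonal $i\ne j$ contributions. For $i=j$, each term contributes $\bE_{x\sim\bP}[\kp(x,x)]$, giving a total of $n\cdot\bE_{x\sim\bP}[\kp(x,x)]$. For $i\ne j$, independence of $x_i$ and $x_j$ combined with $\bP\kp\equiv 0$ gives
\begin{talign}
\bE[\kp(x_i,x_j)] = \bE_{x_i}[(\bP\kp)(x_i)] = 0.
\end{talign}

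Combining the two cases,
\begin{talign}
\bE[\mmd_{\kp}(\bS_n,\bP)^2] = \frac{1}{n^2}\Bigl(n\cdot\bE_{x\sim\bP}[\kp(x,x)] + 0\Bigr) = \frac{\bE_{x\sim\bP}[\kp(x,x)]}{n},
\end{talign}
which is the claim. There is no real obstacle here; the only subtle point is invoking Fubini to interchange expectation and the double sum, which is justified by the integrability bound above. The argument is essentially the standard V-statistic computation specialized to a mean-zero kernel.
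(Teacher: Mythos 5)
Your proposal is correct and follows essentially the same route as the paper's proof: expand $\mmd_{\kp}(\bS_n,\bP)^2 = \frac{1}{n^2}\sum_{i,j}\kp(x_i,x_j)$, take expectations, and use the mean-zero property $\bP\kp\equiv 0$ together with independence to kill the off-diagonal terms, leaving $\frac{1}{n}\bE_{x\sim\bP}[\kp(x,x)]$. The extra remarks on integrability and Fubini are fine but not a departure in method.
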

\begin{proof}[Proof of \cref{prop:iid_guarantee}]
    We compute
    \begin{talign}
        \bE[\mmd_{\kp}(\bS_n,\bP)^2] &= 
        \bE[\sum_{i,j\in[n]}\frac{1}{n^2}\kp(x_i,x_j)] =
        \frac{1}{n^2}\sum_{i,j\in[n]} \bE[\kp(x_i,x_j)] = \frac{1}{n}\bE[\kp(x_1,x_1)],
    \end{talign}
    where we used the fact that $\kp$ is mean-zero with respect to $\bP$ and the independence of $x_i, x_j$ for $i \neq j$.
\end{proof}

\subsection{\pcref{thm:mcmc_convex}}\label{sec:thm:mcmc_convex_proof}
We make use of the self-normalized importance sampling weights $w_j^\snis = \impw(x_j)/\sum_{i\in[n]} \impw(x_i)$ for $j\in[n]$ in our proofs. Notice that $(w_1^{\snis}, \ldots, w_{n}^{\snis})\tp \in \simplex$ and hence
\begin{talign}
    \mmdopt 
    \leq 
\mmd_{\kp}(w_i^{\snis}\delta_{x_i}, \P)
= \frac{\snorm{\sum_{i=1}^n \impw(x_i)\kp(x_i, \cdot)}_{\kp}}{\sum_{i=1}^n \impw(x_i)} = 
\frac{\snorm{\frac1n\sum_{i=1}^n \impw(x_i)\kp(x_i, \cdot)}_{\kp}}{\frac1n\sum_{i=1}^n \impw(x_i)}.
\end{talign}
Introduce the bounded in probability notation $X_n = O_p(g_n)$ to mean $\Pr(\abs{X_n/g_n}> C_\eps) \le \eps$ for all $n \ge N_\eps$ for any $\eps > 0$. Then we claim that under the conditions assumed in \cref{thm:mcmc_convex},
\begin{talign}
\label{eq:int_claims}
    \snorm{\frac1n\sum_{i=1}^n \impw(x_i)\kp(x_i, \cdot)}_{\kp}=O_p(n^{-\half})
    \qtext{and}
    \frac1n\sum_{i=1}^n \impw(x_i) \to 1 \stext{almost surely,}
\end{talign}
so that by Slutsky's theorem \citep[Ex.~1.4.7]{wellner2013weak}, we have $\mmdopt = O_p(n^{-\half})$  as desired. We prove the claims in \cref{eq:int_claims} in two main steps: (a) first, we construct a weighted RKHS and then (b) establish a central limit theorem (CLT) that allows us to conclude both claims from \cref{eq:int_claims}

\paragraph{Constructing a weighted and separable RKHS}
Define the kernel $\kq(x,y) \defeq \impw(x)\kp(x,y)\impw(y)$ with Hilbert space $\rkhs[\kq] = \impw\rkhs[\kp]$ and the elements $\xi_i \defeq \kq(x_i,\cdot) = \impw(x_i)\kp(x_i,\cdot)\impw(\cdot) \in \rkhs[\kq]$ for each $i\in\naturals$.  
By \citet[Prop.~5.20]{paulsen2016introduction}, any element in $\rkhs[\kq]$ is represented by $\impw f$ for some $f \in \rkhs[\kp]$ and moreover, $f \mapsto \impw f$ preserves inner product between the two RKHSs, i.e., $\angles{f, g}_{\kp} = \langle{\impw f, \impw g}\rangle_{\kq}$ for $f, g \in \rkhs[\kp]$, which in turn implies $\snorm{f}_{\kp} = \snorm{\impw f}_{\kq}$. As a result, we also have that 
\begin{talign}
    \snorm{\frac1n\sum_{i=1}^n \impw(x_i)\kp(x_i, \cdot)}_{\kp} =\snorm{\frac1n\sum_{i=1}^n \impw(x_i)\kp(x_i, \cdot) \impw(\cdot)}_{\kq} 
    = \snorm{\frac1n\sum_{i=1}^n \xi_i}_{\kq}. 
    \label{eq:relating_the_norms}
\end{talign}
Since $\rkhs[\kp]$ is separable, there exists a dense countable subset $(f_n)_{n\in\bN} \subset \rkhs[\kp]$. For any $\impw f\in\rkhs[\kq]$, there exists $\{n_k\}_{k\in\bN}$ such that $\lim_{k\to\infty}\snorm{f_{n_k} - f}_{\kp} = 0$. Since $\snorm{\impw f_{n_k} - \impw f}_{\kq} = \snorm{\impw (f_{n_k} - f)}_{\kq} = \norm{f_{n_k}-f}_{\kp}$ due to inner-product preservation, we thus have $\lim_{k\to\infty}\snorm{\impw f_{n_k} - \impw f}_{\kq} = \lim_{k\to\infty}\snorm{f_{n_k} - f}_{\kp}0$, so $(\impw f_n)_{n\in\bN}$ is dense in $\rkhs[\kq]$, showing that $\rkhs[\kq]$ is separable.

\paragraph{Harris recurrence of the chain $(x_i)_{i\in\N}$}
Let $\mu_1$ denote the distribution of $x_1$.
Since $\cS_\infty = (x_i)_{i=1}^\infty$ is a homogeneous $\phi$-irreducible geometrically ergodic Markov chain with stationary distribution $\bQ$, it is also positive \citep[Ch.~10]{meyn2012markov} by definition and aperiodic by \citet[Lem.~9.3.9]{douc2018markov}. 
Moreover, since $\cS_\infty$ is $\phi$-irreducible, aperiodic, and geometrically ergodic in the sense of \citet[Thm.~1]{gallegosherrada2023equivalences} and $\mu_1$ is absolutely continuous with respect to $\bP$, we will assume, without loss of generality, that $\cS_\infty$ is Harris recurrent \citep[Ch.~9]{meyn2012markov}, since, by \citet[Lem.~9]{qin2023geometric}, $\cS_\infty$ is equal to a geometrically ergodic Harris chain with probability $1$.

\paragraph{CLT for $\frac1{\sqrt n}\sum_{i=1}^n \xi_i$}
We now show that $\frac1{\sqrt n}\sum_{i=1}^n \xi_i$ converges to a Gaussian random element taking values in $\rkhs[\kq]$. We separate the proof in two parts: first when the initial distribution $\mu_1 = \bQ$ and next when $\mu_1 \neq \bQ$.

\para{Case 1: $\mu_1 = \bQ$}  When $\mu_1 = \bQ$, $\cS_\infty$ is a strictly stationary chain.
By \citet[Thm.~3.7 and (1.11)]{bradley2005basic}, since $\cS_\infty$ is geometrically ergodic, its strong mixing coefficients $(\tilde{\alpha}_i)_{i\in\naturals}$ satisfy $\tilde{\alpha}_i \leq C \rho^i$ for some $C > 0$ and $\rho \in [0,1)$ and all $i\in \naturals$. Since each $\xi_i$ is a measurable function of $x_i$, the strong mixing coefficients $(\alpha_i)_{i\in\naturals}$ of $(\xi_i)_{i\in\naturals}$ satisfy 
$\alpha_i \leq \tilde{\alpha}_i \leq C \rho^i$ for each $i \in \naturals$.
Consequently, $\sum_{i\in\naturals} i^{2/\delta} \alpha_i < \infty$ for $\delta = 2\fq - 2 > 0$.
Note that we also have 
\begin{talign}
\E_{z\sim\bQ}[\kpnorm[\kq]{\kq(z,\cdot)}^{2+\delta}] 
    = 
\E_{z\sim\bQ}[\kq(z,z)^{\fq}]
    =
\E_{z\sim\bQ}[\impw(z)^{2\fq}\kp(z,z)^{\fq}]
    =
\E_{x\sim\bP}[\impw(x)^{2\fq-1}\kp(x,x)^{\fq}]
    < \infty,
\end{talign}
  $\E_{x_i\sim\Q}[\xi_i] = \E_{x_i\sim\P}[\kp(x_i, \cdot)] = 0$ and that $\rkhs[\kq]$ is separable. Since $\cS_\infty$ is a strictly stationary chain, we conclude that $(\xi_i)_{i\in\N}$ is a strictly stationary centered sequence of $\rkhs[\kq]$-valued random variables satisfying the conditions needed to invoke \citet[Cor.~1]{merlevede1997sharp}, and hence $\sum_{i=1}^n \xi_i/\sqrt{n}$ converges in distribution to a Gaussian random element taking values in $\rkhs[\kq]$.

\para{Case 2: $\mu_1 \neq \bQ$} Since $\cS_\infty$ is positive Harris and $\sum_{i=1}^n \xi_i/\sqrt{n}$ satisfies a CLT for the initial distribution $\mu_1 = \bQ$, \citet[Prop.~17.1.6]{meyn2012markov} implies that $\sum_{i=1}^n \xi_i/\sqrt{n}$ also satisfies the same CLT for any initial distribution $\mu_1$.

\para{Putting the pieces together for \cref{eq:int_claims}}
Since, for any initial distribution for $x_1$, the sequence $(\sum_{i=1}^n \xi_i/\sqrt{n})_{n\in\naturals}$ converges in distribution and that $\rkhs[\kq]$ is separable and (by virtue of being a Hilbert space) complete, Prokhorov's theorem \citep[Thm.~5.2]{billingsley2013convergence} implies that the sequence is also tight, i.e., $\kpnorm[\kq]{\sum_{i=1}^n \xi_i}/\sqrt{n} = O_p(1)$.
Consequently,
\begin{talign}
\snorm{\frac1n\sum_{i=1}^n \impw(x_i)\kp(x_i, \cdot)}_{\kp}
\seq{\cref{eq:relating_the_norms}}
\snorm{\frac1n\sum_{i=1}^n \xi_i}_{\kq}
= \frac{1}{\sqrt{n}} \cdot \kpnorm[\kq]{\frac{\sum_{i=1}^n \xi_i}{\sqrt n}}
    = O_p(n^{-\half}),
\end{talign}
as desired for the first claim in \cref{eq:int_claims}.
Moreover, the strong law of large numbers for positive Harris chains \citep[Thm.~17.0.1(i)]{meyn2012markov} implies that $\frac{1}{n}\sum_{i\in[n]} \impw(x_i)$ converges almost surely to $\E_{z\sim\bQ}[\impw(z)] = 1$ as desired for the second claim in \cref{eq:int_claims}.
\qed

\subsection{\pcref{thm:iid_convex}}\label{sec:thm:iid_convex_proof}
We start with \cref{thm:iid_convex_spectral}, proved in \cref{sec:thm:iid_convex_spectral}, that bounds $\mmdopt$ in terms of the eigenvalues of the integral operator of the kernel $\kp$.
Our proof makes use of a weight construction from \citet[Theorem 3.2]{liu2017black}, but is a non-trivial generalization of their proof as we no longer assume uniform bounds on the eigenfunctions, and instead leverage truncated variations of Bernstein's inequality (\cref{lem:trunc_bernstein_ineq,lem:u_stats_bernstein}) to establish suitable concentration bounds.

\begin{theorem}[Debiasing via \iid simplex reweighting]\label{thm:iid_convex_spectral}
  Consider a kernel $\kp$ satisfying \cref{assum:mean_zero_p} with $\fp=2$.
  Let $(\lambda_\ell)_{\ell=1}^\infty$ be the decreasing sequence of eigenvalues of $T_{\kp, \bP}$ defined in \eqref{eqn:int_op_T_k}.
  Let $\cS_n$ be a sequence of $n \in 2\bN$ i.i.d. random variables with law $\bQ$ such that $\bP$ is absolutely continuous with respect to $\bQ$ and $\sinfnorm{\frac{\dd\bP}{\dd\bQ}}\leq \Mis$ for some $\Mis>0$.
  Futhermore, assume there exist constants $\delta_n, B_n > 0$ such that $\Pr\left(\kpnormSn > B_n\right) < \delta_n$.
  Then for all $L \in \bN$ such that $\lambda_L > 0$, we have
  \begin{talign}
    \bE[\mmd^2_{\kp}(\bS_n^{\wopt}, \bP)] \le \frac{8\Mis}{n}\left( \frac{2\Mis}{n} \frac{\bE_{x\sim\bP}[\kpsq(x,x)]}{\lambda_L} + \sum_{\ell > L} \lambda_\ell \right) + \eps_n\bE[\kpsq(x_1,x_1)],\label{eqn:sub_o(n)_mmd_bound}
  \end{talign}
  where 
  \begin{talign}
    \eps_n^2 \defeq
    n\exp (\frac{-3n}{16\Mis B_{n}/\lambda_L} ) + 2\exp (\frac{-n}{16\Mis^2}) 
    + 2\exp(-\frac{n}{64M^2(\bE_{x\sim\bP}[\kp(x,x)]+B_n/12)/\lambda_L})
    + \delta_{n}.
    \label{eqn:sub_o(n)_mmd_bound_eps_n}
  \end{talign}

\end{theorem}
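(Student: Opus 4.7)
The plan is to upper bound $\bE[\mmd^2_{\kp}(\bS_n^{\wopt},\bP)]$ by exhibiting a specific candidate $w^{\star}\in\simplex$, bounding $\bE[\mmd^2_{\kp}(\bS_n^{w^{\star}},\bP)]$ via the Mercer representation of $\kp$, and absorbing the low-probability event on which the construction breaks down into the $\eps_n$ overhead. Optimality of $\wopt$ then yields the claim.

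First, I would invoke \cref{lem:general_mercer} to write $\kp=\sum_{\ell\geq 1}\lambda_\ell\phi_\ell\otimes\phi_\ell$ with $(\phi_\ell)$ orthonormal in $\cL^2(\bP)$. Under \cref{assum:mean_zero_p} every element of $\rkhs[\kp]$ is $\bP$-mean-zero, so $\bP\phi_\ell=0$ for all $\ell$, giving the clean decomposition
\begin{talign*}
    \mmd^2_{\kp}(\bS_n^{w},\bP)=\sum_{\ell\geq 1}\lambda_\ell(\bS_n^{w}\phi_\ell)^2\qtext{for any}w\in\simplex.
\end{talign*}
I would construct $w^{\star}$ as a perturbation of the self-normalized importance weights $\tilde{w}_i\propto\impw(x_i)$ with a rank-$L$ correction that zeros out the empirical moments of the leading eigenfunctions. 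Concretely, letting $G_{k\ell}\defeq\bS_n(\phi_k\phi_\ell\impw)$ for $k,\ell\leq L$ and $v_\ell\defeq\bS_n(\phi_\ell\impw)$, solve $c=G^{-1}v$ and set $w^{\star}_i=\tilde{w}_i-\tfrac{1}{n}\sum_{k\leq L}c_k\phi_k(x_i)\impw(x_i)$. Since $\bE[G]=I_L$ (orthonormality of $(\phi_\ell)$ in $\cL^2(\bP)$) and $\bE[v]=0$, on a ``good event'' $\cE$ the inversion is well-posed, $\snorm{c}_{\infty}$ is small, and $w^{\star}\in\simplex$; by construction $\bS_n^{w^{\star}}\phi_\ell=0$ for all $\ell\leq L$.

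Conditional on $\cE$, the tail residual $\bS_n^{w^{\star}}\phi_\ell$ for $\ell>L$ decomposes into a leading piece $v_\ell$ plus a lower-order correction $\sum_{k\leq L}c_k H_{\ell k}$ with $H_{\ell k}\defeq\bS_n(\phi_\ell\phi_k\impw)$. The variance of $v_\ell$ is at most $\tfrac{1}{n}\bE_{x\sim\bQ}[\impw(x)^2\phi_\ell(x)^2]\leq\tfrac{\Mis}{n}\bE_{x\sim\bP}[\phi_\ell^2]=\tfrac{\Mis}{n}$, and summing over $\ell>L$ against $\lambda_\ell$ yields the $\tfrac{\Mis\sum_{\ell>L}\lambda_\ell}{n}$ contribution. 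The correction is then controlled using the identity $\sum_{\ell}\lambda_\ell H_{\ell k}^2=\tfrac{1}{n^2}\sum_{i,j}\phi_k(x_i)\phi_k(x_j)\impw(x_i)\impw(x_j)\kp(x_i,x_j)$ (obtained by re-expanding $\kp$ via Mercer inside the double sum), paired with a spectral truncation $\sum_{\ell>L}\lambda_\ell H_{\ell k}^2\leq\lambda_L^{-1}\sum_{\ell>L}\lambda_\ell^2 H_{\ell k}^2$ that introduces the $1/\lambda_L$ factor; taking expectations and linking the result back to $\bE_{x\sim\bP}[\kpsq(x,x)]=\sum_{\ell,k}\lambda_\ell\lambda_k\bE_{x\sim\bP}[\phi_\ell^2\phi_k^2]$ produces the $\tfrac{\Mis^2\bE_{x\sim\bP}[\kpsq(x,x)]}{n^2\lambda_L}$ term.

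To handle $\cE^c$, I would apply the crude bound $\mmd^2_{\kp}(\bS_n^{\wopt},\bP)\leq 2\kpnormSn+2\bE_{x\sim\bP}[\kp(x,x)]$ and Cauchy--Schwarz in expectation against $\ind{\cE^c}$, producing the $\eps_n\bE[\kpsq(x_1,x_1)]$ overhead. The four components of $\eps_n^2$ arise from bounding $\Pr(\cE^c)$ via (i) the truncated Bernstein inequality (\cref{lem:trunc_bernstein_ineq}) on $v$ under the truncation $\kpnormSn\leq B_n$, with a union bound over $i\in[n]$ producing the $n\exp(\cdot)$ factor; (ii) a Bernstein bound on $\tfrac{1}{n}\sum_i\impw(x_i)-1$; (iii) the U-statistic Bernstein bound (\cref{lem:u_stats_bernstein}) on $\opnorm{G-I_L}$ ensuring $\lambda_{\min}(G)\geq 1/2$; and (iv) the raw tail $\Pr(\kpnormSn>B_n)\leq\delta_n$. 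The main obstacle is the absence of uniform bounds on the eigenfunctions $(\phi_\ell)$---a hypothesis of \citet[Thm.~3.3]{liu2017black} we explicitly remove---which precludes a direct Bernstein argument and forces the $\kp(x,x)\leq B_n$ truncation together with a delicate coupling of $B_n$, $L$, and the resulting $\eps_n$; a secondary subtlety is verifying $w^{\star}\in\simplex$ on $\cE$, which follows once $\snorm{c}_{\infty}$ concentrates below the minimum self-normalized importance weight $\min_i\tilde{w}_i\gtrsim 1/(n\Mis)$.
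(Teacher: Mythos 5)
Your overall strategy---construct an explicit near-feasible reweighting of the importance weights that kills the leading eigenfunction moments, bound the tail via the Mercer expansion, and absorb a bad event using truncated and U-statistic Bernstein inequalities---is the same in spirit as the paper's, which uses the split-sample control-variate weights of Liu and Lee. But several steps as written do not go through. First, your ``spectral truncation'' inequality $\sum_{\ell>L}\lambda_\ell H_{\ell k}^2\le\lambda_L^{-1}\sum_{\ell>L}\lambda_\ell^2 H_{\ell k}^2$ is reversed: for $\ell>L$ one has $\lambda_\ell^2\le\lambda_L\lambda_\ell$, so the right-hand side is the \emph{smaller} quantity. The $1/\lambda_L$ factor must instead enter through the head index $k\le L$, via $\phi_k(x)^2\le\kp(x,x)/\lambda_L$ (this is exactly how the paper extracts $\bE_{x\sim\bP}[\kpsq(x,x)]/\lambda_L$), not through the tail index. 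Second, your correction coefficients $c=G^{-1}v$ are built from the same sample as the tail moments $H_{\ell k}$, so $\bE[(\sum_{k\le L}c_kH_{\ell k})^2]$ does not factorize and your variance computation is not justified; handling this requires either conditioning with higher moments or the paper's device of sample splitting into $\cD_0$ and $\cD_1$, which makes the cross terms vanish exactly and avoids any random matrix inversion. Relatedly, guaranteeing that $G$ is invertible with $\snorm{G-I_L}$ small requires an $L\times L$ matrix concentration bound whose failure probability carries an $L$-dependent prefactor; no such factor appears in the stated $\eps_n$, and this matters downstream since the theorem is applied with $L=L_n\to\infty$.

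Third, your $w^{\star}$ is not in the simplex even on the good event: by construction $\sum_i w^{\star}_i=1-v^\top G^{-1}v\neq 1$ (and there is also a normalization mismatch---with self-normalized $\tilde w$ but unnormalized $v$, the claimed exact zeroing $\bS_n^{w^{\star}}\phi_\ell=0$ for $\ell\le L$ fails). You must renormalize, prove the normalizer is bounded away from zero with high probability, and track the resulting constant; this is precisely where the paper's Hoeffding term $2\exp(-n/(16\Mis^2))$ and the leading factor $8\Mis$ (rather than $2\Mis$) come from, and only after that can you invoke optimality of $\wopt$. Finally, on the bad event your crude bound uses $\kpnormSn$; after Cauchy--Schwarz this yields $\sqrt{\bE[\max_{i\in[n]}\kp(x_i,x_i)^2]}$, which is a maximum over the whole sample, can grow with $n$, and is not controlled by $\bE[\kpsq(x_1,x_1)]$. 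The repair is to compare $\wopt$ against a single-point weight, e.g.\ $\delta_{x_1}$, giving $\mmd^2_{\kp}(\bS_n^{\wopt},\bP)\le\kp(x_1,x_1)$ and hence the stated $\eps_n$ overhead.
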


  Given \cref{thm:iid_convex_spectral},  \cref{thm:iid_convex} follows, i.e., we have $\bE[\mmd^2_{\kp}(\bS_n^{\wopt}, \bP)] = o(n^{-1})$, as long as we can show (i) $\bE[\kpsq(x_1,x_1)] < \infty$, which in turn holds when $\fq > 3$ as assumed in \cref{thm:iid_convex}, and (ii) find sequences $(B_n)_{n=1}^\infty$, $(\delta_n)_{n=1}^\infty$, and $(L_n)_{n=1}^\infty$ such that $\Pr(\kpnormSn > B_n) < \delta_n$ for all $n$ and the following conditions are met:
  \begin{enumerate}[label=(\alph*)]
  \itemsep0em
    \item\label{itm:cvx_reweight_cond1} $\frac{\bE_{x\sim\bP}[\kpsq(x,x)]}{\lambda_{L_n}} = o(n)$;
    \item\label{itm:cvx_reweight_cond2} $\frac{B_n}{\lambda_{L_n}} = O(n^\beta)$, for some $\beta < 1$; 
    \item\label{itm:cvx_reweight_cond3} $\sum_{\ell > L_n} \lambda_\ell = o(1)$;
    \item\label{itm:cvx_reweight_cond4} $\delta_n = o(n^{-2})$.
  \end{enumerate}

  We now proceed to establish these conditions under the assumptions of \cref{thm:iid_convex}.
  
  \para{Condition \cref{itm:cvx_reweight_cond4}}
  By the de La Vall\'{e}e Poussin Theorem \citep[Thm.~1.3]{chandra2015vallee} applied to the $\bQ$-integrable function $x \mapsto \kp(x,x)^\fq$ (which is a uniformly integrable family with one function), there exists a convex increasing function $G$ such that $\lim_{t\to\infty} \frac{G(t)}{t} = \infty$ and $\bE[G(\kp(x_1,x_1)^\fq)] < \infty$.
  Thus,
\begin{talign}
  \Pr(\kp(x_1,x_1) > n^{3/\fq}) = \Pr(\kp(x_1,x_1)^{\fq} > n^3) &= \Pr(G(\kp(x_1,x_1))^\fq > G(n^3)) \\
                                                                &\le \frac{\bE[G(\kp(x_1,x_1))^\fq]}{G(n^3)} = o(n^{-3}),
\end{talign}
where the last step uses $\lim_{t\to\infty} \frac{G(t)}{t} = \infty$.
Hence by the union bound, 
\begin{talign}
  \Pr(\kpnormSn > n^{3/\fq}) = \Pr(\max_{i\in[n]}\kp(x_i,x_i) > n^{3/\fq}) \le n\Pr(\kp(x_1,x_1) > n^{3/\fq}) = o(n^{-2}).
\end{talign}
Hence if we set $B_n = n^{\tau}$ for $\tau\defeq 3/\fq < 1$, there exists $(\delta_n)_{n=1}^\infty$ such that $\delta_n = o(n^{-2})$.
This fulfills \ref{itm:cvx_reweight_cond4} and that $\Pr(\kpnormSn > B_n) < \delta_n$.

To prove remaining conditions, without loss of generality, we assume that $\lambda_\ell > 0$ for all $\ell\in \bN$, since otherwise we can choose $L_n$ to be, for all $n$, the largest $\ell$ such that $\lambda_\ell > 0$. Then $\sum_{\ell > L_n} \lambda_{L_n} = 0$ and all other conditions are met.

\para{Condition~\cref{itm:cvx_reweight_cond3}}
If $L_n \to \infty$, then \ref{itm:cvx_reweight_cond3} is fulfilled since $\sum_\ell {\lambda_\ell} < \infty$, which follows from \cref{lem:general_mercer}\ref{itm:general_mercer:series_rep} and that
\begin{talign}
  \sum_{\ell}\lambda_\ell = \sum_{\ell=1}^\infty \lambda_i \bE_{x\sim\bP}[\phi_\ell(x)^2]
  = \sum_{\ell=1}^\infty \lambda_i \bE_{x\sim\bP}[\phi_\ell(x)^2] 
  =\bE_{x\sim\bP}[\sum_{\ell=1}^\infty \lambda_i \phi_\ell(x)^2] =\bE_{x\sim\bP}[\kp(x,x)] < \infty.
\end{talign}

\para{Conditions~\cref{itm:cvx_reweight_cond1,itm:cvx_reweight_cond2}} Note that the condition \ref{itm:cvx_reweight_cond1} is subsumed by \ref{itm:cvx_reweight_cond2} since $\bE_{x\sim\bP}[\kpsq(x,x)] < \infty$.
  It remains to choose $(L_n)_{n=1}^\infty$ to satisfy \ref{itm:cvx_reweight_cond2} such that $\lim_{n\to\infty}L_n = \infty$.
  Define $L_n \defeq \max\{\ell\in\bN: \lambda_\ell \ge n^{\frac{\tau - 1}{2}}\}$.
  Then $L_n$ is well-defined for $n \ge (\frac{1}{\lambda_1})^{\frac{2}{1-\tau}}$, since for such $n$ we have $\lambda_1 \ge  n^{\frac{\tau-1}{2}}$.
  Hence for $n \ge (\frac{1}{\lambda_1})^{\frac{2}{1-\tau}}$, we have
  \begin{talign}
    \frac{B_n}{\lambda_{L_n}} \le\frac{n^\tau}{n^{\frac{\tau-1}{2}}} = n^{\frac{\tau+1}{2}},
  \end{talign}
  so \ref{itm:cvx_reweight_cond2} is satisfied with $\beta = \frac{\tau+1}{2} < 1$.
  Since $\tau < 1$, $L_n$ is non-decreasing in $n$ and  $n^{\frac{\tau-1}{2}}$ decreases to $0$.
  Since each $\lambda_\ell > 0$, we therefore have $\lim_{n\to\infty}L_n = \infty$.
  \qed

\subsection{\pcref{thm:iid_convex_spectral}}\label{sec:thm:iid_convex_spectral}
We will slowly build up towards proving \cref{thm:iid_convex_spectral}.
First notice $\bE_{x\sim\bP}[\kpsq(x,x)] < \infty$ implies $\bE_{x\sim\bP}[\kp(x,x)] < \infty$, so \cref{lem:general_mercer} holds.
Fix any $L\in\bN$ satisfying $\lambda_L > 0$.
Since $n$ is even, we can define $\cD_0 \defeq [n/2]$ and $\cD_1 \defeq [n] \setminus \cD_0$.
We will use $\cS_{\cD_0}$ and $\cS_{\cD_1}$ to denote the subsets of $\cS_n$ with indices in $\cD_0$ and $\cD_1$ respectively.
Let $(\phi_\ell)_{\ell=1}^\infty \subset \rkhs[\kp]$ be eigenfunctions corresponding to the eigenvalues $(\lambda_\ell)_{\ell=1}^\infty$ by \cref{lem:general_mercer}\ref{itm:general_mercer:ons}, so that $({\phi_\ell})_{\ell=1}^\infty$ is an orthonormal system of $\cL^2(\bP)$. %

We start with a useful lemma.
\begin{lemma}[{$\rkhs[{\kp}]$} consists of mean-zero functions]\label{lem:zero_mean_rkhs_kp}
    Let $\kp$ be a kernel satisfying \cref{assum:mean_zero_p}.
    Then for any $f \in \rkhs[\kp]$, we have $\bP f = 0$.
\end{lemma}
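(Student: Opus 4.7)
The plan is to use a Bochner integral argument together with the reproducing property. The key idea is to write $\bP f$ as an inner product in $\rkhs[\kp]$ against the mean element $\mu \defeq \bE_{x\sim\bP}[\kp(x,\cdot)]$ and then observe that $\mu$ is the zero element of $\rkhs[\kp]$ courtesy of the assumption $\bP\kp\equiv 0$.

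First I would verify the integrability needed for the Bochner integral to be well-defined. By the reproducing property, $\snorm{\kp(x,\cdot)}_{\kp} = \sqrt{\kp(x,x)}$. Since \cref{assum:mean_zero_p} provides $\bE_{x\sim\bP}[\kp(x,x)^\fp]<\infty$ for some $\fp\geq 1/2$, Lyapunov's inequality (applied with exponent $s=2\fp\geq 1$ to the nonnegative random variable $\sqrt{\kp(x,x)}$) yields
\[
\bE_{x\sim\bP}\bigl[\snorm{\kp(x,\cdot)}_{\kp}\bigr] \;=\; \bE_{x\sim\bP}\bigl[\sqrt{\kp(x,x)}\bigr] \;\leq\; \bE_{x\sim\bP}[\kp(x,x)^\fp]^{1/(2\fp)} \;<\; \infty,
\]
so $\mu$ is a well-defined element of $\rkhs[\kp]$. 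Moreover, Cauchy--Schwarz in $\rkhs[\kp]$ gives $|f(x)|\leq \snorm{f}_{\kp}\sqrt{\kp(x,x)}$ for every $f\in\rkhs[\kp]$, confirming that $\bP f$ itself is well-defined.

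Second, I would identify $\mu$ pointwise. Because the evaluation functional $g\mapsto g(y)=\langle g,\kp(y,\cdot)\rangle_{\kp}$ is a bounded linear functional on $\rkhs[\kp]$, it commutes with the Bochner integral, so
\[
\mu(y) \;=\; \bE_{x\sim\bP}[\kp(x,y)] \;=\; (\bP\kp)(y) \;=\; 0 \qquad\text{for every }y\in\bR^d,
\]
where the last equality is exactly the hypothesis $\bP\kp\equiv 0$ from \cref{assum:mean_zero_p}. Hence $\mu=0$ as an element of $\rkhs[\kp]$. Applying the same Bochner--linear-functional commutation with the bounded functional $g\mapsto\langle f,g\rangle_{\kp}$, I would conclude, for any $f\in\rkhs[\kp]$,
\[
\bP f \;=\; \bE_{x\sim\bP}\bigl[\langle f,\kp(x,\cdot)\rangle_{\kp}\bigr] \;=\; \langle f,\mu\rangle_{\kp} \;=\; 0.
\]

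There is no real obstacle here: the only non-routine points are (i) checking that the mean element $\mu$ exists in $\rkhs[\kp]$, which reduces to the Lyapunov step above, and (ii) invoking the standard fact that bounded linear operators commute with Bochner integrals, which applies to both the evaluation functional and the inner product with $f$.
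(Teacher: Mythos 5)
Your argument is essentially sound and takes a genuinely different route from the paper's. The paper never constructs the mean element: it shows the functional $I(f)\defeq\bP f$ is bounded on $\rkhs[\kp]$ (via the same Cauchy--Schwarz bound $|f(x)|\le\norm{f}_{\kp}\sqrt{\kp(x,x)}$ and integrability of $\sqrt{\kp(x,x)}$ that you use), invokes the Riesz representation theorem to obtain a representer $g\in\rkhs[\kp]$, and then concludes $\bP f=\langle f,g\rangle_{\kp}=0$ by approximating $f$ with finite linear combinations of kernel sections, which are mean-zero by linearity and $\bP\kp\equiv0$, and using continuity of the inner product. You instead build the mean embedding $\mu=\bE_{x\sim\bP}[\kp(x,\cdot)]$ as a Bochner integral and annihilate it by pointwise evaluation; your evaluation step $\mu(y)=\bP\kp(y)=0$ is in fact a nice shortcut that would let the paper dispense with its density argument altogether.

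The one genuine caveat is the Bochner integral itself. Bochner integrability of $x\mapsto\kp(x,\cdot)$ requires strong measurability, which by Pettis' theorem means weak measurability \emph{plus} essential separable-valuedness of the map into $\rkhs[\kp]$. \cref{assum:mean_zero_p} assumes only a Borel-measurable kernel with a moment bound; it guarantees neither continuity of $\kp$ nor separability of $\rkhs[\kp]$ (the paper's remark notes separability only \emph{when} $\kp$ is continuous), so your norm bound $\bE_{x\sim\bP}[\snorm{\kp(x,\cdot)}_{\kp}]<\infty$, while correct via your Lyapunov step, does not by itself license the Bochner integral under the lemma's stated hypotheses. The fix is cheap and keeps your structure intact: define $\mu$ not as a Bochner integral but as the Riesz representer of the bounded functional $f\mapsto\bP f$ (bounded exactly by your estimate). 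Then $\mu(y)=\langle\mu,\kp(y,\cdot)\rangle_{\kp}=\bP\kp(y)=0$ for all $y$, so $\mu=0$ and $\bP f=\langle f,\mu\rangle_{\kp}=0$, with no measurability subtleties and no density argument needed.
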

\begin{proof}
    Fix $f \in \rkhs[\kp]$.
    By \citet[Thm~4.26]{steinwart2008support}, $f$ is $\bP$ integrable.
    Consider the linear operator $I$ that maps $f \mapsto \bP f$.
    Since
    \begin{talign}
    \abs{I(f)} = \abs{\bP f} \le \bP \abs{f} = \int \abs{\langle f, \kp(x,\cdot)\rangle_{\kp}} \dd\bP \le \int \norm{f}_{\kp}  \sqrt{\kp(x,x)}\dd\bP = \norm{f}_{\kp} \bE_{x\sim\bP}[\kp(x,x)^{\frac{1}{2}}].
    \end{talign}
    Hence $I$ is a continuous linear operator, so by the Riez representation theorem \citep[Thm.~A.5.12]{steinwart2008support}, there exists $g \in \rkhs[\kp]$ such that $I(h) = \langle h, g\rangle_{\kp}$ for any $h \in \rkhs[\kp]$.
    
    By \citet[Thm.~4.21]{steinwart2008support}, the set 
    \begin{talign}
    H_{\text{pre}} \defeq \left\{\sum_{i=1}^n \alpha_i \kp(\cdot, x_i): n\in\bN, (\alpha_i)_{i\in[n]} \subset \bR, (x_i)_{i\in[n]}\subset \bR^d\right\}
    \end{talign}
    is dense in $\rkhs[\kp]$.
    Note that $H_{\text{pre}}$ consists of mean zero functions under $\bP$ by linearity.
    So there exists $f_n$ converging to $f$ in $\rkhs[\kp]$ where each $f_n$ has $\bP f_n = I(f_n) = \langle f_n, g \rangle_{\kp} = 0$.
    Since
    \begin{talign}
       \lim_{n\to\infty} \abs{\langle f, g \rangle_{\kp} - \langle f_n, g \rangle_{\kp}} = \lim_{n\to\infty}\abs{\langle f - f_n, g \rangle_{\kp}} \le \lim_{n\to\infty}\norm{f-f_n}_{\kp} \norm{g}_{\kp} = 0,
    \end{talign}
    we have $\bP f= \langle f, g\rangle_{\kp} = 0$.
\end{proof}
In particular, the assumption $\bE_{x\sim\bP}[\kpsq(x,x)] < \infty$ of \cref{thm:iid_convex_spectral} implies $\bE_{x\sim\bP}[\kp(x,x)^{\frac{1}{2}}] < \infty$, so \cref{lem:zero_mean_rkhs_kp} holds. 

\para{Step 1. Build control variate weights}

Fix any $L \geq 1$ and $h \in \rkhs[\kp]$, and let $\hat{h}_{\cD_0}$ denote the eigen-expansion truncated approximation of $h$ based on $\cD_0$, 
\begin{talign}
  \hat{h}_{\cD_0}(x) \defeq \sum_{\ell=1}^L \hat{\beta}_{\ell, 0} \phi_\ell(x)\qtext{for}
  \hat{\beta}_{\ell, 0} \defeq \frac{2}{n}\sum_{i\in\cD_0} h(x_i) \phi_\ell(x_i) \wis(x_i).
\end{talign}
Then
\begin{talign}
  \bE[\hat{\beta}_{\ell, 0}] &= \bE\left[\frac{2}{n}\sum_{i\in\cD_0} h(x_i) \phi_\ell(x_i) \wis(x_i)\right] = \langle {h}, {\phi_\ell} \rangle_{\cL^2(\bP)}.\label{eqn:E_beta_hat}
\end{talign}
Next, define the control variate 
\begin{talign}\label{eqn:Z0_cv}
  \hat{Z}_0[h] = \frac{2}{n} \sum_{i\in D_1} \left( \wis(x_i) (h(x_i) - \hat{h}_{\cD_0}(x_i))\right).
\end{talign}
which satisfies
\begin{talign}\label{eq:mean-zero-cv}
  \bE[\hat{Z_0}[h]] = \bE_{x \sim \bP} \left[ h(x) - \sum_{\ell=1}^L \bE[\hat{\beta}_{\ell,0}]\phi_\ell(x) \right] = 0,
\end{talign}
since functions in $\rkhs[\kp]$ have mean $0$ with respect to $\bP$ (\cref{lem:zero_mean_rkhs_kp}).
Similarly, we define $\hat{Z_1}[h]$ by swapping $\cD_0$ and $\cD_1$.
Then we form $\hat{Z}[h] \defeq \frac{\hat{Z}_0[h] + \hat{Z}_1[h]}{2}$.
We can rewrite $\hat{Z}[h]$ as a quadrature rule over $\cS_n$ \citep[Lemma B.6]{liu2017black}
\begin{talign}
  \hat{Z}[h] = \sum_{i\in[n]} w_i h(x_i),\label{eqn:cv_weight_exp}
\end{talign}
where $w_i$ is defined as (whose randomness depends on the randomness in $\cS_n$)
\begin{talign}
  w_i \defeq \left\{\begin{array}{cc}
      \frac{1}{n}\wis(x_i) - \frac{2}{n^2}\sum_{j\in\cD_1} \wis(x_i) \wis(x_j)\langle \Phi_L(x_i), \Phi_L(x_j) \rangle, \forall i \in \cD_0,\\
      \frac{1}{n}\wis(x_i) - \frac{2}{n^2}\sum_{j\in\cD_0} \wis(x_i) \wis(x_j)\langle \Phi_L(x_i), \Phi_L(x_j) \rangle, \forall i \in \cD_1,
  \end{array}\right. \label{eqn:cv_weights}
\end{talign}
and $\Phi_L(x) \defeq (\phi_1(x),\ldots,\phi_L(x))$. 

\para{Step 2. Show $\bE[\mmd^2_{\kp}(\bS_n^w, \bP)] = o(n^{-1})$}

We first bound the variance of the control variate $\hat{Z}_0[h]$ for $h = \phi_{\ell'}$ for $\ell' \in \bN$.
Let us fix $\ell' \in \bN$.
From \eqref{eqn:Z0_cv}, we compute
\begin{talign}
  \bE[\hat{Z}_0[h]^2] = \frac{4}{n^2} \bE\left[ \left(\sum_{i \in \cD_1}\wis(x_i)(h(x_i) - \hat{h}_{\cD_0}(x_i))\right)^2 \right] 
                    &= \frac{4}{n^2} \bE \left[\sum_{i \in \cD_1}\wis(x_i)^2(h(x_i)-\hat{h}_{\cD_0}(x_i))^2\right] \\
                      &= \frac{2}{n} \bE[\bE_{x\sim \bQ} [\wis(x)^2(h(x)-\hat{h}_{\cD_0}(x))^2 \vert \cS_{\cD_0}]] \\
                      &= \frac{2}{n} \bE[\bE_{x\sim \bP} [\wis(x)(h(x)-\hat{h}_{\cD_0}(x))^2 \vert \cS_{\cD_0}]] \\ 
                      &\le \frac{2\Mis}{n} \bE[\bE_{x\sim \bP} [(h(x)-\hat{h}_{\cD_0}(x))^2 \vert \cS_{\cD_0}]],
\end{talign}
where in the second equality, the cross terms are zero due to the independence of points $x_i$ and the equality \cref{eq:mean-zero-cv}.
By the definition of $\hat{h}_{\cD_0}$, we compute
\begin{talign}
  \bE_{x\sim \bP} [(h(x)-\hat{h}_{\cD_0}(x))^2 \vert \cS_{\cD_0}] &= \bE_{x \sim \bP}\left[ \left( \phi_{\ell'}(x) - \sum_{\ell \le L} \hat{\beta}_{\ell,0}\phi_\ell(x)  \right)^2\middle\vert \cS_{\cD_0} \right] \\
																																	&= \bE_{x \sim \bP}\left[  \phi_{\ell'}^2(x) + \sum_{\ell \le L} \hat{\beta}^2_{\ell,0}\phi^2_\ell(x) - 2\phi_{\ell'}(x)\sum_{\ell \le L} \hat{\beta}_{\ell, 0}\phi_\ell(x)  \middle\vert \cS_{\cD_0} \right] \\
																																	&= 1 +\sum_{\ell \le L} \hat\beta^2_{\ell, 0} - 2\sum_{\ell \le L} \hat\beta_{\ell,0}\ind{\ell'=\ell} \\
																																	&= 1 +\sum_{\ell \le L} \hat\beta^2_{\ell, 0} - 2\hat\beta_{\ell', 0}\ind{\ell' \le L},
\end{talign}
where we use the fact that $({\phi_\ell})_{\ell=1}^\infty$ is an orthonormal system in $\cL^2(\bP)$.
By \eqref{eqn:E_beta_hat} with $h = \phi_{\ell'}$, we have $\bE[\hat{\beta}_{\ell', 0}] = 1$. 
On the other hand, we can bound, again using the orthonormality of $({\phi_\ell})_{\ell=1}^\infty$,
\begin{talign}
  \bE[\hat{\beta}^2_{\ell, 0}] 
  = \bE\left[\left(\frac{2}{n}\sum_{i\in \cD_0} \phi_{\ell}(x_i)\phi_{\ell'}(x_i)\wis(x_i)\right)^2\right]
  = \frac{4}{n^2}\bE\left[\sum_{i \in \cD_0}(\phi_{\ell}(x_i)\phi_{\ell'}(x_i)\wis(x_i))^2\right] 
  \le \frac{2\Mis}{n}\bE_{x\sim \bP} [(\phi_\ell(x)\phi_{\ell'}(x))^2].
\end{talign}
Thus for all $\ell' \in \bN$,
\begin{talign}
  \bE[\hat{Z}_0[\phi_{\ell'}]^2] \le \frac{2\Mis}{n}\left(1 + \frac{2\Mis}{n}\sum_{\ell \le L}\bE_{x\sim\bP}[(\phi_\ell(x)\phi_{\ell'}(x))^2] - 2 \ind{\ell'\le L}\right)
                                 \le\frac{2\Mis}{n}\left(\frac{2\Mis}{n}\sum_{\ell \le L}\bE_{x\sim\bP}[(\phi_\ell(x)\phi_{\ell'}(x))^2] + \ind{\ell'> L}\right).
\end{talign}
Since $\hat{Z}[h] = \frac{\hat{Z}_0[h] + \hat{Z}_1[h]}{2}$ and $(\frac{a+b}{2})^2 \le \frac{a^2 + b^2}{2}$ for $a,b\in\bR$, and, by symmetry, $\bE[\hat{Z}_0[h]^2] = \bE[\hat{Z}_1[h]^2]$, we have
\begin{talign}
  \bE[\hat{Z}[\phi_{\ell'}]^2] &\le \frac{2\Mis}{n}\left(\frac{2\Mis}{n}\sum_{\ell \le L}\bE_{x\sim\bP}[(\phi_\ell(x)\phi_{\ell'}(x))^2] + \ind{\ell'> L}\right). \label{eqn:Z_phi_var_bound}
\end{talign}

Now we have
\begin{talign}
  \bE[\mmd^2_{\kp}(\bS_n^w, \bP)] = \bE\left[\sum_{i,j\in[n]} w_i w_j \kp(x_i,x_j)\right]  
                                &= \bE\left[\sum_{i,j\in[n]} w_i w_j \sum_{\ell'=1}^\infty \lambda_{\ell'}\phi_{\ell'}(x_i)\phi_{\ell'}(x_j)\right]\\
                                &= \bE\left[\sum_{\ell'=1}^\infty\sum_{i,j\in[n]} w_i w_j  \lambda_{\ell'}\phi_{\ell'}(x_i)\phi_{\ell'}(x_j)\right]\\
                                &= \bE\left[\sum_{\ell'=1}^\infty \lambda_{\ell'}\left(\sum_{i\in[n]} w_i \phi_{\ell'}(x_i)\right)^2\right]\\
                                &= \sum_{\ell'=1}^\infty\lambda_{\ell'} \bE\left[\left(\sum_{i\in[n]} w_i \phi_{\ell'}(x_i)\right)^2\right]
                         = \sum_{\ell'=1}^\infty \lambda_{\ell'}\bE[\hat{Z}[\phi_{\ell'}]^2],
\end{talign}
where the second and third equalities are due to the absolute convergence of the Mercer series (\cref{lem:general_mercer}\ref{itm:general_mercer:series_rep}), the fourth equality follows from Tonelli's theorem \citep[Thm.~A.3.10]{steinwart2008support}, and the last step is due to \eqref{eqn:cv_weight_exp}.
Plugging in \eqref{eqn:Z_phi_var_bound}, we have
\begin{talign}
  \bE[\mmd^2_{\kp}(\bS_n^w, \bP)] &\le \frac{2\Mis}{n}\left( \frac{2\Mis}{n} \sum_{\ell'= 1}^\infty\sum_{\ell \le L}\lambda_{\ell'}\bE_{x\sim\bP}[(\phi_{\ell}(x)\phi_{\ell'}(x))^2] + \sum_{\ell > L} \lambda_\ell \right).
\end{talign}
Since the eigenvalues are nonnegative and non-increasing, we can write, by \eqref{eqn:mercer_series_rep}, 
\begin{talign}
  \kpsq(x,x) &= \left(\sum_{\ell=1}^\infty \lambda_\ell \phi_{\ell}(x)^2 \right)^2 \ge \sum_{\ell'=1}^\infty\sum_{\ell\le L} \lambda_{\ell'}\lambda_{\ell}(\phi_\ell(x)\phi_{\ell'}(x))^2 
             \ge \lambda_L \sum_{\ell'=1}^\infty\sum_{\ell\le L} \lambda_{\ell'}(\phi_\ell(x)\phi_{\ell'}(x))^2.
\end{talign}
Thus by Tonelli's theorem \citep[Thm.~A.3.10]{steinwart2008support},
\begin{talign}
  \sum_{\ell'= 1}^\infty\sum_{\ell \le L}\lambda_{\ell'}\bE_{x\sim\bP}[(\phi_{\ell}(x)\phi_{\ell'}(x))^2] &= \bE_{x\sim\bP}\left[\sum_{\ell'= 1}^\infty\sum_{\ell \le L}\lambda_{\ell'}(\phi_{\ell}(x)\phi_{\ell'}(x))^2\right]
                                                                                                          \le \frac{\bE_{x\sim\bP}[\kpsq(x,x)]}{\lambda_L}.
\end{talign}
Finally, we have
\begin{talign}
  \bE[\mmd^2_{\kp}(\bS^w_n, \bP)] &\le \frac{2\Mis}{n}\left( \frac{2\Mis}{n} \frac{\bE_{x\sim\bP}[\kpsq(x,x)]}{\lambda_L}  + \sum_{\ell > L} \lambda_\ell \right).\label{eqn:mmd_kp_random_w_bound}
\end{talign}

\para{Step 3. Meet the non-negative constraint}

We now show that the weights \eqref{eqn:cv_weights} are nonnegative and sum close to one with high probability.
For $i \in \cD_0$, we have
\begin{talign}
  w_i = \frac{1}{n}\wis(x_i)\left(1 - T_i\right)
  \qtext{for}
  T_i \defeq \frac{2}{n} \sum_{j\in\cD_1}\wis(x_j)\langle \Phi_L(x_i), \Phi_L(x_j) \rangle.
  \label{eqn:cv_weights_D0}
\end{talign}
Our first goal is to derive an upper bound for $\Pr\left(\min_{i\in\cD_0} w_i < 0\right)$.
Define the event 
\begin{talign}
  E_n \defeq \left\{\kpnormSn \le B_{n}\right\},\label{eqn:kp_bound_event}
\end{talign}
so $\Pr(E_n^c) < \delta_{n}$ by the assumption on $\kpnormSn$.
Then
\begin{talign}
    \Pr\left(\min_{i\in [n]} w_i < 0, E_n\right) &= \Pr\left(\max_{i\in[n]} T _i > 1, E_n\right) 
    \le n\Pr(T_1\ind{E_n} > 1),\label{eqn:loc:union_delta_bound}
\end{talign}
where we applied the union bound and used the fact that $T_i\ind{E_n}$ has the same law for different $i$.
To further bound $\Pr(T_1\ind{E_n} > 1)$, we will use the following lemma.
\begin{lemma}[Truncated Bernstein inequality]\label{lem:trunc_bernstein_ineq}
  Let $X_1,\ldots,X_n$ be i.i.d. random variables with $\bE[X_1] = 0$ and $\bE[X_1^2] < \infty$.
  For any $B > 0$, $t > 0$,
  \begin{talign}
    \Pr\left( \frac{1}{n}\sum_{i\in[n]} X_i\ind{X_i\le B} > t \right) \le \exp\left(\frac{-nt^2}{2(\bE[X_1^2]+\frac{Bt}{3})}\right).
  \end{talign}
\end{lemma}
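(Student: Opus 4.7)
The plan is to apply the classical Chernoff--Bernstein machinery to the truncated variables $Y_i \defeq X_i \ind{X_i \le B}$, which are bounded above by $B$ and, crucially, have nonpositive mean: $\bE[X_i] = 0$ implies $\bE[Y_i] = -\bE[X_i \ind{X_i > B}] \le 0$, since $X_i \ind{X_i > B} \ge B\ind{X_i > B} \ge 0$. This sign fact is exactly what lets me avoid centering $Y_i$ (which would inflate the effective upper bound from $B$ to $B - \bE[Y_i] \ge B$) and still obtain the target Bernstein denominator $\bE[X_1^2] + Bt/3$. The variance estimate $\bE[Y_i^2] \le \bE[X_1^2]$ is also immediate.

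The main step is the MGF bound: for every $\lambda \in (0, 3/B)$,
\begin{talign*}
\bE[e^{\lambda Y_i}] \le \exp\Big( \tfrac{\lambda^2 \bE[X_1^2]}{2(1 - \lambda B/3)}\Big).
\end{talign*}
I will deduce this from the pointwise inequality $e^{\lambda x} \le 1 + \lambda x + \lambda^2 x^2 \phi(\lambda B)$, valid for all $x \le B$ and $\lambda > 0$, where $\phi(y) \defeq (e^y - 1 - y)/y^2$. The integral representation $\phi(y) = \int_0^1 (1-s) e^{sy} \, ds$ makes it clear that $\phi$ is nondecreasing on $\bR$, so $\phi(\lambda x) \le \phi(\lambda B)$ whenever $\lambda x \le \lambda B$. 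Taking expectations, the linear term $\lambda \bE[Y_i]$ is nonpositive and can be dropped, the quadratic term is controlled by $\bE[Y_i^2] \le \bE[X_1^2]$, and the elementary bound $(k+2)! \ge 2 \cdot 3^k$ for $k \ge 0$ yields $\phi(\lambda B) \le 1/(2(1 - \lambda B/3))$ for $\lambda B < 3$. Combining these ingredients and using $1 + u \le e^u$ produces the MGF bound.

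With the MGF in hand, Chernoff and independence give
\begin{talign*}
\Pr\Big(\tfrac{1}{n}\sum_{i=1}^n Y_i > t\Big) \le \exp\Big( -n\lambda t + \tfrac{n \lambda^2 \bE[X_1^2]}{2(1 - \lambda B/3)}\Big) \quad \text{for every } \lambda \in (0, 3/B),
\end{talign*}
and the standard Bernstein minimizer $\lambda^* = t / (\bE[X_1^2] + Bt/3)$ (which lies in $(0, 3/B)$ automatically) collapses the right-hand side to the claimed $\exp(-nt^2/(2(\bE[X_1^2] + Bt/3)))$. The degenerate case $\bE[X_1^2] = 0$ forces $X_i \equiv 0$ and the inequality is trivial.

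I expect no serious obstacle; the only subtle point is recognizing that the sign of $\bE[Y_i]$ is precisely what lets me skip the centering step and keep the $Bt/3$ factor clean in the Bernstein denominator. Everything else is the textbook optimization of the Bernstein MGF.
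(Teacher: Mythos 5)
Your proof is correct and follows essentially the same route as the paper: truncate to $Y_i = X_i\ind{X_i\le B}$, use $Y_i\le B$, $\bE[Y_i]\le 0$, and $\bE[Y_i^2]\le\bE[X_1^2]$, then apply the one-sided Bernstein machinery. The only difference is cosmetic: the paper compares $\sum Y_i$ to the centered sum $\sum(Y_i-\bE[Y_i])$ and cites the one-sided Bernstein inequality as a black box, whereas you absorb the nonpositive mean directly at the MGF level and re-derive the Bernstein MGF bound and Chernoff optimization from scratch, which is a valid, self-contained version of the same argument.
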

\begin{proof}[Proof of \cref{lem:trunc_bernstein_ineq}]
Fix any $B> 0$ and $t>0$ and define, for each $i\in[n]$,
$
  Y_i \defeq X_i \ind{X_i \le B}.
$
Then $Y_i \le B$, 
\begin{talign}
  \bE[Y_i] &= \bE[X_i\ind{X_i \le B}] \le \bE[X_i\ind{X_i \le B}] + \bE[X_i\ind{X_i > B}] = \bE[X_i] = 0, \qtext{and}\\
  \bE[Y_i^2] &= \bE[X_i^2\ind{X_i \le B}] \le \bE[X_i^2] = \bE[X_1^2].
\end{talign}
Now we can invoke the non-positivity of $\bE[Y_i]$, the one-sided Bernstein inequality \citep[Prop.~2.14]{wainwright2019high}, and the relation $\bE[Y_i^2]\leq \bE[X_1^2]$ to conclude that 
\begin{talign}
  \Pr\left(\frac{1}{n}\sum_{i\in[n]} Y_i > t\right) &\le \Pr\left(\frac{1}{n}\sum_{i\in[n]} \left(Y_i - \bE[Y_i] \right) > t\right) 
  \le \exp\left(\frac{-nt^2}{2(\frac{1}{n}\sum_{i\in[n]}\bE[Y_i^2]+\frac{Bt}{3})}\right) 
  \le \exp\left(\frac{-nt^2}{2(\bE[X_1^2]+\frac{Bt}{3})}\right).
\end{talign}
\end{proof}
  
For $j \in \cD_1$, define 
$
  X_j \defeq \wis(x_j)\langle \Phi_L(x_1), \Phi_L(x_j)\rangle
$
and note that 
\begin{talign}
  \bE[X_j \vert x_1] &= \bE_{x\sim\bQ}[\wis(x)\langle \Phi_L(x_1), \Phi_L(x) \rangle\vert x_1] = \bE_{x\sim\bP}[\langle \Phi_L(x_1), \Phi_L(x) \rangle \vert x_1] = 0 \\
  \bE[X_j^2 \vert x_1] &= \bE[\wis(x_j)^2 \langle \Phi_L(x_1), \Phi_L(x_j)\rangle^2 \vert x_1]
             \le \Mis \bE_{x\sim\bP}[\langle \Phi_L(x_1), \Phi_L(x)\rangle^2\vert x_1] \\
             &=\Mis \bE_{x\sim\bP}\left[\sum_{\ell,\ell'\le L}\phi_\ell(x_1) \phi_{\ell'}(x_1) \phi_\ell(x)\phi_{\ell'}(x)\middle\vert x_1\right] \\
             &=\Mis \sum_{\ell,\ell'\le L}\phi_\ell(x_1) \phi_{\ell'}(x_1) \bE_{x\sim\bP}\left[\phi_\ell(x)\phi_{\ell'}(x)\right] \\
             &= \Mis \twonorm{\Phi_L(x_1)}^2.
\end{talign}

Since $\lambda_1\ge \lambda_2\ge \cdots \ge 0$, for any $x \in \bR^d$, we can bound $\twonorm{\Phi_L(x)}^2$ via
\begin{talign}
  \twonorm{\Phi_L(x)}^2 &= \sum_{\ell\le L} \phi_\ell(x)^2 \le \frac{\sum_{\ell\le L} \lambda_\ell\phi_\ell(x)^2}{\lambda_L} 
                     \le \frac{\sum_{\ell=1}^\infty\lambda_\ell\phi_\ell(x)^2}{\lambda_L} = \frac{\kp(x,x)}{\lambda_L},\label{eqn:Phi_L_norm_bound}
\end{talign}
where we applied \cref{lem:general_mercer}\ref{itm:general_mercer:series_rep} for the last equality.
Thus
\begin{talign}
  \abs{X_j} \le \Mis \twonorm{\Phi_L(x_1)}\twonorm{\Phi_L(x_j)} \le \Mis \twonorm{\Phi_L(x_1)} \sqrt{\frac{\kp(x_j,x_j)}{\lambda_L}},
\end{talign}
so if we let 
$
  B \defeq \sqrt{\frac{B_{n}}{\lambda_L}}\Mis\twonorm{\Phi_L(x_1)},
$ 
then 
\begin{talign}
  E_n = \left\{\sup_{i\in[n]}\kp(x_i,x_i) \le B_{n}\right\} \subset \bigcap_{j\in\cD_1}\{\abs{X_j} \le B\}.
\end{talign}
Since $T_1 = \frac{2}{n} \sum_{j\in\cD_1} X_j$,
we have inclusions of events
\begin{talign}
  \{T_1\ind{E_n} > 1\} &= \{T_1 > 1\} \cap E_n 
                      \subset \left\{\frac{2}{n}\sum_{j\in\cD_1} X_j \ind{X_j \le B} > 1\right\}.
\end{talign}
Thus \cref{lem:trunc_bernstein_ineq} with $t=1$ and conditioned on $x_1$ implies
\begin{talign}
  \Pr\left(T_1\ind{E_n} > 1\middle\vert x_1\right) &\le \Pr\left(\frac{2}{n}\sum_{j\in\cD_1} X_j \ind{X_j \le B}> 1 \middle\vert x_1\right)\\
  &\le \exp\left(\frac{-n}{4(\Mis\twonorm{\Phi_L(x_1)}^2+\sqrt{\frac{B_{n}}{\lambda_L}}\Mis\twonorm{\Phi_L(x_1)}/3)}\right).
\end{talign}
On event $\{\kp(x_1,x_1) \le B_{n}\}$, by \eqref{eqn:Phi_L_norm_bound}, we have
\begin{talign}
  \twonorm{\Phi_L(x_1)} \le \sqrt{\frac{B_{n}}{\lambda_L}}.
\end{talign}
Hence 
\begin{talign}
  \Pr\left(T_1\ind{E_n} > 1 \middle\vert x_1\right)\ind{\kp(x_1,x_1) \le B_{n}} \le \exp\left(\frac{-n}{\frac{16}{3}\Mis\frac{B_{n}}{\lambda_L}}\right).
\end{talign}
On the other hand, $\{\kp(x_1,x_1)>B_{n}\} \notin E_n$, so
\begin{talign}
  \Pr\left(T_1\ind{E_n} > 1 \middle\vert x_1\right)\ind{\kp(x_1,x_1) > B_{n}} = 0
\end{talign}
Thus 
\begin{talign}
  \Pr(T_1\ind{E_n} > 1) = \bE[\Pr(T_1\ind{E_n} > 1\vert x_1)] \le \exp\left(\frac{-n}{\frac{16}{3}\Mis\frac{B_{n}}{\lambda_L}}\right).
\end{talign}
Combining the last inequality with \eqref{eqn:loc:union_delta_bound}, we have:
\begin{talign}
  \Pr\left(\min_{i\in [n]} w_i < 0, E_n\right) &\le n\exp\left(\frac{-n}{\frac{16}{3}\Mis\frac{B_{n}}{\lambda_L}}\right).\label{eqn:w_nonneg_bound}
\end{talign}

\para{Step 4. Meet the sum-to-one constraint}

Let
\begin{talign}
  S \defeq \sum_{i\in\cD_0} w_i = \sum_{i\in\cD_0} \frac{1}{n}\wis(x_i)\left(1-\frac{2}{n}\sum_{j\in\cD_1}\wis(x_j)\langle \Phi_L(x_i), \Phi_L(x_j)\rangle\right).
\end{talign}
We now derive a bound for $\Pr(S < 1/2 - t/2)$ for $t \in (0, 1)$.
Let
\begin{talign}
  S_1 \defeq \frac{1}{n}\sum_{i\in\cD_0} \wis(x_i), \quad S_2 \defeq -\frac{2}{n^2} \sum_{i\in\cD_0}\sum_{j\in\cD_1} \wis(x_i)\wis(x_j)\langle\Phi_L(x_i), \Phi_L(x_j)\rangle,
\end{talign}
so $S = S_1 + S_2$.
Note that $\bE[S_1] = 1/2$ and $\bE[S_2] = 0$ since $\cD_0$ and $\cD_1$ are disjoint.
Let $E_n$ be the same event defined as in \eqref{eqn:kp_bound_event}.
For $t_1 \in (0, t/2)$ to be determined later and $t_2 \defeq t/2 - t_1$, we have, by the union bound
\begin{talign}
  \Pr(S < 1/2 - t/2, E_n) \le \Pr(S_1 < 1/2 - t_1, E_n) + \Pr(S_2 < -t_2, E_n).
\end{talign}
By Hoeffding's inequality and the assumption $\wis(x) \le \Mis$, we have
\begin{talign}
  \Pr(S_1 < 1/2 - t_1, E_n) \le \Pr\left(\frac{2}{n}\sum_{i\in\cD_0}\frac{\wis(x_i)}{2} - 1/2 < -t_1\right) &\le \exp\left(\frac{-2(n/2)t_1^2}{(\Mis/2)^2}\right) 
                                                                                                   = \exp\left(\frac{-4nt_1^2}{\Mis^2}\right).\label{eqn:loc:hoeffding_S1_bound}
\end{talign}
To give a concentration bound for $\Pr(S_2 < -t_2, E_n)$, we will use the following lemma.
\begin{lemma}[U-statistic Bernstein's inequality]\label{lem:u_stats_bernstein}
  Let $h: \cX \times \cX \to \bR$ be a function bounded above by $b > 0$. Assume $n\in 2\bN$ and let $x_1,\ldots,x_n$ be i.i.d. random variables taking values in $\cX$.
  Denote $m_h \defeq \bE[h(x_1,x_2)]$ and $\sigma_h^2 \defeq \Var[h(x_1,x_2)]$.
  Let $\cD_0 = [n/2]$ and $\cD_1 = [n]\setminus [n/2]$. Define
  \begin{talign}
    U \defeq \frac{1}{(n/2)^2}\sum_{i\in\cD_0}\sum_{j\in\cD_1} h(x_i, x_j).
  \end{talign}
  Then
  \begin{talign}
    \Pr(U - m_h > t)  \le \exp\left(\frac{-nt^2}{4(\sigma_h^2 + \frac{bt}{3})}\right).
  \end{talign}
\end{lemma}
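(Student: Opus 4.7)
The plan is to reduce $U$ to an average of sums of iid random variables via a symmetrization representation and then apply the MGF form of the one-sided Bernstein inequality.

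First, I would establish the representation
\[
U \;=\; \bE_\pi\!\left[\frac{2}{n}\sum_{i \in \cD_0} h(x_i, x_{\pi(i)})\right],
\]
where $\pi$ ranges uniformly over the $(n/2)!$ bijections from $\cD_0$ to $\cD_1$. This follows by a counting argument: each pair $(i,j) \in \cD_0 \times \cD_1$ is realized by exactly $(n/2-1)!$ bijections, so averaging over $\pi$ recovers the uniform double sum defining $U$.

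Next, for $\lambda \in [0, 3n/(2b))$, Jensen's inequality in the exponential together with Fubini's theorem yields
\[
\bE[e^{\lambda(U - m_h)}] \leq \bE_\pi\,\bE_{x_1,\ldots,x_n}\!\left[\exp\!\left(\frac{2\lambda}{n}\sum_{i\in\cD_0}\bigl(h(x_i, x_{\pi(i)}) - m_h\bigr)\right)\right].
\]
For each fixed $\pi$, the summands $h(x_i, x_{\pi(i)}) - m_h$ indexed by $i \in \cD_0$ are iid (since $\pi$ pairs disjoint indices from $\cD_0$ and $\cD_1$), mean zero, bounded above by $b$, and have variance $\sigma_h^2$. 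Applying the Bernstein MGF bound underlying \citet[Prop.~2.14]{wainwright2019high} to this $n/2$-term sum gives the $\pi$-independent bound $\exp\!\bigl(\tfrac{\lambda^2 \sigma_h^2/n}{1 - 2\lambda b/(3n)}\bigr)$, which transfers to the MGF of $U - m_h$ after the outer average over $\pi$.

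Finally, applying Markov's inequality at $\lambda^{\star} = nt/(2(\sigma_h^2 + bt/3))$ converts this sub-exponential MGF bound into the stated tail inequality $\Pr(U - m_h > t) \leq \exp(-nt^2/(4(\sigma_h^2 + bt/3)))$; a short substitution verifies that $\lambda^\star$ remains in the admissible range $[0,3n/(2b))$ for all $t > 0$. The only nonstandard step is the symmetrization representation, which exploits the bipartite structure of the double sum to decouple the summands; once it is in hand, the Bernstein MGF computation and Chernoff optimization are routine.
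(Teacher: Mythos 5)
Your proposal is correct and follows essentially the same route as the paper: your averaging over bijections $\cD_0\to\cD_1$ is exactly the paper's Hoeffding-style representation $U=\frac{1}{(n/2)!}\sum_\sigma V_\sigma$ of the bipartite sum as an average of sums of $n/2$ i.i.d.\ terms, followed by Jensen, the one-sided Bernstein MGF bound of \citet[Prop.~2.14]{wainwright2019high}, and Chernoff optimization at the same $\lambda^\star$ (the paper's $s=\frac{kt}{\sigma_h^2+bt/3}$ with $k=n/2$). No gaps worth noting.
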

\begin{proof}[Proof of \cref{lem:u_stats_bernstein}]
  We adapt the proof from \citet[Section 3]{pitcan2017note} as follows.
  Let $k \defeq n/2$.
  Define $V: \cX^n \to \bR$ as
  \begin{talign}
    V(x_1,\ldots,x_n) \defeq \frac{1}{k} \sum_{i\in [k]}h(x_i, x_{i+k}).
  \end{talign}
  Then note that
  \begin{talign}
    U &= \frac{1}{k!}\sum_{\sigma \in \perm(k)} V_\sigma, \\
     V_\sigma &\defeq V(x_{\sigma_1}, \ldots, x_{\sigma_{k}}),
  \end{talign}
  where $\perm(k)$ is the set of all permutations of $[k]$; this is because every $h(x_i, x_j)$ term for $i\in\cD_0,j\in\cD_1$ will appear in the summation an equal number of times.
  For a fixed $\sigma \in \perm(k)$, the random variable $V(x_{\sigma_1}, \ldots, x_{\sigma_{k}}, x_{k+1},\ldots,x_{n})$ is a sum of $k$ \iid terms $h(x_{\sigma_i}, x_{i+k})$.
  Denote $V = V(x_1,\ldots,x_n)$.
  For any $s > 0$, we have, by independence,
  \begin{talign}
      \bE[e^{s(V-m_h)}] &= \bE\left[\exp(\frac{s}{k}\sum_{i\in[k]} (h(x_i,x_{i+[k]})-m_h))\right] \\
      & = \left(\bE\left[\exp(\frac{s}{k} (h(x_1,x_{2})-m_h))\right]\right)^{k}
  \end{talign}
    By the one-sided Bernstein's lemma \citet[Prop.~2.14]{wainwright2019high} applied to $\frac{h(x_1,x_2)}{k}$ which is upper bounded by $\frac{b}{k}$ with variance $\frac{\sigma^2_h}{k^2}$, we have
  \begin{talign}
      \bE\left[\exp(s \frac{h(x_1,x_2) - m_h}{k})\right] &\le \exp(\frac{s^2\sigma_h^2/2}{k(k - \frac{bs}{3})}),
  \end{talign}
  for $s \in [0, 3k/b)$.
    Next, by Markov's inequality and Jensen's inequality,
    \begin{talign}
        \Pr(U-m_h > t) &= \Pr(e^{s(U-m_h)} > e^{st}) \le \bE[e^{s(U-m_h)}] e^{-st} \\
        &= \bE\left[\exp(\frac{1}{(n/2)!} \sum_{\sigma\in\perm(n/2)} s(V_\sigma-m_h))\right] e^{-st} \\
        &\le \bE\left[\frac{1}{(n/2)!} \sum_{\sigma\in\perm(n/2)} \exp(s(V_\sigma-m_h))\right] e^{-st} \\
        &= \bE[e^{s(V-m_h)}] e^{-st}.
    \end{talign}
  Therefore,
  \begin{talign}
      \Pr(U-m_h > t)  &\le \exp(\frac{s^2\sigma_h^2}{2(k-\frac{bs}{3})}-st).
  \end{talign}
  Now, we get the desired bound if we pick $s = \frac{k^2 t}{k\sigma_h^2+\frac{ktb}{3}} \in [0,3k/b)$ and simplify.
\end{proof}
Let
\begin{talign}
  h(x,x') &\defeq \wis(x)\wis(x')\langle\Phi_L(x), \Phi_L(x')\rangle \\
  \bar{h}(x, x') &\defeq h(x,x')\ind{h(x,x') \le \Mis^2\frac{B_{n}}{\lambda_L}}.
\end{talign}
Then
\begin{talign}
  \Pr(S_2 < -t_2, E_n) &= \Pr\left(\frac{1}{(n/2)^2} \sum_{i\in\cD_0}\sum_{j\in\cD_1}h(x_i,x_j) > 2t_2, E_n\right)  \\
                 &\le \Pr\left(\frac{1}{(n/2)^2} \sum_{i\in\cD_0}\sum_{j\in\cD_1}\bar{h}(x_i,x_j) > 2t_2\right),\label{eqn:loc:S2_bound}
\end{talign}
where the last inequality used the fact that, for $i \in \cD_0, j \in \cD_1$,
\begin{talign}
  E_n \subset \left\{\max(\kp(x_i, x_i), \kp(x_j, x_j)) \le B_{n}\right\} \subset \left\{h(x_i, x_j) \le \Mis^2\frac{B_{n}}{\lambda_L}\right\},
\end{talign}
using \eqref{eqn:Phi_L_norm_bound}.
We further compute
\begin{talign}
  m_{\bar{h}} &= \bE[\bar{h}(x_1,x_2)] \le \bE[h(x_1,x_2)] = \bE[\wis(x_1)\wis(x_2)\langle\Phi_L(x_1),\Phi_L(x_2)\rangle] \\
              &= \sum_{\ell\le L} \bE[\wis(x_1)\wis(x_2)\phi_\ell(x_1)\phi_\ell(x_2)] \\
              &=\sum_{\ell\le L}(\bE_{x\sim\bP}[\phi_\ell(x)])^2 = 0,
\end{talign}
and
\begin{talign}
  \sigma^2_{\bar{h}} &= \Var[\bar{h}(x_1,x_2)] \le \bE[\bar{h}(x_1,x_2)^2] \le \bE[h(x_1,x_2)^2] \\
                     &= \bE\left[\left(\wis(x_1)\wis(x_2)\langle \Phi_L(x_1),\Phi_L(x_2)\rangle\right)^2\right] \\
                     &\le \Mis^2 \bE_{(x,x')\sim\bP\times\bP}[\langle \Phi_L(x),\Phi_L(x')\rangle^2] \\
                     &=\Mis^2 \bE_{(x,x')\sim\bP\times\bP}\left[\sum_{\ell,\ell'\le L}\phi_\ell(x)\phi_{\ell'}(x)\phi_\ell(x')\phi_{\ell'}(x')\right] \\
                     &= \Mis^2 \sum_{\ell,\ell'\le L} (\bE[\phi_\ell(x)\phi_{\ell'}(x)])^2
                     = L\Mis^2.
\end{talign}
Since $\bE_{x\sim\bP}[\kp(x,x)] = \sum_{\ell} \lambda_\ell \ge L \lambda_L$, we have $L \le \frac{\norm{\kp}^2_{\cL^2(\bP)}}{\lambda_L}$, so that $\sigma_{\bar h}^2 \le \frac{M^2 \norm{\kp}^2_{\cL^2(\bP)}}{\lambda_L}$.
Applying \cref{lem:u_stats_bernstein} to $\bar{h}$, which is bounded by $\Mis^2\frac{B_n}{\lambda_L}$ and using the fact that $m_{\bar h} \le 0$, we have
\begin{talign}
  \Pr\left(\frac{1}{(n/2)^2} \sum_{i\in\cD_0}\sum_{j\in\cD_1}\bar{h}(x_i,x_j) > 2t_2\right) &\le  \Pr\left(\frac{1}{(n/2)^2} \sum_{i\in\cD_0}\sum_{j\in\cD_1}\bar{h}(x_i,x_j) - m_{\bar h} > 2t_2\right) \\
  &\le \exp\left(\frac{-n(2t_2)^2}{4\left(\frac{\Mis^2\norm{\kp}^2_{\cL^2(\bP)}}{\lambda_L} + 2\Mis^2\frac{B_{n}}{\lambda_L}t_2/3\right)}\right).\label{eqn:loc:S2_bound_ex}
\end{talign}
Thus combining \eqref{eqn:loc:hoeffding_S1_bound}, \eqref{eqn:loc:S2_bound}, \eqref{eqn:loc:S2_bound_ex}, we get
\begin{talign}
  \Pr(S < 1/2 - t/2, E_n) \le \exp\left(\frac{-4nt_1^2}{\Mis^2}\right) + \exp\left(\frac{-nt_2^2}{\left(\frac{\Mis^2 \norm{\kp}^2_{\cL^2(\bP)}}{\lambda_L} + 2\Mis^2\frac{B_{n}}{\lambda_L}t_2/3\right)}\right).
\end{talign}

Finally, by symmetry and the union bound, for $t \in (0, 1)$, $t \in (0, t/2)$ and $t_2 = t/2 - t_1$, we have
\begin{talign}
  \Pr\left(\sum_{i\in[n]} w_i < 1-t, E_n\right) &\le \Pr\left(\sum_{i\in \cD_0} w_i < 1/2-t/2, E_n\right) + \Pr\left(\sum_{i\in \cD_1} w_i < 1/2-t/2, E_n\right) \\
  &= 2\Pr(S < 1/2 - t/2, E_n) \\
  &\le 2\left(\exp\left(\frac{-4nt_1^2}{\Mis^2}\right) + \exp\left(\frac{-nt_2^2}{\left(\frac{\Mis^2\norm{\kp}^2_{\cL^2(\bP)}}{\lambda_L} + 2\Mis^2\frac{B_{n}}{\lambda_L}t_2/3\right)}\right)\right). \label{eqn:w_sum_to_one_bound}
\end{talign}

\para{Step 5. Putting it all together}

Define the event
\begin{talign}
  F_n &= \left\{ \min_{i\in [n]} w_i \ge 0, \sum_{i\in[n]}w_i \ge \frac{1}{2} \right\}.
\end{talign}
Then, by the union bound,
\begin{talign}
    \Pr(F_n^c) &\le \Pr(\min_{i\in[n]} w_i < 0, E_n) + \Pr(\sum_{i\in[n]} w_i < \frac{1}{2}, E_n) + \Pr(E_n^c).
\end{talign}
Applying \eqref{eqn:w_nonneg_bound} and \eqref{eqn:w_sum_to_one_bound} to bound the last expression with $t = 1/2$, $t_1=t_2=1/8$, we have $\Pr(F_n^c)\leq \eps_n^2$ for $\eps_n$ defined in \cref{eqn:sub_o(n)_mmd_bound_eps_n}.
On the event $F_n$, if we define $w^+ \in \simplex$ via
\begin{talign}
  w^+_i \defeq \frac{w_i}{\sum_{i\in [n]}w_i},
\end{talign}
then $w^+_i = \alpha w_i$ for $i\in [n]$ and $\alpha \defeq \frac{1}{\sum_{i\in[n]} w_i} \le 2$.
Let $\tilde w \in \simplex$ be the weight defined by $\tilde w_1 = 1$ and $\tilde w_i = 0$ for $i > 1$.

Since $\wopt$ is the best simplex weight, we have $\mmd_{\kp}^2(\bS^{\wopt}_n, \bP) \le \min(\mmd_{\kp}^2(\bS^{w^+}_n, \bP),\mmd_{\kp}^2(\bS^{\tilde w}_n, \bP))$.
Hence 
\begin{talign}
  \bE\left[\mmd^2_{\kp}(\bS^{\wopt}_n, \bP)\right] 
  &= \bE\left[\mmd^2_{\kp}(\bS^{\wopt}_n, \bP)\ind{F_n}\right] + \bE\left[\mmd^2_{\kp}(\bS^{\wopt}_n, \bP)\ind{F_n^c}\right]\\
  &\le \bE\left[\mmd^2_{\kp}(\bS^{w^+}_n, \bP)\ind{F_n}\right] + \bE\left[\mmd^2_{\kp}(\bS^{\tilde w}_n, \bP)\ind{F_n^c}\right]. 
\end{talign}
For the first term, we have the bound
\begin{talign}
  \bE\left[\mmd^2_{\kp}(\bS^{w^+}_n, \bP)\ind{F_n}\right]
  &= \bE \left[\sum_{i,j\in[n]} w_i^+ w_j^+ \kp(x_i, x_j) \ind{F_n}\right] 
  =  \bE \left[\alpha^2 \sum_{i,j\in[n]} w_i w_j \kp(x_i, x_j) \ind{F_n}\right] \\
  &\le 4 \bE \left[\sum_{i,j\in[n]} w_i w_j \kp(x_i, x_j)\right]
  \le \frac{8\Mis}{n}\left( \frac{2\Mis}{n} \frac{\bE_{x\sim\bP}[\kpsq(x,x)]}{\lambda_L}  + \sum_{\ell > L} \lambda_\ell \right),
\end{talign}
where we applied \eqref{eqn:mmd_kp_random_w_bound} for the last inequality.
For the second term, by the Cauchy-Schwartz inequality,
\begin{talign}
  \bE\left[\mmd^2_{\kp}(\bS^{\tilde w}_n, \bP)\ind{F_n^c}\right] &\le \sqrt{\Pr(F_n^c)} \sqrt{\bE\left[\left(\sum_{i,j\in[n]} \kp(x_i,x_j) \tilde w_i \tilde w_j\right)^2\right]} \\
  &\le \sqrt{\Pr(F_n^c)} \sqrt{\bE[\kp(x_1,x_1)^2]}.
\end{talign}
Putting everything together we obtain \eqref{eqn:sub_o(n)_mmd_bound}.
\qed

\section{Stein Kernel Thinning}\label{sec:proof_quad_time}
In this section, we detail our Stein thinning implementation in \cref{sec:st}, our kernel thinning implementation and analysis in \cref{sec:kt}, and our proof of  \cref{thm:GSKT_guarantee} in \cref{sec:thm:GSKT_guarantee_proof}.
\subsection{Stein Thinning with sufficient statistics}\label{sec:st}
For an input point set of size $n$, the original implementation of Stein Thinning of \citet{riabiz2022optimal} takes $O(nm^2)$ time to output a coreset of size $m$.
In \cref{alg:GBC}, we show that this runtime can be improved to $O(nm)$ using sufficient statistics.
The idea is to maintain a vector $g \in \bR^n$ such that $g = 2\kp(\bS_n, \bS_n)w$ where $w$ is the weight representing the current coreset. 
\begin{algorithm}[htb]
  \caption{\GBCfullnotag (\GBCnotag) with sufficient statistics}
  \label{alg:GBC}
  \begin{algorithmic}\itemindent=-.7pc
    \STATE {\bf Input:} kernel $\kp$ with zero-mean under $\bP$, input points $\cS_n=(x_i)_{i\in[n]}$, output size $m$
    \STATE $w \gets \bm{0} \in \bR^n$
    \STATE $j \gets \argmin_{i\in[n]}\kp(x_i, x_i)$
    \STATE $w_j \gets 1$
    \STATE $g \gets 2\kp(\cS_n, x_j)$ \COMMENT{maintain sufficient statistics $g = 2\kp(\cS_n, \cS_n)w$}
    \FOR{$t=1$ {\bf to} $m-1$}\itemindent=-.7pc
    \STATE $j \gets \argmin_{i\in[n]}\{t g_i + \kp(x_i, x_i)\}$
    \STATE $w \gets \frac{t}{t+1}w + \frac{1}{t+1} e_j$
    \STATE $g \gets \frac{t}{t+1}g + \frac{2}{t+1} \kp(\cS_n, x_j)$
    \ENDFOR
    \STATE {\bf Return:} $w$
  \end{algorithmic}
\end{algorithm}

\subsection{Kernel Thinning targeting $\bP$}\label{sec:kt}
\begin{algorithm}[htb]
  \caption{\kttargetfullnotag (\kttargetnotag) (adapted from  \citet[Alg.~1]{dwivedi2022generalized}) }
  \label{alg:kt_target}
  \begin{algorithmic}\itemindent=-.7pc
    \STATE {\bf Input:} kernel $\kp$ with zero-mean under $\bP$, input points $\cS_n=(x_i)_{i\in[n]}$, multiplicity $n'$ with $\log_2 \frac{n'}{m} \in \bN$, weight $w \in \simplex \cap (\frac{\bN_0}{n'})^n$, output size $m$ with $\frac{n'}{m} \in 2^\bN$, failure probability $\delta$
    \STATE $\tS \gets $ index sequence where $k \in [n]$ appears $n'w_k$ times
    \STATE $\ft \gets \log_2\frac{n'}{m} \in \bN$
    \STATE $(\tI^{(\ell)})_{\ell\in[2^\ft]} \gets \ktsplit(\kp, \cS_n[\tS], \ft, \delta/n')$ \COMMENT{\ktsplit is from \citet[Algorithm 1a]{dwivedi2022generalized} and we set $\delta_i=\delta$ for all $i$}
    \STATE $\tI^{(\ell)} \gets \tS[\tI^{(\ell)}]$ for each $\ell \in [2^\ft]$
    \STATE $\tI \gets \ktswaptarget(\kp, \cS_n, (\tI^{(\ell)})_{\ell\in[2^\ft]})$ 
    \STATE $w_{\kttargetnotag} \gets \text{simplex weight corresponding to } \tI$ \COMMENT{$w_i = \frac{\text{number of occurrences of } i \text{ in } \tI}{\abs{\tI}}$}
    \STATE {\bf Return:} $w_{\kttargetnotag} \in \simplex \cap (\frac{\bN_0}{m})^n$ \COMMENT{Hence $\norm{w_{\kttargetnotag}}_0 \le m$}
  \end{algorithmic}
\end{algorithm}

\begin{algorithm}[htb]
  \caption{\ktswaptargetnotag  (modified \citet[Alg.~1b]{dwivedi2022generalized} to minimize $\mmd$ to $\P$) 
	}
  \label{alg:kt_swap_target}
  \begin{algorithmic}\itemindent=-.7pc
    \STATE {\bf Input:} kernel $\kp$ with zero-mean under $\bP$, input points $\cS_n=(x_i)_{i\in[n]}$, candidate coreset indices $(\tI^{(\ell)})_{\ell\in[L]}$
    \STATE $m \gets \abs{\tI^{(0)}}$ \COMMENT{all candidate coresets are of the same size}
    \STATE $\tI \gets \tI^{(\ell^*)}$ for $\ell^* \in \argmin_{\ell\in [L]}\mmd_{\kp}(\cS_n[\tI^{(\ell)}], \bP)$ \COMMENT{select the best \ktsplit coreset}
    \STATE $\tI_{\GBCnotag} \gets $ index sequence of $\GBCfull(\kp, \cS_n, m)$\COMMENT{add Stein thinning baseline}
    \STATE $\tC = \{\tI, \tI_{\GBCnotag}\}$ \COMMENT{shortlisted candidates}
    \FOR{$\tI \in \tC$}\itemindent=-.7pc
    \STATE $g \gets \bm{0} \in \bR^n$
    \COMMENT{maintain sufficient statistics $g = \sum_{j\in[m]} \kp(x_{\tI_j}, \cS_n)$}
    \STATE $\mathtt{Kdiag} \gets (\kp(x_i,x_i))_{i\in[n]}$
    \FOR{$j=1$ {\bf to} $m$}\itemindent=-.7pc
    \STATE $g \gets g + \kp(x_{\tI_j}, \cS_n)$
    \ENDFOR
    \FOR{$j=1$ {\bf to} $m$}\itemindent=-.7pc
    \STATE $\Delta = 2(g -  \kp(x_{\tI_j}, \cS_n))+ \mathtt{Kdiag}$ \COMMENT{this is the change in $\mmd^2_{\kp}(\cS_n[\tI], \bP)$ if we were to replace $\tI_j$}
    \STATE $k \gets \argmin_{i\in[n]}\Delta_i$
    \STATE $g = g - \kp(x_{\tI_j}, \cS_n) + \kp(x_{k}, \cS_n)$
    \STATE $\tI_j \gets k$
    \ENDFOR
    \ENDFOR
    \STATE {\bf Return:} $\tI = \argmin_{\tI\in \tC}\mmd_{\kp}(\cS_n[\tI], \bP)$
  \end{algorithmic}
\end{algorithm}
Our \kttargetfull implementation is detailed in \cref{alg:kt_target}. 
Since we are able to directly compute $\mmd_{\kp}(\bS_n^w, \bP)$, we use \ktswaptarget (\cref{alg:kt_swap_target}) in place of the standard \ktswap subroutine \citep[Algorithm 1b]{dwivedi2022generalized} to choose candidate points to swap in so as to greedily minimize $\mmd_{\kp}(\bS_n^w, \bP)$.
To facilitate our subsequent \GSKT analysis, we restate the guarantees of \ktsplit \citep[Theorem 2]{dwivedi2022generalized} in the sub-Gaussian format of \citep[Definition 3]{shetty2022distribution}.
\begin{lemma}[Sub-Gaussian guarantee for \ktsplit]\label{lem:kt_mmd_guarantee_sg}
  Let $\cS_n$ be a sequence of $n$ points and $\k$ a kernel.
  For any $\delta \in (0, 1)$ and $m \in \bN$ such that $\log_2\frac{n}{m} \in \bN$, consider the \ktsplit algorithm  \citep[Algorithm 1a]{dwivedi2022generalized} with $\ksplit = \k$, thinning parameter $\ft = \log_2\frac{n}{m}$, and $\delta_i = \frac{\delta}{n}$ to compress $\cS_n$ to $2^{\ft}$ coresets $\{\cS^{(i)}_{\out}\}_{i\in[2^\ft]}$ where each $\cS^{(i)}_\out$ has $m$ points. 
  Denote the signed measure $\phi^{(i)} \defeq \frac{1}{n}\sum_{x \in \cS_n}\delta_x - \frac{2^\ft}{n} \sum_{x \in \cS^{(i)}_{\out}}\delta_x$.
  Then for each $i\in[2^\ft]$, on an event $\Eequi^{(i)}$  with $\bP(\Eequi^{(i)}) \ge 1-\frac{\delta}{2}$, $\phi^{(i)} = \tilde \phi^{(i)}$ for a random signed measure $\tilde \phi^{(i)}$\footnote{This is the signed measure returned by repeated applications of self-balancing Hilbert walk (SBHW) \citep[Algorithm 3]{dwivedi2021kernel}. Although SBHW returns an element of $\rkhs[\k]$, by tracing the algorithm, the returned output is equivalent to a signed measure via the correspondence $\sum_{i\in[n]}c_i \k(x_i,\cdot) \Leftrightarrow \sum_{i\in [n]}c_i \delta_{x_i}$. The usage of signed measures is consistent with \citet{shetty2022distribution}.} such that, for any $\delta'\in (0, 1)$, 
\begin{talign}
  \Pr\left(\norm{\tilde \phi^{(i)} \k}_{\rkhs[\k]} \ge a_{n, m} + v_{n, m} \sqrt{\log(\frac{1}{\delta'})}\right) \le \delta',\label{eqn:kt_guarantee_sg}
\end{talign}
  where
  \begin{talign}
    a_{n, m} &\defeq \frac{1}{m} \left(2 + \sqrt{\frac{8}{3}\knormSn\log(\frac{6(\log_2\frac{n}{m})m}{\delta})\log (4\cvrnum{\k}(\balleuc(R_{n}), m^{-1}))}\right), \label{eqn:kt_sg_a} \\
    v_{n, m} &\defeq \frac{1}{m}\sqrt{\frac{8}{3}\knormSn\log(\frac{6(\log_2\frac{n}{m})m}{\delta})}.
  \end{talign}
\end{lemma}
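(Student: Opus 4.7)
The plan is to translate the high-probability MMD guarantee of \citet[Thm.~2]{dwivedi2022generalized} into the sub-Gaussian form of \eqref{eqn:kt_guarantee_sg}. That theorem already furnishes, with probability at least $1-\delta/2$, the event $\Eequi^{(i)}$ on which the \ktsplit output $\phi^{(i)}$ coincides with the signed measure $\tilde\phi^{(i)}$ produced by $\ft$ iterated rounds of the self-balancing Hilbert walk \citep[Alg.~3]{dwivedi2021kernel}. Conditional on $\Eequi^{(i)}$ and the input sequence, the same theorem delivers, for any $\delta' \in (0,1)$, a deviation bound of the shape
\begin{talign}
    \norm{\tilde\phi^{(i)}\k}_{\rkhs[\k]} \le \tfrac{2}{m} + v_{n,m}\sqrt{\log(4\cvrnum{\k}(\balleuc(R_n), m^{-1})/\delta')}
\end{talign}
holding with probability at least $1-\delta'$, where $v_{n,m}$ is as in the statement; the $\log(6\ft m/\delta)$ factor inside $v_{n,m}$ arises from the choice $\delta_i = \delta/n$ in \ktsplit together with a union bound over the $\ft = \log_2(n/m)$ thinning levels (using the identity $n = 2^\ft m$).

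The remaining step is purely algebraic. I will apply the elementary inequality $\sqrt{u+w} \le \sqrt{u}+\sqrt{w}$ for nonnegative $u,w$ with $u = \log(4\cvrnum{\k}(\balleuc(R_n),m^{-1}))$ and $w = \log(1/\delta')$ to separate the argument of the logarithm, yielding
\begin{talign}
    \norm{\tilde\phi^{(i)}\k}_{\rkhs[\k]} &\le \tfrac{2}{m} + v_{n,m}\sqrt{\log(4\cvrnum{\k}(\balleuc(R_n),m^{-1}))} + v_{n,m}\sqrt{\log(1/\delta')} \\
    &= a_{n,m} + v_{n,m}\sqrt{\log(1/\delta')},
\end{talign}
which matches \eqref{eqn:kt_guarantee_sg}. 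Combining this deviation bound with the $\Eequi^{(i)}$ event finishes the proof.

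The main obstacle---and essentially the only nontrivial piece---is verifying that the constants in \citet[Thm.~2]{dwivedi2022generalized} line up exactly with those defining $a_{n,m}$ and $v_{n,m}$. Specifically, I will need to check (i) that the choice $\delta_i = \delta/n$ combined with $n = 2^\ft m$ and a union bound over the $\ft$ levels of binary-tree thinning indeed produces the factor $6\ft m/\delta$ inside the logarithm; (ii) that the user-supplied radius in the external theorem may be taken equal to $R_n$ (the radius of the actual input support), leveraging monotonicity of $\cvrnum{\k}$ in $r$; and (iii) that the SBHW sub-Gaussian variance proxy accumulates across the $\ft$ rounds to give precisely the factor $\tfrac{8}{3}\knormSn$ with no additional multiplicative constants. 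Each of these checks is mechanical and uniquely determined by the cited source, so no new probabilistic argument is required.
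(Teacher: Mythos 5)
Your proposal is correct and follows essentially the same route as the paper: invoke the (proof of) Thm.~2 of \citet{dwivedi2022generalized} to obtain the coupling event $\Eequi^{(i)}$ and the tail bound, instantiate the covering set as $\balleuc(R_n)$ with $\eps = m^{-1} = 2^\ft/n$, and split the two logarithmic terms via $\sqrt{u+w}\le\sqrt{u}+\sqrt{w}$ to arrive at $a_{n,m}+v_{n,m}\sqrt{\log(1/\delta')}$. The only nuance worth noting is that the tail bound on $\tilde\phi^{(i)}$ should be stated unconditionally (the event is used only to identify $\phi^{(i)}$ with $\tilde\phi^{(i)}$), and that the coupling comes from the \emph{proof} of the cited theorems rather than their statements, as the paper makes explicit.
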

\begin{proof}[Proof of \cref{lem:kt_mmd_guarantee_sg}]
  Fix $i \in [2^\ft]$, $\delta \in (0, 1)$ and $n, m \in \bN$ such that $\ft = \log_2\frac{n}{m} \in \bN$.
  Define $\phi \defeq \phi^{(i)}$.
  By the proof of \citet[Thms.~1 and 2]{dwivedi2022generalized}, there exists an event $\Eequi$ with $\Pr(\Eequi^c) \le \frac{\delta}{2}$ such that, on this event, $\phi = \tilde \phi$ where $\tilde \phi$ is a signed measure such that, for any $\delta' \in (0, 1)$, with probability at least $1-\delta'$,
  \begin{talign}
    \norm{\tilde\phi \k}_{\rkhs[\k]} \le \inf_{\eps \in (0, 1), A: \cS_n \subset A} 2\eps + \frac{2^\ft}{n}\sqrt{\frac{8}{3}\knormSn\log(\frac{6\ft n}{2^\ft \delta})\left[\log\frac{4}{\delta'} + \log\cvrnum{\k}(A, \eps)\right]}.
  \end{talign}
  Note that on $\Eequi$, $\norm{\tilde\phi \k}_{\rkhs[\k]} = \norm{\phi \k}_{\rkhs[\k]}$.
  We choose $A = \balleuc(R_{n})$ and $\eps = \frac{2^\ft}{n} = m^{-1}$, so that, with probability at least $1 - \delta'$, using the fact that $\sqrt{a+b} \le \sqrt{a} + \sqrt{b}$ for $a,b\ge 0$,
  \begin{talign}
    \norm{\tilde\phi \k}_{\rkhs[\k]} &\le \frac{2^{\ft+1}}{n} + \frac{2^\ft}{n} \sqrt{\frac{8}{3}\knormSn\log(\frac{6\ft n}{2^\ft\delta})\left[\log\frac{4}{\delta'} + \log\cvrnum{\k}(\balleuc(R_{n}), m^{-1})\right]} \label{eqn:loc:kt_split_sub_gauss}\\
                                 &\le \frac{2^{\ft+1}}{n} + \frac{2^\ft}{n} \sqrt{\frac{8}{3}\knormSn\log(\frac{6\ft n}{2^\ft\delta})}\left[\sqrt{\log\frac{1}{\delta'}} + \sqrt{\log4\cvrnum{\k}(\balleuc(R_{n}), m^{-1})}\right] \\
                                 &\le a_{n,m} + v_{n,m} \sqrt{\log(\frac{1}{\delta'})},
  \end{talign}
  for $a_{n,m}$, $v_{n,m}$ in \cref{lem:kt_mmd_guarantee_sg}.
\end{proof}

\begin{corollary}[MMD guarantee for \ktsplit]\label{prop:kt_mmd_guarantee}
  Let $\cS_\infty$ be an infinite sequence of points in $\bR^d$ and $\k$ a kernel.
  For any $\delta \in (0,1)$ and $n,m \in \bN$ such that $\log_2 \frac{n}{m} \in \bN$, consider the \ktsplit algorithm \citep[Algorithm 1a]{dwivedi2022generalized} with parameters $\ksplit = \k$ and $\delta_i = \frac{\delta}{n}$ to compress $\cS_n$ to $2^\ft$ coresets $\{\cS_{\out}^{(i)}\}_{i\in[2^\ft]}$ where $\ft = {\log_2\frac{n}{m}}$, each with $m$ points. 
  Then for any $i \in [2^\ft]$, with probability at least $1 - \delta$,
\begin{talign}
  \mmd_{\k}(\bS_n, \bS^{(i)}_{\out}) &\le  \frac{1}{m} \left(2 + \sqrt{\frac{8}{3}\knormSn\log(\frac{6(\log_2\frac{n}{m})m}{\delta})\left(\log \cvrnum{\k}(\balleuc(R_{n}), m^{-1}) + \log \frac{8}{\delta}\right)}\right).\label{eqn:kt_mmd_guarantee}
\end{talign}
\end{corollary}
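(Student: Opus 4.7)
The plan is to deduce this corollary from the sub-Gaussian guarantee in \cref{lem:kt_mmd_guarantee_sg} by a simple union bound on the two failure events (the failure of the equivalence event $\Eequi^{(i)}$ and the failure of the sub-Gaussian tail bound), together with one algebraic substitution.

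First, I would translate MMD into an RKHS norm on the signed measure. Since $\abs{\cS_{\out}^{(i)}}=m$ and $2^{\ft}/n = 1/m$, the signed measure $\phi^{(i)}$ defined in \cref{lem:kt_mmd_guarantee_sg} equals $\bS_n - \bS_{\out}^{(i)}$, so
\begin{talign}
\mmd_{\k}(\bS_n, \bS_{\out}^{(i)}) = \norm{\phi^{(i)}\k}_{\rkhs[\k]}.
\end{talign}
On the event $\Eequi^{(i)}$ (which has probability at least $1-\delta/2$), \cref{lem:kt_mmd_guarantee_sg} guarantees that $\phi^{(i)}$ agrees with the random signed measure $\tilde\phi^{(i)}$ whose RKHS norm enjoys a sub-Gaussian tail with parameters $(a_{n,m}, v_{n,m})$.

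Second, I would set $\delta' = \delta/2$ in the sub-Gaussian bound \eqref{eqn:kt_guarantee_sg} and take a union bound with $\Eequi^{(i)}$: with probability at least $1-\delta$,
\begin{talign}
\mmd_{\k}(\bS_n, \bS_{\out}^{(i)}) \leq a_{n,m} + v_{n,m}\sqrt{\log(2/\delta)}.
\end{talign}
To match the cleaner form stated in \cref{prop:kt_mmd_guarantee}, rather than expand $a_{n,m}$ and $v_{n,m}$ separately (which would leave the bound as a sum of two square roots), I would instead retrace one step of the proof of \cref{lem:kt_mmd_guarantee_sg} and invoke the tighter pre-split inequality \eqref{eqn:loc:kt_split_sub_gauss} directly at $\delta' = \delta/2$. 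Plugging in $\eps = 1/m$ and $A = \balleuc(R_n)$ and using $2^{\ft}/n = 1/m$ and $\log(4/\delta') + \log \cvrnum{\k}(\balleuc(R_n), m^{-1}) = \log(8/\delta) + \log \cvrnum{\k}(\balleuc(R_n), m^{-1})$ gives
\begin{talign}
\mmd_{\k}(\bS_n, \bS_{\out}^{(i)}) \leq \frac{2}{m} + \frac{1}{m}\sqrt{\tfrac{8}{3}\knormSn \log\bigparenth{\tfrac{6(\log_2(n/m))m}{\delta}}\!\left[\log \cvrnum{\k}(\balleuc(R_n), m^{-1}) + \log\tfrac{8}{\delta}\right]},
\end{talign}
which is exactly \eqref{eqn:kt_mmd_guarantee}.

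There is no real obstacle here: the work is entirely mechanical once one sees that \cref{lem:kt_mmd_guarantee_sg} already contains the sharper combined-square-root bound in its proof, and that taking $\delta' = \delta/2$ absorbs the union-bound factor without introducing an additional $\sqrt{2}$ slack. The only mild subtlety is to avoid re-deriving a weaker sum-of-two-square-roots bound by going through the sub-Gaussian parameters themselves; using the intermediate line \eqref{eqn:loc:kt_split_sub_gauss} sidesteps this.
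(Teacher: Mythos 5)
Your proposal is correct and follows essentially the same route as the paper's proof: on the event $\Eequi^{(i)}$ the measure $\phi^{(i)}=\bS_n-\bS_{\out}^{(i)}$ coincides with $\tilde\phi^{(i)}$, one sets $\delta'=\delta/2$ in the intermediate pre-split inequality \eqref{eqn:loc:kt_split_sub_gauss} (so that $\log\frac{4}{\delta'}=\log\frac{8}{\delta}$ and $2^{\ft}/n=1/m$ give exactly \eqref{eqn:kt_mmd_guarantee}), and a union bound over the two $\delta/2$ failure events yields probability at least $1-\delta$. Your observation about invoking the pre-split line rather than the sub-Gaussian parameters $(a_{n,m},v_{n,m})$ to avoid a looser sum-of-square-roots bound is precisely what the paper does as well.
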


\begin{proof}[Proof of \cref{prop:kt_mmd_guarantee}]
  Fix $i \in [2^\ft]$.
  By taking $\delta' = \frac{\delta}{2}$ in \cref{eqn:loc:kt_split_sub_gauss}, we obtain \eqref{eqn:kt_mmd_guarantee}. %
  This occurs with probability
  \begin{talign}
    &\Pr(\mmd_{\k}(\bS_n, \bS^{(i)}_{\out}) < a_{n,m} + v_{n,m} \sqrt{\log(\frac{1}{\delta'})}) \\
    =& 1-\Pr\left(\mmd_{\k}(\bS_n,  \bS^{(i)}_{\out}) \ge a_{n,m} + v_{n,m} \sqrt{\log(\frac{1}{\delta'})}\right)  \\
    \ge& 1-\Pr\left(\Eequi^{(i)}, \mmd_{\k}(\bS_n,  \bS^{(i)}_{\out}) \ge a_{n,m} + v_{n,m} \sqrt{\log(\frac{1}{\delta'})}\right) - \Pr\left({\Eequi^{(i)}}^c\right) \\
    \ge& 1-\Pr\left(\norm{\tilde \phi^{(i)} \k}_{\rkhs[\k]} \ge a_{n,m} + v_{n,m} \sqrt{\log(\frac{1}{\delta'})}\right) -\Pr({\Eequi^{(i)}}^c) \\
                                                                                         \ge& 1-\frac{\delta}{2} - \frac{\delta}{2} = 1 - \delta.
  \end{talign}
\end{proof}

\subsection{\pcref{thm:GSKT_guarantee}}\label{sec:thm:GSKT_guarantee_proof}
\cref{thm:GSKT_guarantee} follows directly from  \cref{assum:kernel_growth} and the following result for a kernel with generic covering number.
\begin{theorem}\label{thm:GSKT_guarantee_cvrnum}
  Let $\kp$ be a kernel satisfying \cref{assum:mean_zero_p}.
  Let $\cS_\infty$ be an infinite sequence of points.
  Then for a prefix sequence $\cS_n$ of $n$ points, $m\in [n]$, and $n' \defeq m 2^{\ceil{\log_2\frac{n}{m}}}$,
  \GSKT outputs $w_{\GSKTnotag}$ in $O(n^2\kev)$ time that satisfies, with probability at least $1-\delta$,
  \begin{talign}
    \Delta\!\mmd_{\kp}(w_{\GSKTnotag}) \le \sqrt{\left(\frac{1+\log n'}{n'}\right)\kpnormSn} + \frac{1}{m} \left(2 + \sqrt{\frac{8}{3}\knormSn\log(\frac{6(\log_2\frac{n'}{m})m}{\delta})\left(\log \cvrnum{\k}(\balleuc(R_{n}), m^{-1}) + \log \frac{8}{\delta}\right)}\right).
  \end{talign}
\end{theorem}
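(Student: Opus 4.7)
The plan is to decompose the excess MMD using the triangle inequality, then apply separate guarantees for the Stein Thinning (ST) phase and the Kernel Thinning (KT) phase. Let $w_{\GBCnotag}$ denote the simplex weight returned by $\GBCfull(\kp,\cS_n,n')$, so that $w_{\GBCnotag} \in \simplex \cap (\N_0/n')^n$ encodes an equal-weighted length-$n'$ multiset drawn from $\cS_n$. Because \ktswaptarget selects the argmin of $\mmd_{\kp}(\cdot,\bP)$ among its candidate coresets and each greedy swap can only decrease this objective, the final output satisfies $\mmd_{\kp}(w_{\GSKTnotag},\bP) \le \mmd_{\kp}(\tilde w, \bP)$ for \emph{any} of the $2^\ft$ \ktsplit candidates $\tilde w$. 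The triangle inequality then gives
\begin{talign}
  \mmd_{\kp}(w_{\GSKTnotag},\bP) \le \mmd_{\kp}(\tilde w, w_{\GBCnotag}) + \mmd_{\kp}(w_{\GBCnotag},\bP).
\end{talign}

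For the second summand, the standard Frank--Wolfe analysis of greedy mean-matching \citep[Thm.~1]{riabiz2022optimal} shows that since $\GBCfullnotag$ greedily minimizes $\mmd_{\kp}(\cdot,\bP)^2$ over simplex vertices for $n'$ steps and $\wopt$ is a feasible target,
\begin{talign}
  \mmd_{\kp}(w_{\GBCnotag},\bP)^2 \le \mmdopt^2 + \frac{(1+\log n')\kpnormSn}{n'}.
\end{talign}
Applying $\sqrt{a+b}\le\sqrt{a}+\sqrt{b}$ yields $\mmd_{\kp}(w_{\GBCnotag},\bP) - \mmdopt \le \sqrt{(1+\log n')\kpnormSn/n'}$, which is the first term of the stated bound.

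For the first summand, the input to \kttarget is the length-$n'$ sequence $\cS_n[\tS]$ whose empirical measure is exactly $\bS_n^{w_{\GBCnotag}}$. Since $\cS_n[\tS] \subset \cS_n$, both the input radius and $\kpnormSn$ on the expanded sequence are bounded by their counterparts on $\cS_n$. Applying \cref{prop:kt_mmd_guarantee} to this length-$n'$ sequence with thinning parameter $\ft = \log_2(n'/m)$ and failure probability $\delta$ shows that, on an event of probability at least $1-\delta$, some \ktsplit candidate $\tilde w$ satisfies
\begin{talign}
  \mmd_{\kp}(\tilde w, w_{\GBCnotag}) \le \frac{1}{m}\biggl(2+\sqrt{\tfrac{8}{3}\kpnormSn\log\tfrac{6(\log_2\frac{n'}{m})m}{\delta}\bigl(\log\cvrnum{\kp}(\balleuc(R_n),m^{-1})+\log\tfrac{8}{\delta}\bigr)}\biggr),
\end{talign}
which is exactly the second term. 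Combining the two bounds and subtracting $\mmdopt$ yields the excess-MMD guarantee. The runtime follows from the sufficient-statistics implementation of $\GBCfullnotag$ (\cref{alg:GBC}) which costs $O(nn'\kev)$, together with the $O(n'^2\kev)$ cost of \kttargetfull on $n'$ points (including the \ktswaptarget step and its ST baseline); since $n'\le 2n$, the total is $O(n^2\kev)$.

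The main obstacle is largely bookkeeping: ensuring that the covering-number and kernel-diagonal quantities on the expanded length-$n'$ sequence are correctly controlled by the analogous quantities on $\cS_n$ (which holds because $\cS_n[\tS]$ is a sub-multiset of $\cS_n$, so $R_{n'}=R_n$ and $\snorm{\kp}_{n'}\le\snorm{\kp}_n$), and that the single failure event of probability $\delta$ from the KT step is the only stochastic step---the ST guarantee and triangle inequality are deterministic, so no union bound is needed.
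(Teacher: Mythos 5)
Your proposal is correct and follows essentially the same route as the paper's proof: a triangle-inequality decomposition into the Stein-Thinning error (bounded deterministically via \citet[Thm.~1]{riabiz2022optimal} and $\sqrt{a+b}\le\sqrt{a}+\sqrt{b}$) plus the \ktsplit error (bounded via \cref{prop:kt_mmd_guarantee} on the expanded length-$n'$ sequence), with \ktswaptarget only decreasing the objective and the same runtime accounting. The bookkeeping points you flag (radius and kernel sup on the expanded multiset bounded by those on $\cS_n$, and only the KT step contributing the failure probability $\delta$) are exactly how the paper handles them.
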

\begin{proof}[Proof of \cref{thm:GSKT_guarantee_cvrnum}]
  The runtime of \GSKT comes from the fact that all of \GBCfull (with output size $n$), \ktsplit, and \ktswaptarget take $O(\kev n^2)$ time.

  By \citet[Theorem 1]{riabiz2022optimal}, \GBCfull (which is a deterministic algorithm) from $n$ points to $n'$ points has the following guarantee
  \begin{talign}
    \mmd^2_{\kp}(\bS_n^{w^\dagger},\bP) \le \mmd^2_{\kp}(\bS_n^{\wopt}, \bP) + \left(\frac{1+\log n'}{n'}\right)\kpnormSn,
  \end{talign}
  where we denote the output weight of \GBCfull as $w^\dagger$.
  Using $\sqrt{a+b}\le \sqrt{a}+\sqrt{b}$ for $a,b\ge 0$, we have
  \begin{talign}
    \mmd_{\kp}(\bS_n^{w^\dagger},\bP) \le \mmd_{\kp}(\bS_n^{\wopt}, \bP) + \sqrt{\left(\frac{1+\log n'}{n'}\right)\kpnormSn}.
  \end{talign}
  Fix $\delta \in (0, 1)$. By \cref{prop:kt_mmd_guarantee} with $\k=\kp$ and $\ft = \log_2\frac{n'}{m}$, with probability at least $1 - \delta$, we have, for any $i \in [2^\ft]$,
\begin{talign}
  \mmd_{\kp}(\bS^{w^\dagger}_n, \bS^{(i)}_{\out}) \le \frac{1}{m} \left(2 + \sqrt{\frac{8}{3}\knormSn\log(\frac{6(\log_2\frac{n'}{m})m}{\delta})\left(\log \cvrnum{\k}(\balleuc(R_{n}), m^{-1}) + \log \frac{8}{\delta}\right)}\right),
\end{talign}
where $\bS_{\out}^{(i)}$ is the $i$-th coreset output by \ktsplit.
Since \ktswaptarget can only decrease the MMD to $\P$, we have, by the triangle inequality of $\mmd_{\kp}$,
  \begin{talign}
    \mmd_{\kp}(\bS_n^{w_\GSKTnotag},\bP) &\le \mmd_{\kp}(\bS_\out^{(1)},\bP) \le \mmd_{\kp}(\bS_\out^{(1)}, \bS_n^{w^\dagger}) + \mmd_{\kp}(\bS_n^{w^\dagger}, \bP),
  \end{talign}
  which gives the desired bound.
\end{proof}

\cref{thm:GSKT_guarantee} now follows from \cref{thm:GSKT_guarantee_cvrnum}, the kernel growth definitions in \cref{assum:kernel_growth}, $n \le n' \le 2n$, and that $\log_2 (\frac{n'}{m})m  \le n'$.
\qed

\section{Resampling of Simplex Weights}\label{sec:resample}
Integral to many of our algorithms is a resampling procedure that turns a simplex-weighted point set of size $n$ into an equal-weighted point set of size $m$ while incurring at most $O(1/\sqrt{m})$ MMD error.
The motivation for wanting an equal-weighted point set is two-fold: First, in \LSKT, we need to provide an equal-weighted point set to \compresspptarget, but the output of \alrbc is a simplex weight.
Secondly, we can exploit the fact that non-zero weights are bounded away from zero in equal-weighted point sets to provide a tighter analysis of \rpc.
While \iid resampling also achieves the $O(1/\sqrt{m})$ goal, we choose \stratresamp (\cref{alg:stratified_resample}), a stratified residual resampling algorithm \citep[Sec. 3.2, 3.3]{douc2005comparison}.
In this section, we derive an MMD bound for \stratresamp and show that it is better in expectation than using \iid resampling or residual resampling alone.

Let $D_w^\inv$ be the inverse of the cumulative distribution function of the multinomial distribution with weight $w$, i.e., 
\begin{talign}
  D_w^\inv(u) \defeq \min\left\{i \in [n]: u \le \sum_{j=1}^i w_j\right\}.
\end{talign}
\begin{algorithm}[htb]
  \caption{\iid resampling}
  \label{alg:iid_resample}
  \begin{algorithmic}\itemindent=-.7pc
    \STATE {\bfseries Input:} Weights $w \in \simplex$, output size $m$
    \STATE $w' \gets \bm 0 \in \bR^n$
    \FOR{$j=1$ {\bfseries to} $m$}\itemindent=-.7pc
    \STATE Draw $U_j \sim \unif([0, 1))$
    \STATE $\tI_j \gets D_w^{\inv}(U_j)$
    \STATE $w'_{\tI_j} \gets w'_{\tI_j} + \frac{1}{m}$
    \ENDFOR

    \STATE {\bfseries Return:} $w'\in\Delta_{n-1}\cap(\frac{\N_0}{m})^{n}$
  \end{algorithmic}
\end{algorithm}
\begin{algorithm}[htb]
  \caption{Residual resampling}
  \label{alg:truncated_resample}
  \begin{algorithmic}\itemindent=-.7pc
    \STATE {\bfseries Input:} Weights $w \in \simplex$, output size $m$
    \STATE $w'_i \gets \frac{\lfloor mw_i \rfloor}{m}$, $\forall i\in[n]$
    \STATE $r \gets m - \sum_{i\in[n]} \lfloor m w_i \rfloor \in \bN$
    \STATE $\eta_i \gets \frac{mw_i - \lfloor mw_i \rfloor}{r}$, $\forall i\in[n]$
    
    \FOR{$j=1$ {\bfseries to} $r$}\itemindent=-.7pc
    \STATE Draw $U_j \sim \unif([0, 1))$
    \STATE $\tI_j \gets D_\eta^{\inv}(U_j)$
    \STATE $w'_{\tI_j} \gets w'_{\tI_j} + \frac{1}{m}$
    \ENDFOR

    \STATE {\bfseries Return:} $w'\in\Delta_{n-1}\cap(\frac{\N_0}{m})^{n}$
  \end{algorithmic}
\end{algorithm}
\begin{algorithm}[htb]
  \caption{Stratified residual resampling (\stratresampnotag)}
  \label{alg:stratified_resample}
  \begin{algorithmic}\itemindent=-.7pc
    \STATE {\bfseries Input:} Weights $w \in \simplex$, output size $m$
    \STATE $w'_i \gets \frac{\lfloor mw_i \rfloor}{m}$, $\forall i\in[n]$
    \STATE $r \gets m - \sum_{i\in[n]} \lfloor m w_i \rfloor \in \bN$
    \STATE $\eta_i \gets \frac{mw_i - \lfloor mw_i \rfloor}{r}$, $\forall i\in[n]$
    
    \FOR{$j=1$ {\bfseries to} $r$}\itemindent=-.7pc
    \STATE Draw $U_j \sim \unif([\frac{j}{r}, \frac{j+1}{r}))$
    \STATE $\tI_j \gets D_\eta^{\inv}(U_j)$
    \STATE $w'_{\tI_j} \gets w'_{\tI_j} + \frac{1}{m}$
    \ENDFOR

    \STATE {\bfseries Return:} $w'\in\Delta_{n-1}\cap(\frac{\N_0}{m})^{n}$
  \end{algorithmic}
\end{algorithm}

\begin{proposition}[MMD guarantee of resampling algorithms]
  \label{prop:mmd_resample}
  Consider any kernel $\k$, points $\cS_n=(x_1,\ldots,x_n) \subset \bR^d$, and a weight vector $w\in\simplex$.
  \begin{enumerate}[label=(\alph*)]
    \item\label{itm:mmd_iid_resample}
      Using the notation from \cref{alg:iid_resample}, let $X, X'$ be independent random variables with law $\bS_n^w$.
      Then, the output weight vector $w^\iidup \defeq w'$ of \cref{alg:iid_resample} satisfies
      \begin{talign}
        \bE[\mmd^2_{\k}(\bS^{w^{\iidup}}_n, \bS_n^w)] = \frac{\bE\k(X,X) - \bE\k(X,X')}{m}.\label{eqn:mmd_iid_resample}
      \end{talign}
    \item\label{itm:mmd_truncated_resample}
      Using the notation from \cref{alg:truncated_resample}, let $R, R'$ be independent random variables with law $\bS_n^\eta$.
      Then, the output weight vector $w^\trunc \defeq w'$ of \cref{alg:truncated_resample} satisfies
      \begin{talign}
        \bE[\mmd^2_{\k}(\bS_n^{w^\trunc},\bS_n^w)] = \frac{r(\bE \k(R,R) - \bE\k(R,R'))}{m^2}.\label{eqn:mmd_truncated_resample}
      \end{talign}
    \item\label{itm:mmd_stratified_resample}
      Using the notation from \cref{alg:stratified_resample}, let $R_j \defeq x_{\tI_j}$ and $R'_j$ be an independent copy of $R_j$.
      Let $R$ be an independent random variable with law $\bS_n^\eta$.
      Then, the output weight vector $w^\truncstrat \defeq w'$ of \cref{alg:stratified_resample} satisfies
      \begin{talign}
        \bE[\mmd^2_{\k}(\bS^{w^\truncstrat}_n, \bS_n^w)] = \frac{r\bE\k(R,R) - \sum_{j\in[r]}\bE\k(R_j,R_j')}{m^2}.\label{eqn:mmd_stratified_resample}
      \end{talign}
  \end{enumerate}
\end{proposition}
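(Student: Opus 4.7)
\textbf{Proof proposal for \cref{prop:mmd_resample}.}
The plan is to rewrite each squared MMD as the squared RKHS norm of a kernel mean embedding difference, and then exploit the (conditional) independence structure built into each resampling scheme. Writing $\mu_u \defeq \sum_{i=1}^n u_i \k(x_i,\cdot)$ for any weight $u$, we have $\mmd^2_\k(\bS_n^{u}, \bS_n^w) = \|\mu_u - \mu_w\|^2_\k$, so the task reduces to computing $\bE\|\mu_{w'} - \mu_w\|^2_\k$ in each case.

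For part~\cref{itm:mmd_iid_resample}, \cref{alg:iid_resample} samples $\tI_1,\ldots,\tI_m$ i.i.d.\ with $\Pr(\tI_j=i)=w_i$, so $\mu_{w^{\iidup}} = \frac{1}{m}\sum_{j=1}^m \k(X_j,\cdot)$ for i.i.d.\ $X_j\sim\bS_n^w$. Since $\bE\k(X_j,\cdot)=\mu_w$, each term of the sum is centered, and by independence
\begin{talign}
\bE\|\mu_{w^{\iidup}}-\mu_w\|_\k^2 = \frac{1}{m}\bE\|\k(X,\cdot)-\mu_w\|_\k^2 = \frac{1}{m}\bigparenth{\bE\k(X,X)-\bE\k(X,X')},
\end{talign}
using the reproducing property $\langle\k(X,\cdot),\k(X',\cdot)\rangle_\k=\k(X,X')$ and the identity $\|\mu_w\|_\k^2=\bE\k(X,X')$.

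For part~\cref{itm:mmd_truncated_resample}, we decompose $w^\trunc$ into its deterministic ``integer part'' $w^\trunc_{\det}$ with $w^\trunc_{\det,i}=\lfloor m w_i\rfloor/m$ and a random residual. Verifying that $w = w^\trunc_{\det} + (r/m)\eta$ gives
\begin{talign}
\mu_{w^\trunc}-\mu_w = \frac{1}{m}\sum_{j=1}^r \bigparenth{\k(R_j,\cdot)-\mu_\eta},
\end{talign}
for i.i.d.\ $R_j\sim\bS_n^\eta$, and the same centered-i.i.d.\ variance computation as in part~\cref{itm:mmd_iid_resample} yields \eqref{eqn:mmd_truncated_resample}.

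Part~\cref{itm:mmd_stratified_resample} is the main obstacle: the stratified draws $R_1,\dots,R_r$ are independent but \emph{not} identically distributed, so the terms $\k(R_j,\cdot)-\mu_\eta$ are no longer centered individually. The identical expansion $\mu_{w^{\truncstrat}}-\mu_w = \frac{1}{m}\sum_{j=1}^r (\k(R_j,\cdot)-\mu_\eta)$ still holds, so writing $\alpha_j \defeq \bE\k(R_j,\cdot)-\mu_\eta$ and expanding the squared norm produces diagonal terms $\bE\|\k(R_j,\cdot)-\mu_\eta\|_\k^2$ and cross terms $\sum_{j\neq k}\langle\alpha_j,\alpha_k\rangle_\k = \|\sum_j\alpha_j\|_\k^2 - \sum_j\|\alpha_j\|_\k^2$. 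The key observation that unlocks the proof is that the stratification partitions $[0,1)$ exactly, so $\frac{1}{r}\sum_{j=1}^r\bE\k(R_j,\cdot)=\mu_\eta$, i.e., $\sum_j\alpha_j=0$, killing the cross-term $\|\sum_j\alpha_j\|_\k^2$. Combining with $\bE\k(R_j,R_j')=\|\alpha_j+\mu_\eta\|_\k^2$ (independence of $R_j,R_j'$) and $r\bE\k(R,R)=\sum_j\bE\k(R_j,R_j)$ (again from the exact partition), the diagonal and leftover cross contributions collapse to the claimed formula~\eqref{eqn:mmd_stratified_resample}.
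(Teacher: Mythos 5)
Your proposal is correct and follows essentially the same route as the paper's proof: the same decomposition of the resampled weight into its deterministic integer part plus a random residual, and, in part (c), the same key fact that the strata partition $[0,1)$ so that the equal-weight mixture of the laws of the $R_j$ is exactly $\bS_n^\eta$ (giving $\sum_j \alpha_j = 0$ and $\sum_j \bE\k(R_j,R_j) = r\,\bE\k(R,R)$). Recasting the paper's term-by-term expansion of the squared MMD as a centered mean-embedding computation is only a cosmetic reorganization, and your part-(c) algebra does collapse correctly to \eqref{eqn:mmd_stratified_resample}.
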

\begin{proof}[Proof of \cref{prop:mmd_resample}\ref{itm:mmd_iid_resample}]
  Let $X_i \defeq x_{\tI_i}$. As random signed measures, we have
  \begin{talign}
    \bS_n^{w'} - \bS_n^w &= \frac{1}{m}\sum_{i\in[m]} \delta_{X_i} - \sum_{i\in[n]} w_i \delta_{x_i}.
  \end{talign}
  Hence
  \begin{talign}
    \mmd^2_\k(\bS^{w'}_n, \bS_n^w) &= ((\bS_n^{w'} - \bS_n^w) \times (\bS_n^{w'} - \bS_n^w))\k \\
                                   &=  \frac{1}{m^2}\sum_{i,i'\in[m]} \k(X_i, X_{i'}) - \frac{2}{m}\sum_{i\in[m],i'\in[n]} w_{i'}\k(X_i, x_{i'}) + \sum_{i,i'\in[n]} w_i w_{i'}\k(x_i, x_{i'}).
  \end{talign}
  Since each $X_i$ is distributed to $\bS_n^w$ and $X_i$ and $X_{i'}$ are independent for $i \neq {i'}$, taking expectation, we have
  \begin{talign}
    \bE[\mmd^2_\k(\bS^{w'}_n, \bS_n^w)] &= \frac{1}{m} \bE\k(X,X) + \frac{m-1}{m} \bE\k(X,X') - 2 \bE\k(X,X') + \bE\k(X,X').
  \end{talign}
  This gives the bound \eqref{eqn:mmd_iid_resample}.
\end{proof}
\begin{proof}[Proof of \cref{prop:mmd_resample}\ref{itm:mmd_truncated_resample}]
  Let $R_j \defeq x_{\tI_j}$. As random signed measures, we have
  \begin{talign}
    \bS_n^{w'} - \bS_n^w &= \left(\sum_{i\in[n]} \frac{\lfloor mw_i \rfloor}{m}\delta_{x_i} + \frac{1}{m}\sum_{j\in[r]}\delta_{R_j}\right) - \sum_{i\in[n]}w_i\delta_{x_i} \\
                         &= \frac{1}{m}\sum_{j\in[r]} \delta_{R_j} - \sum_{i\in[n]} \left(w_i - \frac{\lfloor mw_i \rfloor}{m}\right)\delta_{x_i} \\
                         &= \frac{1}{m}\sum_{j\in[r]} \delta_{R_j} - \frac{r}{m}\sum_{i\in[n]} \eta_i\delta_{x_i}.
  \end{talign}
  Hence
  \begin{talign}
    \mmd^2_\k(\bS^{w'}_n, \bS_n^w) &= ((\bS_n^{w'} - \bS_n^w) \times (\bS_n^{w'} - \bS_n^w))\k\\
                                   &= \frac{1}{m^2}  \sum_{j,j'\in[r]}\k(R_j, R_{j'}) - \frac{2r}{m^2}\sum_{j\in[r], i\in[n]} \eta_i \k(R_j, x_i) + \frac{r^2}{m^2}\sum_{i,i'\in[n]}\eta_i\eta_j \k(x_i,x_j).\label{eqn:loc:trunc_mmd_sqr}
  \end{talign}
  Since each $R_j$ is distributed to $\bS_n^\eta$ and $R_j$ and $R_{j'}$ are independent for $j \neq j'$, taking expectation, we have
  \begin{talign}
    \bE[\mmd^2_\k(\bS^{w'}_n, \bS_n^w)] &= \frac{r}{m^2} \bE\k(R,R) + \frac{r(r-1)}{m^2} \bE\k(R,R') - \frac{2r^2}{m^2} \bE\k(R,R') + \frac{r^2}{m^2}\bE\k(R,R').
  \end{talign}
  This gives the bound \eqref{eqn:mmd_truncated_resample}.
\end{proof}
\begin{proof}[Proof of \cref{prop:mmd_resample}\ref{itm:mmd_stratified_resample}]
  We repeat the same steps from the previous part of the proof to get \eqref{eqn:loc:trunc_mmd_sqr}.
  In the case of \ref{itm:mmd_stratified_resample}, $R_j$'s are not identically distributed so the analysis is different.
  Let $R'$ be an independent copy of $R$.
  Taking expectation of \eqref{eqn:loc:trunc_mmd_sqr}, we have
  \begin{talign}
    m^2\bE[\mmd^2_\k(\bS^{w'}_n, \bS_n^w)] &= \sum_{j\in[r]} \bE \k(R_j, R_j) + \sum_{j\in[r]}\sum_{j'\in[r]\setminus\{j\}} \bE\k(R_j, R_{j'}) - 2r\sum_{j\in[r]}\bE \k(R_j, R) + r^2\bE\k(R,R').
  \end{talign}
  Note
  \begin{talign}
    \sum_{j\in[r]} \bE \k(R_j, R_j) = \sum_{j\in[r]} r \int_{[\frac{j}{r}, \frac{j+1}{r})} \k(x_{D_\eta^\inv(u)}, x_{D_\eta^\inv(u)})\dd u = r\int_0^1 \k(x_{D_\eta^\inv(u)}, x_{D_\eta^\inv(u)})\dd u = r\bE\k(R,R),
  \end{talign}
  where we used the fact that $x_{D_\eta^\inv(U)} \stackrel{D}{=} R$ for $U \sim \unif([0, 1])$.
  Similarly, we deduce
  \begin{talign}
    \sum_{j\in[r]}\sum_{j'\in[r]\setminus\{j\}} \bE\k(R_j, R_{j'}) &= \sum_{j\in[r]}\left(\sum_{j'\in[r]}\bE\k(R_j,R'_{j'}) - \bE\k(R_j,R'_j)\right)\\
                                                                   &= \sum_{j\in[r]}(r\bE\k(R_j, R') - \bE\k(R_j, R'_j))\\
                                                                   &= r^2\bE\k(R, R') - \sum_{j\in[r]}\bE\k(R_j,R_j'),
  \end{talign}
  and also
  \begin{talign}
    \sum_{j\in[r]}\bE\k(R_j, R) = r\bE \k(R,R'). 
  \end{talign}
  Combining terms, we get
  \begin{talign}
    m^2\bE \mmd^2_\k(\bS^{w'}_n, \bS_n^w) &= r\bE\k(R,R) + r^2\bE\k(R, R') - \sum_{j\in[r]}\bE\k(R_j,R_j') - 2r^2 \bE\k(R, R') + r^2\bE\k(R,R') \\
                                          &= r\bE\k(R,R) - \sum_{j\in[r]}\bE\k(R_j,R_j'),
  \end{talign}
  which yields the desired bound \eqref{eqn:mmd_stratified_resample}.
\end{proof}
The next proposition shows that stratifying the residuals always improves upon using \iid sampling or residual resampling alone.
We need the following convexity lemma.
\begin{lemma}[Convexity of squared MMD]\label{lem:Ik_cvx}
  Let $\k$ be a kernel. 
  Let $\cS_n=(x_1,\ldots,x_n)$ be an arbitrary set of points.
  The function $E_\k: \bR^n \to \bR$ defined by
  \begin{talign}
    E_\k(w) \defeq \norm{\bS_n^w \k}^2_{\rkhs[\k]} = \sum_{i,j\in[n]} w_iw_j\k(x_i,x_j)
  \end{talign}
  is convex on $\bR^n$.
\end{lemma}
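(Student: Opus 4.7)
The plan is to recognize $E_\k(w)$ as a positive semidefinite quadratic form and conclude convexity directly. First, I would rewrite
\begin{talign}
E_\k(w) = \sum_{i,j\in[n]} w_i w_j \k(x_i,x_j) = w^\top K w
\end{talign}
where $K \defeq \k(\cS_n,\cS_n) \in \bR^{n\times n}$ is the kernel matrix. Since $\k$ is a kernel, $K$ is symmetric positive semidefinite, so the Hessian $\hess E_\k(w) = 2K \succeq 0$ at every $w \in \bR^n$, which implies $E_\k$ is convex on $\bR^n$.

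Alternatively, I would give a coordinate-free argument by noting that the map $L: \bR^n \to \rkhs[\k]$ defined by $L(w) \defeq \bS_n^w \k = \sum_{i\in[n]} w_i \k(x_i,\cdot)$ is linear, so $w \mapsto \kpnorm[\k]{L(w)}$ is a seminorm on $\bR^n$ and in particular convex. Squaring a nonnegative convex function yields a convex function, so $E_\k(w) = \kpnorm[\k]{L(w)}^2$ is convex. Either argument is a one-liner, so no step presents any real difficulty; this lemma is essentially a bookkeeping result invoked later in the resampling analysis.
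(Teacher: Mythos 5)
Your first argument is exactly the paper's proof: $E_\k(w) = w^\top K w$ with $K = \k(\cS_n,\cS_n)$ positive semidefinite, so the Hessian $2K$ is PSD and $E_\k$ is convex. The coordinate-free alternative (seminorm composed with the nondecreasing convex map $t\mapsto t^2$ on $[0,\infty)$) is also correct, but the main route matches the paper, so nothing further is needed.
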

\begin{proof}[Proof of \cref{lem:Ik_cvx}]
    Since $\k$ is a kernel, the Hessian $\hess E_\k = 2\k(\cS_n,\cS_n)$ is PSD, and hence $E_\k$ is convex.
\end{proof}
\begin{proposition}[Stratified residual resampling improves MMD]\label{prop:mmd_resample_tower}
  Under the assumptions of \cref{prop:mmd_resample}, we have
  \begin{talign}
    \bE[\mmd^2_{\k}(\bS^{w^{\iidup}}_n, \bS_n^w)] \ge \bE[\mmd^2_{\k}(\bS_n^{w^\trunc},\bS_n^w)] \ge \bE[\mmd^2_{\k}(\bS^{w^\truncstrat}_n, \bS_n^w)].
  \end{talign}
\end{proposition}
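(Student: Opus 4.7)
The plan is to subtract the closed-form expressions from the preceding Proposition and exhibit each difference as a sum of manifestly nonnegative quantities.

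\textbf{Second inequality.} Subtracting \eqref{eqn:mmd_stratified_resample} from \eqref{eqn:mmd_truncated_resample} yields
\begin{talign*}
\bE[\mmd^2_\k(\bS_n^{w^\trunc},\bS_n^w)] - \bE[\mmd^2_\k(\bS_n^{w^\truncstrat},\bS_n^w)] = \frac{1}{m^2}\Bigparenth{\sum_{j\in[r]}\bE\k(R_j,R_j') - r\,\bE\k(R,R')}.
\end{talign*}
Let $\mu_j$ denote the law of $R_j$ and $\phi_j \defeq \int \k(\cdot,x)\,\dd\mu_j(x) \in \rkhs[\k]$ its kernel mean embedding. Since the equal-length strata used in \cref{alg:stratified_resample} partition $[0,1)$, $R$ has the mixture law $\frac{1}{r}\sum_j \mu_j$ with embedding $\frac{1}{r}\sum_j \phi_j$. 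Thus $\bE\k(R_j,R_j') = \snorm{\phi_j}^2_\k$ and $r\,\bE\k(R,R') = \frac{1}{r}\snorm{\sum_j \phi_j}^2_\k$, so the claim reduces to the Cauchy--Schwarz (equivalently, Jensen) inequality $\snorm{\sum_j \phi_j}^2_\k \le r\sum_j \snorm{\phi_j}^2_\k$.

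\textbf{First inequality.} The cases $r = 0$ (where $w^\trunc = w$ deterministically) and $r = m$ (where $\eta = w$ and $w^\trunc \stackrel{D}{=} w^\iidup$) are immediate, so assume $0 < r < m$ and introduce the probability vector $\nu$ with $\nu_i \defeq \lfloor mw_i\rfloor/(m-r)$, so that $w = \frac{m-r}{m}\nu + \frac{r}{m}\eta$. Substituting this mixture decomposition into \eqref{eqn:mmd_iid_resample} and combining with \eqref{eqn:mmd_truncated_resample}, I anticipate that a bookkeeping-heavy but straightforward expansion of the cross-terms will collapse to, with $K \defeq \k(\cS_n, \cS_n)$,
\begin{talign*}
\bE[\mmd^2_\k(\bS_n^{w^\iidup},\bS_n^w)] - \bE[\mmd^2_\k(\bS_n^{w^\trunc},\bS_n^w)] = \frac{m-r}{m^2}\Bigbraces{m\bigparenth{\nu^\top \diag(K) - \nu^\top K\nu} + r (\nu-\eta)^\top K (\nu-\eta)}.
\end{talign*}
Both terms inside the brace are nonnegative: the first is the RKHS variance $\bE_{Y\sim\bS_n^\nu}\snorm{\k(Y,\cdot)}^2_\k - \snorm{\bE_{Y\sim\bS_n^\nu}\k(Y,\cdot)}^2_\k$ (which can also be seen as $\bE\k(Y,Y)-\bE\k(Y,Y')\ge 0$ by Jensen), and the second is $\ge 0$ by positive semidefiniteness of $K$.

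The principal obstacle is the algebraic manipulation in the first inequality: carefully tracking the six cross-terms produced by expanding $\bE\k(X,X)$ and $\bE\k(X,X')$ under the mixture $w = \frac{m-r}{m}\nu + \frac{r}{m}\eta$ so that the residual contribution $rm^{-2}(\bE\k(R,R)-\bE\k(R,R'))$ subtracts off cleanly, isolating the two visibly nonnegative pieces. The remaining steps (Cauchy--Schwarz for the second inequality and PSD positivity for the first) are essentially routine.
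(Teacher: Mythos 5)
Your proof of the second inequality is correct and is in substance the paper's own argument: the paper applies Jensen's inequality to the convex quadratic form $E_\k(v)=v^\top K v$ after writing the law of $R$ as the uniform mixture of the laws of the $R_j$, and your Cauchy--Schwarz bound $\snorm{\sum_j\phi_j}_\k^2\le r\sum_j\snorm{\phi_j}_\k^2$ on the stratum mean embeddings is exactly that inequality phrased in the RKHS (arguably a cleaner phrasing than the pushforward-measure bookkeeping).

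The gap is in the first inequality, where the step you explicitly deferred (``I anticipate \dots will collapse to'') is the entire content of the argument, and the identity you wrote is not correct as stated. Carrying out the expansion with $w=\frac{m-r}{m}\nu+\frac{r}{m}\eta$, $\nu\defeq\lfloor mw\rfloor/(m-r)$, and $d\defeq(K_{11},\dots,K_{nn})$ gives
\begin{talign}
\bE[\mmd^2_\k(\bS_n^{w^\iidup},\bS_n^w)]-\bE[\mmd^2_\k(\bS_n^{w^\trunc},\bS_n^w)]
=\frac{m-r}{m^3}\Bigparenth{m\bigparenth{\nu^\top d-\nu^\top K\nu}+r\,(\nu-\eta)^\top K(\nu-\eta)},
\end{talign}
so your prefactor $\frac{m-r}{m^2}$ should be $\frac{m-r}{m^3}$. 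Since the prefactor is positive, your two nonnegativity observations (the RKHS variance term $\nu^\top d-\nu^\top K\nu\ge 0$, which is part (a) of the resampling proposition applied to $\nu\in\simplex$, and $K\succeq 0$) do finish the argument once the identity is actually verified; your $r\in\{0,m\}$ edge cases are also fine, since $r=m$ forces all $\lfloor mw_i\rfloor=0$. But as submitted, the crucial computation is both unexecuted and misstated, so the first inequality is not established. For comparison, the paper sidesteps the exact identity: with $\theta=r/m$ and $\xi=\nu$ it bounds $w^\top Kw\le\theta\,\eta^\top K\eta+(1-\theta)\,\xi^\top K\xi$ by convexity of $v\mapsto v^\top Kv$ (Jensen) and then $\xi^\top K\xi\le\xi^\top d$ by nonnegativity of the i.i.d.-resampling MMD, which plugged into the difference of the closed forms gives the claim in a few lines. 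Your route, once the algebra is done correctly, yields a slightly stronger statement (an exact formula for the gap), which is a nice byproduct, but you should either complete that expansion or adopt the convexity shortcut.
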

\begin{proof}[Proof of \cref{prop:mmd_resample_tower}]
  Let $K \defeq \k(\cS_n, \cS_n)$.
  To show the first inequality, note that since $\eta = \frac{mw-\lfloor mw\rfloor}{r}$, by \cref{prop:mmd_resample},
  \begin{talign}
    \bE[\mmd^2_{\k}(\bS_n^{w^\trunc},\bS_n^w)] &= \frac{r(\bE\k(R,R) - \bE\k(R,R'))}{m^2} \\
                                               &= \frac{r(\sum_{i\in[n]}K_{ii}\eta_i - \sum_{i,j\in[n]}K_{ij}\eta_i\eta_j)}{m^2}\\
                                               &= \frac{1}{m}\left(\sum_{i\in[n]} K_{ii}\left(w_i-\frac{\lfloor mw_i \rfloor}{m}\right) - \frac{m}{r}\left(w-\frac{\lfloor mw \rfloor}{m}\right)^\top K \left(w-\frac{\lfloor mw \rfloor}{m}\right)\right).
  \end{talign}
  Hence
  \begin{talign}
   &\bE[\mmd^2_{\k}(\bS_n^{w^\iidup},\bS_n^w)] - \bE[\mmd^2_{\k}(\bS_n^{w^\trunc},\bS_n^w)] \\
    =& \frac{1}{m}\left(\sum_{i\in[n]} K_{ii}\frac{\lfloor mw_i \rfloor}{m} + \frac{m}{r}\left(w-\frac{\lfloor mw \rfloor}{m}\right)^\top K \left(w-\frac{\lfloor mw \rfloor}{m}\right) - w^\top K w\right) \\
    =& \frac{1}{m}\left( (1-\theta)\sum_{i\in[n]}K_{ii}\xi_i + \theta \eta^\top K \eta - w^\top K w \right),
  \end{talign}
 where we let $\xi \defeq \frac{m}{m-r}\frac{\lfloor mw \rfloor}{m}$ and $\theta \defeq \frac{r}{m}$. 
 Note that $w = \theta \eta + (1-\theta)\xi$.
  By \cref{lem:Ik_cvx} and Jensen's inequality, we have
  \begin{talign}
    w^\top K w &= E_\k(w) \le \theta E_\k(\eta) + (1-\theta) E_\k(\xi) = \theta \eta^\top K \eta + (1-\theta) \xi^\top K \xi \le\theta \eta^\top K \eta + (1-\theta) \sum_{i\in[n]}K_{ii}\xi_i,
  \end{talign}
  where the last inequality follows from \cref{prop:mmd_resample}\ref{itm:mmd_iid_resample} with $w=\xi$ and the fact that MMD is nonnegative.
  Hence we have shown
  \begin{talign}
    \bE[\mmd^2_{\k}(\bS_n^{w^\iidup},\bS_n^w)] - \bE[\mmd^2_{\k}(\bS_n^{w^\trunc},\bS_n^w)] \ge 0,
  \end{talign}
  as desired.

  For the second inequality, by \cref{prop:mmd_resample}, we compute
  \begin{talign}
    &\bE[\mmd^2_{\k}(\bS_n^{w^\trunc},\bS_n^w)] - \bE[\mmd^2_{\k}(\bS_n^{w^\truncstrat},\bS_n^w)] 
    = \frac{r}{m^2}\left(\frac{1}{r}\sum_{j\in[r]}\bE \k(R_j,R'_j) - \bE\k(R,R')\right).
  \end{talign}
  Note that
  \begin{talign}
    \bE\k(R,R') = \int_{[0,1)}\int_{[0,1)} k(x_{D_\eta^\inv(u)}, x_{D_\eta^\inv(v)}) \dd u \dd v = E_\k\left((D_\eta^\inv)_\# \unif[0,1)\right),
  \end{talign}
  where we used $T_\# \mu$ to denote the pushforward measure of $\mu$ by $T$.
  Similarly,
  \begin{talign}
    \frac{1}{r}\sum_{j\in[r]}\bE\k(R_j,R'_j) &= \frac{1}{r}\sum_{j\in[r]}\int_{[\frac{j}{r},\frac{j+1}{r})}\int_{[\frac{j}{r},\frac{j+1}{r})} k(x_{D_\eta^\inv(u)}, x_{D_\eta^\inv(v)}) \dd u \dd v \\
                                             &= \frac{1}{r} \sum_{j\in[r]} E_\k\left((D_\eta^\inv)_\# \unif\left[\frac{j}{r},\frac{j+1}{r}\right)\right)\\
                                             &\le E_\k\left((D_\eta^\inv)_\# \unif[0,1)\right) = \bE\k(R,R'),
  \end{talign}
  where in the last inequality we applied Jensen's inequality since $E_\k$ is convex by \cref{lem:Ik_cvx}.
  Hence we have shown
  \begin{talign}
    \bE[\mmd^2_{\k}(\bS_n^{w^\trunc},\bS_n^w)] - \bE[\mmd^2_{\k}(\bS_n^{w^\truncstrat},\bS_n^w)] \ge 0
  \end{talign}
  and the proof is complete.
\end{proof}

\section{Accelerated Debiased Compression}\label{sec:app_sub_quad_time}
In this section, we provide supplementary algorithmic details and deferred analyses for \LSKT (\cref{alg:LSKT}).
In \rpc (\cref{alg:rpc}), we provide details for the weighted extension of \citet[Alg.~2.1]{chen2022randomly} that is used extensively in our algorithms.
The details of \agm \citep[Alg.~14]{wang2023no} are provided in \cref{alg:agm}.
In \cref{subsec:LD_analysis}, we give the proof of \cref{thm:LD_guarantee} for the MMD error guarantee of \alrbc (\cref{alg:alrbc}).
In \cref{subsec:compresspp}, we provide details on \compresspptarget modified from \compresspp \citep{shetty2022distribution} to minimize MMD to $\bP$.
Finally, \cref{thm:LSKT_guarantee} is proved in \cref{sec:LSKT_guarantee_proof}.

\begin{algorithm}[htb]
  \caption{Weighted Randomly Pivoted Cholesky (\rpcnotag)  (extension of \citet[Alg.~2.1]{chen2022randomly})}
  \label{alg:rpc}
  \begin{algorithmic}\itemindent=-.7pc
    \STATE {\bf Input:} kernel $\k$, points $\cS_n=(x_i)_{i=1}^n$, simplex weights $w\in\simplex$, rank $r$
    \STATE $\tilde\k(i,j) \defeq \k(x_i,x_j)\sqrt{w_i}\sqrt{w_j}$ \COMMENT{reweighted kernel matrix function} %
    \STATE $F \gets \bm{0}_{n\times r}, \tS \gets \{\}, d \gets (\tilde \k(i,i))_{i\in[n]}$
    \FOR{$i = 1$ {\bf to } $r$}
    \STATE Sample $s \sim d/\sum_{j\in[n]} d_j$
    \STATE $\tS \gets \tS \cup \{s\}$
    \STATE $g \gets \tilde \k(:, s) - F(:, 1:i-1)F(s, 1:i-1)^\top$
    \STATE $F(:,i) \gets g/\sqrt{g_s}$
    \STATE $d \gets d - F(:, i)^2$ \COMMENT{$F(:, i)^2$ denotes a vector with entry-wise squared values of $F(:, i)$}
    \STATE $d \gets \max(d, 0)$ \COMMENT{numerical stability fix, helpful in practice}
    \ENDFOR
    \STATE $F \gets \diag((1/\sqrt{w_i})_{i\in[n]}) F$ \COMMENT{undo weighting; treat $1/\sqrt{w_i} = 0$ if $w_i = 0$}
    \STATE {\bf Return:} $\tS \subset [n]$ with $\abs{\tS} = r$ and $F \in \bR^{n\times r}$
  \end{algorithmic}
\end{algorithm}

\begin{algorithm}[htb]
  \caption{Accelerated Entropic Mirror Descent (\agmnotag) (modification of \citet[Alg.~14]{wang2023no})}
  \label{alg:agm}
  \begin{algorithmic}\itemindent=-.7pc
    \STATE {\bf Input:} kernel matrix $K \in \bR^{n\times n}$, number of steps $T$, initial weight $w_0\in \simplex$, aggressive flag \texttt{AGG}
    \STATE $\eta \gets \frac{1}{8 w_0^\top \diag(K)}$ if \texttt{AGG} else $\frac{1}{8\max_{i\in[n]} K_{ii}}$
    \STATE $v_0 \gets w_0$
    \FOR{$t=1$ {\bf to } $T$}
    \STATE $\beta_t \gets \frac{2}{t+1}$ 
    \STATE $z_t \gets (1-\beta_t)w_{t-1} + \beta_t v_{t-1}$
    \STATE $g \gets 2t\eta  Kz_t$
    \COMMENT{this is $\gamma_t\nabla f(z_t)$ in \citet[Alg.~14]{wang2023no} for $f(w) = w^\top Kw$}
    \STATE $v_t \gets v_{t-1} \cdot \exp(-g)$ \COMMENT{component-wise exponentiation and multiplication}
    \STATE $v_t \gets v_t / \norm{v_t}_1$ \COMMENT{$v_t = \argmin_{w\in\simplex} \langle g, w\rangle + D_{v_{t-1}}^\phi(w)$ for $\phi(w) = \sum_{i\in[n]}w_i\log w_i$}
    \STATE $w_t \gets (1-\beta_t)w_{t-1}+\beta_t v_t$
    \ENDFOR
    \STATE {\bf Return:} $w_T \in \simplex$
  \end{algorithmic}
\end{algorithm}

\subsection{\pcref{thm:LD_guarantee}}\label{subsec:LD_analysis}
We start with a useful lemma that bounds $w^\top(K-\hat K)w$ by $\tr(K-\hat K)$ for any simplex weights $w$.
\begin{lemma}\label{lem:quad_form_tr_bound}
    For any PSD matrix $A \in \bR^{n\times n}$ and $w\in\simplex$, we have
    \begin{talign}
        w^\top A w \le \tr(A^w)\le \max_{i\in[n]}A_{ii} \le \lambda_1(A),
    \end{talign}
    where $\lambda_1(A)$ denotes the largest eigenvalue of $A$.
\end{lemma}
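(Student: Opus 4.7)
The plan is to prove the three inequalities from left to right, each via a short and independent argument, with no deep machinery needed.

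For the first inequality $w^\top A w \le \tr(A^w)$, my plan is to use the square-root factorization $A = BB^\top$ with $B = A^{1/2}$ and write $A_{ij} = \langle B_{i,:}, B_{j,:}\rangle$ in terms of the rows of $B$. Then
\begin{talign*}
w^\top A w = \Big\|\sum_{i\in[n]} w_i B_{i,:}\Big\|_2^2
\quad\text{and}\quad
\tr(A^w) = \sum_{i\in[n]} w_i A_{ii} = \sum_{i\in[n]} w_i \|B_{i,:}\|_2^2.
\end{talign*}
Since $w\in\simplex$, Jensen's inequality applied to the convex function $x\mapsto\|x\|_2^2$ gives the claim immediately. (Alternatively, one can verify the equivalent inequality $\sum_{i,j} w_iw_j(A_{ii}+A_{jj}-2A_{ij})\ge 0$ term-by-term using $A_{ii}+A_{jj}-2A_{ij} = (e_i-e_j)^\top A(e_i-e_j)\ge 0$ and $w_iw_j\ge 0$.)

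The second inequality $\tr(A^w)\le \max_i A_{ii}$ is exactly the statement that a convex combination of real numbers is bounded by their maximum, since $\tr(A^w)=\sum_i w_i A_{ii}$ and $w\in\simplex$. The third inequality $\max_i A_{ii}\le \lambda_1(A)$ follows from $A_{ii} = e_i^\top A e_i$ together with the Rayleigh quotient characterization $\lambda_1(A) = \sup_{\|v\|_2=1} v^\top A v$, which holds for any symmetric PSD matrix.

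None of these steps is a real obstacle; the only thing that merits a sentence in the write-up is justifying that the square-root $B = A^{1/2}$ exists (which is true since $A$ is PSD). The proof is a few lines total.
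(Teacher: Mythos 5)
Your proof is correct, and the last two inequalities are argued exactly as in the paper: $\tr(A^w)=\sum_i w_i A_{ii}$ is a convex combination bounded by the maximum diagonal entry, and $A_{ii}=e_i^\top A e_i\le\lambda_1(A)$ via the Rayleigh quotient. Where you diverge is the first inequality. The paper uses the identity $w^\top A w=\sqrt{w}^\top A^w\sqrt{w}$, observes that $w\in\simplex$ makes $\twonorm{\sqrt{w}}=1$, and then chains $\sqrt{w}^\top A^w\sqrt{w}\le\lambda_1(A^w)\le\tr(A^w)$; this route passes through the spectrum of the reweighted matrix $A^w$ and, as a byproduct, gives the slightly sharper intermediate bound $w^\top A w\le\lambda_1(A^w)$. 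Your route — factor $A=BB^\top$ with $B=A^{1/2}$, write $w^\top A w=\bigl\|\sum_i w_i B_{i,:}\bigr\|_2^2$ and $\tr(A^w)=\sum_i w_i\snorm{B_{i,:}}_2^2$, and apply Jensen's inequality to the convex map $x\mapsto\stwonorm{x}^2$ — avoids any spectral argument for this step and is, if anything, more elementary; your parenthetical alternative, expanding $2\tr(A^w)-2w^\top A w=\sum_{i,j}w_iw_j(e_i-e_j)^\top A(e_i-e_j)\ge 0$, is also valid and requires nothing beyond positive semidefiniteness of $A$. Either variant fully establishes the lemma; the paper's version is only preferable if one wants the $\lambda_1(A^w)$ intermediate quantity, which the lemma statement itself does not need.
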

\begin{proof}[Proof of \cref{lem:quad_form_tr_bound}]
  Note that
  \begin{talign}
    w^\top A w &= \sqrt{w}^\top \diag(\sqrt{w}) A \diag(\sqrt{w}) \sqrt{w} = \sqrt{w}^\top A^w \sqrt{w}.
  \end{talign}
  The condition that $w \in \simplex$ implies $\twonorm{\sqrt{w}}=1$, so that
  \begin{talign}
    \sqrt{w}^\top A^w \sqrt{w} \le \lambda_1(A^w) \le \tr(A^w).
  \end{talign}
  To see $\tr(A^w) \le \max{i\in[n]} A_{ii}$, note that $\tr(A^w) = \sum_{i\in [n]} A_{ii}w_{i} \le \max_{i\in[n]} A_{ii}$  since $w \in \simplex$..

  Since $\lambda_1(A) = \sup_{x: \twonorm{x}=1} x^\top A x$, if we let $i^* \defeq \argmin_{i\in[n]} A_{ii}$, then the simplex weight with $1$ on the $i^*$-th entry has two-norm $1$, so we see that $\max_{i\in[n]} A_{ii} \le \lambda_1(A)$.
\end{proof}

Our next lemma bounds the suboptimality of surrogate optimization of a low-rank plus diagonal approximation of $K$.
\begin{lemma}[Suboptimality of surrogate optimization]\label{lem:sub_quad_req}
  Let $\kp$ be a kernel satisfying \cref{assum:mean_zero_p}.
  Let $\cS_n =(x_1,\ldots,x_n) \subset \bR^d$ be a sequence of points. 
  Define $K \defeq \kp(\cS_n, \cS_n) \in \bR^{n\times n}$.
  Suppose $\widehat{K} \in \bR^{n\times n}$ is another PSD matrix such that $K \succeq \widehat{K}$.
  Define $D \defeq \diag(K - \widehat K)$, the diagonal part of $K - \widehat K$, and form $K' \defeq \widehat K + D$.
  Let $w' \in \argmin_{w\in\simplex} w'^\top K' w'$.
  Then for any $w \in \simplex$,
  \begin{talign}
    \mmd_{\kp}^2(\bS_n^w, \bP) \le \mmd_{\kp}^2(\bS_n^{\wopt}, \bP) + \tr((K - \widehat K)^w) + \max_{i\in[n]} (K-\widehat K)_{ii} + (w^\top K'w - w'^\top K' w').
    \label{eqn:sub_quad_req}
  \end{talign}
\end{lemma}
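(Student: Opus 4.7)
The plan is to track the error induced by replacing the true kernel matrix $K$ by its low-rank plus diagonal surrogate $K'$, separately for the comparison point $w$ and the benchmark $\wopt$, and then leverage the definition of $w'$ to collapse the comparison on the surrogate into the residual $w^\top K'w - w'^\top K'w'$ that already appears on the right-hand side of \eqref{eqn:sub_quad_req}.

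First I would set $E \defeq K - \widehat K$, which is PSD by hypothesis, and $D \defeq \diag(E)$, so that by construction $K' = \widehat K + D = K - (E - D)$. Thus for every $u \in \simplex$ one has the exact identity $u^\top K u = u^\top K' u + u^\top E u - u^\top D u$. Since $\kp$ is mean-zero under $\bP$, we also have $\mmd^2_{\kp}(\bS_n^u, \bP) = u^\top K u$ whenever $u \in \simplex$, so the target inequality \eqref{eqn:sub_quad_req} is a purely matrix-analytic statement about $u^\top K u$ for $u\in\{w,\wopt\}$.

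Next I would subtract the two instances of this identity (for $u=w$ and $u=\wopt$) and use the optimality of $w'$ on the surrogate:
\begin{talign}
w^\top K w - \wopt^\top K\wopt
&= (w^\top K' w - \wopt^\top K'\wopt) + (w^\top E w - \wopt^\top E\wopt) - (w^\top D w - \wopt^\top D\wopt) \\
&\le (w^\top K' w - w'^\top K'w') + w^\top E w + \wopt^\top D\wopt,
\end{talign}
where I used $\wopt^\top K'\wopt \ge w'^\top K'w'$ (definition of $w'$), together with $\wopt^\top E\wopt \ge 0$ ($E\succeq 0$) and $w^\top D w \ge 0$ ($D$ is PSD and diagonal, as $E_{ii}\ge 0$).

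Finally I would bound the two residual terms with the tools already assembled in this paper. For $w^\top E w$, \cref{lem:quad_form_tr_bound} applied to the PSD matrix $E$ and the simplex weight $w$ gives $w^\top E w \le \tr(E^w) = \tr((K-\widehat K)^w)$, which matches the third term of \eqref{eqn:sub_quad_req}. For $\wopt^\top D\wopt = \sum_i (\wopt_i)^2 D_{ii}$, I would use $\sum_i (\wopt_i)^2 \le \sum_i \wopt_i = 1$ (since $\wopt\in\simplex$ and $\wopt_i\in[0,1]$) to conclude $\wopt^\top D\wopt \le \max_{i\in[n]} D_{ii} = \max_{i\in[n]}(K-\widehat K)_{ii}$, which matches the fourth term. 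Chaining the three bounds delivers exactly \eqref{eqn:sub_quad_req}. I do not foresee a genuine obstacle here; the only subtlety is choosing the right decomposition $K = K' + (E - D)$ so that the off-diagonal PSD part is controlled by $\tr((K-\widehat K)^w)$ while the diagonal part is controlled by $\max_i (K-\widehat K)_{ii}$, thereby isolating the contribution of $D$ (which need not be small in operator norm but is small on the simplex through $\sum \wopt_i^2 \le 1$).
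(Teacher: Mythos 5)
Your proposal is correct and follows essentially the same route as the paper's proof: the identical decomposition $K = K' + (K-\widehat K) - D$, the optimality of $w'$ against $\wopt$ on the surrogate $K'$, a trace bound via \cref{lem:quad_form_tr_bound} for the $w^\top(K-\widehat K)w$ term, and the diagonal bound $\max_i (K-\widehat K)_{ii}$ for the $\wopt$ term (which you verify elementarily via $\sum_i(\wopt_i)^2\le 1$ rather than citing the lemma again). The only difference is organizational—you subtract the two identities at once instead of chaining the two inequalities sequentially—which does not change the substance of the argument.
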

\begin{proof}[Proof of \cref{lem:sub_quad_req}]
    Since $K = K' + (K-\widehat K) - D$ by construction, we have
    \begin{talign}
        w^\top K w &= w^\top K' w + w^\top (K-\widehat K)w - w^\top D w \\
        &\le w^\top K' w + w^\top (K-\widehat K)w \\
        &= (w^\top K' w - w'^\top K' w') + w'^\top K' w' + w^\top (K-\widehat K)w \\
        &\le (w^\top K' w - w'^\top K' w') + w'^\top K' w' + \tr((K-\widehat K)^w),
    \end{talign}
    where we used the fact that $D \succeq 0$ and \cref{lem:quad_form_tr_bound}.
    Next, by the definition of $w'$, we have
    \begin{talign}
        w'^\top K' w' &\le (\wopt)^\top K' \wopt = (\wopt)^\top (K'-K) \wopt + (\wopt)^\top K \wopt \\\
    &=(\wopt)^\top (D - (K-\widehat K)) \wopt + (\wopt)^\top K \wopt \\
    &\le (\wopt)^\top D\wopt + (\wopt)^\top K \wopt \\
    &\le \max_{i\in[n]}(K-\widehat K)_{ii} +(\wopt)^\top K \wopt, 
    \end{talign}
    where we used the fact $K \succeq \widehat K$ in the penultimate step and \cref{lem:quad_form_tr_bound} in the last step.
  Hence we have shown our claim.
\end{proof}

\cref{lem:sub_quad_req} shows that to control $\mmd_{\kp}^2(\bS_n^w, \bP)$, it suffices to separately control the approximation error in terms of $\tr(K - \hat K)$ and the optimization error $(w^\top K'w - w'^\top K' w')$.
The next result establishes that using \rpc, we can obtain polynomial and exponential decay bounds for $\tr(K-\hat K)$ in expectation depending on the kernel growth of $\kp$.
\begin{proposition}[Approximation error of \rpc]
  \label{lem:rpc_approx_error}
  Let $\k$ be a kernel satisfying \cref{assum:kernel_growth}.
  Let  $\cS_\infty$ be an infinite sequence of points in $\bR^d$.
  For any $w \in \simplex$, let $F$ be the low-rank approximation factor output by $\rpc(\k, \cS_n, w, r)$.
  Define $K \defeq \k(\cS_n,\cS_n)$.
  If $r \ge (\frac{\cvrC R_n^\beta + 1}{\sqrt{\log 2}}+\sqrt{\log 2})^2 - \frac{1}{\log 2}$, then, with the expectation taken over the randomness in \rpc,
      \begin{talign}
        \bE\left[\tr\left((K - F F^\top)^w\right)\right] \le H_{n, r},
        \label{eqn:rpc_Hnr_bound}
      \end{talign}
      where $H_{n,r}$ is defined as
  \begin{talign}
    H_{n,r}\defeq
    \begin{cases}      8\sum_{\ell=\rpcq(r)}^n (\frac{\krcd_\k(R_n)}{\ell})^{\frac{2}{\alpha}}&\polygrowth(\cvrw,\cvrd), \\ 
     8\sum_{\ell=\rpcq(r)}^n \exp(1-(\frac{\ell}{\krcd_\k(R_n)})^{\frac{1}{\alpha}}) & \loggrowth(\cvrw,\cvrd),
    \end{cases}
  \label{eqn:lrbc_h}
      \end{talign}
      for $\krcd_\k$ defined in \eqref{eqn:krcd} and
\begin{talign}
    \rpcq(r) \defeq \floor{\sqrt{\frac{r+\frac{1}{\log 2}}{\log 2}} - \frac{1}{\log 2}}.
    \label{eqn:rpcq}
\end{talign}
    Moreover, $H_{n,r}$ satisfies the  bounds in \cref{thm:LD_guarantee}.
\end{proposition}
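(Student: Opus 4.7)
My plan is to exploit the fact that \rpc is nothing more than standard randomly pivoted Cholesky applied to the reweighted kernel matrix $\tilde K = K^w$, followed by a rescaling that undoes the reweighting. Tracing \cref{alg:rpc}, the internal factor $\tilde F \in \bR^{n\times r}$ built inside the loop is the output of standard RPCholesky run on $\tilde K$, while the returned matrix $F = \diag(1/\sqrt{w_i})\tilde F$ satisfies $(FF^\top)^w = \tilde F \tilde F^\top$. Therefore
\begin{talign*}
  \tr\!\bigl((K-FF^\top)^w\bigr) = \tr\!\bigl(K^w - \tilde F \tilde F^\top\bigr),
\end{talign*}
which is exactly the trace error of standard RPCholesky on the SPSD matrix $K^w$.

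With this reduction in hand, I would invoke the in-expectation trace bound for randomly pivoted Cholesky \citep[Thm.~3.1]{chen2022randomly}: for any integer $s$ with $0 \le s < r$,
\begin{talign*}
  \bE\bigl[\tr(K^w - \tilde F \tilde F^\top)\bigr] \le \Bigl(1 + \tfrac{s}{r-s+1}\Bigr)\, {\textstyle\sum_{\ell > s}} \lambda_\ell(K^w).
\end{talign*}
Setting $s = \rpcq(r)$ from \eqref{eqn:rpcq}, and using $\rpcq(r) \le \sqrt{r/\log 2}$ together with the hypothesis on $r$, the prefactor is bounded by $2$.

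The core quantitative step is to replace the tail of $\lambda_\ell(K^w)$ by the closed-form decay guaranteed by \cref{cor:Kp_eigval_bound}, which bounds $\lambda_\ell(K^w)$ by $4(\krcd_\k(R_n)/(\ell-1))^{2/\alpha}$ in the \polygrowth case and $4\exp(2 - 2((\ell-1)/\krcd_\k(R_n))^{1/\alpha})$ in the \loggrowth case whenever $\ell - 1 > \krcd_\k(R_n) + 1$. The technical assumption $r \ge \bigl(\tfrac{\cvrC R_n^\beta + 1}{\sqrt{\log 2}} + \sqrt{\log 2}\bigr)^2 - \tfrac{1}{\log 2}$ is engineered precisely to force $\rpcq(r) \ge \krcd_\k(R_n) + 1$, the threshold beyond which \cref{cor:Kp_eigval_bound} applies uniformly for every $\ell > \rpcq(r)$; verifying this reduces to unpacking the floor in \eqref{eqn:rpcq}, which is pure algebra. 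Multiplying the prefactor-$2$ bound by the factor-$4$ eigenvalue bound and re-indexing $\ell - 1 \mapsto \ell$ then reproduces the explicit form \eqref{eqn:lrbc_h} with constant $8$.

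Finally, to recover the big-O decay rates in \eqref{eqn:H_nr_big_O}, I would estimate the tail series. For \polygrowth with $\alpha < 2$, the exponent $2/\alpha > 1$ makes $\sum_{\ell \ge \rpcq(r)}\ell^{-2/\alpha}$ integrable with tail of order $\rpcq(r)^{1 - 2/\alpha}$, which yields $O(\sqrt{r}(R_n^{2\beta}/r)^{1/\alpha})$ after substituting $\rpcq(r) \asymp \sqrt{r}$. For \loggrowth, the summand decays super-polynomially, so the series is dominated by its leading term up to an $O(\sqrt{r})$ multiplicative factor, leaving the stated exponential form once we substitute the explicit lower bound $\rpcq(r) \ge 0.83\sqrt{r} - 2.39$. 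The only real obstacle is bookkeeping: tracking the constants $\cvrC$, $\log 2$, and the floor in $\rpcq(r)$ consistently across both growth regimes so that the prefactor remains at $8$.
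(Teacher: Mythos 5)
Your route is essentially the paper's: make explicit that \rpc on $(\kp,\cS_n,w)$ is plain randomly pivoted Cholesky run on the reweighted matrix $K^w$, so that $(FF^\top)^w=\tilde F\tilde F^\top$ and the weighted trace error is the standard RPCholesky trace error on $K^w$; reduce to an eigenvalue tail of $K^w$ via the guarantee of \citet{chen2022randomly}; bound each eigenvalue with \cref{cor:Kp_eigval_bound}; and estimate the resulting sums (integral comparison for \polygrowth; for \loggrowth the "leading term times $O(\sqrt r)$" heuristic is what the paper proves rigorously through an incomplete-gamma bound, using $\rpcq(r)\log 2\ge 0.83\sqrt r-2.39$). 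The one substantive issue is your quotation of \citet[Thm.~3.1]{chen2022randomly}: it does not provide an unconditional volume-sampling-style prefactor $1+\tfrac{s}{r-s+1}$. It yields $\bE[\tr(K^w-\tilde F\tilde F^\top)]\le(1+\eps)\sum_{\ell>s}\lambda_\ell(K^w)$ only when the number of pivots $r$ is at least about $s/\eps$ plus $s$ times a logarithmic factor; with $\eps=1$ this is precisely why $\rpcq(r)$ is defined as the largest $q$ with $r\ge 2q+q^2\log 2$, forcing the comparison rank to be of order $\sqrt r$ rather than anything up to $(r+1)/2$ as your quoted form would permit. Because you happen to take $s=\rpcq(r)$, the displayed prefactor-$2$ inequality is still the correct conclusion, but its justification must be the condition $r\ge 2\,\rpcq(r)+\rpcq(r)^2\log 2$ (automatic from \eqref{eqn:rpcq}), not the check $1+\tfrac{s}{r-s+1}\le 2$. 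Two harmless slips: \cref{cor:Kp_eigval_bound} requires $\ell>\krcd_\k(R_n)+1$ (not $\ell-1>\krcd_\k(R_n)+1$), and the hypothesis on $r$ only gives $\rpcq(r)>\krcd_\k(R_n)$ rather than $\rpcq(r)\ge\krcd_\k(R_n)+1$, which is exactly what is needed for the corollary to apply to every $\ell\ge\rpcq(r)+1$.
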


\begin{proof}[Proof of \cref{lem:rpc_approx_error}]
Recall the notation $\krcd_{\k}(R_n) = \frac{\cvrC R_n^\beta}{\log 2}$ from \eqref{eqn:krcd}.
Define $q \defeq \rpcq(r)$ so that $q$ is the biggest integer for which $r \ge 2q + q^2\log  2$.
The lower bound assumption of $r$ is chosen such that $q > \krcd_\k(R_n) > 0$. 
By \citet[Theorem 3.1]{chen2022randomly} with $\eps = 1$, we have
\begin{talign}
  \bE\left[\tr\left((K - F F^\top)^w\right)\right] \le 2\sum_{\ell = q+1}^n \lambda_\ell(K^{w}).
\label{eqn:rpc_approx_eigen_tail}
\end{talign}
Since $q > \krcd_k(R_n)$, we can apply \cref{cor:Kp_eigval_bound} to bound $\lambda_\ell(K^{w})$ for $\ell \ge q + 1$ and obtain \eqref{eqn:rpc_Hnr_bound} since $H_{n,r}$ \eqref{eqn:lrbc_h} is constructed to match the bounds when applying \cref{cor:Kp_eigval_bound} to \eqref{eqn:rpc_approx_eigen_tail}.
It remains to justify the bounds for $H_{n,r}$ in \cref{thm:LD_guarantee}.
  
  If $\k$ is $\polygrowth(\cvrw, \cvrd)$, by \cref{assum:kernel_growth} we have $\cvrw < 2$.
  Hence
  \begin{talign}
    H_{n,r}&= 8\sum_{\ell = q}^n \left(\frac{\krcd_\k(R_n)}{\ell}\right)^{\frac{2}{\cvrw}} 
                                         \le 8\krcd_\k(R_n)^{\frac{2}{\cvrw}} \int_{q-1}^\infty \ell^{-\frac{2}{\cvrw}} \dd \ell 
                                         = 8\krcd_\k(R_n)^{\frac{2}{\cvrw}} (q-1)^{1-\frac{2}{\cvrw}} 
                                         =O\left(\sqrt{r} (\frac{R_n^{2\beta}}{r})^{\frac{1}{\alpha}}\right),
  \end{talign}
  where we used the fact that $\int_{q-1}^\infty \ell^{-\frac{2}{\alpha}} \dd\ell = (q-1)^{1-\frac{2}{\alpha}}$ 
  for $\alpha < 2$, $\krcd_\k(R_n) = O(R_n^\beta)$, and $q = \Theta(\sqrt{r})$.

  If $\k$ is $\loggrowth(\cvrw, \cvrd)$, then
  \begin{talign}
    H_{n,r} &= 8 \sum_{\ell = q}^n \exp(1-\left(\frac{\ell}{\krcd_\k(R_n)}\right)^{\frac{1}{\cvrw}}) = 8e \sum_{\ell=q}^n c^{\ell^{1/\alpha}} \le 8e \int_{\ell=q-1}^\infty c^{\ell^{1/\alpha}},\label{eqn:rpc_approx_log_slack}
  \end{talign}
    where $c \defeq \exp(-\krcd_\k(R_n)^{-1/\alpha}) \in (0, 1)$.
    Defining $m \defeq -\log c > 0$ and $q'=q-1$, we have
    \begin{talign}
        \int_{x=q'}^\infty c^{x^{1/\alpha}}\dd x =\int_{x=q'}^\infty \exp(-mx^{1/\alpha})\dd x &= \alpha q'(mq'^{1/\alpha})^{-\alpha} \Gamma(\alpha, mq'^{1/\alpha})= \alpha m^{-\alpha} \Gamma(\alpha, mq'^{1/\alpha}),
        \label{eqn:rpc_approx_log_series}
    \end{talign}
    where $\Gamma(\alpha, x) \defeq \int_x^\infty t^{\alpha-1} e^{-t}\dd t$ is the incomplete gamma function.
    Since $\alpha > 0$, by \citet[Thm.~1.1]{pinelis2020exact}, we have
    \begin{talign}
    \Gamma(\alpha, mq'^{1/\alpha}) &\le \frac{(mq'^{1/\alpha} + b)^\alpha - (mq'^{1/\alpha})^\alpha}{\alpha b} e^{-mq'^{1/\alpha}},
    \end{talign}
    where $b$ is a known constant depending only on $\alpha$.
    By the equivalence of norms on $\bR^2$, there exists $C_\alpha > 0$ such that $(x+y)^\alpha \le C_\alpha(x^\alpha + y^\alpha)$ for any $x,y > 0$.
    Hence
    \begin{talign}
    \Gamma(\alpha, mq'^{1/\alpha}) &\le \frac{(mq'^{1/\alpha} + b)^\alpha}{\alpha b} e^{-mq'^{1/\alpha}} \le \frac{C_\alpha(m^\alpha q' + b^\alpha)}{\alpha b} e^{-mq'^{1/\alpha}}.
    \end{talign}
    Hence from \eqref{eqn:rpc_approx_log_series} we deduce 
    \begin{talign}
        \sum_{\ell=q'}^\infty c^{\ell^{1/\alpha}} &\le C_\alpha(q'b^{-1} + b^{\alpha-1}m^{-\alpha}) e^{-mq'^{1/\alpha}}.\label{eqn:rpc_approx_log_final}
    \end{talign}
    Since $m = -\log c  = \krcd_\k(R_n)^{-1/\alpha}$, we can bound the exponent by
    \begin{talign}
        -mq'^{1/\alpha} = -(L_k(R_n)^{-1} q')^{1/\alpha} = -(\frac{q'\log 2}{\cvrC R_n^\beta})^{1/\alpha} \le -(\frac{0.83\sqrt{r}-2.39}{\cvrC R_n^\beta})^{1/\alpha},
    \end{talign}
    where we used the fact that $q'\log 2 = (q-1)\log 2 \ge (\sqrt{\frac{r+\frac{1}{\log 2}}{\log 2}}- \frac{1}{\log 2} - 2)\log 2 \ge 0.83\sqrt{r}-2.39$.
    On the other hand, since $q'=q-1 \ge \krcd(R_n) = m^{-\alpha}$, we can absorb the $b^{\alpha-1}m^{-1}$ term in \eqref{eqn:rpc_approx_log_final} into $q$ and finally obtain the bounds for $H_{n,r}$ in \cref{thm:LD_guarantee}.
\end{proof}

The last piece of our analysis involves bounding the optimization error $(w^\top K'w - w'^\top K' w')$ in \eqref{eqn:sub_quad_req}.
\begin{lemma}[\agm guarantee for debiasing]\label{lem:AGM_guarantee}
  Let $K\in\bR^{n\times n}$ be an SPSD matrix. Let $f(w) \defeq w^\top K w$. Then the final iterate $x_T$ of Nesterov's 1-memory method \citep[Algorithm 14]{wang2023no} after $T$ steps with objective function $f(w)$, norm $\norm{\cdot} = \norm{\cdot}_1$,  distance-generating function $\phi(x) = \sum_{i=1}^n x_i\log x_i$, and initial point $w_0 = (\frac{1}{n}, \ldots, \frac{1}{n}) \in \simplex$ satisfies
  \begin{talign}
    f(w_T) - f(\wopt) \le \frac{16\log n\max_{i\in[n]}K_{ii}}{T^2},
  \end{talign}
  where $\wopt \in \argmin_{x\in\bR^n} f(x)$.
\end{lemma}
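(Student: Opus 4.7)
The plan is to invoke the standard $O(1/T^2)$ convergence guarantee for Nesterov's accelerated mirror descent (specifically, \citet[Alg.~14]{wang2023no}, cited in the excerpt), once we establish three ingredients: (i) smoothness of $f$ with respect to the $\ell_1$ norm, (ii) $1$-strong convexity of the entropic distance-generating function on $\simplex$, and (iii) a bound on the initial Bregman divergence $D^{\phi}_{w_0}(\wopt) \le \log n$. The generic acceleration guarantee then delivers a bound of the form $f(w_T) - f(\wopt) \le \frac{c\, L\, D^{\phi}_{w_0}(\wopt)}{T^2}$ for a small numerical constant $c$; substituting the three quantities above yields the claim.

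First, for smoothness, I would compute $\nabla^2 f(w) = 2K$ and show that $f$ has $\ell_1$-Lipschitz gradient with constant $L = 2\max_{i\in[n]} K_{ii}$. The key inequality is the Cauchy--Schwarz bound $|K_{ij}| \le \sqrt{K_{ii}K_{jj}} \le \max_{i} K_{ii}$ (valid because $K$ is SPSD), which gives $\snorm{Kv}_\infty \le (\max_i K_{ii}) \snorm{v}_1$ and hence, via the dual norm, establishes $L$-smoothness of $f$ in the $\ell_1$ norm in the sense required by \citet[Alg.~14]{wang2023no}.

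Second, for strong convexity of $\phi(x) = \sum_i x_i \log x_i$ on $\simplex$ with respect to $\snorm{\cdot}_1$, I would invoke Pinsker's inequality, which is the standard fact that $\phi$ is $1$-strongly convex in $\snorm{\cdot}_1$ on the simplex. Third, for the initial Bregman divergence, a direct computation with $w_0 = (\tfrac{1}{n},\ldots,\tfrac{1}{n})$ yields
\begin{talign}
D^{\phi}_{w_0}(\wopt) = \sum_i \wopt_i \log(n\wopt_i) = \log n - H(\wopt) \le \log n,
\end{talign}
since the Shannon entropy $H(\wopt) \ge 0$ on $\simplex$.

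Plugging $L = 2 \max_i K_{ii}$ and $D^{\phi}_{w_0}(\wopt) \le \log n$ into the accelerated mirror-descent rate of \citet[Alg.~14]{wang2023no} produces the claimed $\frac{16 \log n \max_i K_{ii}}{T^2}$ bound, provided the numerical constants in the cited rate match the form stated. The main potential obstacle is precisely verifying this constant match: the step size chosen in \cref{alg:agm} is $\eta = \frac{1}{8\max_i K_{ii}}$ (when \texttt{AGG} is off), and one must confirm that the cited convergence theorem applies with this step size and yields a leading constant of at most $16$. If the direct application of \citet[Alg.~14]{wang2023no} gives a slightly different constant, one may need to re-derive the telescoping potential-function argument (smoothness plus Bregman three-point identity plus the $\beta_t = 2/(t+1)$ momentum weights) by hand to pin down the factor, but the structure of the proof remains standard.
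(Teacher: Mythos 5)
Your proposal matches the paper's proof essentially step for step: both apply the cited guarantee for \citet[Alg.~14]{wang2023no} after (i) computing the $\ell_1$-smoothness constant $L = 2\max_{i\in[n]}K_{ii}$ (your Cauchy--Schwarz bound $|K_{ij}|\le\sqrt{K_{ii}K_{jj}}$ is the same fact the paper uses, namely that the largest entry of an SPSD matrix lies on its diagonal) and (ii) bounding $D^\phi_{w_0}(\wopt)\le\log n$ (you via nonnegativity of entropy, the paper via Jensen's inequality --- equivalent one-line arguments). The constant $16$ is simply inherited from the cited theorem's leading factor combined with $L$ and the $\log n$ bound, exactly as you anticipate, so your hedge about constant matching is resolved the same way the paper resolves it.
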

\begin{proof}[Proof of \cref{lem:AGM_guarantee}]
  We apply \citet[Theorem 14]{wang2023no}. Hence it remains to determine the smoothness constant $L > 0$ such that, for all $x, y \in \simplex$,
  \begin{talign}
    \norm{\nabla f(x) - \nabla f(y)}_\infty \le L \norm{x-y}_1,
  \end{talign}
  and an upper bound for the Bregman divergence $D^\phi_{w_0}(\wopt) = \sum_{i=1}^n \wopt_i \log\frac{\wopt_i}{(w_0)_i}= \sum_{i=1}^n \wopt_i \log n \wopt_i$.
  To determine $L$, note $\nabla f(w) = 2K w$, so we have, for any $x, y \in \simplex$,
  \begin{talign}
    \norm{\nabla f(x) - \nabla f(y)}_\infty &= 2\norm{K (x-y)}_\infty = 2 \max_{i\in [n]} \abs{K_{i, :} (x-y)} \\
                                            &\le 2 \max_{i \in [n]} \norm{K_{i, :}}_\infty \norm{x-y}_1 = 2\left(\max_{i \in [n]} K_{ii}\right) \norm{x-y}_1 = 2\left(\max_{i \in [n]} K_{ii}\right) \norm{x-y}_1,
  \end{talign}
  where we used the fact that the largest entry in an SPSD matrix appears on its diagonal.
  Thus we can take the smoothness constant to be
  \begin{talign}
    L = 2\max_{i\in[n]} K_{ii}.
  \end{talign}

  To bound $D^\phi_{w_0}({\wopt})$, note that by Jensen's inequality,
  \begin{talign}
    D^\phi_{w_0}({w}) = \sum_{i=1}^n w_i \log n w_i \le \log(\sum_{i=1}^n n w^2_i) = \log n + \log \norm{w}^2_2 \le \log n,
  \end{talign}
  where we used the fact that $\norm{w}_2^2 \le \norm{w}_1 = 1$ for $w \in \simplex$.
\end{proof}

With these tools in hand, we turn to the proof of \cref{thm:LD_guarantee}.
For the runtime of \alrbc, it follows from the fact that \rpc takes $O((\kev+r)nr)$ time and one step of \agm takes $O(nr)$ time.

The error analysis is different for the first adaptive iteration and the ensuing adaptive iterations.
Roughly speaking, we will show that the first adaptive iteration brings the MMD gap to the desired level, while the ensuing iterations do not introduce an excessive amount of error.

\para{Step 1. Bound $\Delta\mmd_{\kp}(w^{(1)})$}

  Let $K \defeq \kp(\cS_n,\cS_n)$ and $F$ denote the low-rank approximation factor generated by \rpc.
  Denote $\widehat K \defeq FF^\top$.
 Then $K' = \widehat K  + \diag(K - \widehat K)$.
  First, note that since $w^{(0)} = (\frac{1}{n},\ldots,\frac{1}{n})$, \stratresamp returns $\tilde w = w^{(0)}$ with probability one.
  By \cref{lem:sub_quad_req}, we have, using $\sqrt{a+b}\le\sqrt{a}+\sqrt{b}$ for $a,b \ge 0$ repeatedly and \cref{lem:quad_form_tr_bound} that $\tr((K-\widehat K)^w) \le \lambda_1(K-\widehat K)$ and $\max_{i\in[n]}(K-\widehat K)_{ii} \le \lambda_1(K-\widehat K)$,
  \begin{talign}
    \mmd_{\kp}(\bS_n^{w^{(1)}}, \bP) &\le \mmd_{\kp}(\bS_n^{\wopt}, \bP) + \sqrt{2\lambda_1(K-\widehat K)} + \sqrt{{w^{(1)}}^\top K' w^{(1)} - w'^\top K' w'} \\
                             &\le \mmd_{\kp}(\bS_n^{\wopt}, \bP) + \sqrt{2\lambda_1(K-\widehat K)} + \sqrt{\frac{16 \log n \kpnormSn}{T^2}},\label{eqn:lrbc_as_bound}
  \end{talign}
  where we applied \cref{lem:quad_form_tr_bound} and \cref{lem:AGM_guarantee} in the last inequality.
  Fix $\delta \in (0, 1)$.
  By Markov's inequality, we have
  \begin{talign}
    \Pr(\sqrt{\lambda_1(K-\widehat K)} > \sqrt{\frac{\bE\left[\lambda_1(K - \widehat K)\right]}{\delta}}) \le \delta.
  \end{talign}
  This means that with probability at least $1 - \delta$, we have 
  \begin{talign}
    \mmd_{\kp}(\bS_n^{w^{(1)}}, \bP) &\le \mmd_{\kp}(\bS_n^{\wopt}, \bP) + \sqrt{\frac{2\bE\left[\lambda_1(K - \widehat K)\right]}{\delta}} + \sqrt{\frac{16 \log n \kpnormSn}{T^2}}.
  \end{talign}
  Note that the lower bound condition on $r$ in \cref{assum:lr} implies the lower bound condition in \cref{lem:rpc_approx_error}.
  Hence, by \cref{lem:rpc_approx_error} with $w = (\frac{1}{n},\ldots,\frac{1}{n})$ and using the identity $\lambda_1(K-\widehat K) \le \tr(K-\widehat K)$ while noting that a factor of $n$ appears, we have
  \begin{talign}
    \mmd_{\kp}(\bS_n^{w^{(1)}}, \bP)&\le \mmd_{\kp}(\bS_n^{\wopt}, \bP) + \sqrt{\frac{2n H_{n,r}}{\delta}} + \sqrt{\frac{16  \kpnormSn\log n}{T^2}}.
  \end{talign}

\para{Step 2. Bound the error of the remaining iterations}

  Fix $\delta > 0$.
  The previous step shows that, with probability at least $1-\frac{\delta}{2}$,
  \begin{talign}
    \mmd_{\kp}(\bS_n^{w^{(1)}}, \bP) \le \mmd_{\kp}(\bS_n^{\wopt}, \bP) + \sqrt{\frac{4n H_{n,r}}{\delta}} + \sqrt{\frac{16\kpnormSn \log n}{T^2}}.
  \end{talign}
  Fix $q > 1$, and let $\tilde w$ be the resampled weight defined in the $q$-th iteration in \cref{alg:alrbc}.
  Without loss of generality, we assume $\tilde w_i > 0$ for all $i > 0$, since if $w_i = 0$ then index $i$ is irrelevant for the rest of the algorithm.
  Thus, thanks to \stratresamp, we have $\tilde w_i \ge 1/n$ for all $i\in[n]$.
  Let $a/b$ denote the entry-wise division between two vectors.
  As in the previous step of the proof, we let $K \defeq \kp(\cS_n,\cS_n)$, $F$ be the low-rank factor output by $\rpc(\kp, \cS_n, \tilde w, r)$, and $\widehat K = FF^\top$.
  For any $w \in \simplex$, recall the notation $K^w \defeq \diag(\sqrt w)K\diag(\sqrt w)$.
  Then we have
  \begin{talign}
  w^\top K w &= (w/\sqrt{\tilde w})^\top \diag(K^{\tilde w}) (w/\sqrt{\tilde w}) \\
  &= (w/\sqrt{\tilde w})^\top (\diag(\sqrt{\tilde w}) \widehat K \diag(\sqrt{\tilde w})) +  \diag(\sqrt{\tilde w}) (K-\widehat K) \diag(\sqrt{\tilde w})))(w/\sqrt{\tilde w}) \\
  &= w^\top \widehat K w + (w/\sqrt{\tilde w})^\top (\diag(\sqrt{\tilde w}) (K-\widehat K) \diag(\sqrt{\tilde w})))(w/\sqrt{\tilde w}) \\
  &\le w^\top \widehat K w + \max_{i\in[n]}(1/\tilde w_i)\tr(\diag(\sqrt{\tilde w}) (K-\widehat K) \diag(\sqrt{\tilde w})) \\
  &\le  w^\top \widehat K w + n\tr((K-\widehat K)^{\tilde w} ).
  \label{eqn:alrbc_rpc_bound_w}
  \end{talign}
  Note that 
  \begin{talign}
      K' = \widehat K + \diag(K-\widehat K) = K + (\widehat K - K)
 + \diag(K-\widehat K).
 \end{talign}
 Since $K' \succeq \widehat K$, we have
 \begin{talign}
     {w^{(q)}}^\top \widehat K w^{(q)} \le {w^{(q)}}^\top K' w^{(q)} \le \tilde w^\top K' \tilde w,
     \label{eqn:alrbc_rpc_bound_wq}
 \end{talign}
  where the last inequality follows from the if conditioning at the end of \cref{alg:alrbc}.
  In addition,
  \begin{talign}
      \tilde w^\top K' \tilde w &= \tilde w^\top (K + (\widehat K - K)
 + \diag(K-\widehat K)) \tilde w \\
 &\le \tilde w^\top K \tilde w + \tilde w^\top \diag(K-\widehat K )\tilde w \\
 &= \tilde w^\top K \tilde w + \sqrt{\tilde w}^\top \diag((K-\widehat K)^{\tilde w})\sqrt{\tilde w} \\
 &\le w^\top K \tilde w + \tr((K-\widehat K)^{\tilde w}),
  \end{talign}
  where we used the fact that $K \succeq \widehat K$ and $\twonorm{\sqrt{\tilde w}}=1$.
  Plugging the previous inequality into \eqref{eqn:alrbc_rpc_bound_wq} and then into \eqref{eqn:alrbc_rpc_bound_w} with $w=w^{(q)}$, we get
  \begin{talign}
    {w^{(q)}}^\top K w^{(q)} &\le \tilde w^\top K \tilde w + (n+1)\tr((K-\widehat K)^{\tilde w}).
    \label{eqn:alrbc_one_iter_bound}
  \end{talign}
  Taking square-root on both sides using $\sqrt{a+b} \le \sqrt{a}+\sqrt{b}$ for $a,b\ge 0$ and the triangle inequality, we get
  \begin{talign}
    \mmd_{\kp}(\bS_n^{w^{(q)}}, \bP) &\le \mmd_{\kp}(\bS_n^{\tilde w}, \bP)+  \sqrt{(n+1)\tr((K-\widehat K)^{\tilde w})} \\
                                     &\le \mmd_{\kp}(\bS_n^{w^{(q-1)}}, \bP)+  \mmd_{\kp}(\bS_n^{w^{(q-1)}}, \bS_n^{\tilde w}) + \sqrt{(n+1)\tr((K-\widehat K)^{\tilde w})}.
  \end{talign}
  By Markov's inequality, we have
  \begin{talign}
    \Pr(\mmd_{\kp}(\bS_n^{w^{(q-1)}}, \bS_n^{\tilde w}) > \sqrt{\frac{4Q \bE\left[\mmd^2_{\kp}(\bS_n^{w^{(q-1)}}, \bS_n^{\tilde w})\right]}{\delta}}) &\le \frac{\delta}{4Q} \\
    \Pr(\sqrt{\tr((K-\widehat K)^{\tilde w})} > \sqrt{\frac{4Q\bE\left[\tr((K-\widehat K)^{\tilde w})\right]}{\delta}}) &\le \frac{\delta}{4Q}.
  \end{talign}
  By \cref{prop:mmd_resample}\ref{itm:mmd_stratified_resample}, we have
  \begin{talign}
    \bE\left[\mmd^2_{\kp}(\bS_n^{w^{(q-1)}}, \bS_n^{\tilde w})\right] = \bE\left[\bE\left[\mmd^2_{\kp}(\bS_n^{w^{(q-1)}}, \bS_n^{\tilde w})\middle\vert w^{(q-1)}\right]\right] \le \frac{\kpnormSn}{n}.
  \end{talign}
  Thus by the union bound, with probability at least $1 - \frac{\delta}{2Q}$, using \cref{lem:rpc_approx_error} (recall low-rank approximation $\widehat K$ is obtained using $\tilde w$), we have
  
  \begin{talign}
    \mmd_{\kp}(\bS_n^{w^{(q)}}, \bP) &\le \mmd_{\kp}(\bS_n^{w^{(q-1)}}, \bP) + \sqrt{\frac{4Q\kpnormSn}{n\delta}} +  \sqrt{\frac{4Q(n+1)H_{n,r}}{\delta}}.
    \label{eqn:alrbc_one_iter_mmd}
  \end{talign}
  Finally, applying union bound and summing up the bounds for $q=1,\ldots,Q$, we get, with probability at least $1-\delta$,
  \begin{talign}
      \Delta\mmd_{\kp}(w^{(q)}) \le \sqrt{\frac{2n H_{n,r}}{\delta}} + \sqrt{\frac{16  \kpnormSn\log n}{T^2}}+ (Q-1)\left(\sqrt{\frac{4Q\kpnormSn}{n\delta}} +  \sqrt{\frac{4Q(n+1)H_{n,r}}{\delta}}\right).
  \end{talign}
  This matches the stated asymptotic bound in \cref{thm:LD_guarantee}.
  \qed

\subsection{Thinning with \compresspptargetnotag}\label{subsec:compresspp}
For compression with target distribution $\bP$, we modify the original KT-\compresspp algorithm of \citep[Ex.~6]{shetty2022distribution}: in \halve and \thin of \compresspp, we use \ktsplit with kernel $\kp$ without \ktswap (so our version of \compresspp outputs $2^\fg$ coresets, each of size $\sqrt{n}$), followed by \ktswaptarget to obtain a size $\sqrt{n}$ coreset.
We call the resulting thinning algorithm \compresspptarget.
We show in \cref{lem:cpp_guarantee_sg,cor:cpp_mmd_guarantee} that \compresspptarget satisfies an MMD guarantee similar to that of quadratic-time kernel thinning.

\begin{algorithm}[htb]
  \caption{\compresspptarget (modified \citet[Alg.~2]{shetty2022distribution} to minimize MMD to $\bP$)}
  \label{alg:compresspp_target}
  \begin{algorithmic}\itemindent=-.7pc
    \STATE {\bf Input:} kernel $\kp$ with zero-mean under $\bP$, input points $\cS_n=(x_i)_{i\in[n]}$, multiplicity $n'$ with $n'\in 4^\bN$, weight $w \in \simplex \cap (\frac{\bN_0}{n'})^n$, thinning parameter $\fg$, failure probability $\delta$
    \STATE $\tS \gets $ index sequence where $k \in [n]$ appears $n'w_k$ times
    \STATE $(\tI^{(\ell)})_{\ell\in[2^\fg]} \gets \compresspp(\fg, \cS_n[\tS])$ \COMMENT{\citet[Ex.~6]{shetty2022distribution} with KT substituted with \ktsplit in \halve and \thin.}
    \STATE $\tI^{(\ell)} \gets \tS[\tI^{(\ell)}]$ for each $\ell \in [2^\fg]$
    \STATE $\tI \gets \ktswaptarget(\kp, \cS_n, (\tI^{(\ell)})_{\ell\in[2^\fg]})$ 
    \STATE $w_\uCpp \gets \text{simplex weights corresponding to } \tI$ \COMMENT{$w_i = \frac{\text{number of occurrences of } i \text{ in } \tI}{\abs{\tI}}$}
    \STATE {\bf Return:} $w_{\uCpp} \in \simplex \cap (\frac{\bN_0}{\sqrt{n}})^n$ \COMMENT{Hence $\norm{w_{\uCpp}}_0 \le \sqrt{n}$}
  \end{algorithmic}
\end{algorithm}

\begin{lemma}[Sub-gaussian guarantee for $\compresspp$]\label{lem:cpp_guarantee_sg}
  Let $\cS_n$ be a sequence of $n$ points with $n \in 4^\bN$.
  For any $\delta \in (0, 1)$ and integer $\fg \ge \lceil \log_2\log(n+1)+3.1 \rceil$, consider the $\compresspp$ algorithm \citep[Algorithm 2]{shetty2022distribution}  with thinning parameter $\fg$, halving algorithm $\halve^{(k)} \defeq \symmetrize\footnote{Any halving algorithm can be converted into an unbiased one by symmetrization, i.e., returning either the output half or its complement with equal probability \citep[Remark 3]{shetty2022distribution}.}(\ktsplit(\k, \cdot, 1, \frac{n_k^2}{4n 2^\fg(\fg + (\beta_n+1)2^\fg)}\delta))$ for an input of $n_k \defeq 2^{\fg + 1 + k}\sqrt{n}$ points and $\beta_n \defeq \log_2\left(\frac{n}{n_0}\right)$, and with thinning algorithm $\thin \defeq \ktsplit(\k, \cdot, \fg, \frac{\fg}{\fg + (\beta_n+1)2^\fg}\delta)$.
  Then this instantiation of \compresspp compresses $\cS_n$ to $2^\fg$ coresets $(\cS_\out^{(i)})_{i\in[2^\fg]}$ of $\sqrt{n}$ points each.
  Denote the signed measure $\phi^{(i)} \defeq \frac{1}{n}\sum_{x\in\cS_n}\delta_x - \frac{1}{\sqrt{n}}\sum_{x\in \cS^{(i)}_{\out}} \delta_x$.
  Then for each $i \in [2^\fg]$, on an event $\Eequi^{(i)}$ with $\Pr(\Eequi^{(i)}) \ge 1-\frac{\delta}{2}$, $\phi^{(i)} = \tilde \phi^{(i)}$ for a random signed measure $\tilde \phi^{(i)}$ such that, for any $\delta' \in (0, 1)$,
\begin{talign}
  \Pr\left(\norm{\tilde \phi^{(i)} \k}_{\rkhs[\k]} \ge  a'_n \left(1 + \sqrt{\log(\frac{1}{\delta'})}\right)\right) \le \delta',\label{eqn:cpp_guarantee_sg}
\end{talign}
where
\begin{talign}
  a'_n=\frac{4}{\sqrt{n}}\left(2 + \sqrt{\frac{8}{3}\knormSn\log(\frac{6\sqrt{n}(\fg + (\frac{\log_2 n}{2} - \fg)2^\fg)}{\delta})\log(4\cvrnum{\k}\left(\balleuc(R_{n}), n^{-1/2}\right))}\right).
\end{talign}
\end{lemma}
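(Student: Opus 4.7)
The plan is to mirror the structure of \citet[Thm.~4 and App.~B]{shetty2022distribution} for \compresspp but refined to expose a sub-Gaussian tail, exactly as \cref{lem:kt_mmd_guarantee_sg} does for \ktsplit. The starting point is the Compress++ error decomposition: the signed measure $\phi^{(i)}$ produced by the pipeline $\halve^{(0)},\ldots,\halve^{(\beta_n)}$ and \thin equals a telescoping sum of error signed measures, one per \halve call along the divide-and-conquer tree (each rescaled by the corresponding bucket size) plus one for the final \thin call. Because \symmetrize does not affect correctness and only makes the halving unbiased, and since every call is a \ktsplit invocation, each such error term can be controlled by \cref{lem:kt_mmd_guarantee_sg}.

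First I would define $\Eequi^{(i)}$ as the intersection over all \ktsplit invocations along the recursion tree feeding output $i$ of their individual equivalence events from \cref{lem:kt_mmd_guarantee_sg}. A union bound establishes $\Pr(\Eequi^{(i)}) \ge 1-\delta/2$ once we count calls carefully: at recursion level $k$, \compress applies \halve to $4^{(\beta_n-k)/2}$ buckets of size $n_k$, for a total of $\sum_{k=0}^{\beta_n} 4^{(\beta_n-k)/2} \cdot 2^{\fg}$ halve operations across all $2^\fg$ output coresets, plus a single thin call per coreset. The failure probabilities $\frac{n_k^2}{4n 2^\fg(\fg+(\beta_n+1)2^\fg)}\delta$ and $\frac{\fg}{\fg+(\beta_n+1)2^\fg}\delta$ are chosen so that summing yields $\delta/2$ precisely.

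On $\Eequi^{(i)}$ each error contribution equals $\norm{\tilde\phi\k}_\k$ for a random signed measure of the type produced by SBHW, satisfying a sub-Gaussian bound $\mathrm{mean}+\mathrm{scale}\sqrt{\log(1/\delta')}$ with parameters $a_{n_k,n_k/2}, v_{n_k,n_k/2}$ for halving and $a_{2^\fg\sqrt n,\sqrt n}, v_{2^\fg\sqrt n,\sqrt n}$ for thinning (in the notation of \cref{lem:kt_mmd_guarantee_sg}). I would then aggregate these by the triangle inequality for $\norm{\cdot}_\k$ (for the mean terms) and by independence of the SBHW randomness combined with the standard Gaussian concentration for sums of independent sub-Gaussians (for the scale terms). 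The weighting by bucket size in the telescoping sum implies that halving errors at depth $k$ contribute $2^{-k}$ as much as the surface halves, so their aggregated contribution forms a geometric series whose total, under the assumption $\fg \geq \lceil \log_2\log(n+1)+3.1\rceil$, is dominated by the thin-step parameters up to the constant factor $4$ appearing in $a'_n$.

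The main obstacle is the bookkeeping that reduces the many distinct $a_{\cdot,\cdot}$ and $v_{\cdot,\cdot}$ terms across levels to a single $a'_n$: one must (a) uniformly replace $\cvrnum{\k}(\balleuc(R_{n_k}),\cdot)$ at every level by $\cvrnum{\k}(\balleuc(R_n),n^{-1/2})$ using monotonicity of the covering number in both radius (subsets of $\cS_n$ have radius $\le R_n$) and resolution (picking the coarsest scale), and (b) verify that the geometric sum $\sum_k 4^{(\beta_n-k)/2} 2^{-k}(\text{level-}k\text{ parameters})$ is bounded by a constant multiple of the thin parameters, which is where the lower bound on $\fg$ is used. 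With these two reductions, substituting $\delta'\mapsto e^{-t^2}$ and solving produces precisely the stated $a'_n(1+\sqrt{\log(1/\delta')})$ tail bound.
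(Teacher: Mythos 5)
Your plan re-derives the Compress++ aggregation from scratch, whereas the paper deliberately avoids this: it instantiates the sub-Gaussian parameters of every \ktsplit call via \cref{lem:kt_mmd_guarantee_sg}, records that these bounds hold \emph{conditionally} on the preceding calls with conditionally mean-zero errors (this is what \symmetrize buys), chooses the failure-probability splits $\gamma_k\propto n_k^2$ and $\gamma'\propto\fg$ so that the union bound totals $\delta/2$ \emph{and} so that the rescaled halve and thin parameters are balanced ($\tilde\zeta_\uH=\tilde\zeta_\uT$, Eq.~(13) of \citealp{shetty2022distribution}), and then simply invokes \citet[Thm.~4, Rmk.~7]{shetty2022distribution} to read off $\max(\hat a_n,\hat v_n)\le 4a'_n$ once $\fg\ge\lceil\log_2\log(n+1)+3.1\rceil$. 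Re-proving that meta-theorem is legitimate in principle, but two of the steps you rely on fail as stated.

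First, your covering-number reduction goes the wrong way. The \halve call on $n_k$ input points invokes \cref{lem:kt_mmd_guarantee_sg} at resolution $\eps_k=2/n_k$; for the dominant top-level calls this is $2^{-\fg}n^{-1/2}<n^{-1/2}$, so $\cvrnum{\k}(\balleuc(R_n),\eps_k)\ge\cvrnum{\k}(\balleuc(R_n),n^{-1/2})$ --- monotonicity only lets you pass to a \emph{coarser} scale by \emph{enlarging} the covering number, and $n^{-1/2}$ is not the coarsest scale in play. The halve contributions are actually controlled by the geometric rescaling across levels together with the size of $\fg$ (this is the content of Remark~7), not by covering-number monotonicity. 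Second, your aggregation treats the per-call error measures as independent sub-Gaussians; they are not independent, since each halve input is a function of earlier outputs. What holds is conditional sub-Gaussianity with conditional mean zero (properties the paper lists explicitly), and combining them requires the martingale-type argument that \citet[Thm.~4]{shetty2022distribution} packages; "independence of the SBHW randomness" does not suffice. Beyond this, your per-level call count $4^{(\beta_n-k)/2}$ is incorrect (the number of halve calls at a level scales like $n/n_k^2$ up to a factor $4^{\fg+1}$, which is exactly why $\gamma_k\propto n_k^2$ closes the union bound), and you never use the balancing $\gamma_0/\gamma'=2^\fg/\fg$ encoded in the lemma's specific failure probabilities, which is what delivers the explicit factor $4$ and the threshold $3.1$; asserting that the geometric series "is dominated by the thin-step parameters up to the constant factor 4" is precisely the computation that remains missing. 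As written, the claimed $a'_n(1+\sqrt{\log(1/\delta')})$ tail does not follow.
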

\begin{proof}[Proof of \cref{lem:cpp_guarantee_sg}]
  This proof is similar to the one for \citet[Ex.~6]{shetty2022distribution} but with explicit constant tracking and is self-contained, invoking only \citet[Thm.~4]{shetty2022distribution} which gives MMD guarantees for \compresspp given the sub-Gaussian parameters of \halve and \thin.

  Recall that $n_k$ is the number of input points for the halving subroutine at recursion level $k$ in $\compresspp$, and $\beta_n$ is the total number of recursion levels.
  Let $\cS_\uC$ denote the output of $\compress$ \citep[Alg.~1]{shetty2022distribution} of size $2^\fg \sqrt{n}$.
  Fix $\delta,\delta' \in (0, 1)$.
  Suppose we use $\halve^{(k)} \defeq \symmetrize(\ktsplit(\k, \cdot, 1, \gamma_{k}\delta))$ for an input of $n_k$ points for $\gamma_k$ to be determined.
  Suppose we use $\thin \defeq \ktsplit(\k, \cdot, \fg, \gamma'\delta)$ for $\gamma'$ to be determined; this is the kernel thinning stage that thins $2^{\fg}\sqrt{n}$ points to $2^\fg$ coresets, each with $\sqrt{n}$ points. Since the analysis is the same for all coresets, we will fix an arbitrary coreset without superscript in the notation.

  By \cref{lem:kt_mmd_guarantee_sg}, with notation $t \defeq \log\frac{1}{\delta'}$, there exist events $\cE_{k,j}$, $\cE_{\uT}$, and random signed measures $\phi_{k,j}$, $\tilde \phi_{k,j}$, $\phi_\uT$, $\tilde \phi_\uT$ for $0\le k \le \beta_n$ and $j\in[4^k]$ such that
  \begin{enumerate}[label=(\alph*)]
    \item
      $\Pr(\cE_{k,j}^c) \le \frac{\gamma_k\delta}{2}$ and $\Pr(\cE_\uT^c) \le \frac{\gamma'\delta}{2}$,
    \item
      $\ind{\cE_{k,j}}\phi_{k,j} = \ind{\cE_{k,j}}\tilde \phi_{k,j}$ and $\ind{\cE_\uT}\phi_\uT = \ind{\cE_\uT} \tilde \phi_\uT$,
    \item
      We have
      \begin{talign}
        &\Pr\left(\norm{\tilde \phi_{k,j} \k}_{\rkhs[\k]} \ge a_{n_k} + v_{n_k}\sqrt{t}\middle\vert \{\tilde \phi_{k',j'}\}_{k'>k,j'\ge 1}, \{\tilde \phi_{k',j'}\}_{k',j'<j}\right) \le e^{-t} \\
        &\Pr\left(\norm{\tilde \phi_\uT \k}_{\rkhs[\k]} \ge a'_n + v'_n\sqrt{t}\middle\vert S_\uC \right) \le e^{-t},
      \end{talign}
      where, by \cref{lem:kt_mmd_guarantee_sg}, and by increasing the sub-Gaussian constants if necessary, we have
      \begin{talign}
        a_{n_k} &\defeq v_{n_k} \defeq a_{n_k, n_k/2} = \frac{2}{n_k}\left(2 + \sqrt{\frac{8}{3}\knormSn\log(\frac{3n_k}{\gamma_k\delta})  \log (4\cvrnum{\k}\left(\balleuc(R_{n}), \frac{2}{n_k}\right))} \right),\\
        a'_n &\defeq v'_n \defeq a_{2^\fg\sqrt{n},\sqrt{n}} = \frac{1}{\sqrt{n}}\left(2 + \sqrt{\frac{8}{3}\knormSn\log(\frac{6\fg \sqrt{n}}{\gamma'\delta})\log(4\cvrnum{\k}\left(\balleuc(R_{n}), n^{-1/2}\right))}\right), 
        \qtext{and}
      \end{talign}
    \item $\bE\left[\tilde \phi_{k,j} \k\middle\vert \{\tilde \phi_{k',j'}\}_{k'>k,j'\ge 1}, \{\tilde \phi_{k',j'}\}_{k',j'<j}\right] = 0.$ 
  \end{enumerate}
  Hence on the event $\cE = \bigcap_{k, j} \cE_{k, j} \cap \cE_\uT$, these properties hold simultaneously.
  We will choose $\{\gamma_k\}_k$ and $\gamma'$ such that $\Pr(\cE^c) \le \frac{\delta}{2}$.
  By the union bound,
  \begin{talign}
    \Pr(\cE^c) \le \Pr(\cE^c_\uT) + \sum_{k=0}^{\beta_n} \sum_{j=1}^{4^k}\Pr(\cE_{k,j}^c) \le \frac{\gamma'\delta}{2} + \sum_{k=0}^{\beta_n} 4^k \frac{\gamma_k\delta}{2}.\label{eqn:loc:PrEc}
  \end{talign}
  On the event $\cE$, we apply \citet[Thm.~4, Rmk.~7]{shetty2022distribution} to get a sub-Gaussian guarantee for $\mmd_{\k}(\bS_n, \bS_{\out})$.
  We want to choose $\gamma_k$, $\gamma'$ such that the rescaled quantities $\tilde\zeta_\uH \defeq \frac{n_0}{2}a_{n_0}$ and $\tilde\zeta_\uT \defeq \sqrt{n}a'_n$ satisfy $\tilde\zeta_\uH = \tilde\zeta_\uT$ \citep[Eq.~(13)]{shetty2022distribution}, which implies that we need
  \begin{talign}
    \frac{3n_0}{\gamma_0 \delta} = \frac{6\fg\sqrt{n}}{\gamma'\delta} \Longleftrightarrow \frac{\gamma_0}{\gamma'} = \frac{2^{\fg}}{\fg}.
    \label{eqn:loc:gamma_ratio}
  \end{talign}
  Hence if we take $\gamma' = \frac{\fg}{\fg + (\beta_n+1)2^\fg}$ and $\gamma_k = \frac{n_k^2}{4n 2^\fg(\fg + (\beta_n+1)2^\fg)}$, then \eqref{eqn:loc:gamma_ratio} holds and the upper bound in \eqref{eqn:loc:PrEc} becomes $\frac{\delta}{2}$.
  Note that $n_k a_{n_k}$ is non-decreasing in $n_k$, so by \citet[Theorem 4, Remark 7]{shetty2022distribution}, $\compresspp(\delta, \fg)$ outputs a signed measure $\phi$ that, on the event $\cE$ with $\Pr(\cE^c) \le \frac{\delta}{2}$, equals another signed measure $\tilde \phi$ that satisfies, for any $\delta' \in (0, 1)$,
  \begin{talign}
    \Pr(\norm{\tilde\phi\k}_{\rkhs[\k]} \ge \hat a_n + \hat v_n \sqrt{\log(\frac{1}{\delta'})}) \le \delta',
  \end{talign}
  where $\hat a_n, \hat v_n$ satisfy $\max(\hat a_n, \hat v_n) \le 4a'_n$ whenever $\fg \ge \lceil \log_2 \log (n+1) + 3.1 \rceil$.
\end{proof}

\begin{corollary}[MMD guarantee for \compresspp]\label{cor:cpp_mmd_guarantee}
  Let $\cS_\infty$ be an infinite sequence of points in $\bR^d$ and $\k$ a kernel.
  For any $\delta \in (0,1)$ and $n \in \bN$ such that $n \in 4^\bN$, consider the \compresspp with the same parameters as in \cref{lem:cpp_guarantee_sg} with $\fg \ge \lceil \log_2\log(n+1)+3.1 \rceil$.
  Then for any $i \in [\sqrt{n}]$, with probability at least $1 - \delta$,%
\begin{talign}
  \mmd_{\k}(\bS_n, \bS^{(i)}_{\out}) \le  \frac{4}{\sqrt{n}}\left(2 + \sqrt{\frac{8}{3}\knormSn\log(\frac{6\sqrt{n}(\fg + (\frac{\log_2 n}{2} - \fg)2^\fg)}{\delta})\log(4\cvrnum{\k}\left(\balleuc(R_{n}), n^{-1/2}\right))}\right)\left(1+\sqrt{\log\frac{2}{\delta}}\right).
\end{talign}
\end{corollary}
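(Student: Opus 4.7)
The plan is to convert the sub-Gaussian tail bound from \cref{lem:cpp_guarantee_sg} into a high-probability MMD bound, mirroring the approach used to derive \cref{prop:kt_mmd_guarantee} from \cref{lem:kt_mmd_guarantee_sg}. Fix $i \in [2^\fg]$ and note that by the reproducing property
\begin{talign}
\mmd_{\k}(\bS_n, \bS_{\out}^{(i)}) = \snorm{\phi^{(i)}\k}_{\rkhs[\k]},
\end{talign}
where $\phi^{(i)}$ is the signed measure defined in \cref{lem:cpp_guarantee_sg}. The lemma provides an event $\Eequi^{(i)}$ with $\Pr(\Eequi^{(i)}) \geq 1 - \delta/2$ on which $\phi^{(i)} = \tilde\phi^{(i)}$, together with a sub-Gaussian tail bound on $\snorm{\tilde\phi^{(i)}\k}_{\rkhs[\k]}$.

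The next step is to apply the sub-Gaussian bound \eqref{eqn:cpp_guarantee_sg} with $\delta' = \delta/2$, yielding
\begin{talign}
\Pr\parenth{\snorm{\tilde\phi^{(i)}\k}_{\rkhs[\k]} \geq a'_n\parenth{1 + \sqrt{\log(2/\delta)}}} \leq \delta/2.
\end{talign}
Combining this with the bound on $\Pr((\Eequi^{(i)})^c)$ via a union bound, we obtain
\begin{talign}
\Pr\parenth{\snorm{\phi^{(i)}\k}_{\rkhs[\k]} \geq a'_n\parenth{1 + \sqrt{\log(2/\delta)}}}
&\leq \Pr((\Eequi^{(i)})^c) + \Pr\parenth{\Eequi^{(i)}, \snorm{\tilde\phi^{(i)}\k}_{\rkhs[\k]} \geq a'_n\parenth{1 + \sqrt{\log(2/\delta)}}} \\
&\leq \delta/2 + \delta/2 = \delta.
\end{talign}

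Finally, substituting the explicit form of $a'_n$ from \cref{lem:cpp_guarantee_sg}, together with $\beta_n + 1 = \frac{\log_2 n}{2} - \fg$ (since $n_0 = 2^{\fg+1}\sqrt{n}$ gives $\beta_n = \log_2(n/n_0) = \frac{\log_2 n}{2} - \fg - 1$), yields the stated bound. There is no real obstacle here: the work was done in \cref{lem:cpp_guarantee_sg} via \citet[Thm.~4]{shetty2022distribution}, and this corollary is simply the standard conversion from a sub-Gaussian parameter to a high-probability quantile, analogous to the argument in \pcref{prop:kt_mmd_guarantee}. The only care required is tracking that the prefactor $4$ appearing in $a'_n$ already incorporates the adjustment from $\compresspp$'s recursive blow-up as captured by \citet[Thm.~4, Rmk.~7]{shetty2022distribution}.
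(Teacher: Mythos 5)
Your proposal is correct and follows essentially the same route as the paper: invoke \cref{lem:cpp_guarantee_sg} with $\delta'=\delta/2$, union-bound the failure of $\Eequi^{(i)}$ with the sub-Gaussian tail of $\snorm{\tilde\phi^{(i)}\k}_{\rkhs[\k]}$ exactly as in the proof of \cref{prop:kt_mmd_guarantee}, and then substitute $a'_n$. Your extra remark identifying $\beta_n+1=\frac{\log_2 n}{2}-\fg$ is accurate but redundant, since the lemma already states $a'_n$ in the final form appearing in the corollary.
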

\begin{proof}
  After applying \cref{lem:cpp_guarantee_sg} with $\delta'=\frac{\delta}{2}$ and following the same argument as in the proof of \cref{prop:kt_mmd_guarantee}, we have, with probability at least $1-\delta$, 
  \begin{talign}
      \mmd_{\k}(\bS_n, \bS^{(i)}_{\out}) \le a_n'\left(1+\sqrt{\log\frac{2}{\delta}}\right).
  \end{talign}
  Plugging in the expression of $a_n'$ from \cref{lem:cpp_guarantee_sg} gives the claimed bound.
\end{proof}

\subsection{\pcref{thm:LSKT_guarantee}}\label{sec:LSKT_guarantee_proof}
First of all, the claimed runtime follows from the runtime of \alrbc (\cref{thm:LD_guarantee}), the $O(\kev 4^{\fg}n\log n)$ runtime of \compresspp, and the $O(\kev n^{1.5})$ runtime of \ktswaptarget.

  Without loss of generality assume $n \in 4^\bN$. 
  Fix $\delta \in (0, 1)$.
  Let $w^\db$ denote the output of $\alrbc$, and $w^\truncstrat$ denote the output of \stratresamp, both regarded as random variables.
  By \cref{thm:LD_guarantee}, we have, with probability at least $1-\frac{\delta}{3}$,
  \begin{talign}
    \mmd_{\kp}(\bS_n^{w^\db}, \bP) = \mmd_{\kp}(\bS_n^{\wopt}, \bP) + 
    O\left(\sqrt{\frac{nH_{n,r}}{\delta}}\right) + O\left(\sqrt{\frac{\kpnormSn \max(\log n,1/\delta)}{n}}\right).\label{eqn:loc:mmd_w_dagger_P}
  \end{talign}
  By \cref{prop:mmd_resample}\ref{itm:mmd_stratified_resample} with $\k=\kp$, we have the upper bound
  \begin{talign}
    \bE\left[\mmd^2_{\kp}(\bS_n^{w^\truncstrat}, \bS_n^{w^{\db}})\right] = \bE\left[\bE\left[\mmd^2_{\kp}(\bS_n^{w^\truncstrat}, \bS_n^{w^{\db}})\middle\vert w^\db\right]\right] \le \frac{\kpnormSn}{n}.
  \end{talign}
  Thus, by Markov's inequality,
  \begin{talign}
    \Pr(\mmd_{\kp}(\bS_n^{w^\truncstrat}, \bS_n^{w^{\db}}) \ge \sqrt{\frac{3\kpnormSn}{n\delta}}) \le \frac{\delta}{3}.
  \end{talign}
Hence, with probability at least $1-\frac{\delta}{3}$, we have
  \begin{talign}
    \mmd_{\kp}(\bS_n^{w^\truncstrat}, \bS_n^{w^{\db}}) \le \sqrt{\frac{3\kpnormSn}{n\delta}}.\label{eqn:loc:mmd_w_ts_w_dagger}
  \end{talign}
  Let $\bS_\out^{(i)}$ denote the $i$-th coreset output by \thin in \compresspptarget (\cref{alg:compresspp_target}). By \cref{cor:cpp_mmd_guarantee} with $\k=\kp$, we have, with probability at least $1-\frac{\delta}{3}$,
\begin{talign}
  \mmd_{\kp}(\bS^{w^\truncstrat}_n, \bS^{(i)}_{\out}) =  O\left(\sqrt{\frac{\kpnormSn \log n\log (e\cvrnum{\kp}(\balleuc(R_{n}), n^{-1/2})))}{n}} \log \frac{e}{\delta}\right).
\end{talign}
Since \ktswaptarget can never increase $\mmd_{\kp}(\cdot,\bP)$, we have, by the triangle inequality,
\begin{talign}
  \mmd_{\kp}(\bS_n^{w_{\LSKTnotag}},\bP) \le \mmd_{\kp}(\bS_\out^{(1)},\bP) \le \mmd_{\kp}(\bS_\out^{(1)},\bS_n^{w^\truncstrat}) + \mmd_{\kp}(\bS_n^{w^\truncstrat}, \bS_n^{w^\db}) + \mmd_{\kp}(\bS_n^{w^\db}, \bP).\label{eqn:loc:mmd_w_ts_S_out}
\end{talign}
By the union bound, with probability at least $1-\delta$, the bounds \eqref{eqn:loc:mmd_w_dagger_P}, \eqref{eqn:loc:mmd_w_ts_w_dagger}, \eqref{eqn:loc:mmd_w_ts_S_out} hold, so that the claim is shown by adding together the right-hand sides of these bounds and applying \cref{assum:kernel_growth}.
\qed

\section{Simplex-Weighted Debiased Compression}\label{sec:app_cvx}
In this section, we provide deferred analyses for \recomb and \GSR/\LSR, as well as the algorithmic details of \recombbfs (\cref{alg:recombbfs}) and \ktswapls (\cref{alg:kt_swap_ls}).
\begin{algorithm}[!htb]
  \caption{\recombbfsnotag (rephrasing of \citet[Alg. 1]{tchernychova2016caratheodory} that takes $O(nm+m^3\log n)$ time)}
  \label{alg:recombbfs}
  \begin{algorithmic}\itemindent=-.7pc
    \STATE {\bf Input:} matrix $A\in\bR^{m\times n}$ with $m < n$ and one row of $A$ all positive, a nonnegative vector $x_0\in\bR^{n}_{\ge 0}$.
    \FUNCTION{\textrm{FindBFS}($A, x_0$)}\itemindent=-.7pc 
    \STATE \COMMENT{The requirement of $A$ and $x_0$ are the same as the input. This subroutine takes $O(n^3)$ time.}
    \STATE $x \gets x_0$
    \STATE $U, S, V^\top \gets \textrm{SVD}(A)$ \COMMENT{any $O(n^3)$-time SVD algorithm that gives $U S V^\top = A$}
    \STATE $V \gets (V^\top)_{m+1:n}$ \COMMENT{$V\in \bR^{(n-m)\times n}$ so that the null space of $A$ is spanned by the rows of $V$}
    \FOR{$i=1$ {\bf to } $n-m$}\itemindent=-.7pc
    \STATE $v \gets V_i$
    \STATE $k \gets \argmin_{j\in[n]: v_j > 0} \frac{x_j}{v_j}$ \COMMENT{This must succeed because $Av=0$ and $A$ has an all-positive row, so one of the coordinates of $v$ must be positive.}
    \STATE $x \gets x - \frac{x_k}{v_k} v$ \COMMENT{This zeros out the $k$-th coordinate of $x$ while still ensuring $x$ is nonnegative.}
    \FOR{$j=i+1$ {\bf to } $n-m$}\itemindent=-.7pc
    \STATE $V_j \gets V_j - \frac{V_{j,k}}{v_k} v$ \COMMENT{$\{V_j\}_{j={i+1}}^{n-m}$ remain independent and have $0$ on the $k$-th coordinate.}
    \ENDFOR
    \ENDFOR
    \STATE {\bf return: } $x \in \bR^n_{\ge 0}$ such that $Ax=Ax_0$ and $\norm{x}_0\le m$. 
    \ENDFUNCTION
    \STATE $x \gets x_0$
    \WHILE{$\norm{x}_0 > 2m$}\itemindent=-.7pc
    \STATE Divide $\{i\in [n]: x_i > 0\}$ into $2m$ index blocks $\tI_{1},\ldots,\tI_{2m}$, each of size at most $\floor{\frac{\norm{x}_0}{2m}}$.
    \STATE $A_i \gets A_{:, \tI_i} x_{\tI_i} \in \bR^m, \forall i \in [2m]$ \COMMENT{Since $\norm{x}_0$ is halved in each iteration, overall this step takes $O(nm)$ time.}
    \STATE Form $\hat A$ to be the $m\times 2m$ matrix with columns $A_i$  \COMMENT{Hence, one row of $A$ contains all positive entries.}
    \STATE $\hat x \gets \textrm{FindFBS}(\hat A, \bm 1_{2m})$ \COMMENT{$\norm{\hat x}_0 \le n$ and $\hat A \hat x = \sum A_i \hat x_i = \sum A_i = \sum A_{:,\tI_i}x_{\tI_i} = Ax$.}
    \FOR{$i = 1$ \bf{ to } $2m$}\itemindent=-.7pc
    \STATE $x_{\tI_i} \gets \hat x_i \cdot x_{\tI_i}$ \bf{ if } $\hat x_i > 0$ \bf{ else } $0$
    \ENDFOR
    \STATE \COMMENT{After the update,  $\norm{x}_0$ is halved while the equality $Ax=Ax_0$ is maintained.}
    \ENDWHILE
    \IF{$\norm{x}_0 \ge m+1$}\itemindent=-.7pc
    \STATE $\tI \gets \{i \in [n]: x_i > 0\}$
    \STATE $x_{\tI} = \textrm{FindBFS}(A_{:,\tI}, x_{\tI})$
    \ENDIF
    \STATE {\bf Return:} $x \in \bR^n_{\ge 0}$ such that $Ax = Ax_0$ and $\norm{x}_0 \le m$. 
  \end{algorithmic}
\end{algorithm}
\begin{algorithm}[!htb]
  \caption{KT-Swap with Linear Search (\ktswapls)}
  \label{alg:kt_swap_ls}
  \begin{algorithmic}\itemindent=-.7pc
    \STATE {\bf Input:} kernel $\kp$ with zero-mean under $\bP$, input points $\cS_n=(x_i)_{i\in[n]}$, weights $w \in \simplex$, $\texttt{fmt} \in \{\textup{SPLX}, \textup{CP}\}$
    \STATE $\tS \gets \{i\in[n]: w_i \neq 0\}$
    \STATE 
    \COMMENT{Maintain two sufficient statistics: $g = Kw$ and $D = w^\top K w$.}
    \FUNCTION{\textrm{Add}($g,D,i,t$)}\itemindent=-.7pc
    \STATE $g \gets g + t \kp(\cS_n, x_i)$
    \STATE $D \gets D + 2t g_i + t^2 \kp(x_i, x_i)$
    \STATE {\bf return: } $(g,D)$
    \ENDFUNCTION
    \FUNCTION{\textrm{Scale}($g,D,\alpha$)}\itemindent=-.7pc
    \STATE $g \gets \alpha g$
    \STATE $D \gets \alpha^2 D$
    \STATE {\bf return: } $(g,D)$
    \ENDFUNCTION

    \STATE $\texttt{Kdiag} \gets \kp(\cS_n,\cS_n)$
    \STATE $g \gets \bm{0} \in \bR^n$
    \STATE $D \gets 0$
    \FOR{$i$ {\bf in } $\tS$}\itemindent=-.7pc
    \STATE $(g,D) \gets \textrm{Add}(g, D, i, w_i)$
    \ENDFOR
    \FOR{$i$ {\bf in } $\tS$}\itemindent=-.7pc
    \STATE {\bf if } $w_i = 1$ {\bf then continue}; \COMMENT{We cannot swap $i$ out if $\sum_{j\neq i} w_j = 0$!}
    \STATE \COMMENT{First zero out $w_i$.}
    \STATE $(g, D) \gets \textrm{Add}(g,D,i,-w_i)$
    \STATE $(g, D) \gets \textrm{Scale}(g, D, \frac{1}{1-w_i})$
    \STATE $w_i = 0$
    \STATE \COMMENT{Next perform line search to add back a point.}
    \STATE $\alpha = (D-g) ./ (D-2g+\texttt{Kdiag})$; \COMMENT{$\alpha_i = \argmin_t \mmd^2_{\kp}(\bS_n^{te_i+(1-t)w}, \bP) = \argmin_t (1-t)^2 D + 2t(1-t) g + t^2 \texttt{Kdiag}$}
    \IF{$\texttt{fmt} = \textup{SPLX}$}
    \STATE $\alpha = \texttt{clip}(\alpha, 0, 1)$; \COMMENT{Clipping $\alpha$ to $[0,1]$. This corresponds to $\argmin_{t\in[0,1]} \mmd^2_{\kp}(\bS_n^{te_i+(1-t)w}, \bP)$.}
    \ENDIF
    \STATE $D' \gets (1-\alpha)^2 D + 2\alpha(1-\alpha)g + \alpha^2\texttt{Kdiag}$  \COMMENT{multiplications are element-wise}
    \STATE $k \gets \argmin_i D'_i$
    \STATE $(g,D) \gets \textrm{Scale}(g, D, 1-\alpha_k)$
    \STATE $(g,D) \gets \textrm{Add}(g, D, k, \alpha_k)$
    \ENDFOR

    \STATE {\bf Return:} $w \in \simplex$ 
  \end{algorithmic}
\end{algorithm}

\subsection{MMD guarantee for \recomb}
We start by stating the MMD guarantee of \recomb, a result that might be of independent interest.
\begin{proposition}[\recomb guarantee]\label{prop:recomb_guarantee}
Under \cref{assum:mean_zero_p,assum:kernel_growth}, given $w \in \simplex$ and that $m \ge (\frac{\cvrC R_n^\cvrd + 1}{\sqrt{\log 2}} + \sqrt{\log 2})^2-\frac{1}{\log 2} + 1$, \recombfull (\cref{alg:recomb}) outputs  $w_{\recombnotag} \in \simplex$ with $\norm{w_{\recombnotag}}_0 \le m$ in $O((\kev+m)nm+m^3\log n)$ time such that
 with probability at least $1 - \delta$, 
  \begin{talign}
    \mmd_{\kp}(\bS_n^{w_{\recombnotag}}, \bP) \le \mmd_{\kp}(\bS_n^{w}, \bP) + \sqrt{\frac{2\kpnormSn}{n\delta}} +  \sqrt{\frac{2n H_{n,{m-1}}}{\delta}},
    \label{eqn:recomb_guarantee}
  \end{talign}
  where $H_{n,r}$ is defined in \eqref{eqn:lrbc_h}.
\end{proposition}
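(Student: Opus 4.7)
My plan is to decompose the error using three applications of the MMD triangle inequality. Both \ktswapls and the final quadratic program in \cref{alg:recomb} search over feasible sets that contain the identity re-weighting (e.g., the line search allows $\alpha=w_i$ at $k=i$), so they can only decrease $\mmd_{\kp}(\cdot,\bP)$; hence $\mmd_{\kp}(\bS_n^{w_{\recombnotag}},\bP)\le \mmd_{\kp}(\bS_n^{w'},\bP) \le \mmd_{\kp}(\bS_n^{w},\bP) + \mmd_{\kp}(\bS_n^{w},\bS_n^{\tilde w}) + \mmd_{\kp}(\bS_n^{\tilde w},\bS_n^{w'})$. The resampling term is the easier piece: \cref{prop:mmd_resample}\ref{itm:mmd_stratified_resample}, applied conditionally on $w$ with output size $n$, gives $\bE[\mmd_{\kp}^2(\bS_n^{\tilde w},\bS_n^{w}) \mid w] \le \kpnormSn/n$, and Markov's inequality then yields $\mmd_{\kp}(\bS_n^{\tilde w},\bS_n^{w}) \le \sqrt{2\kpnormSn/(n\delta)}$ with probability at least $1-\delta/2$.

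The technical heart of the argument is controlling the recombination error $\mmd_{\kp}(\bS_n^{\tilde w},\bS_n^{w'})$. By construction of \recombbfs applied with $A=[F,\bm 1_n]^\top$, we have $F^\top w' = F^\top \tilde w$ and $\bm 1^\top w' = 1$, so with $\Delta \defeq K - FF^\top \succeq 0$,
\begin{talign}
  \mmd_{\kp}^2(\bS_n^{w'},\bS_n^{\tilde w}) = (w'-\tilde w)^\top K(w'-\tilde w) = (w'-\tilde w)^\top \Delta (w'-\tilde w).
\end{talign}
The triangle inequality for $\staticnorm{\Delta^{1/2}\cdot}_2$ combined with \cref{lem:quad_form_tr_bound} gives $\mmd_{\kp}^2(\bS_n^{w'},\bS_n^{\tilde w}) \le (\sqrt{\tr(\Delta^{w'})}+\sqrt{\tr(\Delta^{\tilde w})})^2$. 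Two algorithmic facts then interlock to control $\tr(\Delta^{w'})$: \stratresamp produces $\tilde w \in \simplex \cap (\N_0/n)^n$, so $\tilde w_i \ge 1/n$ on its support, while the null-space pivoting inside \recombbfs only zeros coordinates, so $\supp(w') \subseteq \supp(\tilde w)$. Combining these, $\tr(\Delta^{w'}) = \sum_i w_i' \Delta_{ii} \le \sum_{i\in\supp(\tilde w)}\Delta_{ii} \le n\tr(\Delta^{\tilde w})$, whence $\mmd_{\kp}^2(\bS_n^{w'},\bS_n^{\tilde w}) \le (\sqrt{n}+1)^2 \tr(\Delta^{\tilde w})$.

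To finish, the lower bound on $m$ in the hypothesis is precisely the precondition needed to invoke \cref{lem:rpc_approx_error} at rank $m-1$; applying it conditional on $\tilde w$ and taking the outer expectation gives $\bE[\tr(\Delta^{\tilde w})] \le H_{n,m-1}$. A second Markov step delivers $\tr(\Delta^{\tilde w}) \le 2H_{n,m-1}/\delta$ with probability $\ge 1-\delta/2$, yielding $\mmd_{\kp}(\bS_n^{w'},\bS_n^{\tilde w}) \le (\sqrt{n}+1)\sqrt{2H_{n,m-1}/\delta}$, which matches the stated $\sqrt{2nH_{n,m-1}/\delta}$ rate up to constants. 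A union bound across the two Markov events then assembles the final high-probability bound, and the runtime follows by summing $O(n)$ for \stratresamp, $O((\kev+m)nm)$ for \rpc at rank $m-1$, $O(m^3\log n)$ for \recombbfs, $O(nm\kev)$ for a sufficient-statistic implementation of \ktswapls, and $O(m^3)$ for the final quadratic program on a support of size at most $m$. The main obstacle is the coupling step above: without \stratresamp's $1/n$ lower bound on nonzero entries, the ratio $\tr(\Delta^{w'})/\tr(\Delta^{\tilde w})$ would be uncontrolled, and the deterministic recombination step could induce arbitrary MMD error.
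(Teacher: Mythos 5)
Your proof is correct and follows essentially the same route as the paper's: both arguments hinge on the moment-matching identity $F^\top w' = F^\top \tilde w$ from \recombbfs, the lower bound $\tilde w_i \ge 1/n$ on the support guaranteed by \stratresamp together with $\supp(w')\subseteq\supp(\tilde w)$, the \rpc trace bound of \cref{lem:rpc_approx_error} applied at rank $m-1$ (for which the hypothesis on $m$ is exactly the needed precondition), two Markov steps combined by a union bound, and the fact that \ktswapls and the final quadratic program cannot increase $\mmd_{\kp}(\cdot,\bP)$. The only divergence is how the low-rank residual enters: the paper compares quadratic forms directly, showing ${w'}^\top K w' \le \tilde w^\top K \tilde w + n\tr\big((K-FF^\top)^{\tilde w}\big)$ and then taking square roots, which yields exactly the stated $\sqrt{2nH_{n,m-1}/\delta}$ term, whereas your triangle inequality through $\mmd_{\kp}(\bS_n^{w'},\bS_n^{\tilde w})$ gives the slightly larger $(\sqrt{n}+1)\sqrt{2H_{n,m-1}/\delta}$, so as written you establish the proposition only up to a $(1+n^{-1/2})$ inflation of that constant; switching to the quadratic-form comparison (using $w'^\top FF^\top w' = \tilde w^\top FF^\top \tilde w$ and $K\succeq FF^\top$) recovers the exact statement.
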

\begin{proof}[Proof of \cref{prop:recomb_guarantee}]
    The runtime follows from the $O((\kev+m)nm)$ runtime of \rpc, the $O(\kev nm)$ runtime of \ktswapls, and the $O(nm+m^3\log n)$ runtime of \recombbfs \citep{tchernychova2016caratheodory} which dominates the $O(m^3)$ weight optimization step. %

    Recall $w' \in \simplex$ from \recomb.
  The formation of $F$ in \cref{alg:recomb} is identical to the formation of $F$ (with $r=m-1$) in \cref{alg:alrbc} for $q > 1$.
  Thus by \eqref{eqn:alrbc_rpc_bound_w} with $w=w'$, $K = \kp(\cS_n, \cS_n)$,
  \begin{talign}
    {w'}^\top K w' \le {w'}^\top F F^\top w' + n\tr((K - FF^\top)^{\tilde w}),
  \end{talign}
  where $K = \kp(\cS_n, \cS_n)$.
  By construction of $w'$ using \recombbfs, we have $F^\top\tilde w = F^\top w'$.
  Since $K \succeq F F^\top$, we have
  \begin{talign}
    {w'}^\top K w' &\le {\tilde w}^\top F F^\top \tilde w + n\tr((K - FF^\top)^{\tilde w}) \le {\tilde w}^\top K \tilde w + n\tr((K - FF^\top)^{\tilde w}). 
  \end{talign}
  We recognize the right-hand side is precisely the right-hand side of \eqref{eqn:alrbc_one_iter_bound} aside from having a multiplier of $n$ instead of $n+1$ in front of the trace and that $F$ is rank $m-1$.
  Now applying \eqref{eqn:alrbc_one_iter_mmd} with $Q=\frac{1}{2}$, $w^{(q)} = w'$, $w^{(q-1)} = w$, $r=m-1$, and noticing that \ktswapls and the quadratic-programming solve at the end cannot decrease the objective, we obtain \eqref{eqn:recomb_guarantee} with probability at least $1 - \delta$.
  Note that the lower bound of $m$ in \cref{assum:lr} makes $r=m-1$ satisfy the lower bound for $r$ in \cref{lem:rpc_approx_error}.
\end{proof}

\subsection{\pcref{thm:SR_guarantee}}\label{sec:thm:SR_guarantee_proof}
    The claimed runtime follows from the runtime of \GBCfull (\cref{alg:GBC}) or \alrbc (\cref{thm:LD_guarantee}) plus the runtime of \recomb (\cref{prop:recomb_guarantee}).

    Note the lower bound for $m$ in \cref{assum:lr} implies the lower bound condition in \cref{prop:recomb_guarantee}.
  For the case of \GSR, we proceed as in the proof of \cref{thm:GSKT_guarantee} and use \cref{prop:recomb_guarantee}.
  For the case of \LSR, we proceed as in the proof of \cref{thm:LSKT_guarantee} and use \cref{thm:LD_guarantee} and \cref{prop:recomb_guarantee}.
  \qed

\section{Constant-Preserving Debiased Compression}
\label{sec:app_cp}
In this section, we provide deferred analyses for \cholthin and \GSC/\LSC.

\subsection{MMD guarantee for \cholthin}
\begin{proposition}[\cholthin guarantee]\label{prop:cholthin_guarantee}
Under \cref{assum:mean_zero_p,assum:kernel_growth}, given $w \in \simplex$ and $m \ge (\frac{\cvrC R_n^\cvrd + 1}{\sqrt{\log 2}} + \frac{2}{\sqrt{\log 2}})^2 - \frac{1}{\log 2}$, \cholthin outputs  $w_{\cholthinnotag}\in\reals^n$ with $\bm{1}_n^\top w_{\cholthinnotag} = 1$ and $\norm{w_{\cholthinnotag}}_0 \le m$ in $O((\kev+m)nm + m^3)$ time such that, for any $\delta \in (0, 1)$, with probability $1-\delta$,
  \begin{talign}
\mmd_{\kp}(\bS_n^{w_{\cholthinnotag}}, \bP) \le 2\mmd_{\kp}(\bS_n^{w}, \bP)  + \sqrt{\frac{4H_{n,m'}}{\delta}},
\end{talign}
where $H_{n, m}$ is defined in \eqref{eqn:lrbc_h} and $m' \defeq m + \log 2 - 2\sqrt{m\log 2 + 1}$.
\end{proposition}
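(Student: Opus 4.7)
The plan is to establish the runtime by direct count and then prove the MMD bound in three stages: (i) an algebraic reduction that recasts the CP objective as a Nyström-approximable problem on the shifted kernel $\kp+c$, (ii) the construction of a reference CP weight $w^\dagger$ supported on the \rpc pivot set $\tI$ whose MMD distance to the input $w$ is bounded by a trace of the Nyström residual, and (iii) a Markov-plus-interlacing tail bound that yields the parameter $m'$ appearing in $H_{n,m'}$. For the runtime, evaluating $(\kp+c)(x,y)$ adds only $O(1)$ to each kernel call, so \rpc costs $O((\kev+m)nm)$; \ktswapls takes $O(\kev nm)$; and each of the two size-$m$ CP quadratic programs costs $O(m^3)$, for the advertised total.

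Write $K\defeq\kp(\cS_n,\cS_n)$ and $K_c\defeq K+c\bm 1_n\bm 1_n^\top$, the Gram matrix of $\kp+c$. Two identities drive the argument: for any CP weight $w'$ (i.e., $\bm 1_n^\top w'=1$), $w'^\top K_c w' = w'^\top K w' + c$, so CP minimization is shift-invariant; and the Nystr\"om formula $FF^\top = K_c(:,\tI)K_c(\tI,\tI)^{-1}K_c(\tI,:)$ yields both $FF^\top\preceq K_c$ and $FF^\top w' = K_c w'$ whenever $\mathrm{supp}(w')\subseteq\tI$. I would then define $w^\dagger\in\bR^n$ as the closed-form minimizer of $(w'-w)^\top K_c(w'-w)$ over CP weights supported on $\tI$, and use a Lagrangian calculation on this QP to establish the deterministic inequality
\[
(w^\dagger-w)^\top K_c(w^\dagger-w)\ \le\ \mathrm{tr}\bigl((K_c-FF^\top)^w\bigr).
\]
Since $\bm 1_n^\top(w^\dagger-w)=0$, the left side equals $\mmd_{\kp}^2(\bS_n^{w^\dagger},\bS_n^w)$.

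Because \ktswapls and the final CP-QP can only decrease $\mmd_{\kp}(\cdot,\bP)$ and the initial CP-QP on $\tI$ makes $w_{\cholthinnotag}$ no worse than $w^\dagger$, combining the triangle inequality with $(a+b)^2\le 4a^2+4b^2/3$ gives $\mmd_{\kp}^2(\bS_n^{w_{\cholthinnotag}},\bP)\le 4\,\mmd_{\kp}^2(\bS_n^w,\bP)+4\,\mathrm{tr}((K_c-FF^\top)^w)$. Applying \cref{lem:rpc_approx_error} to $\kp+c$ yields $\bE[\mathrm{tr}((K_c-FF^\top)^w)]\le 2\sum_{\ell>\rpcq(m)}\lambda_\ell(K_c^w)$. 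Cauchy's interlacing for the rank-one PSD perturbation $K_c^w = K^w+c\sqrt{w}\sqrt{w}^\top$ gives $\lambda_\ell(K_c^w)\le\lambda_{\ell-1}(K^w)$ for $\ell\ge 2$, shifting the summation index by one; unwinding the definition of $\rpcq$ from \eqref{eqn:rpcq} recovers exactly $m'=m+\log 2-2\sqrt{m\log 2+1}$ and bounds the sum by $H_{n,m'}/2$. Markov's inequality at failure level $\delta$ then produces the claimed tail bound.

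The principal obstacle is the deterministic inequality bounding the Lagrange-corrected Nystr\"om-projection error by the plain trace residual. Without the regularization $c\bm 1_n\bm 1_n^\top$, the Lagrange multiplier enforcing $\bm 1_n^\top w^\dagger=1$ could dominate because CP weights are unconstrained in sign and can have arbitrarily large magnitude. The algorithm's choice of $c$ as the average of the top-$m$ diagonal entries of $K$ is large enough to promote $\bm 1_n/\sqrt{n}$ to the leading eigenspace of $K_c$, so \rpc's greedy preference for dominant directions guarantees that the constant direction is accurately captured by $\mathrm{span}\,K_c(:,\tI)$; this neutralizes the Lagrange correction and closes the argument.
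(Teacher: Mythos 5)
Your runtime count is right, and the tail of your argument is a legitimate variant of the paper's: invoking Chen et al.'s trace bound (the intermediate step inside \cref{lem:rpc_approx_error}) and then Cauchy interlacing for the rank-one perturbation $K_c^w=K^w+c\sqrt{w}\sqrt{w}^\top$ to shift the eigenvalue index does recover the same $m'$ that the paper obtains by exhibiting an explicit rank-$q$ approximation of $K_c^w$. The gap is exactly where you flag it: the ``deterministic inequality'' $(w^\dagger-w)^\top K_c(w^\dagger-w)\le\tr\bigl((K_c-FF^\top)^w\bigr)$ for the \emph{sum-constrained} minimizer $w^\dagger$. The Nystr\"om/optimality argument gives this (with equality to the residual quadratic form) only for the \emph{unconstrained} minimizer over weights supported on $\tI$; adding $\bm 1_n^\top w'=1$ raises the minimum by a Lagrange correction of the form $\epsilon^2/\bigl(\bm 1_\tI^\top (K_\tI+c\bm 1_\tI\bm 1_\tI^\top)^{-1}\bm 1_\tI\bigr)$, where $\epsilon$ is the mass deficit of the unconstrained projection. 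Your justification---that the choice of $c$ places $\bm 1_n$ in the leading eigenspace of $K_c$ and that \rpc's ``greedy preference for dominant directions'' guarantees the constant direction is captured by the selected columns---is not a proof: \rpc samples pivots randomly in proportion to residual diagonals, so no such capture holds on a given draw, and even an approximate-capture statement would have to be made quantitative and shown to absorb the correction within the stated probability budget. As written, the inequality is unsupported and cannot be asserted deterministically.

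The paper's proof avoids a constrained projection onto $\bS_n^w$ altogether. It compares the algorithm's constrained optimum $w^1$ (the CP quadratic program on $\tI$) with the \emph{unconstrained} minimizer of $\mmd_{\kc}(\cdot,\bP)$ on the same support and shows via Sherman--Morrison that the two are exact scalar multiples, $w^{c,\bP}_\tI=\rho_c\,w^1_\tI$ with $\rho_c=\frac{\bm 1_\tI^\top K_\tI^{-1}\bm 1_\tI}{1/c+\bm 1_\tI^\top K_\tI^{-1}\bm 1_\tI}$; the choice of $c$ as the average of the top-$m$ diagonal entries forces $\rho_c\ge 1/2$ for \emph{every} size-$m$ pivot set, via $\bm 1_\tI^\top K_\tI^{-1}\bm 1_\tI\ge m/\sum_{i\in[m]}\diag(K)^{\downarrow}_i$, with no claim about which columns \rpc selected. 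That is the source of the factor $2$ in front of $\mmd_{\kp}(\bS_n^w,\bP)$; the Epperly-style optimality/trace argument is used only in its valid, unconstrained form against $\bS_n^{w}$. Your route is in fact repairable, but the repair needs the same ingredient: since $K_c\succeq c\bm 1_n\bm 1_n^\top$, the unconstrained residual already pays $c\epsilon^2$, so $\epsilon^2\le\tr((K_c-FF^\top)^w)/c$, and the correction denominator is controlled by precisely the bound $c\,\bm 1_\tI^\top K_\tI^{-1}\bm 1_\tI\ge 1$ coming from the choice of $c$; this yields a constrained minimum at most a constant multiple of the trace residual. Without supplying that calculation (or the paper's $\rho_c$ argument), the central step of your proof is missing.
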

\begin{proof}[Proof of \cref{prop:cholthin_guarantee}]
The runtime follows from the $O((\kev+m)nm)$ runtime of \rpc, the $O(nm)$ runtime of \ktswapls, and the $O(m^3)$ runtime of matrix inversion in solving the two minimization problems using \eqref{eqn:cp_kkt_opt}.

To improve the clarity of notation, we will use $w^\db$ to denote the input weight $w$ to \cholthin.
For index sequences $\tI, \tJ \subset [n]$ and a kernel $\k$, we use $\k(\tI, \tJ)$ to indicate the matrix $\k(\cS_n[\tI],\cS_n[\tJ])=[\k(x_{i}, x_{j})]_{i\in\tI, j\in\tJ}$, and similarly for a function $f:\bR^n\to\bR$, we use $f(\tI)$ to denote the vector $(f(x_i))_{i\in\tI}$.

Recall the regularized kernel is $\kc \defeq \kp + c$.
Suppose for now that $c > 0$ is an arbitrary constant.
Let $\tI$ denote the indices output by \rpc in \cholthin.
Let
\begin{talign}
  w^c \defeq \argmin_{w: \supp(w) \subset \tI} \mmd^2_{\kc}(\bS_n^w, \bS_n^{w^\db}).
\end{talign}
Note that $w^c$ is not a probability vector and may not sum to $1$.

\para{Step 1. Bound $\mmd^2_{\kc}(\bS_n^{w^c}, \bS_n^{w^\db})$ in terms of \rpcnotag approximation error}

We start by using an argument similar to that of \citet[Prop. 3]{epperly2024kernel} to exploit the optimality condition of $w^c$.
Since
\begin{talign}
  \argmin_{w:\supp(w)\subset \tI}\mmd^2_{\kc}(\bS_n^w, \bS_n^{w^\db}) &= \argmin_{w:\supp(w)\subset\tI} w_\tI^\top \kc(\tI, \tI) w_\tI - 2 {w^{\db}}^\top \kc(\cS_n, \tI) w_{\tI},
\end{talign}
by optimality, $w^c$ satisfies, 
\begin{talign}
  \kc(\tI, \tI) w^c_\tI = \bS_n^{w^\db} \kc(\tI).
\end{talign}
We comment that the index sequence $\tI$ returned by \rpc makes $\kc(\tI,\tI)$ invertible with probability $1$: by the Guttman rank additivity formula of Schur complement \citep[Eq.~(6.0.4)]{zhang2006schur}, each iteration of \rpc chooses a pivot with a non-zero diagonal and thus increases the rank of the low-rank approximation matrix, which is spanned by the columns of pivots, by $1$. 
Hence
\begin{talign}
  \bS_n^{w^c}\kc(\cdot) &= \kc(\cdot, \cS_n)w^c 
                        = \kc(\cdot, \tI)w^c_\tI 
                        = \kc(\cdot, \tI)\kc(\tI, \tI)^{-1} \kc(\tI, \tI)w^c_\tI \\
                        &= \kc(\cdot, \tI)\kc(\tI, \tI)^{-1} \bS_n^{w^\db}\kc(\tI) 
                        = \bS_n^{w^\db}\kc_\tI(\cdot),
\end{talign}
where $\kc_\tI(x,y) \defeq \kc(x, \tI)\kc(\tI, \tI)^{-1}\kc(\tI, y)$.
Then
\begin{talign}
  \mmd^2_{\kc}(\bS_n^{w^c}, \bS_n^{w^\db}) &= \norm{\bS_n^{w^\db}\kc - \bS_n^{w^c}\kc}^2_{\kc} 
  = \norm{\bS_n^{w^\db}\kc - \bS_n^{w^\db}\kc_\tI}^2_{\kc} 
  = {w^\db}^\top (\kc - \kc_{\tI})(\cS_n,\cS_n) w^\db.
\end{talign}
Recall the index set $\tI$ consists of the $m$ pivots selected by \rpc on the input matrix
\begin{talign}
  K^\db_c \defeq \kc(\cS_n, \cS_n)^{w^\db}.
\end{talign}
Define 
\begin{talign}
  \widehat{K}^\db_c \defeq \kc_\tI(\cS_n,\cS_n)^{w^\db}.
\end{talign}
Thus, by \cref{lem:quad_form_tr_bound},
\begin{talign}
  \mmd^2_{\kc}(\bS_n^{w^c}, \bS_n^{w^\db}) &= {w^\db}^\top (\kc - \kc_{\tI})(\cS_n,\cS_n) w^\db 
  = \sqrt{w^\db}^\top (K^\db_c -\widehat K^\db_c) \sqrt{w^\db} 
  \le \lambda_1 (K^\db_c - \widehat K^\db_c) \le \tr(K^\db_c - \widehat K^\db_c).
\end{talign}

\para{Step 2. Bound $\tr(K^\db_c - \widehat K^\db_c)$ using the trace bound of the unregularized kernel}

Let $\bestapprox{A}_r$ denote the best rank-$r$ approximation of an SPSD matrix $A \in \bR^{n\times n}$ in the sense that
\begin{talign}
  \bestapprox{A}_r &\defeq \argmin_{\substack{X\in\bR^{n\times n}\\X =X^\top\\ A\succeq X \succeq 0 \\ \rank(X) \le r}} \tr(A - X). \label{eqn:best_rank_r_approx}
\end{talign}
By the Eckart-Young-Mirsky theorem applied to symmetric matrices \citep[Theorem 19]{dax2014low}, the solution to \eqref{eqn:best_rank_r_approx} is given by $r$-truncated eigenvalue decomposition of $A$, so that 
\begin{talign}
 \tr(A - \bestapprox{A}_r) = \sum_{\ell=r+1}^n \lambda_\ell(A).
\end{talign}
Let $q \defeq \rpcq(m)$ where $\rpcq$ is defined in \eqref{eqn:rpcq}, so that by \citet[Thm. 3.1]{chen2022randomly} with $\eps=1$, we have
\begin{talign}
\bE\left[\tr(K_c^\db - \widehat{K}_c^\db)\right] \le 2\tr(K_c^\db - \bestapprox{K_c^\db}_{q}).
\end{talign}
We know one specific rank-$q$ approximation of $K_c^\db$:
\begin{talign}
  \widetilde K_c^\db\defeq \bestapprox{K^\db}_{q-1} +  \diag(\sqrt{w^\db})c\bm 1_n \bm 1_n^\top \diag(\sqrt{w^\db}),
\end{talign}
which satisfies
\begin{talign}
  K^\db_c - \widetilde K_c^\db &= K^\db + \diag(\sqrt{w^\db})c\bm 1_n \bm 1_n^\top \diag(\sqrt{w^\db}) - \widetilde K_c^\db = K^\db - \bestapprox{K^\db}_{q-1}.
\end{talign}
Thus by the variational definition in \eqref{eqn:best_rank_r_approx}, we have
\begin{talign}
  \tr(K^\db_c - \bestapprox{K^\db_c}_{q}) \le \tr(K^\db_c - \widetilde K^\db_c) = \tr(K^\db - \bestapprox{K^\db}_{q-1}) = \sum_{\ell=q}^n \lambda_\ell(K^\db).
\end{talign}
Note the last bound does not depend on $c$.
The tail sum of eigenvalues in the last expression is the same (up to a constant multiplier) as the one in \eqref{eqn:rpc_approx_eigen_tail} except for an off-by-1 difference in the summation index.
A simple calculation shows that for $m' \defeq m + \log 2 - 2\sqrt{m\log 2 + 1}$, we have $\rpcq(m') = \rpcq(m)-1$.
Another simple calculation shows that $m \ge (\frac{\cvrC R_n^\cvrd + 1}{\sqrt{\log 2}} + \frac{2}{\sqrt{\log 2}})^2 - \frac{1}{\log 2}$ implies that $m'$ satisfies the lower bound requirement of $r$ in \cref{lem:rpc_approx_error}.
Thus, arguing as in the proof that follows \eqref{eqn:rpc_approx_eigen_tail}, we get
\begin{talign}
\bE\left[\tr(K_c^\db - \widehat K_c^{\db})\right] \le H_{n,m'}.
\end{talign}
Thus so far we have shown
\begin{talign}
\bE[\mmd_{\kc}^2(\bS_n^{w^c}, \bS_n^{w^\db})] \le
\bE\left[\tr(K_c^\db - \widehat{K}_c^\db)\right] \le H_{n,m'}.
\end{talign}
By Markov's inequality, with probability at least $1 - \delta$, we have
\begin{talign}
\mmd_{\kc}(\bS_n^{w^c}, \bS_n^{w^\db}) \le \sqrt{\frac{H_{n,m'}}{\delta}}.
\label{eqn:loc:CT_proof_H_bound}
\end{talign}
Recall that $\mmd_{\k}(\mu,\nu) = \norm{(\mu-\nu)\k}_{\k}$ for signed measures $\mu, \nu$.
By the triangle inequality, we have
\begin{talign}
\mmd_{\kc}(\bS_n^{w^c}, \bP) &\le \mmd_{\kc}(\bS_n^{w^c}, \bS_n^{w^\db}) + \mmd_{\kc}(\bS_n^{w^\db}, \bP) \\
&= \mmd_{\kc}(\bS_n^{w^c}, \bS_n^{w^\db}) + \mmd_{\kp}(\bS_n^{w^\db}, \bP),
\end{talign}
where we used that fact that $\sum_{i\in [n]}w^\db_i=1$ to get the identity $\mmd_{\kc}(\bS^{w^\db}, \bP) = \mmd_{\kp}(\bS^{w^\db}, \bP)$.
Hence, with probability at least $1 - \delta$,
\begin{talign}
\mmd_{\kc}(\bS_n^{w^c}, \bP) \le \mmd_{\kp}(\bS_n^{w^\db}, \bP) + \sqrt{\frac{H_{n,m'}}{\delta}}.
\label{eqn:wc_to_wdb_bound}
\end{talign}

\para{Step 3. Incorporating sum-to-one constraint}

We now turn $w^c$ into a constant-preserving weight while not inflating the MMD by much. Define
\begin{talign}
  w^1 \defeq \argmin_{w: \supp(w) \subset \tI, \sum_{i\in[n]} w_i=1} \mmd^2_{\kp}(\bS_n^{w}, \bP).
  \label{eqn:cp_sum_1_opt}
\end{talign}
Note $w^1$ is the weight right before \ktswapls step in \cholthin.
Let $K_\tI = \kp(\tI,\tI)$.
Let $\bm 1_\tI$ denote the $\abs{\tI}$-dimensional all-one veector.
The Karush-Kuhm-Tucker condition \citep[Sec.~4.7]{ghojogh2021kkt} applied to \eqref{eqn:cp_sum_1_opt} implies that, the solution $w^1$ is a stationary point of the Lagrangian function 
\begin{talign}
    L(w_{\tI}, \lambda) \defeq w_{\tI}^\top K_{\tI} w_{\tI} + \lambda (\bm 1_{\tI}^\top w_{\tI} - 1).
\end{talign}
Then $\nabla_{w_{\tI}}L(w_\tI^1, \lambda) = 0$ implies $2K_{\tI}w^1_{\tI} - \lambda \bm 1_{\tI} = 0$, so $w^1_{\tI} = \frac{\lambda K_\tI^{-1} \bm 1_\tI}{2}$.
The Lagrangian multiplier $\lambda$ is determined by the constraint $\bm 1_\tI^\top w_\tI = 1$, so we find
\begin{talign}
  w^1_\tI = \frac{K_\tI^{-1}\bm{1}_\tI}{\bm{1}_\tI^\top K_\tI^{-1}\bm{1}_\tI}.\label{eqn:cp_kkt_opt}
\end{talign}
Define
\begin{talign}
  w^{c,\bP} \defeq \argmin_{w: \supp(w) \subset \tI} \mmd^2_{\kc}(\bS_n^{w}, \bP).
\end{talign}
Since $w^{c,\bP}$ is optimized to minimize $\mmd_{\kc}$ to $\bP$ on the same support as $w^c$, we have
\begin{talign}
    \mmd_{\kc}(\bS_n^{w^{c,\bP}}, \bP) \le \mmd_{\kc}(\bS_n^{w^{c}}, \bP).
\end{talign}
The optimality condition for $w^{c,\bP}$ is 
\begin{talign}
  (K_\tI + c\bm{1}_\tI\bm{1}_\tI^\top) w - c \bm{1}_\tI = 0
  ,
\end{talign} 
and hence by the Sherman–Morrison formula,
\begin{talign}
  w^{c,\bP}_\tI = (K_\tI + c\bm 1_\tI \bm 1_\tI^\top)^{-1}c\bm{1}_\tI = \left(K_\tI^{-1} - \frac{cK^{-1}_\tI \bm 1_\tI \bm 1_\tI^\top K_\tI^{-1}}{1+c\bm 1_\tI^\top K_\tI^{-1}\bm 1_\tI}\right) c\bm 1_\tI = \frac{K_\tI^{-1}\bm 1_\tI}{1/c + \bm{1}_\tI^\top K_\tI^{-1}\bm{1}_\tI}.
\end{talign}
Let $\rho_c \defeq \frac{\bm{1}_\tI^\top K_\tI^{-1}\bm 1_\tI}{1/c + \bm{1}_\tI^\top K_\tI^{-1}\bm{1}_\tI}$, so that $w^{c,\bP}_\tI = \rho_c w^1_\tI$.
In particular, $w^1$ and $w^{c,\bP}$ are scalar multiples of one another.
To relate $\mmd_{\kp}(\bS_n^{w^1}, \bP)$ and $\mmd_{\kc}(\bS_n^{w^{c,\bP}}, \bP)$, note that
\begin{talign}
  \mmd^2_{\kp}(\bS_n^{w^1}, \bP) 
  &= {w^1_\tI}^\top K_\tI w^1_\tI 
  = \frac{{w^{c,\bP}_\tI}^\top K_\tI w^{c,\bP}_\tI}{\rho^2_c} 
  = \frac{{w^{c,\bP}_\tI}^\top (K_\tI+c\bm 1_\tI\bm 1_\tI^\top) w^{c,\bP}_\tI - c(\bm 1_\tI^\top w_\tI^c)^2}{\rho^2_c} \\
    &= \frac{\mmd_{\kc}^2(\bS_n^{w^{c,\bP}},\bP) + 2c\bm{1}_\tI^\top w_\tI^c -c  - c(\bm 1_\tI^\top w_\tI^c)^2}{\rho^2_c} = \frac{\mmd_{\kc}^2(\bS_n^{w^{c,\bP}},\bP) - c(\rho_c-1)^2}{\rho^2_c}.
\end{talign}

So far the argument does not depend on any particular choice of $c>0$. 
Let us now discuss how to choose $c$.
Note that
\begin{talign}
\bm 1_\tI^\top K_\tI^{-1} \bm 1_\tI &= m \frac{\bm 1_\tI}{\sqrt{m}}^\top K_\tI^{-1} \frac{\bm 1_\tI}{\sqrt{m}} 
\ge m \lambda_m(K_\tI^{-1}) \ge \frac{m}{\lambda_1(K_\tI)} \ge \frac{m}{\tr(K_\tI)} \ge \frac{m}{\sum_{i\in[m]} \diag(K)^{\downarrow}_i},
\end{talign}
where $\diag(K)^\downarrow$ denote the diagonal entries of $K=\kp(\cS_n,\cS_n)$ sorted in descending order.
Thus
\begin{talign}
    \rho_c &= \frac{1}{\frac{1}{c \bm 1^\top_\tI K_\tI^{-1} \bm 1_\tI} + 1} \ge \frac{1}{\frac{\sum_{i\in[m]} \diag(K)^{\downarrow}_i}{mc}+1}.
\end{talign}
Hence we can choose $c$ to make sure $\rho_c$ is bounded from below by a positive value.
Recall in \cholthin, we take
\begin{talign}
    c = \frac{\sum_{i\in[m]} \diag(K)^{\downarrow}_i}{m},
\end{talign}
so that $\rho_c \ge \frac{1}{2}$ and 
\begin{talign}
  \mmd^2_{\kp}(\bS_n^{w^1}, \bP) &= \frac{\mmd_{\kc}^2(\bS_n^{w^{c,\bP}},\bP) - c(\rho_c-1)^2}{\rho^2_c} \le 4\mmd_{\kc}^2(\bS_n^{w^{c,\bP}},\bP).
\end{talign}
Hence by \eqref{eqn:wc_to_wdb_bound} and the fact that \ktswapls and the final reweighting in \cholthin only improves MMD, we have, with probability at least $1 - \delta$,
\begin{talign}
\mmd_{\kp}(\bS_n^{w_{\cholthinnotag}}, \bP) \le \mmd_{\kp}(\bS_n^{w^{1}}, \bP) &\le 2\mmd_{\kc}(\bS_n^{w^{c,\bP}}, \bP) 
\le 2\mmd_{\kc}(\bS_n^{w^{c}}, \bP) 
\le  2\mmd_{\kp}(\bS_n^{w^\db}, \bP) + 2\sqrt{\frac{H_{n,m'}}{\delta}},
\end{talign}
where we use \cref{eqn:wc_to_wdb_bound} in the last inequality.
\end{proof}

\subsection{\pcref{thm:SC_guarantee}}\label{sec:thm:SC_guarantee_proof}
    The claimed runtime follows from the runtime of \GBCfull (\cref{alg:GBC}) or \alrbc (\cref{thm:LD_guarantee}) plus the runtime of \cholthin(\cref{prop:cholthin_guarantee}).

    Note the lower bound for $m$ in \cref{assum:lr} implies the lower bound condition in \cref{prop:cholthin_guarantee}.
  For the case of \GSC, we proceed as in the proof of \cref{thm:GSKT_guarantee} and use \cref{prop:cholthin_guarantee}.
  For the case of \LSC, we proceed as in the proof of \cref{thm:LSKT_guarantee} by invoking \cref{thm:LD_guarantee} and \cref{prop:cholthin_guarantee}.
\qed

\section{Implementation and Experimental Details}\label{sec:app_experiment}
In this section, we collect experimental details that were deferred from \cref{sec:experiments}.

\subsection{$O(d)$-time Stein kernel evaluation}
\label{subsec:O_d_ks_eval}
In this section, we show that for $\cS_n=(x_i)_{i\in[n]}$, each Stein kernel evaluation $\ksm(x_i,x_j)$ for a radially analytic base kernel (\cref{def:analytic_k}) can be done in $O(d)$ time after computing certain sufficient statistics in $O(nd^2+d^3)$ time.
Let $M \in \bR^{d\times d}$ be a positive definite preconditioning matrix for $\ksm$.
Let $L$ be the Cholesky decomposition of $M$ which can be done in $O(d^3)$ time so that $M=LL^\top$.
From the expression \eqref{eqn:ks_alt_form_analytic},  we can achieve $O(d)$ time evaluation if we can compute $\norm{x-y}_M^2$ and $M\nabla\log p(x)$ in $O(d)$ time.
For $M\nabla\log p(x)$, we can simply precompute $M\nabla\log p(x_i)$ for all $i\in[n]$.
For $\norm{x-y}_M^2$, we have
\begin{align}
  \norm{x-y}_M^2 =  (x-y)^\top M^{-1} (x-y) = (x-y)^\top (LL^\top)^{-1} (x-y) = \twonorm{L^{-1}(x-y)}^2.
\end{align}
Hence it suffices to precompute $L^{-1} x_i$ for all $i\in[n]$, and we can precompute the inverse $L^{-1}$ in $O(d^3)$ time. %

\subsection{Default parameters for algorithms}
For \alrbc, we always use $Q = 3$. 
To ensure that the  guarantees of \cref{lem:AGM_guarantee,thm:LD_guarantee} hold while achieving fast convergence in practice, we take the step size of \agm to be $1/(8\kpnormSn)$ in the first adaptive round and $1/(8 \sum_{i\in[n]} w_i^{(q-1)}\kp(x_i,x_i))$ in subsequent adaptive rounds. 
We use $T=7\sqrt{n_0}$ for \agm in all experiments.

We implemented our modified versions of \kttargetfull and \compresspptarget in JAX \citep{jax2018github}
so that certain subroutines can achieve a speedup using just-in-time compilation and the parallel computation power of GPUs.
For \compresspp, we use $\fg = 4$ in all experiments as in \citet{shetty2022distribution}.
For both \kttargetfull and \compresspptarget, we use choose $\delta = 1/2$ as in the \texttt{goodpoints} library.

Each experiment was run with a single NVIDIA RTX 6000 GPU and an AMD EPYC 7513 32-Core CPU.

\subsection{Correcting for burn-in details}\label{subsec:exp_goodwin_app}
We use the four MCMC chains provided by \citet{riabiz2022optimal} that include both the sample points and their scores.
The reference chain used to compute the energy distance is the same one used in \citet{riabiz2022optimal} for the energy distance and was kindly provided by the authors.

In \cref{tab:goodwin_runtime}, we collect the runtime for the burn-in correction experiments.

\cref{fig:goodwin_eq_app}, \cref{fig:goodwin_cvx_app}, \cref{fig:goodwin_cp_app}, display the results of the burn-in correction experiment of \cref{subsec:exp_goodwin} repeated with three other MCMC algorithms: MALA without preconditioning, random walk (RW), and adaptive random walk (ADA-RW).
The results of P-MALA from \cref{subsec:exp_goodwin}  are also included for completeness.
For all four chains, our methods reliably achieve better quality coresets when compared with the baseline methods.

\begin{figure}[h]
    \centering
    \includegraphics[width=\textwidth]{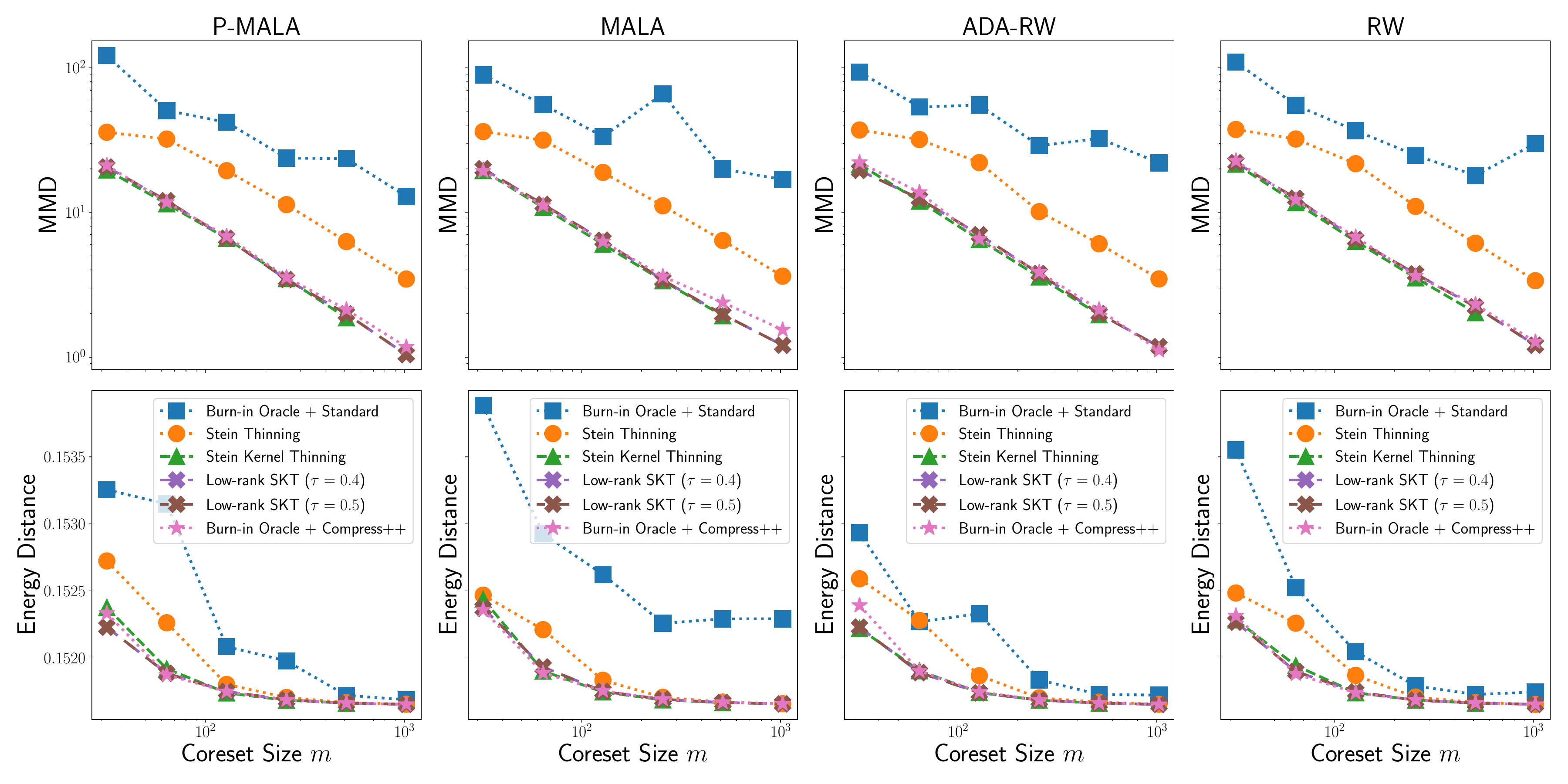}
    \caption{\textbf{Correcting for burn-in with equal-weighted compression.} For each of four MCMC algorithms and using only one chain, our methods consistently outperform the Stein and standard thinning baselines and match the 6-chain oracle.}
    \label{fig:goodwin_eq_app}
\end{figure}
\begin{figure}[h]
    \centering
    \includegraphics[width=\textwidth]{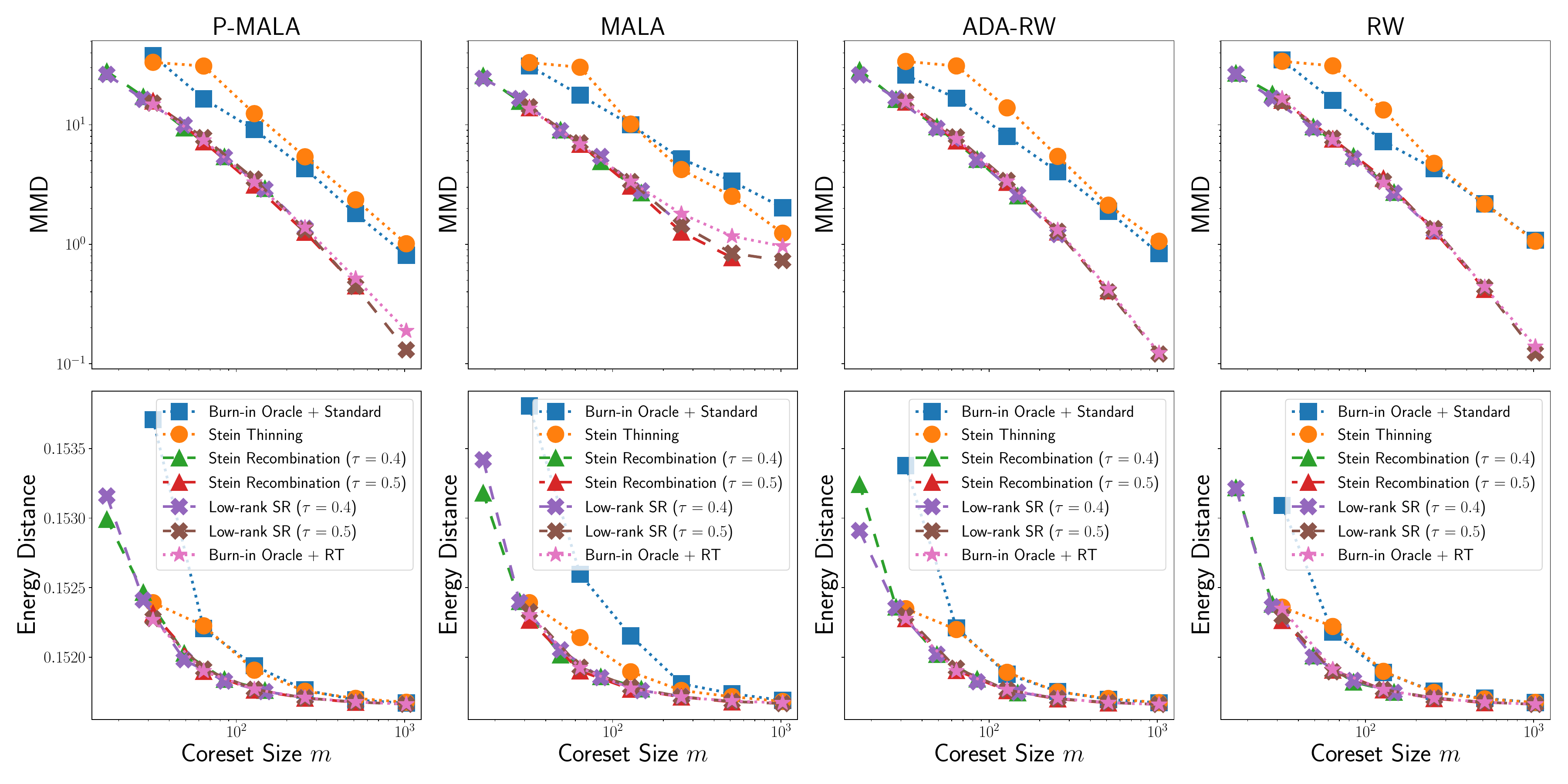}
    \caption{\textbf{Correcting for burn-in with simplex-weighted compression.} For each of four MCMC algorithms and using only one chain, our methods consistently outperform the Stein and standard thinning baselines and match the 6-chain oracle.}
    \label{fig:goodwin_cvx_app}
\end{figure}
\begin{figure}[h]
    \centering
    \includegraphics[width=\textwidth]{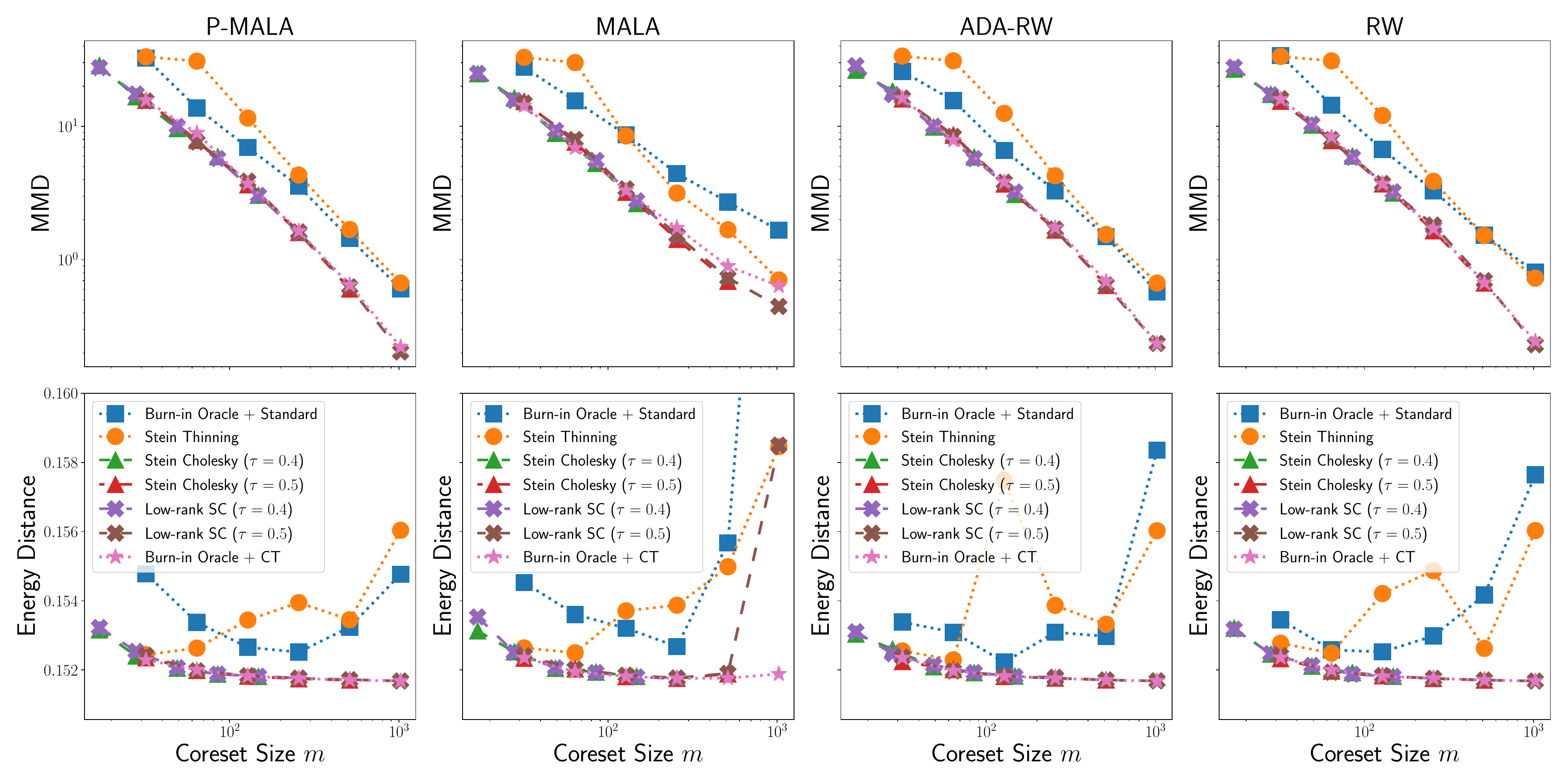}
    \caption{\textbf{Correcting for burn-in with constant-preserving compression.} For each of four MCMC algorithms and using only one chain, our methods consistently outperform the Stein and standard thinning baselines and match the 6-chain oracle.}
    \label{fig:goodwin_cp_app}
\end{figure}

\begin{table}[htbp]
  \centering
  \begin{tabular}{c|ccc|cccccc}
    \toprule
    $\mbi{n_0}$ & \textbf{\GBCnotag} & \textbf{\alrbcnotag ($0.5$)} & \textbf{\alrbcnotag ($0.4$)} & \textbf{KT} & \textbf{\kt-\compresspp} & \textbf{\recombnotag ($0.5$)} & \textbf{\recombnotag ($0.4$)} & \textbf{\cholthinnotag ($0.5$)} & \textbf{\cholthinnotag ($0.4$)} \\ \midrule
    $2^{14}$ & 2.50  & 13.22 & 12.88& 7.31 & 3.49 & 0.79 &0.60 & 2.06 &  1.96 \\ \midrule
    $2^{16}$ & 8.48 & 16.15 & 15.82 & 20.77 & 5.90 & 2.59 &1.68 & 3.66 &  3.04 \\ \midrule
    $2^{18}$ & 111.06 & 32.14& 20.60  & 193.03 & 11.73 & 11.16 &  2.63 & 6.48& 3.67  \\ \midrule
    $2^{20}$ & - & 314.67 & 131.31 & - & 35.99  & 113.71 & 11.06 & 51.14& 8.42  \\ \bottomrule
  \end{tabular}
\caption{\tbf{Breakdown of runtime (in seconds)} for the burn-in correction experiment $(d=4)$ of \cref{sec:experiments}. $n_0$ is the input size after standard thinning from the length $n=2\times 10^6$ chain (\cref{rmk:standard_thin_n_n0}). Each runtime is the median of 3 runs. 
KT and \kt-\compresspp output $m = \sqrt{n_0}$ equal-weighted points. \recombnotag and  \cholthinnotag respectively output $m=n_0^\tau$ points with simplex or constant-preserving weights for $\tau$ shown in parentheses. 
In addition, \alrbcnotag, \recombnotag, and  \cholthinnotag use the rank $n_0^\tau$.
\GBCnotag and KT took longer than 30 minutes for $n_0=2^{20}$ and hence their numbers are not reported.} 
\label{tab:goodwin_runtime}
\end{table}

\subsection{Correcting for approximate MCMC details}\label{subsec:exp_covtype_app}

\para{Surrogate ground truth}
Following \citet{liu2017black}, we took the first 10,000 data points and generated $2^{20}$ surrogate ground truth sample points using NUTS \citep{hoffman2014no} for the evaluation. To generate the surrogate ground truth using NUTS, we used \texttt{numpyro} \citep{phan2019composable}.
It took $12$ hours to generate the surrogate ground truth points using the GPU implementation, and we estimate it would have taken $200$ hours using the CPU implementation. %

\para{SGFS}
For SGFS, we used batch size 32 and the step size schedule $\eta/(1+t)^{0.55}$ where $t$ is the step count and $\eta$ is the initial step size.
We chose $\eta$ from $\{10.0, 5.0, 1.0, 0.5, 0.1, 0.05, 0.01\}$, found $\eta = 1.0$ gave the best standard thinning MMD to get a coreset size of $m=2^{10}$ 
, and hence we fixed $\eta = 1.0$ in all experiments.
We used the version of SGFS \citep[SGFS-f]{ahn2012bayesian} that involves inversion of $d\times d$ matrices --- we found the faster version (SGFS-d) that inverts only the diagonal resulted in significantly worse mixing.
We implemented SGFS in \texttt{numpy} and ran it on the CPU.

\para{Runtime} The SGFS  chain of length $2^{24}$ took approximately 2 hours to generate using the CPU.
Remarkably, all of our low-rank methods finish within 10 minutes for $n_0=2^{20}$, which is orders of magnitude faster than the time taken to generate the NUTS surrogate ground truth.

\para{Additional results}
In \cref{fig:covtype_mse}, we plot the posterior mean mean-squared error (MSE) for each compression method in the approximate MCMC experiment of \cref{subsec:exp_covtype}.

\begin{figure}[h]
    \centering
    \includegraphics[width=0.3\textwidth]{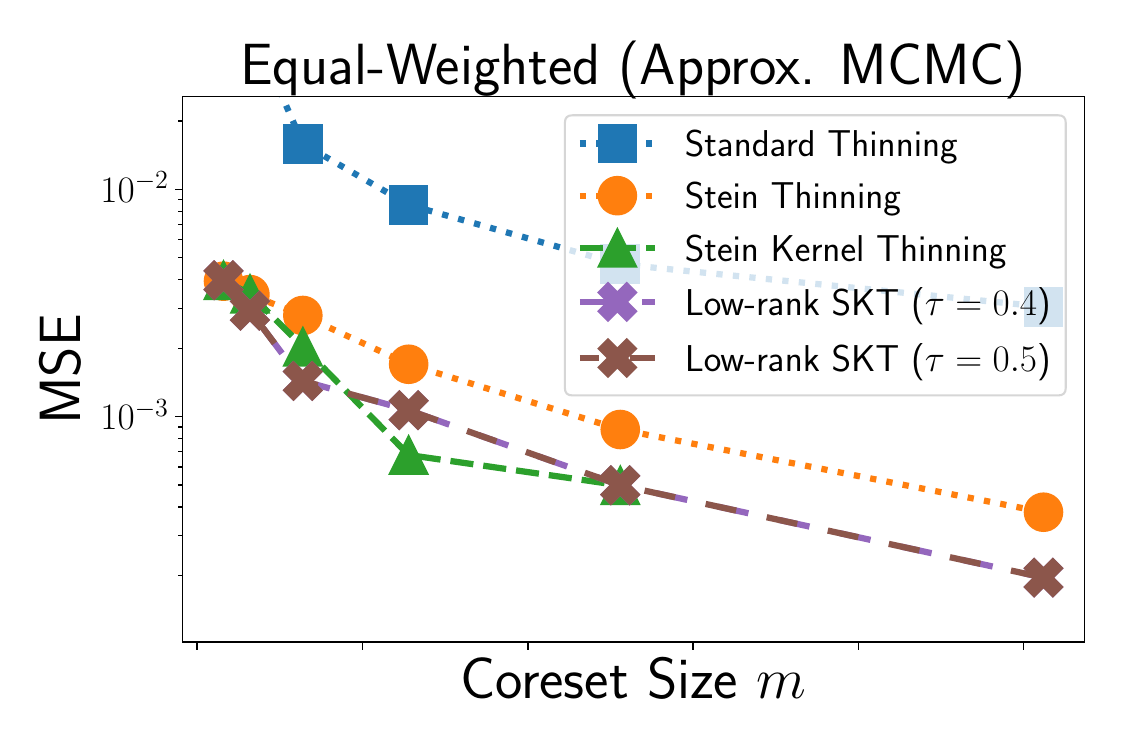}
    \includegraphics[width=0.3\textwidth]{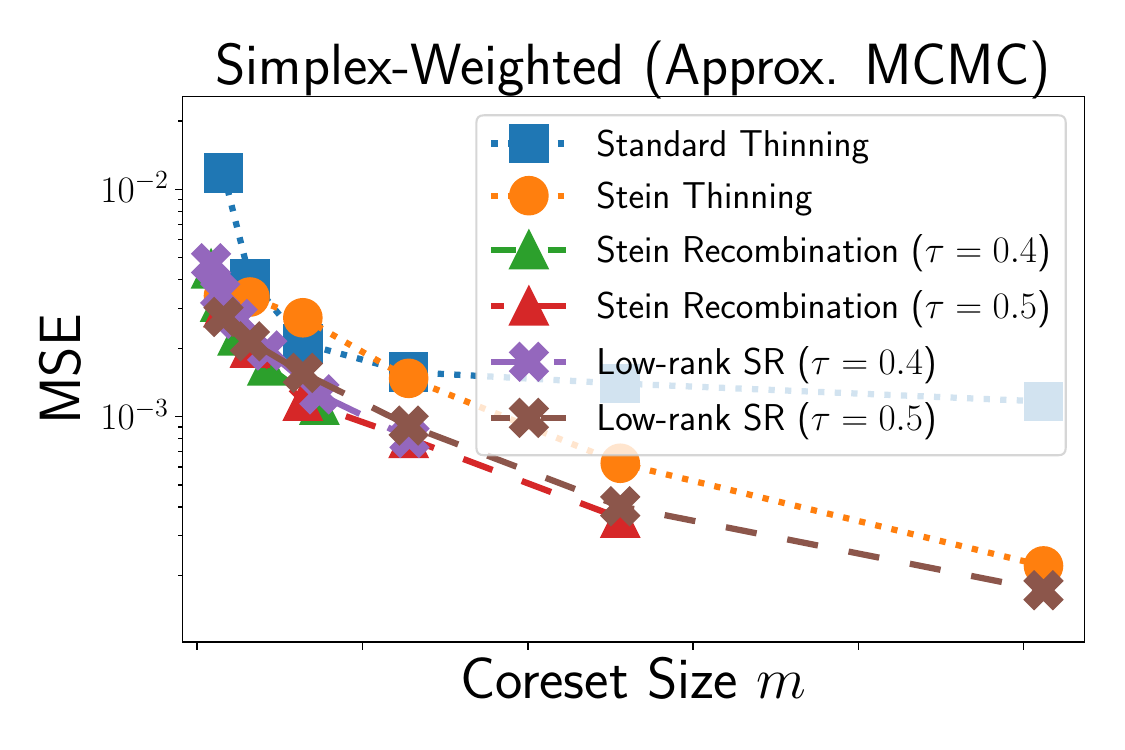}
    \includegraphics[width=0.3\textwidth]{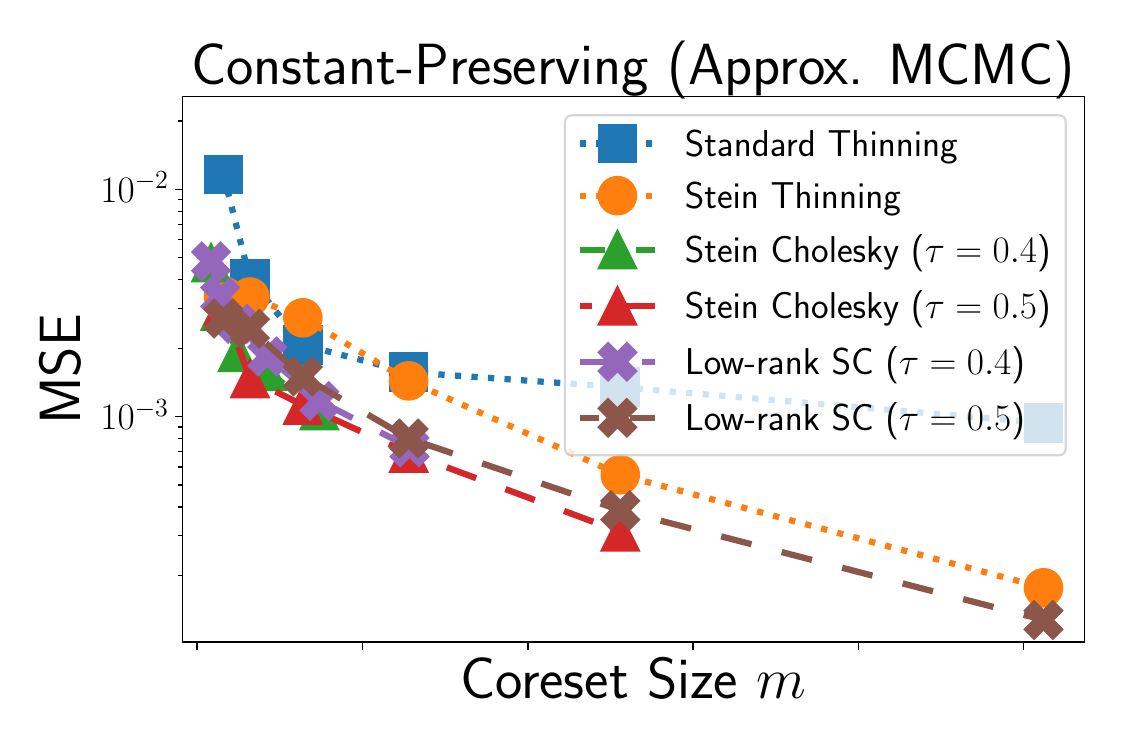}
    \caption{\textbf{Posterior mean mean-squared error (MSE)} for the approximate MCMC compression experiment of \cref{subsec:exp_covtype}.
MSE is computed as $\staticnorm{\hat{\bE}_\bP Z - \sum_{i\in[n_0]} w_i x_i}_M^2/d$ where $\hat{\bE}_\bP Z$ is the mean of the surrogate ground truth NUTS sample.
    }
    \label{fig:covtype_mse}
\end{figure}

\subsection{Correcting for tempering details}
In the data release of \citet{DVN/MDKNWM_2020}, we noticed there were 349 sample points for which the provided scores were NaNs, so we removed those points at the recommendation of the authors.

\end{document}